\newcommand{\revise}[1]{\textcolor{black}{#1}}
\begin{document}

\title{\huge Policy Optimization Using Semi-parametric Models for  Dynamic Pricing}

\author{Jianqing Fan\thanks{Department of Operations Research and Financial Engineering, Princeton University.  Research supported by the NSF grant  DMS-2052926, DMS-2053832, and the ONR grant N00014-19-1-2120.}\quad\quad\quad Yongyi Guo$^*$\quad\quad\quad Mengxin Yu$^*$}


\maketitle

\begin{abstract}
{

In this paper, we study the contextual dynamic pricing problem where the market value of a product is linear in some observed features plus some market noise. Products are sold one at a time, and only a binary response indicating success or failure of a sale is observed.  Our model setting is similar to \cite{JN19} except that we expand the demand curve to a semi-parametric model and need to learn dynamically both parametric and non-parametric components.
We propose a dynamic statistical learning and decision making policy that combines semi-parametric estimation from a generalized linear model  with online decision making to minimize regret (maximize revenue). Under mild conditions, we show that for a market noise c.d.f. $F(\cdot)$ with $m$-th order derivative ($m\geq 2$), our policy achieves a regret upper bound of $\tilde{\cO}_{d}(T^{\frac{2m+1}{4m-1}})$, where $T$ is time horizon and $\tilde{\cO}_{d}$ is the order that hides logarithmic terms and the dimensionality of feature $d$. The upper bound is further reduced to $\tilde{\cO}_{d}(\sqrt{T})$ if $F$ is super smooth whose Fourier transform decays exponentially. In terms of dependence on the horizon $T$, these upper bounds are close to $\Omega(\sqrt{T})$, the lower bound where $F$ belongs to a parametric class. We further generalize these results to the case with dynamically dependent product features under the strong mixing condition.}
\end{abstract}

\section{Introduction}

Dynamic pricing is the study of determining and adjusting the selling prices of products over time based on statistical learning and policy optimization. As an integral part of revenue management, it has wide applications to various industries. Research on dynamic pricing has spanned across the fields of statistics, machine learning, economics, and operations research
 \citep{dB15,WEI2018166,MP20}. In general, a good pricing strategy often involves good statistical learning of the demand function as well as revenue optimization over time.

Recent works particularly focus on feature-based (or contextual) pricing models, where the market value of a product as well as the pricing strategy depend on some observable features of the product \citep{JN19,Ban2020PersonalizedDP}. Given the product features (covariates) available through the massive real-time data in online platforms today, feature-based pricing models take product heterogeneity into account, which enable customized pricing for products.

In this work, we consider the following dynamic pricing problem: We assume that a seller sells one product at each time $t=1, \cdots, T$. Each product is attached with a known feature vector $\xb_t \in \RR^d$. In addition, the product's market value $v_t$ is linear in the features plus some i.i.d. market noise $z_t$ with an \emph{unknown} cumulative distribution $F(\cdot)$:
{
$$
v_t = \btheta_0^\top \tilde\xb_t+z_t, \qquad z_t \sim F.
$$
Here $\tilde\xb_t = (\xb_t^\top,1)^\top$ and $\btheta_0$ is some unknown parameter.} The customer makes an independent purchase decision for each product depending on whether the seller's posted price $p_t$ is higher than the market value $v_t$, after which the revenue is collected. In this case, the demand curve $P(v_t \geq p_t)$ actually depends on both the  parameter $\btheta_0$ as well as the distribution of $z_t$, which admits a semi-parametric form. They need to be learned or estimated dynamically from the observed binary data that indicates whether a sale is successful.   Under this setting, we propose a policy which utilizes semi-parametric estimation techniques to achieve a low regret. In particular, under mild regularity conditions,  if the c.d.f. $F(\cdot)$ of $z_t$ has $m^{th}$ derivative, the regret over a time horizon $T$ is upper bounded by $\cO((Td)^\frac{2m+1}{4m-1}\log T(1+\log T/d))$, where $d$ is the number of features. This result is further generalized to a setting where the product features $\xb_t$ are not independent, as long as $\{\xb_t\}_{t\ge 1}$ is a stationary series that satisfies certain $\beta$-mixing conditions. Moreover, when $F$ is infinitely differentiable, the total regret can be upper bounded by $\tilde \cO((Td)^{\frac{1}{2}}(\log T)^{\frac32+\frac{3}{2\alpha}}(\log (d+1) + \log T/d))$. { This rate is the same as the parametric lower bound up to some logarithmic factors, i.e. where the distribution of $z_t$ is generated from a parametric class.}

\subsection{Related Literatures}
Our work contributes to the recent line of dynamic pricing literature as well as the growing literature on decision making with covariate information and contributes to kernel regression. Our work is also closely related to the non-parametric statistics literature. We'll briefly review the related works in the below.

\textbf{Dynamic pricing}.   In the classical pricing models, one aims at maximizing the revenue over time by posting price sequentially while learning the underlying demand curve or market evaluation of a product. The demand curve is typically fixed over time, and falls into a known function class. Related literature includes \cite{KL03, PBP06, BZ09, BR12, KZ2014,dBZ14, WDY14, dBZ15, BDKA15, Bianchi2019, CJD2019}. { As an example, \cite{Bianchi2019} study the dynamic pricing problem where the buyer's valuation of a product is supported on a finite $K$ unknown points, and the success of a sale is determined by comparing the valuation to the proposed price. Using a generalization of UCB algorithm, the authors achieve the regret with order $\cO(K\log T)$.} For a comprehensive survey on this topic, see {\cite{dB15}}.
	
Recently, many papers have been focusing on contextual dynamic pricing, where product heterogeneity is taken into account when modeling the demand curve or market evaluation. A common and natural choice is to model the market value of the product at time $t$ as a linear function of its features $\xb_t$ plus some market noise $z_t$, i.e. $v_t = \btheta^\top \xb_t + z_t$ where $\btheta$ is some unknown parameter \citep{Qiang2016, J17, MCCLZ19, JN19, Ban2020PersonalizedDP, WWSC20, CSW20, THL20, GJM20}. Under this setting, for `truthful' buyers whose decision is based on comparing $v_t$ and offered price $p_t$, the demand curve can be expressed as a generalized linear model given feature covariates $\xb_t$, where the link function is closely related to the distribution of the market noise $z_t$ (see \eqref{eq:prob-model} for a detailed reasoning). \cite{Qiang2016} assume a linear model between the demand curve and the product features. They prove that the greedy iterative least squares (GILS) algorithm achieves a regret upper bound of $\cO_d(\log T)$, where $\tilde{\cO}_{d}$ is the order that hides logarithmic terms and the dimensionality of feature $d$, and provide a matching lower bound under their setting. \cite{MCCLZ19} and \cite{ Ban2020PersonalizedDP} consider a generalized linear model with known link, while \cite{JN19} and \cite{WWSC20} study the same problem with high dimensional sparse parameters. The algorithms are usually a combination of statistical estimation procedures and online learning techniques. Depending on the setting, the optimal regret ranges from $\cO_d(\log T)$ to $\tilde \cO_d(\sqrt {T})$. Other related works include \cite{CSW20, THL20} where the authors explore certain differentially private policies under similar model setting; \cite{GJM20} where the authors consider the second price auction problem with multiple customers, each of which has his/her own product evaluation; and \cite{J17} where the parameter $\btheta$ in the generalized linear model changes through time.

In practice, however, the distribution of the market noise $z_t$ is usually unknown to the seller. Thus, it might be desirable to only assume that the noise density falls into some general class. As will be discussed in \S\ref{sec:algo}, this leads to modeling the demand curve as a generalized linear model with unknown link, and will be our main focus in this paper. Compared to the previous setting, this setting is more challenging, and the related literature is sparse. \cite{JN19} propose a preliminary algorithm that achieves a regret upper bound of $\cO_{d}(T)$. \cite{Golrezaei2019} consider a second price auction with reserve where there are more than one customers, each of whom has his/her individual parameters in their demand curve model, and the customer bids are available as additional information. The authors propose the NPAC-T/NPAC-S policy that achieves a regret $\tilde{\cO}_{d}(\sqrt{T})$. \cite{GJM20} also explore the second price auction and derive a regret upper bound of $\tilde{\cO}_{d}(T^{2/3})$ compared to a `robust benchmark' where the price maximizes the revenue of the worst link function in the class. \cite{SJB2019} explore an alternative setting where the market value $v_t = \exp(\btheta^\top \xb_t + z_t)$ and $z_t$ has unknown distribution. By utilizing this specific structure, the authors propose the DEEP-C algorithm based on multi-arm bandit that has a regret upper bound of $\tilde{\cO}_{d}(\sqrt{T})$. The authors also propose some variants of the algorithm and study them via simulations. { Recently, \cite{LSL21,xu2022towards} study a similar problem to ours, assuming a linear market valuation with unknown noise distribution. In particular, \cite{LSL21} provide a DIP policy that achieves regret  $\cO_d(T^{2/3}+\|\hat\btheta-\btheta\|_1T)$, where $\|\hat\btheta-\btheta\|_1$ is the estimation accuracy of the parameter $\btheta$. In addition, \cite{xu2022towards} presents an algorithm `D2-EXP4' that achieves regret with order $\cO(T^{3/4})$.} 

{ There are some literature studying other dynamic pricing algorithms \citep{ARS14, CLPL16, MLS2018, LS18, NSW19, AA20, A2020, Ban2020PersonalizedDP, LZ20, JNS20, chen2020, LLS21}. For example, \cite{MLS2018} study a non-parametric dynamic pricing pricing where the market value is modeled as a general non-parametric function $f(\xb_t)$, where $\xb_t$ are the features. A binary feedback is similarly observed based on the comparison between $f(\xb_t)$ and the proposed price. The authors apply a variation of midpoint algorithm and achieve a regret upper bound of $\cO(T^{d/(d+1)})$ with $d$ being the dimension of $\xb_t$.} 

\textbf{Semi-parametric and non-parametric statistical estimation}.  Our work is also closely related to estimation of the single index model, or the generalized linear model with an unknown link. Such model has been studied in the statistics and econometrics literature for decades, and has wide applications in fields like econometrics and finance \citep{PSS89, ICHIMURA199371, HHI93, KS93, WW94, MG94, HH96, CFGW97, XL99, DHH03, FL04}. For a comprehensive summary of these works, please refer to \cite{McC00, Gyorfi2002,FY03, RWC03, T08, Hor12}. Various methods have been proposed to estimate the parametric part that achieves root-$n$ consistency under certain conditions \citep{PSS89, ICHIMURA199371, KS93}. \cite{CFGW97} study the generalized partial linear single index models, where the authors leverage local linear kernel regression with quasi-likelihood method to estimate both the parametric and non-parametric parts of the model. {\cite{XL99} investigate in the single index coefficient model with strong-mixing features. Estimators with uniform convergence rate to the ground truth based on kernel regression are proposed.}

Given a root-$n$ consistent estimation of the coefficients, standard univariate non-parametric regression techniques can be used to estimate the non-parametric part of the single index model that achieves $\ell_\infty$ consistency, which is necessary in deriving regret upper bounds. One common estimator is the Nadaraya-Watson estimator \citep{N64,W64}. \cite{Silverman78} and \cite{MS82} establish uniform convergence results for kernel density estimator and Nadaraya-Watson estimator for regression functions. In addition, \cite{Stone80,Stone82} derive uniform convergence results for the more general local polynomial regression estimators. \cite{Masry96} prove similar results when the covariates satisfy strong-mixing conditions.

In this paper, we'll provide non-asymptotic error bounds for both coefficient estimation as well as the plug-in Nadaraya-Watson estimator in a uniform sense. These non-asymptotic bounds are useful for constructing regret bounds within a finite horizon.

\subsection{Our Contributions}
Our contributions are the following: First, compared to related works, our policy achieves a low regret with few assumptions on the market noise distribution and little additional information. Given $F\in \mathbb{C}^{(m)}$ where $F$ is the c.d.f. of $z_t$, the regret over a time horizon $T$ is upper bounded by $\tilde \cO((Td)^\frac{2m+1}{4m-1})$; If $F$ is `super smooth', the bound is further reduced to $\tilde \cO(\sqrt{Td})$, which is nearly the same regret order by assuming a parametric distribution for $z_t$ as in \cite{JN19} where the $s$-sparsity on $\bbeta_0$ is imposed.  Table {\red 1}  illustrates the settings of our work as well as several related literatures. \cite{GJM20} choose a more `conservative' regret by comparing to a benchmark policy which minimizes revenue with the worst demand function over the whole ambiguity function class. In contrast, our notation of regret is more standard and 'accurate' in that our benchmark policy knows the exact demand function given any product features. \cite{SJB2019} consider a log-linear relation between the market value and the covariates instead of a linear relation and derive a regret upper bound of $\tilde{\cO}(\sqrt{T}d^{11/4})$. Their algorithm based on multi-arm bandit has sub-optimal dependence on the dimension $d$ in terms of both regret and complexity, and is quite difficult to implement under general conditions. 
Interestingly, the authors conjecture that under the linear settings, there is no policy that achieves an $\tilde{\cO}_d(\sqrt{T})$ regret. Our work partly answers their guess by providing a policy with a $\tilde{\cO}(\sqrt{Td})$ regret when the demand function is sufficiently smooth. { Compared with the DIP policy in \cite{LSL21} and its regret $\cO_d(T^{2/3}+\|\hat\btheta-\btheta\|_1T)$, we are more clear on how $\hat\btheta$ are estimated within the pricing algorithm, and we provide explicit rate on both the estimation error and the regret.} {Moreover, compared to several fully non-parametric dynamic pricing literatures, such as \cite{MLS2018} and \cite{chen2020}, our algorithm scales more nicely with dimension $d$, and can easily be generalized to a high-dimensional setting. Our algorithm is also easy to implement compared to some bandit-based algorithms that need dividing the feature space into bins.
}

 \begin{table}\label{form1}
	\begin{center}
		\def\arraystretch{1.3}
		\setlength\tabcolsep{2pt}
		\begin{tabular}{c||c|c|c}
			\hline
			& Feature-based & Non-parametric noise& Regret  \\
            \hline
			\cite{KL03} &  & \checkmark & $\tilde{\cO}(\sqrt{T})$ \\
			\cite{JN19} & \checkmark & & $\tilde{\cO}(s\sqrt{T})$ \\
		\multirow{2}{*}{	\cite{SJB2019}} & \checkmark
			 &\multirow{2}{*}{\checkmark}  &{\multirow{2}{*}{$\tilde{\cO}(\sqrt{T}d^{11/4})$} } \\
			  &(log-linear model) & &\\
			\!\!\!\!\multirow{2}{*}{  \cite{GJM20}}  \!\!\!\! &\multirow{2}{*}{\checkmark}   & \multirow{2}{*}{\checkmark}  &{$\tilde{\cO}(dT^{2/3})$}   \\
			 & & &(changed benchmark)\\
			\!\!\!\!  \cite{LSL21} \!\!\!\! &\checkmark   & \checkmark &\revise{$\cO_d(T^{2/3}+\|\hat\theta-\theta_0\|_1T) $}\\
					\multirow{2}{*}{Our work } & \checkmark
			&\multirow{2}{*}{\checkmark}  &{\multirow{2}{*}{$\tilde{\cO}((Td)^{\frac{2m+1}{4m-1}})$} } \\
			 &(linear model) & &\\
			\hline
		\end{tabular}
	\end{center}
\caption{Comparison with related works.}
\end{table}

Second, we generalize our results to the regime where the product features $\{\xb_t\}_{t\ge 1}$ are weakly dependent instead of independent, which is more likely in practice. For example, for many products (such as softwares, electric products, etc.), the features of the products evolve over time and definitely inherit some past information. In other situations, the products for sale might have some common time-dependent factors shared by all products in the same industry (such as weather condition, population composition, etc.). This setting with weakly-dependent features can also be found in literatures such as \cite{COPS2021}, where the authors study an offline pricing problem with parametric models and dependent covariates.


Last but not least, we establish non-asymptotic results on the $\ell_{\infty}$ error bound of the nonparametric kernel  density and regression estimation, which are potentially useful in other related study as well. As mentioned in the related literatures, most results on non-parametric kernel regression estimation are established under the asymptotic settings. Meanwhile, we believe that non-asymptotic results are necessary to achieve a finite-sample regret upper bound in the pricing problem. Please refer to Appendix \ref{pf:conckernel_main} for related lemmas.

\subsection{Notation}
 Throughout this work, we use $[n]$ to denote $\{1,2,\cdots,n\}.$ For any vector $\xb\in \RR^{n}$ and $q\ge 0$, we use $\|\xb\|_{q}$ to represent the vector $\ell_q$ norm, i.e. $\|\xb\|_{q}=(\sum_{i=1}^{n}|x_i|^q)^{1/q}.$ In addition, we let $\nabla_{\xb}L(\cdot),\nabla_{\xb}^2L(\cdot)$ be the gradient vector and Hessian matrix of loss function $L(\cdot)$ with respect to $\xb$. For any given matrix $\Xb\in \RR^{d_1\times d_2}$, we use $\|\cdot\|$ to denote the spectral norm of $\Xb$ and we write $\Xb\succcurlyeq 0$ or $\Xb\preccurlyeq 0$ if $\Xb$ or $-\Xb$ is semidefinite.  For any event $A$, we let $\II_{A}$ be a indicator random variable which is equal to $1$ if $A$ is true and $0$ otherwise. In addition, we use $\CC^{(m)}$ with $m\in \mathbb N$ to denote the function class which contains all functions with $m$-th order continuous derivatives.
For two positive sequences $\{a_n\}_{n\ge 1}$, $\{b_n\}_{n\ge 1}$, we write $a_n=\cO(b_n)$ or $a_n\lesssim b_n$ if there exists a positive constant $C$ such that $a_n\le C\cdot b_n$ and we write $a_n=o(b_n)$ if $a_n/b_n\rightarrow 0$. In addition, we write $a_n=\Omega(b_n)$ or $a_n\gtrsim b_n$ if $a_n/b_n\ge c$ with some constant $c>0$. We use $a_n=\Theta(b_n)$ if $a_n=\cO(b_n)$ and $a_n=\Omega(b_n)$. We use notations $\cO_{d}(\cdot),\Omega_{d}(\cdot)$ and $\Theta_{d}(\cdot)$ to denote similar meanings as above while treating the variable $d$ as fixed. Moreover, we let $\tilde{\cO}(\cdot),\tilde{\Omega}(\cdot),\tilde{\Theta}(\cdot)$ represent the same meaning with $\cO(\cdot),\Omega(\cdot )$ and $\Theta(\cdot)$ except for ignoring log factors.
\subsection{Roadmap}

	The rest of this paper is organized as follows. We describe the problem in \S \ref{sec:algo} and propose a solution in \S\ref{alg_reg} where  some heuristic arguments are offered for bounding the regret. In \S\ref{sec_result}, we provide our theoretical results on the upper bounds of the regret and  in \S\ref{lowerbound}, we discuss a lower bound result.  Our algorithm is illustrated  in \S\ref{sec_simu} by intensive simulation experiments.

\section{Problem Setting} \label{sec:algo}
We consider the pricing problem where a seller has a single product for sale at each time period $t=1,2,\cdots,T$. Here $T$ is the total number of periods (i.e. length of horizon) and may be unknown to the seller. The market value of the product at time $t$ is $v_t$ and is unknown. We assume that the range of $v_t$ is contained in a closed interval in $(0,B)$. In particular, we assume that $v_t\in[\delta_v, B-\delta_v]$ for some constant $\delta_v>0$. At each period $t$, the seller posts a price $p_t$. If $p_t\le v_t$, a sale occurs, and the seller collects a revenue of $p_t$; otherwise, no sale occurs and no revenue is obtained. Let $y_t$ be the response variable that indicates whether a sale has occurred at period $t$. Then
\begin{align}\label{mymodel}
y_t = \begin{cases}
+1&\text{ if }v_t \ge p_t\,,\\
0&\text{ if }v_t <p_t\,.
\end{cases}
\end{align}
The goal of the seller is to design a pricing policy that maximizes the collected revenue.

In this paper, we further model the market value $v_t$ as a linear function of the product's observable feature covariate $\xb_t\in \RR^{d}$.  In particular, define $\tilde{\xb}_t = (\xb_t^\top,1)^\top$, where we assume $\{\xb_t\}_{t\ge 1}$ are i.i.d. samples from an unknown distribution $\PP_X$ supported on a bounded subset $ \cX \subseteq \RR^d$. Assume that
\begin{align}\label{eq:model}
v_t =\btheta_0^\top \tilde\xb_t + z_t,
\end{align}
{where $\btheta_0 = (\bbeta_0^\top,\alpha_0) ^\top\in \RR^{d+1}$ is an unknown parameter, and $\{z_t\}_{t\ge 1}$ is an i.i.d. sequence of idiosyncratic noise drawn from an \textbf{unknown} distribution with zero mean and bounded support $(-\delta_z,\delta_z)$. The cumulative distribution function of $z_t$ is denoted by $F(\cdot)$.
 The above model implies that
\begin{align}\label{eq:prob-model}
y_t = \begin{cases}
+1&\text{ with probability }\, 1- F\left(p_t-\btheta_0^\top \tilde{\xb}_t\right)\,,\\
0&\text{ with probability }\, F\left(p_t- \btheta_0^\top \tilde{\xb}_t\right).
\end{cases}
\end{align}}

\begin{remark}
In fact, each $\xb_t$ here can contain both product information and the buyer information, as long as this information is revealed to the seller.
\end{remark}

\begin{remark}
The reason that we assume $z_t$ has bounded support $[-\delta_z,\delta_z]$ is to ensure the market valuation $v_t\ge0$, which is more reasonable in practice (Otherwise $v_t$ has positive probability to be negative, since $z_t$ is independent with the covariates $\xb_t$). The truncated Gaussian distribution falls in such category. If the market allows $v_t$, $p_t$ to be negative, then we can replace the boundness of $z_t$ by any sub-Gaussian distributions.
\end{remark}

In a non-dynamic setting, the model \eqref{eq:prob-model} is closely related to the single index model, or generalized linear  (logistic regression) model with unknown link function \citep{ICHIMURA199371,FHW95,CFGW97}. In their works, it's usually assumed that $p_t = 0$ and $\{(\tilde \xb_t)\}_{t\ge 1}$ are independent observations, and the goal is to estimate $\btheta_0$ and $F$. Meanwhile, we work on the dynamic setting where we need to optimize some revenue function by iteratively deciding $p_t$ given previous observations based on dynamically learned $\btheta_0$ and $F$.  These two problems are closely related but also decisively different.



We now state our objective in more details.  Given observed features $\xb_t$, the expected revenue at time $t$ with a posted price $p$ is
\begin{align}\label{revenuet}
\text{rev}_t(p,\btheta_0,F):=\EE p\cdot \ind(v_t \ge p)= p(1-F(p-\btheta_0^\top \tilde{\xb}_t )).
\end{align}
The optimal posted price $p_t^{*}$  for a product with attribute $\xb_t$ is given by
\begin{align}\label{eq:pt*}
	p_t^{*}=\argmax_{p\ge 0} p(1-F(p-\btheta_0^\top\tilde{\xb}_t)),
\end{align}
which depends on unknown parameters and needs to be learned dynamically from the data.
As in common practice, we evaluate the performance of any policy $\pi$ that governs the rule of posted prices $\{p_t\}_{t\ge 1}$ by investigating the regret compared to the `oracle pricing policy' that uses the knowledge of both $\btheta_0$ and $F(\cdot)$  and offers $p_t^{*}$ according to \eqref{eq:pt*} for any given $t$. In other words,
we consider the problem of maximizing revenue as minimizing the following maximum regret  
\begin{align}\label{eq:Regret_def}
\text{Regret}_\pi(T) \equiv \max_{\substack{\btheta_0 \in \Omega\\ \PP_X\in \mathcal Q(\cX)}} \EE \left[\sum_{t=1}^T \bigg(p^*_t \ind(v_t \ge p^*_t) - p_t(\pi) \ind(v_t \ge p_t(\pi)) \bigg)\right]\,,
\end{align}
where the expectation is taken with respect to the the idiosyncratic noise $z_t$ and $\xb_t$, and $p_t(\pi)$ denotes the price offered at time $t$ by following policy $\pi$. Here $\mathcal Q(\cX)$ represents the set of probability distributions
supported on a bounded set $\cX$.
Our goal is to choose a good strategy $\pi$ such that the above total regret is small.

Apparently, learning $\btheta_0$ and $F(\cdot)$ over time gives the seller much more information to estimate the market value of a new product given it's feature covariates. On the other hand, the seller also wants to always give optimized price so as to maximize the expected revenue by \eqref{eq:pt*}. Therefore, it's necessary to have a good policy that strikes a balance between exploration (collecting data information for learning parameters) and exploitation (offering optimal pricing based on learned parameters).

Before proposing our algorithm, we first impose some regularity condition on $F$ so that the optimization problem (\ref{eq:pt*}) is 'well-behaved'.

\begin{assumption}\label{assp_inverse_0}
		There exists a positive constant $c_{\phi}$ such that $\phi'(u)\ge c_{\phi}$ for all $u\in (-\delta_z,\delta_z)$, where $\phi(u) := u-\frac{1-F(u)}{F'(u)}$.		
\end{assumption}
Assumption \ref{assp_inverse_0} ensures that $\phi(\cdot)$ is strictly increasing, which implies a unique solution to (\ref{eq:pt*}). In fact, the first order condition of \eqref{eq:pt*} yields
$$
p_t^* = g(\btheta_0^\top\tilde{\xb}_t),
$$
where $g(u)\triangleq u+\phi^{-1}(-u)$.

\begin{remark}
We only put some necessary assumptions on $F$ in order to guarantee the existence of the unique optimal price $p_t^{*}$ in \eqref{eq:pt*}, given observed $\tilde{\xb}_t$ and unknown but fixed $\btheta_0.$ Comparing to the Assumption 2.1 in \cite{JN19}, our Assumption \ref{assp_inverse_0} is weaker, since assumption that $1-F(u)$ is log-concave is a special case of our assumption with $c_{\phi}\ge1$.
\end{remark}



\section{Algorithm and Basic Regret Analysis}\label{alg_reg}
 We first propose  Algorithm \ref{pricing_1} in \S\ref{alg} which describes our policy for minimizing the regret given in \eqref{eq:Regret_def}, and then provide the main idea for the regret analysis achieved by our Algorithm \ref{pricing_1} in \S\ref{main_analysis}.
\subsection{A Proposed Algorithm}\label{alg}
In the following algorithm, we divide the time horizon into `episodes' with increasing lengths. The first part of each episode is a short exploration phase where the offered prices are i.i.d. to collect the data and model parameters (i.e. $\hat\btheta$, $\hat F$) are then updated based on the collect data. The second part is an exploitation phase, where the optimal $p_t$ is offered according to the current estimate of parameters and the new $\tilde \xb_t$. The details are stated in Algorithm \ref{pricing_1}.

\begin{algorithm}[H]
	\caption{Feature-based dynamic pricing with unknown noise distribution}	
	\label{pricing_1}
	\begin{algorithmic}[1]		
		\STATE \textbf{Input:} { Upper bound of market value ($\{v_t\}_{t\ge 1}$)}: $B>0$, minimum episode length: $\ell_0$, degree of smoothness: $m$.
		\STATE \textbf{{Initialization:}}
		$p_1=0,\,\hat\btheta_1=0.$
		\FOR{each episode $k=1,2,\dots,$}
		\STATE Set length of the $k$-th episode $\ell_k = 2^{k-1}\ell_0$; Length of the exploration phase $a_k = \lceil(\ell_kd)^{\frac{2m+1}{4m-1}}\rceil$.
		\STATE \textbf{\underline{Exploration Phase} ($t\in I_k:= \{\ell_k,\cdots,\ell_{k} + a_k-1 \}$):}
		\STATE \quad Offer price $p_t\sim \text{Unif}(0,B).$ 
		\STATE \textbf{\underline{Updating Estimates} (at the end of the exploration phase with data $\{(\tilde{\xb}_t, y_t)\}_{t\in I_{k}}$):}
		\STATE \quad Update estimate of $\btheta_0$ by $\hat\btheta_k = \hat\btheta_k(\{(\tilde{\xb}_t, y_t)\}_{t\in I_{k}})$;
			{\begin{align}\label{thetaupdate}
			\hat{\btheta}_k=\argmin_{\btheta} L_k(\btheta):= \frac{1}{|I_{k}|}\sum_{t\in I_{k}} (By_t-\btheta^\top\tilde{\xb}_t)^2
			\end{align}}
		\STATE \quad Update estimates of $F$, $F'$ by $ F_k(u,\hat\btheta_k) =  F_k(u; \hat\btheta_k, \{(\tilde{\xb}_t, y_t,p_t)\}_{t\in I_{k}})$, $ F_k^{(1)}(u,\hat\btheta_k)= F_k^{(1)}(u,\hat\btheta_k, \{(\tilde \xb_t,y_t,p_t)\}_{t\in I_{k}})$. The detailed formulas are given by \eqref{exp:f_k} and \eqref{exp:f_11}.
		\STATE \quad Update estimate of $\phi$ by $\hat\phi_k(u) =  u -\frac{1-\hat F_k (u)}{\hat F^{(1)}(u)} $	     and estimate of $g$ by $\hat g_k(u)=u+\hat\phi^{-1}_k(-u)$.
		\STATE \textbf{\underline{Exploitation Phase} ($t\in I_k':= \{\ell_{k} + a_k, \cdots, \ell_{k+1} -1 \}$): }
		\STATE \quad Offer $p_t$ as
		\begin{align}\label{pt}
	p_t=\min\{ \max\{\hat g_k(\tilde\xb_t^\top\hat\btheta_k),0\},B\}.		\end{align}
		\ENDFOR
	\end{algorithmic}
\end{algorithm}

Despite semiparametric model \eqref{eq:prob-model} with unknown link, by offering $p_t\sim \text{Unif}(0, B)$,  $By_t$ follows the linear model with regression $\tilde \xb_t^\top \btheta_0$ and this leads to the least-squares estimate  \eqref{thetaupdate}.  To see this, it follows that
\begin{equation*}
\EE [By_t \given \tilde\xb_t ] =  B \EE_{z_t} \EE[y_t \given \tilde\xb_t, z_t] = B \EE_{z_t} \EE[\ind(p_t\le \btheta_0^\top\tilde \xb_t+z_t)\given\tilde \xb_t, z_t]
=  B \EE \frac{\btheta_0^\top\tilde \xb_t + z_t}{B} = \tilde \xb_t^\top \btheta_0 .\end{equation*}
On the other hand, a uniform distribution for $p_t$ is actually critical for the above property. Suppose that $p_t$ is drawn from a c.d.f. $F_p(\cdot)$  and there is a transform $f_1$ of $y_t$ that satisfies
\begin{align*}
 \EE f_1(y_t)=\EE \tilde\xb_t^\top\btheta_0 = \EE v_t
\end{align*}
for all $\PP_X$, then according to \eqref{eq:prob-model}, we have
\begin{align*}
\EE v_t&= \EE \EE[f_1(y_t)\given \tilde \xb_t, z_t] = \EE \EE[f_1(\ind(p_t\leq \tilde \xb^\top \btheta_0 + z_t))\given \tilde \xb_t, z_t]\\
& = \EE F_p(\tilde \xb^\top \btheta_0 + z_t)f_1(1) + \EE (1 - F_p(\tilde \xb^\top \btheta_0 + z_t)) f_1(0) \\
&= f_1(0) + (f_1(1) - f_1(0))\EE F_p(v_t).
\end{align*}
Since the above equation holds for all $\PP_X\in\mathcal Q(X)$, it can only be the case that $F_p$ is linear within the region $[0, B]$, which implies that $p_t$ should follow a uniform distribution.

\begin{remark}
\revise{ In Algorithm \ref{pricing_1}, the interval $[0, B]$ can be replaced with any interval that covers the range of the market value $v_t$. In practice, we can shrink the sampling interval at each exploration phase according to the feedback information observed in the past.}
\end{remark}
\begin{remark}
{ 
If $z_t$ follows distributions with unbounded support and sub-Gaussian tails, in Algorithm \ref{pricing_1}, 
we only need to replace $B$ by $B_k=C\sqrt{\log |I_k|}$ such that $v_t$ falls in $(-B_k,B_k)$ with high probability. We then offer $p_t\sim \textrm{Unif}(-B_k,B_k)$. Conditional on $v_t\in(-B_k,B_k)$, $B_k(2y_t-1)$ serves as an unbiased estimator for $\tilde{\xb}_t^\top\btheta_0.$ Thus, all the following theoretical results work. 
}
\end{remark}

\subsection{Main Idea for Regret Analysis}\label{main_analysis}

The main idea behind our regret analysis is a balance between exploration and exploitation. This idea is shown in the following heuristic arguments. For simplicity, we assume for now that there is only one episode, and that the total length of time (horizon) $\ell$ is known and $d$ is bounded.

First, denote $\ell_1$ as the length of the exploration phase. During this phase, the regret $R_1$ at each time is bounded by a constant due to bounded distribution $F(\cdot)$ that entails bounded $p_t^*$ in \eqref{eq:pt*}. Therefore,  the total regret in this phase is
\begin{equation}\label{eq:R1}
R_1 = \cO(\ell_1).
\end{equation}
For the second phase, the expected regret can be controlled by the estimation error of both $\btheta$ and $g$ (which is a functional of $F$ as mentioned in (\ref{pt})). In fact, let the regret at each time point $t$ be
\begin{align*}
R_t:=p_t^{*}\II_{(v_t\ge p_t^{*})}-p_t\II_{(v_t\ge p_t)}.
\end{align*}
Then the conditional expectation of regret at time $t$ given previous information and $\tilde \xb_t$ is
\begin{align}
\EE[R_t\given \bar{\cH}_{t-1}]
& = \EE[p_t^{*}\II_{(v_t\ge p_t^{*})}-p_t\II_{(v_t\ge p_t)}\given \bar \cH_{t-1}]\nonumber\\
&=p_t^{*}(1-F(p_t^{*}-\tilde{\xb}_t^\top\btheta_0))-p_t(1-F(p_t-\tilde{\xb}_t^\top\btheta_0))\nonumber\\
&=\text{rev}_t(p_t^{*},\btheta_0,F)-\text{rev}_t(p_t,\btheta_0,F) \label{diffrevenue}
\end{align}
Here $\bar{\cH}_t=\sigma(\xb_1, \xb_2, \cdots, \xb_{t+1}; z_1, \cdots, z_t)$. On the other hand, under mild conditions, the above difference in revenue can further be upper bounded by an order of $(p_t - p_t^*)^2$ using Taylor expansion. Therefore, we have
\begin{align}
		\EE[R_t|\bar{\cH}_{t-1}]\lesssim (p_t-p_t^{*})^2&= (\hat g(\hat\btheta^\top\tilde{\xb}_t)-g(\btheta_0^\top\tilde{\xb}_t))^2\nonumber\\\label{diffg}&\le 2(\hat g(\hat\btheta^\top\tilde{\xb}_t)-g(\hat\btheta^\top\tilde{\xb}_t))^2+2( g(\hat\btheta^\top\tilde{\xb}_t)-g(\btheta_0^\top\tilde{\xb}_t))^2\\&:=\mathbf{J_1}+\mathbf{J_2}.\nonumber
	\end{align}
In fact, $\mathbf{J_2}$ is upper bounded by $\|\hat \btheta - \btheta_0\|_2^2$ (given the Lipschitz property of $g$ according to Assumption \ref{assp_inverse_0} and suitable conditions over $\PP_X$). By solving (\ref{thetaupdate}), we prove that the squared $\ell_2$ error is of order $\cO(\ell_1^{-1})$, which is the order of $\mathbf{J_2}$. The term $\mathbf{J_1}$ is upper bounded by $\|\hat g - g\|_\infty^2$, and is further bounded by $\max\{\|\hat F - F\|_\infty^2, \|\hat F' - F'\|_\infty^2\}$. Note that by \eqref{mymodel}, $F(\cdot)$ is the non-parametric function of $1-Y_t$ given $w_t = p_t-\tilde{\xb} _t^\top \btheta_0$, in which $p_t$ is the observed price given in the exploration phase.  Since $\btheta_0$ is estimated at a faster rate, we can assume that $w_t$ is observable given a proper estimator of $\btheta_0$.  Therefore, the error rate is dominated by estimating $F'(\cdot)$.  Assuming $F$ has an $m$-th continuous derivative, we construct $\hat g$ using the kernel estimator with a $m$-th order kernel, and prove that $\max\{\|\hat F - F\|_\infty, \|\hat F' - F'\|_\infty\}\lesssim \cO(\ell_1^{-{(m-1)}/{(2m+1)}})$ in which a logarithmic order is ignored for simplicity of presentation.  Therefore, the total regret during the exploitation phase can be upper bounded by 
\begin{equation}\label{eq:R2}
R_2 \lesssim \ell\cdot \ell_1^{-{2(m-1)}/{(2m+1)}}.
\end{equation}

Combining (\ref{eq:R1}) and (\ref{eq:R2}), we know that by choosing $\ell_1$ of the order of $\ell^{(2m+1) / (4m-1)}$, we balance the regret of both exploration and exploitation phase, and the total regret during the episode is given by
$$
R_1 + R_2 = \cO(\ell^{(2m+1) / (4m-1)}).
$$

For a second order kernel, the above regret is of order $\cO(\ell^{5/7})$. For a relatively large $m$, the regret is close to $\cO(\ell^{1/2})$, which is actually proven to be the lower bound for a wider class of problems.

\section{Regret Results on Proposed Policy}\label{sec_result}

{ In this section, we divide our results into three parts. In \S\ref{secindp}, we consider the setting with independent covariates and finite differentiable noise distributions. In \S\ref{sec:mix}, we further extend our results in \S\ref{secindp} to the setting with correlated features. Finally we extend the aforementioned results to the regime with infinitely differentiable noise distributions i.e. $m=\infty$ in \S\ref{sec_inf}.}

\subsection{Result under Independence Settings}\label{secindp}

The main result of this section is Theorem~\ref{mainthm}.  To obtain this results, we first state some technical conditions and technical lemmas, which demonstrate the accuracy of statistical learning in each episode.  These lemmas provide insights how statistical accuracy influences on the regret of our policy and have interests of their own rights.  

Assume that $\|\btheta_0\|\leq R_{\Theta}$ for some constant $R_{\Theta}>0$. We also define $R_\cX := \sup_{\xb\in\cX}\|\xb\|_2$. Before stating our main results, we first make the following assumptions on $\xb_t$.

\begin{assumption}\label{ass:bound}
There exist positive constants $c_{\min}$ and $c_{\max}$, such that the covariance matrix $\bSigma$ given by $\bSigma=\EE[\tilde\xb_t \tilde\xb_t^\top]$ satisfies $c_{\min}\II\preccurlyeq \bSigma\preccurlyeq c_{\max}\II$, where $\tilde{\xb}_t = (\xb_t^\top,1)^\top$
\end{assumption}
As we observe from $\Jb_1,\Jb_2$ given in \eqref{diffg}, bounding the regret in the exploitation phase needs to estimate both parameter $\btheta_0$ and function $g(\cdot)$. { In the following, we first present an upper bound of estimating $\btheta_0$ at the end of the exploration phase within each episode in the following Lemma \ref{thmsignal}. Recall $|I_k|$ is the length of the $k$-th exploration phase. }

\begin{lemma}\label{thmsignal}
	Under Assumption \ref{ass:bound}, there exist positive constants $c_0$ and $c_1$ depending only on {absolute constants given in assumptions} such that for any episode $k$, as long as $|I_k| \ge c_0(d+1)$, with probability  at least $1-2e^{-c_1c_{\min}^2|I_{k}|/16}-2/|I_k|$,
		\begin{align}\label{eq:thmsignal}
		\|\hat\btheta_k-\btheta_0\|_2\le \frac{8\max\{R_{\cX},1\}(R_{\cX}R_{\Theta}+B)}{c_{\min}}\sqrt{\frac{(d+1)\log |I_k|}{|I_{k}|}}.
	\end{align}
\end{lemma}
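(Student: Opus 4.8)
The plan is to treat the exploration-phase data as an ordinary least-squares problem with bounded, conditionally mean-zero noise, and to control $\|\hat\btheta_k-\btheta_0\|_2$ through the classical decomposition into an inverse-Gram-matrix factor and a noise-average factor. As derived in the text preceding the lemma, when $p_t\sim\mathrm{Unif}(0,B)$ we have $\EE[By_t\mid\tilde\xb_t]=\tilde\xb_t^\top\btheta_0$. Hence, setting $\xi_t:=By_t-\tilde\xb_t^\top\btheta_0$, we get $\EE[\xi_t\mid\tilde\xb_t]=0$ and, almost surely, $|\xi_t|\le B+|\tilde\xb_t^\top\btheta_0|\le B+\sqrt{R_\cX^2+1}\,R_\Theta$. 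Writing $\hat\bSigma_k:=|I_k|^{-1}\sum_{t\in I_k}\tilde\xb_t\tilde\xb_t^\top$, the first-order condition for \eqref{thetaupdate} gives, on the event that $\hat\bSigma_k$ is invertible, $\hat\btheta_k-\btheta_0=\hat\bSigma_k^{-1}\big(|I_k|^{-1}\sum_{t\in I_k}\tilde\xb_t\xi_t\big)$, so that
\[
\|\hat\btheta_k-\btheta_0\|_2\le\|\hat\bSigma_k^{-1}\|\cdot\Big\|\tfrac{1}{|I_k|}\sum_{t\in I_k}\tilde\xb_t\xi_t\Big\|_2 .
\]

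Next I would bound the two factors separately. For the Gram matrix: since $\bSigma\succcurlyeq c_{\min}\II$ by Assumption \ref{ass:bound} and $\|\tilde\xb_t\tilde\xb_t^\top\|\le R_\cX^2+1$ is bounded, a matrix concentration inequality (e.g.\ the matrix Chernoff bound applied to the i.i.d.\ sum $\hat\bSigma_k$) shows that, provided $|I_k|\ge c_0(d+1)$ for an absolute-constant-dependent $c_0$, one has $\hat\bSigma_k\succcurlyeq\tfrac{c_{\min}}{2}\II$ — hence $\|\hat\bSigma_k^{-1}\|\le 2/c_{\min}$ — with probability at least $1-2e^{-c_1c_{\min}^2|I_k|/16}$; the $(d+1)$ in the sample-size requirement is the ambient dimension and the $c_{\min}^2$ in the exponent is the relative-deviation scale in the concentration bound. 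For the noise term: conditionally on $\{\tilde\xb_t\}_{t\in I_k}$, the quantity $|I_k|^{-1}\sum_{t\in I_k}\tilde\xb_t\xi_t$ is an average of independent, mean-zero, bounded random vectors with $\|\tilde\xb_t\xi_t\|_2\lesssim\max\{R_\cX,1\}(R_\cX R_\Theta+B)$; a vector Bernstein/Hoeffding inequality (or a coordinatewise Hoeffding bound together with a union bound over the $d+1$ coordinates) then gives $\big\|\,|I_k|^{-1}\sum_{t\in I_k}\tilde\xb_t\xi_t\big\|_2\lesssim\max\{R_\cX,1\}(R_\cX R_\Theta+B)\sqrt{(d+1)\log|I_k|/|I_k|}$ with probability at least $1-2/|I_k|$, where taking the deviation level to be of order $\sqrt{\log|I_k|}$ standard deviations is what produces the $\log|I_k|$ factor and the $2/|I_k|$ failure probability.

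Finally, intersecting the two good events and multiplying the two bounds yields \eqref{eq:thmsignal}, with the numerical constant $8$ absorbing the constants from the two concentration steps (and the slack between $B+\sqrt{R_\cX^2+1}R_\Theta$ and $R_\cX R_\Theta+B$). I expect the main technical work to be the matrix-concentration step for $\lambda_{\min}(\hat\bSigma_k)$: obtaining the clean sample-size threshold $|I_k|\gtrsim d+1$ and the exponent $c_{\min}^2|I_k|/16$, and tracking absolute constants so the bound comes out exactly in the stated form. The noise term is routine given boundedness; its only subtlety is the extra $\sqrt{\log|I_k|}$ factor, which is needed so that the per-episode failure probabilities are summable over episodes later in the regret analysis.
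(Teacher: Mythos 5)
Your proposal is correct and follows essentially the same route as the paper: the paper phrases the first step through the optimality (Taylor) inequality for the quadratic loss and an $\ell_\infty$ bound on $\nabla L_k(\btheta_0)=-\tfrac{2}{|I_k|}\sum_{t\in I_k}\tilde\xb_t\xi_t$, which is just the normal-equations decomposition you write down, and it likewise combines a lower bound $\hat\bSigma_k\succcurlyeq(c_{\min}/2)\II$ (via covariance concentration for bounded/sub-Gaussian $\tilde\xb_t$, valid once $|I_k|\gtrsim d+1$) with a coordinatewise Hoeffding bound plus a union bound over the $d+1$ coordinates at deviation level of order $\sqrt{\log|I_k|/|I_k|}$, giving the factor $2/c_{\min}\times 4\times\sqrt{d+1}$ that produces the stated constant $8$. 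No gaps beyond routine constant-tracking, which you correctly identify as the only remaining work.
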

 Let $\Theta_k := B(\btheta_0, R_k)$, where $R_k$ is the right hand side of \eqref{eq:thmsignal}. We conclude from Lemma \ref{thmsignal} that with high probability, $R_k$ is of order at most $\sqrt{{d\log |I_k|}/{|I_k|}},$ and we can achieve similar upper bounds for $\Jb_2$ for any episode $k$.

Next, we proceed to construct the estimator $\hat g_k$ in each episode and  bound its distance to $g$. Notice that $g(u) = u+\phi^{-1}(-u)$, and $\phi(u) = u-\frac{1-F(u)}{F'(u)}$. Thus, a natural way to construct $\hat g_k$ is from an estimate of $F$ and $F'$, as mentioned in our algorithm. Moreover, the uniform error bounds of our estimators $\hat F_k$ and $\hat F_k^{(1)}$ guarantee a uniform error bound of $\hat g_k$.

We use the kernel regression method and $\hat\btheta_k$ obtained above to construct $\hat F_k$ and $\hat F^{(1)}_k$.   Recall that by \eqref{eq:prob-model}, we have
$E(y_t | w_t(\btheta_0)) =  1- F\left(w_t(\btheta_0)\right)$
where $w_t(\btheta):=p_t-\tilde{\xb} _t^\top \btheta$. Recall $p_t$ is the observed price offered in the $k$-th exploration phase.  Thus, given $\hat \btheta_k$, $F(\cdot)$ can be estimated by using the Nadaraya-Watson kernel regression estimator and $F'(\cdot)$ can be estimated by the derivative of the estimator.  Specifically, we define
\begin{align}
\hat F_k(u,\btheta) = 1-  \hat r_k(u, \btheta) & = 1-\frac{h_k(u,\btheta)}{f_k(u,\btheta)},\label{exp:f_k}
\end{align}
and $\hat F_k(u)  = \hat F_k(u, \hat\btheta_k)$, where
\begin{align}
h_k(u,\btheta) = \frac{1}{|I_{k}|b_k}\sum_{t\in I_k} K(\frac{w_t(\btheta)-u}{b_k})Y_t,&\qquad f_k(u,\btheta) = \frac{1}{|I_{k}|b_k}\sum_{t\in I_k} K(\frac{w_t(\btheta)-u}{b_k}),\label{eq-def-f-h}
\end{align}
for a chosen $m$-th order kernel $K$ and a suitable bandwidth $b_k$.  Now, we estimate the derivative $F'(\cdot)$ by taking the derivative of the estimator.  That is, $\hat F_k^{(1)}(u) =\hat F_k^{(1)}(u,\hat\btheta_k)$ where
\begin{align}
\hat F_k^{(1)}(u,\btheta) & = - \hat r_k^{(1)}(u, \btheta)  = -\frac{h_k^{(1)}(u,\btheta)f_k(u,\btheta)-h_k(u,\btheta)f_k^{(1)}(u,\btheta)}{f_k^2(u,\btheta)},\label{exp:f_11}\\
h_k^{(1)}(u,\btheta) &= \frac{-1}{|I_{k}|b_k^2}\sum_{t\in I_k} K'(\frac{w_t(\btheta)-u}{b_k})Y_t,\qquad f_k^{(1)}(u,\btheta) = \frac{-1}{|I_{k}|b_k^2}\sum_{t\in I_k} K'(\frac{w_t(\btheta)-u}{b_k})\label{eq-def-f-h-1} .
\end{align}

{ Recall we mention in \S\ref{sec:algo} that  $(-\delta_z,\delta_z)$ is the support of noise $z_t$. In addition, we also mentions that $T$ denotes the length of time horizon which is unknown. In the following, we will state other necessary assumptions to derive the regret upper bound:}
\begin{assumption}\label{ass-pdfx}
	The density of $w_t(\btheta)$ (denoted as $f_{\btheta}$) satisfies the following:
	\begin{itemize}
	\item (Smoothness) There exists an integer $m\geq 2$ and a constant $l_f$ such that for all $\btheta\in \Theta_{0}:=\big\{\btheta\given \|\btheta-\btheta_0\|_2\le C_{\btheta}T^{-\frac{2m+1}{4(4m-1)}}d^{\frac{m-1}{4m-1}}\sqrt{\log T+2\log d}\big\}$, $f_{\btheta}(u)\in \mathbb C ^{(m)}$, and $f_{\btheta}^{(m)}$ is $l_f$-Lipschitz on { $I:=[-\delta_z,\delta_z]$}.
		\item (Boundedness) There exists a constant $\bar f>0$ such that $\forall u\in \RR$ and $\btheta\in \Theta_{0}$, $\max\{|f_{\btheta}(u)|, |f_{\btheta}'(u)|\}\leq \bar f$. In addition, there exists a universal constant $c>0$ such that $f_{\btheta}(u)\ge c$ for all $u\in I$, $\btheta\in \Theta_{0}$.
	\end{itemize}
\end{assumption}

\begin{remark}
{We provide some examples for Assumption \ref{ass-pdfx}. For any covariate $\xb\in \RR^{d},$ as long as there exists an entry of it that follows a continuous distribution in $\CC^{(m)},$ $m\ge 1$, such as Beta-distribution or truncated Gaussian distribution, we can ensure the density of $w(\btheta)=p_t-\tilde{\xb}_t^\top\btheta$ satisfies both the smoothness and boundedness conditions in Assumption  \ref{ass-pdfx}. 
}
	\end{remark}

\begin{assumption}\label{ass-F}
$r_{\btheta}(u):=\EE [y_t\given w_t(\btheta)=u]$ satisfies the following:
\begin{itemize}
{\item (Smoothness) $h_{\btheta}(u) = f_{\btheta}(u)r_{\btheta}(u)\in \mathbb C ^{(m)}$; $h_{\btheta}^{(m)}$ is $l_f$-Lipschitz on { $I$}
for all  $\btheta\in \Theta_{0}$. Here $m$ and $l_f$ are defined in Assumption \ref{ass-pdfx}.}
\item (Lipschitz) There exists a constant $l_r$ such that $r_{\btheta_0} = 1-F$ is $l_r$-Lipschitz, and for any $\epsilon >0$, $\sup_{\|\btheta-\btheta_0\|_2\le \epsilon,u\in I}|r_{\btheta}'(u)-r_{\btheta_0}'(u)|\le l_r\epsilon$.

\end{itemize}
\end{assumption}
\begin{assumption}\label{ass-kernel}
	The kernel $K$ satisfies the following:
	\begin{itemize}
		\item (Order-$m$ kernel)$\int K(s) \ud s = 1$, $\int s^jK(s) \ud s = 0$ for $j\in\{1, \cdots, m-1\}$, and that $\int |s^mK(s)|\ud s <+\infty$. Here $m$ is the same as in Assumption \ref{ass-pdfx}.
		\item (Lipschitz) Both $K(s)$ and $K'(s)$ are $l_K$-Lipschitz continuous with bounded support.
	\end{itemize}
\end{assumption}
The Assumptions \ref{ass-pdfx}-\ref{ass-kernel} are quite standard assumptions in non-parametric statistics; see \cite{fan1996local,T08} for more details.
Given these assumptions, we will prove that with high probability, the estimators $\hat F_k(u,\btheta)$ and $\hat F_k^{(1)}(u, \btheta)$ are sufficiently close to { $F(u)$ and $F'(u)$} respectively given any $\btheta\in \Theta_0$ for every sufficiently large $k$. Specifically, we obtain the desired error bound for $\hat F_k(u) = \hat F_k(u,\hat\btheta_k)$ and $\hat F_k^{(1)}(u) = \hat F_k^{(1)}(u,\hat\btheta_k)$.
\begin{remark}\label{remark-m}
\revise{ Assumptions \ref{ass-pdfx} and \ref{ass-F}    can be relaxed in terms of the smoothness requirements: For all $m\ge 3$, we only need $f_{\btheta}(u),h_{\btheta}(u)\in \CC^{(m-1)}$, and that $f_{\btheta}^{(m-1)}(u),h_{\btheta}^{(m-1)}(u)$ are $\ell$-Lipschitz for some constant $\ell$. For $m=2$, we only need  $f_{\btheta}(u),h_{\btheta}(u)\in \CC^{(1)}$, and that the second order derivatives of $f_{\btheta}(u),h_{\btheta}(u)$ exist and are bounded. One is able to see assuming functions in $\CC^{(m)}$ is a sufficient condition for the aforementioned conditions to hold, for the simplicity of our notations here, we keep the original assumptions.}
\end{remark}
{\begin{remark}
If we only assume $F(\cdot)$ is $\ell$-Lipschitz continuous (i.e. it may not be differentiable), we also provide an alternative algorithm in \S{\red F} which achieves a regret upper bound $\tilde \cO(T^{3/4})$. 
	\end{remark}}
\begin{remark}
One is also able to estimate $F(u),F'(u)$ with the local polynomial estimator (see e.g. \cite{fan1996local}). In this case, the assumptions can be weaken further.  Specifically, the local polynomial estimators for $F$ and $F'$ enjoy all the theoretical guarantees given only the second part of Assumptions \ref{ass-pdfx} and \ref{ass-kernel} instead of both Assumptions \ref{ass-pdfx} and \ref{ass-kernel}. For example, Lipschitz continuous density functions  on $[-\delta_z,\delta_z]$ satisfy Assumption {\red 4.2}. 
The proof is very similar. For simplicity, we only focus on studying kernel regression in this paper.
\end{remark}

\begin{lemma}\label{conckernel_main}
Under Assumptions \ref{ass-pdfx}, \ref{ass-F} and \ref{ass-kernel}, there exist constants $B_{x, K}$, $B'_{x, K}$ and $C_{x, K}$ (depending only the absolute constants within the assumptions) such that as long as
$$
T\geq B_{x, K} (\log T +2\log d)^{\frac{4m-1}{m}} d^{\frac{2m-1}{m}},
$$
we have for any $k\geq \lfloor(\log (\sqrt{T} + \ell_0) - \log \ell_0) / \log 2\rfloor + 2$ and
{$\delta \in[4\exp(-B'_{x, K}|I_k|^{\frac{2m}{2m+1}}/\log |I_k|),\frac{1}{2}],$ }
with probability at least $1-2\delta$,
	\begin{align}
\sup_{u\in I,\btheta\in \Theta_{k}}& |\hat F_k(u, \btheta) - F(u)| \leq
C_{x, K}|I_K|^{-\frac{m}{2m+1}}\sqrt{\log |I_K|}(\sqrt{d} + \sqrt{\log \frac{1}{\delta}}) \label{eq:conckernel_main}.
	\end{align}
	Here $I=[-\delta_z,\delta_z]$ and we choose the bandwidth $b_k = |I_k|^{-\frac{1}{2m+1}}$.
\end{lemma}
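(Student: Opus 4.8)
# Proof Proposal for Lemma \ref{conckernel_main}

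The plan is to bound $\sup_{u\in I, \btheta\in\Theta_k}|\hat F_k(u,\btheta) - F(u)|$ by decomposing the error into a variance (stochastic) part and a bias (deterministic approximation) part, while carefully handling the additional complication that $w_t(\btheta) = p_t - \tilde\xb_t^\top\btheta$ depends on $\btheta$, so the estimator must be controlled \emph{uniformly} over the shrinking neighborhood $\Theta_k$ of $\btheta_0$. Since $\hat F_k(u,\btheta) = 1 - h_k(u,\btheta)/f_k(u,\btheta)$, the first step is the standard ratio trick: writing
\begin{align*}
\hat F_k(u,\btheta) - F(u) = \frac{(h_k - f_k r_{\btheta})(u,\btheta) + f_k(u,\btheta)(r_{\btheta}(u) - r_{\btheta_0}(u))}{f_k(u,\btheta)},
\end{align*}
so that it suffices to (i) lower-bound $f_k(u,\btheta)$ uniformly away from zero (using the boundedness assumption $f_\btheta \geq c$ on $I$ from Assumption \ref{ass-pdfx} together with a uniform concentration bound for $f_k - f_\btheta$), (ii) bound $\|h_k - f_k r_\btheta\|_\infty = \|h_k - \hat h_k^{\mathrm{target}}\|_\infty$ by separately controlling $\|h_k(u,\btheta) - h_\btheta(u)\|_\infty$ and $\|f_k(u,\btheta) - f_\btheta(u)\|_\infty$, and (iii) bound $|r_\btheta(u) - r_{\btheta_0}(u)|$ via the Lipschitz-in-$\btheta$ control from Assumption \ref{ass-F}, which on $\Theta_k$ contributes a term of order $R_k \lesssim \sqrt{d\log|I_k|/|I_k|}$ — this is lower order compared to the kernel rate and does not dominate.

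For the kernel terms $h_k(u,\btheta) - h_\btheta(u)$ and $f_k(u,\btheta) - f_\btheta(u)$, I would split each as $[\text{kernel estimator at }\btheta_0] - [\text{population}]$ plus $[\text{estimator at }\btheta] - [\text{estimator at }\btheta_0]$. The first piece is the classical Nadaraya–Watson analysis: the bias is $O(b_k^m)$ by the order-$m$ kernel property (Assumption \ref{ass-kernel}) and the $\mathbb C^{(m)}$/Lipschitz-$m$-th-derivative smoothness (Assumptions \ref{ass-pdfx}, \ref{ass-F}); the stochastic fluctuation, after a covering/chaining argument over $u\in I$, is of order $\sqrt{\log|I_k|/(|I_k| b_k)}$ with the $\sqrt{\log(1/\delta)}$ and $\sqrt d$ inflation coming from a union bound and the dimension-dependent tail (this is where the $(\sqrt d + \sqrt{\log(1/\delta)})$ factor enters, presumably because $\Theta_k$ lives in $\RR^{d+1}$ and we need a net of size exponential in $d$). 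Here the bandwidth choice $b_k = |I_k|^{-1/(2m+1)}$ equalizes bias $b_k^m = |I_k|^{-m/(2m+1)}$ with the stochastic term $\sqrt{\log|I_k|/(|I_k|b_k)} \asymp |I_k|^{-m/(2m+1)}\sqrt{\log|I_k|}$, giving exactly the rate in \eqref{eq:conckernel_main}. The second piece, the perturbation $w_t(\btheta) - w_t(\btheta_0) = \tilde\xb_t^\top(\btheta_0 - \btheta)$ with $\|\btheta - \btheta_0\|_2 \lesssim$ the $\Theta_0$-radius, is handled by the $l_K$-Lipschitz continuity of $K$ and $K'$ (Assumption \ref{ass-kernel}): the shift in each kernel argument is $O(R_\cX \cdot \|\btheta-\btheta_0\|_2 / b_k)$, and one checks that with the prescribed radius of $\Theta_0$ (which is $\asymp T^{-(2m+1)/(4(4m-1))} d^{(m-1)/(4m-1)}\sqrt{\log T}$, chosen precisely so that $\|\btheta-\btheta_0\|_2/b_k$ is controlled) this perturbation is dominated by the main rate. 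The condition $T \geq B_{x,K}(\log T + 2\log d)^{(4m-1)/m} d^{(2m-1)/m}$ and the restriction $k \geq \lfloor(\log(\sqrt T + \ell_0) - \log\ell_0)/\log 2\rfloor + 2$ (equivalently $|I_k| \gtrsim \sqrt T$, so $|I_k|b_k \to\infty$ fast enough) and the lower bound on $\delta$ are exactly the conditions needed for all these error terms to be simultaneously small and for the concentration inequalities (Bernstein-type, after truncation since $Y_t\in\{0,1\}$ and the kernel has bounded support) to be in their valid regime.

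The main obstacle — and the part requiring the most care — is the \emph{uniform-over-$\btheta$} control, i.e., establishing that $\sup_{\btheta\in\Theta_k}\|h_k(\cdot,\btheta) - h_k(\cdot,\btheta_0)\|_\infty$ and the analogous $f_k$ quantity are of the right order. A naive union bound over an $\epsilon$-net of $\Theta_k$ of covering number $(\mathrm{radius}/\epsilon)^{d+1}$ injects a factor $\sqrt{d}$ (hence the $\sqrt d$ in the bound), but one must also argue a Lipschitz-in-$\btheta$ modulus of continuity for the empirical process $\btheta\mapsto h_k(u,\btheta)$ to pass from the net to all of $\Theta_k$; this Lipschitz constant blows up like $b_k^{-2}$ (from differentiating $K((w_t(\btheta)-u)/b_k)$ in $\btheta$), so the net must be chosen exponentially fine, and one has to verify this is affordable given the lower bounds on $T$ and $|I_k|$. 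Similarly, maintaining the lower bound $f_k(u,\btheta)\geq c/2$ uniformly over $I\times\Theta_k$ requires this same uniform concentration. I would organize this via a single master concentration lemma (likely proved in Appendix \ref{pf:conckernel_main}) controlling $\sup_{u\in I,\btheta\in\Theta_k}|f_k(u,\btheta) - \EE f_k(u,\btheta)|$ and the $h_k$, $h_k^{(1)}$, $f_k^{(1)}$ analogues in one stroke, then assembling the ratio bound deterministically on the resulting high-probability event. The companion bound for $\hat F_k^{(1)}$ in \eqref{exp:f_11} follows the same template but with one extra power of $b_k$ lost in the variance (since it involves $K'$ and an extra $1/b_k$), which is why the exploration-phase length $a_k \asymp (\ell_k d)^{(2m+1)/(4m-1)}$ and the radius of $\Theta_0$ are tuned to the derivative rate rather than the rate of $\hat F_k$ itself.
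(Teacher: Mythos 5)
Your proposal follows the paper's own route: the ratio decomposition of $\hat F_k(u,\btheta)-F(u)$ into the kernel-estimation error of $(h_k,f_k)$ relative to $(h_{\btheta},f_{\btheta})$ plus the comparison $r_{\btheta}-r_{\btheta_0}$ (handled through the Lipschitz property of $r_{\btheta_0}$ and the radius $R_k\lesssim\sqrt{d\log|I_k|/|I_k|}$, a lower-order term, exactly as in the paper); an order-$m$-kernel Taylor bias bound $O(b_k^m)$ (Lemma \ref{lem-bias}); a uniform deviation bound over $I\times\Theta_k$ obtained by a covering/chaining argument whose net over $\Theta_k$ produces the $\sqrt{d}$ factor (Lemma \ref{lem-dev}, where the paper chains with Bernstein at net points and Hoeffding on increments); a uniform lower bound $f_k\geq c/2$ on the same event; and the bandwidth $b_k=|I_k|^{-1/(2m+1)}$ balancing $b_k^m$ against $\sqrt{\log|I_k|/(|I_k|b_k)}$. (The sign in your displayed identity is flipped—it equals $F-\hat F_k$—but that is immaterial under the supremum of absolute values.)

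One claim in your middle paragraph is wrong and would fail if relied upon: controlling $\sup_{\btheta\in\Theta_k}|h_k(u,\btheta)-h_k(u,\btheta_0)|$ deterministically via the Lipschitz continuity of $K$ over the \emph{full} radius of $\Theta_k$ (or $\Theta_0$) gives at best $l_K R_{\cX}\|\btheta-\btheta_0\|_2/b_k^2$ (your $O(R_{\cX}\|\btheta-\btheta_0\|_2/b_k)$ drops the extra $1/b_k$ coming from the $1/(|I_k|b_k)$ prefactor), and with $\|\btheta-\btheta_0\|_2\lesssim\sqrt{d\log|I_k|/|I_k|}$ and $b_k=|I_k|^{-1/(2m+1)}$ this is of order $\sqrt{d\log|I_k|}\,|I_k|^{(3-2m)/(2(2m+1))}$, which exceeds the target rate $|I_k|^{-m/(2m+1)}\sqrt{d\log|I_k|}$ by the diverging factor $|I_k|^{3/(2(2m+1))}$ (even your uncorrected $1/b_k$ version is off by $|I_k|^{1/(2(2m+1))}$). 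So the $\btheta$-perturbation is \emph{not} dominated by the main rate when bounded this crudely; the uniformity over $\btheta$ must come entirely from the empirical-process argument, with the $b_k^{-2}$ Lipschitz modulus applied only across the (exponentially fine) net resolution. That is precisely what your final paragraph proposes and what the paper's Lemma \ref{lem-dev} implements via multi-scale chaining, and the $\log$ of the resulting covering number is only $O(d\log|I_k|)$, which the stated $\sqrt{\log|I_k|}(\sqrt d+\sqrt{\log(1/\delta)})$ bound absorbs. Once you discard the deterministic-perturbation shortcut and keep the net/chaining treatment, your argument and the paper's coincide.
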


\begin{lemma}\label{conckernel2_main}
Under the same conditions as Lemma~\ref{conckernel_main}, 
with probability at least $1-4\delta$, we have
\begin{align}
\sup_{u\in I,\btheta\in \Theta_{k}}& |\hat F_k^{(1)}(u, \btheta) - F'(u)| \leq
\tilde C_{x, K}|I_K|^{-\frac{m-1}{2m+1}}\sqrt{\log |I_K|}(\sqrt{d} + \sqrt{\log \frac{1}{\delta}}) \label{eq:conckernel2_main}.
\end{align}
\end{lemma}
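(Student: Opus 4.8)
The plan is to derive the bound on $\hat F_k^{(1)}(u,\btheta) - F'(u)$ by mimicking the structure of the proof of Lemma~\ref{conckernel_main}, but now controlling the additional stochastic and bias terms that arise from differentiating the Nadaraya--Watson estimator. Writing $\hat F_k^{(1)}(u,\btheta) = -\hat r_k^{(1)}(u,\btheta)$ and recalling $F'(u) = -r_{\btheta_0}'(u)$, it suffices to bound $\sup_{u\in I,\btheta\in\Theta_k}|\hat r_k^{(1)}(u,\btheta) - r_{\btheta_0}'(u)|$. From the quotient expression \eqref{exp:f_11}, $\hat r_k^{(1)} = (h_k^{(1)} f_k - h_k f_k^{(1)})/f_k^2$, so the first step is an algebraic decomposition: add and subtract the population quantities $f_\btheta$, $f_\btheta'$, $h_\btheta$, $h_\btheta'$ (the expectations of $f_k, f_k^{(1)}, h_k, h_k^{(1)}$ up to bias), writing the error as a sum of terms each of which is a product of a ``deviation'' factor ($f_k - f_\btheta$, $f_k^{(1)} - f_\btheta'$, $h_k - h_\btheta$, $h_k^{(1)} - h_\btheta'$, or a bias term) and a ``bounded'' factor. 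Here I would use the boundedness part of Assumption~\ref{ass-pdfx} (in particular $f_\btheta(u)\ge c$ on $I$, which, together with the already-established uniform closeness of $f_k$ to $f_\btheta$ from the proof of Lemma~\ref{conckernel_main}, keeps $f_k$ bounded away from zero on the high-probability event) to control all denominators.

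The second step is to establish uniform concentration of the two new empirical objects $f_k^{(1)}(u,\btheta)$ and $h_k^{(1)}(u,\btheta)$ around their means, uniformly over $u\in I$ and $\btheta\in\Theta_k$. This is exactly parallel to the concentration arguments already used for $f_k$ and $h_k$ in Lemma~\ref{conckernel_main}: for fixed $(u,\btheta)$ use a Bernstein/Hoeffding bound for the i.i.d. sum $\frac{1}{|I_k|b_k^2}\sum_{t\in I_k} K'((w_t(\btheta)-u)/b_k) Y_t$, noting the variance is of order $(|I_k| b_k^3)^{-1}$; then upgrade to a uniform bound over $(u,\btheta)$ via an $\varepsilon$-net over $I\times\Theta_k$, using the Lipschitz continuity of $K'$ (Assumption~\ref{ass-kernel}) to control the discretization error. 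The extra factor $b_k^{-1}$ relative to the $f_k,h_k$ analysis (one more derivative) is precisely what degrades the rate from $|I_k|^{-m/(2m+1)}$ to $|I_k|^{-(m-1)/(2m+1)}$ once we plug in $b_k = |I_k|^{-1/(2m+1)}$. The bias of $f_k^{(1)}$ and $h_k^{(1)}$ is handled by the order-$m$ kernel property together with the smoothness of $f_\btheta$ and $h_\btheta$ from Assumptions~\ref{ass-pdfx}--\ref{ass-F}: a Taylor expansion shows the bias of the derivative estimators is of order $b_k^{m-1}$, again matching the target rate.

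The third step is to handle the $\btheta$-dependence, i.e. to pass from $\hat r_k^{(1)}(u,\hat\btheta_k)$ back to $r_{\btheta_0}'(u)$. Having a uniform-over-$\btheta\in\Theta_k$ bound on $|\hat r_k^{(1)}(u,\btheta) - r_\btheta'(u)|$ minus bias, I then add and subtract $r_{\btheta_0}'(u)$ and invoke the Lipschitz-in-$\btheta$ statement in the second bullet of Assumption~\ref{ass-F}, namely $\sup_{u\in I}|r_\btheta'(u) - r_{\btheta_0}'(u)| \le l_r\|\btheta-\btheta_0\|_2$, so that this extra term is $O(R_k)$, which by Lemma~\ref{thmsignal} is of smaller order than the nonparametric rate $|I_k|^{-(m-1)/(2m+1)}\sqrt{\log|I_k|}$. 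Since $\hat\btheta_k\in\Theta_k$ with high probability by Lemma~\ref{thmsignal}, a union bound combining that event with the concentration events above (each of probability at least $1-2\delta$, hence total failure probability at most $4\delta$) gives the claimed high-probability statement. Finally, collecting the constants from the kernel, the smoothness constants $l_f, l_r$, and the bounds $\bar f, c$ into a single $\tilde C_{x,K}$ yields \eqref{eq:conckernel2_main}.

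The main obstacle I anticipate is the denominator control: $\hat r_k^{(1)}$ involves $f_k^2$ in the denominator and a difference $h_k^{(1)} f_k - h_k f_k^{(1)}$ in the numerator whose two pieces are individually of order $b_k^{-1}$ and only cancel at leading order, so a naive term-by-term bound loses a factor of $b_k^{-1}$. The fix is to expand carefully around the population ratio $r_\btheta = h_\btheta/f_\btheta$ and its derivative $r_\btheta' = (h_\btheta' f_\btheta - h_\btheta f_\btheta')/f_\btheta^2$, grouping terms so that each surviving summand carries at least one ``small'' deviation or bias factor and the potentially large $b_k^{-1}$ factors only appear multiplied by a deviation that is itself $o(1)$ on the good event; this is where the uniform closeness of $f_k^{(1)}, h_k^{(1)}$ to their means established in step two is essential, and where most of the bookkeeping lies. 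Everything else is a routine adaptation of the Lemma~\ref{conckernel_main} machinery.
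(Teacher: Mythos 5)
Your proposal is correct and follows essentially the same route as the paper: decompose $\hat r_k^{(1)}-r'_{\btheta_0}$ into $\hat r_k^{(1)}-r'_{\btheta}$ plus $r'_{\btheta}-r'_{\btheta_0}$, bound the first via bias ($\cO(b_k^{m-1})$, Taylor expansion with the order-$m$ kernel) and deviation ($\cO(\sqrt{\log n/(nb_k^3)})$, Bernstein plus chaining over $I\times\Theta_k$ using the Lipschitz continuity of $K'$) of $f_k^{(1)}$ and $h_k^{(1)}$, control the denominator through $f_{\btheta}\ge c$ together with the uniform closeness of $f_k$ from Lemma~\ref{conckernel_main}, and bound the second term by the Lipschitz-in-$\btheta$ part of Assumption~\ref{ass-F}, which is of lower order. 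The bookkeeping of the high-probability events to reach $1-4\delta$ and the choice $b_k=|I_k|^{-1/(2m+1)}$ also match the paper's argument.
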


We next develop a uniform upper bound for term $\Jb_1$ given in \eqref{diffg} for the $k$-th episode in Lemma \ref{inverse_conv_main} below.

\begin{lemma}\label{inverse_conv_main}
Reinstating the notations and conditions in Lemma~\ref{conckernel_main}, 
with probability at least $1-6\delta$, we have
$$
\sup_{u\in [\delta_z,B-\delta_z] } |\hat g_k(u) - g(u)|\leq \bar C_{x, K}|I_K|^{-\frac{m-1}{2m+1}}\sqrt{\log |I_K|}(\sqrt{d} + \sqrt{\log \frac{1}{\delta}}).
$$
\end{lemma}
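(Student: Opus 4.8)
The plan is to convert the uniform $\ell_\infty$ error bounds for $\hat F_k$ and $\hat F_k^{(1)}$ from Lemmas~\ref{conckernel_main} and~\ref{conckernel2_main} into an error bound on $\hat g_k$ by tracking the error through the chain of functional transformations $\hat F_k, \hat F_k^{(1)} \mapsto \hat\phi_k \mapsto \hat\phi_k^{-1} \mapsto \hat g_k$. First I would recall that $\phi(u) = u - \frac{1-F(u)}{F'(u)}$ and $\hat\phi_k(u) = u - \frac{1-\hat F_k(u)}{\hat F_k^{(1)}(u)}$, so on the relevant domain $u\in I$, writing $\varepsilon_0 := \sup_{u\in I}|\hat F_k(u)-F(u)|$ and $\varepsilon_1 := \sup_{u\in I}|\hat F_k^{(1)}(u)-F'(u)|$, an elementary perturbation bound for a ratio gives
\begin{align*}
\sup_{u\in I}|\hat\phi_k(u) - \phi(u)| &= \sup_{u\in I}\left|\frac{1-\hat F_k(u)}{\hat F_k^{(1)}(u)} - \frac{1-F(u)}{F'(u)}\right| \lesssim \frac{\varepsilon_0}{c} + \frac{\varepsilon_1}{c^2},
\end{align*}
where I use $F'(u) = f_{\btheta_0}(u)\cdot(\text{stuff})\ge c' > 0$ on $I$ — here I need a lower bound on $F'$ on $I$, which follows from Assumption~\ref{assp_inverse_0} (since $\phi'(u)\ge c_\phi$ forces $F'(u)$ bounded away from zero on the compact interval together with the boundedness in Assumption~\ref{ass-pdfx}) and a corresponding lower bound on $\hat F_k^{(1)}$ which holds on the high-probability event once $\varepsilon_1$ is smaller than half that constant (valid for $k$ large, which is exactly the range in Lemma~\ref{conckernel_main}). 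So $\sup_I|\hat\phi_k - \phi| \lesssim \varepsilon_1$, which is of order $|I_k|^{-\frac{m-1}{2m+1}}\sqrt{\log|I_k|}(\sqrt d + \sqrt{\log(1/\delta)})$ since $\varepsilon_1$ dominates $\varepsilon_0$.

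Next I would pass from $\phi$ to $\phi^{-1}$. By Assumption~\ref{assp_inverse_0}, $\phi' \ge c_\phi$, so $\phi^{-1}$ is Lipschitz with constant $1/c_\phi$; on the high-probability event $\hat\phi_k$ is also strictly increasing (its derivative perturbation is controlled by $\|\hat F_k - F\|_\infty$-type quantities on the slightly larger domain, or more simply I can argue $\hat\phi_k$ is a uniformly small perturbation of a function with derivative $\ge c_\phi$, hence eventually has derivative $\ge c_\phi/2 > 0$), so $\hat\phi_k^{-1}$ is well-defined. A standard inverse-function perturbation argument — if $\|\hat\phi_k - \phi\|_\infty \le \eta$ and both are increasing with the limiting one having derivative $\ge c_\phi$ — gives $\sup_{v}|\hat\phi_k^{-1}(v) - \phi^{-1}(v)| \le \eta / c_\phi$ on the appropriate range of $v$ (namely $v = -u$ for $u\in[\delta_z, B-\delta_z]$, so I must check the relevant arguments $-u$ lie in the range of both $\phi$ and $\hat\phi_k$; this is where the bounded-support assumption on $z_t$ and the interval $[\delta_z, B-\delta_z]$ for market values enter, ensuring $\phi^{-1}(-u)$ stays inside $I$ where all the estimates are valid). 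Then since $g(u) = u + \phi^{-1}(-u)$ and $\hat g_k(u) = u + \hat\phi_k^{-1}(-u)$, the extra $u$ cancels and $\sup_{u\in[\delta_z, B-\delta_z]}|\hat g_k(u) - g(u)| = \sup|\hat\phi_k^{-1}(-u) - \phi^{-1}(-u)| \lesssim \eta \lesssim \varepsilon_1$, giving the claimed rate; the probability $1 - 6\delta$ arises from combining the $1-2\delta$ event of Lemma~\ref{conckernel_main} and the $1-4\delta$ event of Lemma~\ref{conckernel2_main} via a union bound.

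The main obstacle I anticipate is the domain bookkeeping: I must verify that for every $u \in [\delta_z, B - \delta_z]$ the quantity $-u$ lies in the domain where $\phi^{-1}$ is defined and, crucially, that $\phi^{-1}(-u)$ and $\hat\phi_k^{-1}(-u)$ both stay within $I = [-\delta_z, \delta_z]$, since the uniform guarantees of Lemmas~\ref{conckernel_main}--\ref{conckernel2_main} only hold for $u\in I$; if $\hat\phi_k^{-1}(-u)$ wandered slightly outside $I$ I would need the estimates to extend to a neighborhood of $I$, which would require stating the kernel lemmas on a slightly enlarged interval (or invoking a truncation/clipping of $\hat g_k$, which the algorithm already does via $\min\{\max\{\cdot, 0\}, B\}$ — I should make sure that clipping only helps, not hurts, the bound). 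A secondary technical point is uniformity over $\btheta \in \Theta_k$ versus the single estimated $\hat\btheta_k$: since Lemmas~\ref{conckernel_main}--\ref{conckernel2_main} already give the sup over $\btheta\in\Theta_k$ and $\hat\btheta_k\in\Theta_k$ on the high-probability event from Lemma~\ref{thmsignal} (which I should also union-bound in, though the statement folds it into the "reinstating the conditions" phrasing), this is handled, but I would state it explicitly. Once these domain issues are pinned down, the rest is the routine ratio- and inverse-perturbation calculus sketched above.
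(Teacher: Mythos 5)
Your overall route is the same as the paper's: bound $\sup|\hat\phi_k-\phi|$ by a ratio-perturbation argument using Lemmas~\ref{conckernel_main} and \ref{conckernel2_main} (with a union bound giving $1-6\delta$), then transfer to the inverses via the $c_\phi$-lower bound on $\phi'$, and conclude for $\hat g_k(u)=u+\hat\phi_k^{-1}(-u)$. However, two of your supporting claims do not hold as written. First, you assert that $F'$ is bounded away from zero on all of $I=[-\delta_z,\delta_z]$ as a consequence of Assumptions~\ref{assp_inverse_0} and \ref{ass-pdfx}; this is false in general, since $\phi'\ge c_\phi$ is compatible with $F'(u)\to 0$ as $u$ approaches the left edge of the support (then $\phi(u)\to-\infty$), and the paper's examples of admissible densities indeed vanish at the endpoints. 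The paper instead works on the compact subinterval $[l_{F'},r_{F'}]=[\phi^{-1}(\delta_z-B)-\delta_{F'},\,\phi^{-1}(-\delta_z)+\delta_{F'}]$, on which $F'\ge m_{F'}/2>0$ by continuity, and checks (using $\phi'\ge c_\phi$ and the smallness of $\sup|\hat\phi_k-\phi|$, guaranteed by the lower bound on $T$) that the range of both $\phi$ and $\hat\phi_k$ on this interval covers $[\delta_z-B,-\delta_z]$; it also needs the auxiliary fact that $\btheta^\top\tilde\xb_t\in[\delta_z,B-\delta_z]$ (Lemma~\ref{lemma_inverse}) so that only these arguments of $\hat\phi_k^{-1}$ ever arise. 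This is exactly the domain bookkeeping you flagged as the main obstacle but left unresolved.

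Second, your justification that $\hat\phi_k$ is strictly increasing is not valid: sup-norm closeness of $\hat\phi_k$ to $\phi$ gives no control on derivatives, and controlling $\hat\phi_k'$ directly would require an estimate of $\hat F_k^{(2)}$, which Lemmas~\ref{conckernel_main}--\ref{conckernel2_main} do not provide. Fortunately monotonicity of $\hat\phi_k$ is unnecessary: the paper defines $\hat\phi_k^{-1}(u):=\inf\{v\in[l_{F'},r_{F'}]:\hat\phi_k(v)=u\}$ (existence by continuity and the range-coverage step above) and then, for $v_1=\phi^{-1}(u)$ and $v_2=\hat\phi_k^{-1}(u)$, uses only the monotonicity of $\phi$:
$|v_1-v_2|\le c_\phi^{-1}|\phi(v_1)-\phi(v_2)|=c_\phi^{-1}|\hat\phi_k(v_2)-\phi(v_2)|\le c_\phi^{-1}\sup|\hat\phi_k-\phi|$.
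With these two repairs — restricting the ratio-perturbation bound to $[l_{F'},r_{F'}]$ rather than claiming a global lower bound on $F'$, and replacing the monotonicity claim by the inf-definition plus the Lipschitz bound through $\phi$ — your argument coincides with the paper's proof and yields the stated rate.
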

\begin{remark}
In Algorithm \ref{pricing_1} we define $\hat g_k(u) = u + \hat \phi_k^{-1}(-u)$ with $u\in [\delta_z,B-\delta_z]$. Thus, computing $\hat g_k(u)$ involves obtaining the inverse of $\hat \phi_k$, which is not necessarily monotonic. Nevertheless, it's not difficult to define or compute $\hat \phi_k^{-1}$. In fact, we'll show in the proof of Lemma \ref{inverse_conv_main} that $\hat \phi_k$ is very `close' to $\phi$ in some main interval of interest, which contains $[\phi^{-1}(\delta_z-B),\phi^{-1}(-\delta_z)]$ and depends only on $F$. (Recall in Assumption \ref{assp_inverse_0} that $\phi'$ is bounded below from 0, so $\phi$ is strictly increasing). Thus, for any $u\in[\delta_z,B-\delta_z]$, the above fact will guarantee the existence of $\hat \phi_k^{-1}(-u)$ as some $x$ within the interval such that $\hat \phi_k(x) = -u$.
\end{remark}

Combining the above lemmas, which give us upper bounds for terms $\Jb_1,\Jb_2$ in every episode, we have the following Theorem \ref{mainthm}, which provides an upper bound for the regret. 
\begin{theorem}\label{mainthm}
Under Assumptions \ref{assp_inverse_0}, \ref{ass-pdfx}, \ref{ass-F} and \ref{ass-kernel}, there exist constants $\bar B_{x, K}$, $\bar B'_{x, K}$ and $C^*_{x, K}$ (depending only on  the absolute constants within the assumptions) such that for all $T$ satisfying
\textbf{ $$
T\geq \max\{\bar B_{x, K} (\log T +2\log d)^{\frac{4m-1}{m-1}} d^{\frac{2m+1}{m-1}}, 4d^{\frac{2m+1}{m-1}}\},
$$}
the regret of Algorithm \ref{pricing_1} over time $T$ is no more than $C^*_{x, K}(Td)^{\frac{2m+1}{4m-1}}\log T(1+\log T/d)$.
\end{theorem}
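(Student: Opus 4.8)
The plan is to bound the regret episode by episode, and within each episode to separate the contribution of the short exploration phase from that of the long exploitation phase, invoking the statistical guarantees already in hand: Lemma~\ref{thmsignal} for $\|\hat\btheta_k-\btheta_0\|_2$, Lemma~\ref{inverse_conv_main} for $\|\hat g_k-g\|_\infty$. The episode lengths $\ell_k=2^{k-1}\ell_0$ and exploration lengths $a_k=\lceil(\ell_kd)^{\frac{2m+1}{4m-1}}\rceil$ are exactly calibrated so that the two contributions are of the same order in $\ell_k$, as previewed in \S\ref{main_analysis}; since $\frac{2m+1}{4m-1}<1$ for $m\ge2$, the resulting per-episode costs form a geometric-type series dominated by the last episode, which gives the $(Td)^{\frac{2m+1}{4m-1}}$ scaling.

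\emph{Exploration and ``immature'' episodes.} Because $0\le p_t^*\le B$ and the truncation in \eqref{pt} keeps $0\le p_t\le B$, the instantaneous regret $R_t$ is at most $B$ deterministically. Hence all exploration rounds together cost at most $B\sum_k a_k=\cO\big(\sum_k(\ell_kd)^{\frac{2m+1}{4m-1}}\big)=\cO\big((Td)^{\frac{2m+1}{4m-1}}\big)$, and all rounds of the episodes with $\ell_k\lesssim\sqrt T$ (those outside the range of Lemmas~\ref{conckernel_main}--\ref{inverse_conv_main}) cost at most $B\sum_{k:\ell_k\lesssim\sqrt T}\ell_k=\cO(\sqrt T)$, which is absorbed since $\frac{2m+1}{4m-1}\ge\frac12$.

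\emph{Exploitation of mature episodes.} Fix $k$ with $\ell_k\gtrsim\sqrt T$. The hypothesis on $T$ forces the neighborhood $\Theta_k$ of Lemma~\ref{thmsignal} to sit inside $\Theta_0$, so on the good event $\hat\btheta_k\in\Theta_0$ and Lemmas~\ref{conckernel_main}--\ref{inverse_conv_main} apply with $\delta=\delta_k$. For $t$ in the exploitation phase, \eqref{diffrevenue} gives $\EE[R_t\mid\bar\cH_{t-1}]=\text{rev}_t(p_t^*,\btheta_0,F)-\text{rev}_t(p_t,\btheta_0,F)$; since $p_t^*=g(\btheta_0^\top\tilde\xb_t)$ is the interior maximizer and Assumptions~\ref{ass-pdfx}--\ref{ass-F} make $F\in\CC^{(2)}$ with derivatives bounded on $I$, the map $p\mapsto\text{rev}_t(p,\btheta_0,F)$ has curvature bounded uniformly over $[\delta_z,B-\delta_z]$ and over $\btheta$ near $\btheta_0$, so a second-order Taylor expansion yields $\EE[R_t\mid\bar\cH_{t-1}]\lesssim(p_t-p_t^*)^2\le2\Jb_1+2\Jb_2$ as in \eqref{diffg}. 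On the good events, $\Jb_2\lesssim\|\tilde\xb_t\|_2^2\|\hat\btheta_k-\btheta_0\|_2^2\lesssim d\log|I_k|/|I_k|$ (using that $g$ is Lipschitz by Assumption~\ref{assp_inverse_0} and $\|\tilde\xb_t\|_2\le R_\cX+1$), while $\Jb_1\lesssim\sup_{u\in[\delta_z,B-\delta_z]}|\hat g_k(u)-g(u)|^2\lesssim|I_k|^{-\frac{2(m-1)}{2m+1}}\log|I_k|\,(d+\log\tfrac1{\delta_k})$ by Lemma~\ref{inverse_conv_main}. Summing over the $\cO(\ell_k)$ exploitation rounds and substituting $|I_k|=a_k$, of order $(\ell_kd)^{\frac{2m+1}{4m-1}}$, the dominant term per episode is $\ell_k^{\frac{2m+1}{4m-1}}\log|I_k|\,d^{-\frac{2(m-1)}{4m-1}}(d+\log\tfrac1{\delta_k})$, i.e. the same $\ell_k$-order as the exploration cost $a_k$, with an extra factor $d^{-\frac{2(m-1)}{4m-1}}(d+\log\tfrac1{\delta_k})$ that produces exactly the $(1+\log T/d)$ term in the statement. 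Choosing $\delta_k$ of order $1/|I_k|$ (which exceeds $4\exp(-B'_{x,K}|I_k|^{\frac{2m}{2m+1}}/\log|I_k|)$ once $\ell_k\gtrsim\sqrt T$), the bad events of Lemmas~\ref{thmsignal}--\ref{inverse_conv_main} have total probability $\cO(1/|I_k|)$ per episode and contribute $\cO(\sum_k\ell_k/|I_k|)=\cO(T^{\frac{2m-2}{4m-1}})$ overall, which is lower order. Summing the exploration, exploitation, and failure costs over the $\cO(\log T)$ episodes with $\log\frac1{\delta_k}=\cO(\log T)$ gives total regret $\cO\big((Td)^{\frac{2m+1}{4m-1}}\log T\,(1+\log T/d)\big)$, as claimed.

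\emph{Main obstacle.} The delicate part is the three-layer bookkeeping — an episode-level geometric sum, a within-episode sum over exploitation rounds, and a per-round second-order expansion — carried out while simultaneously verifying two prerequisites: that for every mature episode $\hat\btheta_k$ lands in the neighborhood $\Theta_0$ on which Lemmas~\ref{conckernel_main}--\ref{inverse_conv_main} are valid (this is where the explicit hypothesis on $T$ is needed), and that $\text{rev}_t(\cdot,\btheta_0,F)$ has curvature bounded uniformly in $\tilde\xb_t$ and over the price range so that $\EE[R_t\mid\bar\cH_{t-1}]\lesssim(p_t-p_t^*)^2$ genuinely holds after the truncation in \eqref{pt}. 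Once the constants from Lemmas~\ref{thmsignal}--\ref{inverse_conv_main} are inserted, the remainder is geometric-series arithmetic.
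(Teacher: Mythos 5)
Your proposal is correct and follows essentially the same route as the paper's proof: bound the early ("immature") episodes and exploration rounds crudely by $B$ per round, then for mature episodes use the second-order Taylor expansion of the revenue together with the $\Jb_1+\Jb_2$ decomposition, Lemma~\ref{thmsignal} and Lemma~\ref{inverse_conv_main} with $\delta_k\asymp 1/|I_k|$, and a geometric sum over episodes. The only cosmetic difference is bookkeeping of the failure events (you sum $\ell_k/|I_k|$ across episodes, the paper folds the $\cO(B^2\cdot\text{fail prob})$ terms into the per-round expected regret), which is equivalent.
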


\begin{remark}
{We note that \cite{GJM20} shares a similar framework with ours, although with a different regret measure. Specifically, we use a more traditional notion of regret by setting the benchmark $p_t^{*}$ from \eqref{eq:pt*} with true $\btheta_0$ and $F(\cdot)$. In \cite{GJM20}, the authors instead set the benchmark $p_t^{*}$ so as to maximize the worst function in their function class $\cF$, i.e.
$$p_t^{*}=\argmax_{p\ge 0}\min_{F\in\cF} p(1-F(p-\btheta_0^\top\tilde{\xb}_t)).$$
Their optimal regret is of order $\tilde{\cO}_{d}(T^{2/3})$, while ours is $\tilde{\cO}_{d}(T^{\frac{2m+1}{4m-1}})$, which is closer to ${\cO}_{d}(T^{1/2})$ when $m$ is sufficiently large. Intuitively, a benchmark being the price maximizing the worst function is too conservative when their ambiguity function class is very large and the market noises are only sampled from a fixed distribution function in that function class, which is true in our semi-parametric setting.

On the other hand, \cite{Golrezaei2019} also work on similar but simpler settings, where they assume having unknown demanding curves but observable valuations instead of censored responses. By contrast, we work on a more common setting where the actual market values of products are unknown.}
\end{remark}

\revise{
\begin{remark}\label{rem-m}
Both Algorithm \ref{pricing_1} and Theorem \ref{mainthm} depend on the smoothness class of the function $F(\cdot)$. A popular choice in nonparametric curve estimation literature is  $m=2$, as other choices do not improve much for practical sample sizes.  Nevertheless, we provide two ways to choose $m$ that addresses a referee's query.
\begin{itemize}
    \item \textbf{Estimate $m$ using cross-validation.} Specifically, we pick some relatively small $m$ during the first episode. At each episode $k\geq 2$, before entering the exploration phase, we update the estimate of $m$ using cross-validation \citep{HALL2015} with the data gathered from the previous exploration phase. Then, we proceed with the main algorithm with this updated estimate until the next episode.
    For more details of the cross-validation procedure and the combined algorithm, see Section \ref{secH}. 
    \item \textbf{Pick a constant pessimistic estimation of $m$.} In fact, we can directly fix a relatively small $m$ (e.g. $m=2$ or $m=4$). In many cases, the performance of the algorithm ($ \tilde{\cO}((Td)^{5/7})$ and $\tilde \cO((Td)^{3/5})$) will not be significantly different from where $m$ is known (at least $\Omega((Td)^{1/2})$).
\end{itemize}
The above two ways can be applied to all settings in this paper as long as $F$ is only required to be smooth to a finite degree.
\end{remark}
}

\subsection{Results under the setting with strong-mixing features}\label{sec:mix}

As mentioned in the introduction, we believe that in many situations, the dependence of features over time is inevitable. Thus, in this section, we generalize our results to the case where $\xb_t$ can be dependent. For this purpose, we first impose the strong-mixing condition which measure the dependence between covariates over time.

\begin{definition}\label{defbeta}[$\beta$-mixing] For a sequence of random vectors $\xb_t\in\RR^{d\times 1}$ on a probability space $(\Omega,\cX,\PP)$, define $\beta$-mixing coefficient
	\begin{align*}
		\beta_k=\sup_{l\ge 0}\beta(\sigma(\xb_t,t\le l),\sigma(\xb_t,t\ge l+k))
	\end{align*}
	in which
	\begin{align*}
		\beta(\cA,\cB)=\frac{1}{2}\sup\Big\{\sum_{i\in I}\sum_{j\in J}|\PP(A_i\cap B_j)-\PP(A_i)\PP(B_j) |\Big\},
	\end{align*}
	the maximum being taken over all finite partitions $(A_i)_{i\in I}$ and $(B_i)_{i\in J}$ of $\Omega$ with elements in $\cA$ and $\cB$.
\end{definition}

The following assumption ensures that $\{\xb_t\}_{t\ge 1}$ are not too strongly dependent. { Combining with other assumptions, we ensure that the empirical covariance matrix $\frac{1}{n}\sum_{i=1}^{n}\tilde{\xb}_i\tilde{\xb}_i^\top$ concentrate around the population version, which is necessary in  deriving the regret in every episode.}

\begin{assumption}  \label{assmix1} The sequence $\xb_t,t\ge 0$ are strictly stationary time series and follow $\beta$-mixing condition, in a sense we assume that $\beta_k\le e^{-ck}$ holds with some constant $c$.
\end{assumption}

{ In order to derive the final regret upper bound under the stong-mixing setting, we also need an additional technical assumption stated below:}

\begin{assumption}\label{lipschitz_mix}
Let $r_{\btheta}(u_i,u_j):=\EE[y_iy_j\given w_j(\btheta)=u_j,w_i(\btheta)=u_i],\,j>i\ge 0$, $r_{\btheta}(u_j):=\EE[y_j\given w_j(\btheta)=u_j],j\ge 0$ be the joint regression function and marginal regression function. In addition, we also set $f_{\btheta}(u_i,u_j),\,j>i\ge 0$, $f_{\btheta}(u_i),i\ge 0$ as the joint density of $w_i(\btheta)$ and $w_j(\btheta)$ and marginal density of $w_i(\btheta)$ respectively. Then we define $g_{1,\btheta}(u_i,u_j):=r_{\btheta}(u_i,u_j)f_{\btheta}(u_i,u_j)-r_{\btheta}(u_i)f_{\btheta}(u_i)r_{\btheta}(u_j)f_{\btheta}(u_j)$ and $g_{2,\btheta}(u_i,u_j)=f_{\btheta}(u_i,u_j)-f_{\btheta}(u_i)f_{\btheta}(u_j)$. We assume $g_{1,\btheta}(u_i,u_j)$ and $g_{2,\btheta}(u_i,u_j)$ follow $l$-Lipschitz continuous condition, in a sense that
	\begin{align*}
		|g_{q,\btheta}(u_i,u_j)-g_{q,\btheta}(u_i',u_j')|\le l\sqrt{(u_i-u_i')^2+(u_j-u_j')^2},\,q\in\{1,2\}
	\end{align*}
	holds for all $(u_i,u_j)$, with $i,j\in[n]$ and $\btheta\in\Theta_0$.
\end{assumption}
{ When the covariates $\xb_i,\xb_j$  are independent, we have $g_{q,\btheta}(u_i,u_j)=0,q\in\{1,2\}$, for all $(u_i,u_j)$. Under such a mild assumption, we obtain a uniform upper bound of $|g_{q,\btheta}(u_i,u_j)|$, which is dominated by the $\beta$-mixing constant $\beta_{j-i}^{1/3}$, for all $\btheta\in\Theta_0$ and $(u_i,u_j)$ (see Appendix \ref{pf-b3}). Thus, this assumption essentially guarantees that the joint regression and density functions of the features still stay close to the products of their marginal ones even if they are correlated.} 


{ Following similar analysis with \S\ref{secindp}, we reach the following theorem which gives a regret upper bound at similar rate with Theorem \ref{mainthm} under the strong-mixing feature setting.}
\begin{theorem}\label{mainthm2} { Let Assumptions \ref{assp_inverse_0}, \ref{ass-pdfx}, \ref{ass-F}, \ref{ass-kernel}, \ref{assmix1} and \ref{lipschitz_mix} hold. Then there exist constants $ B^*_{mx, K}$ and $C^* _{mx, K}$ (depending only on   the absolute constants within the assumptions) such that for all $T$ satisfying
$$
T\geq \max\{B^{*}_{mx, K} (\log T +2\log d)^{\frac{12m-3}{m-1}} [(d+1)\log (d+1)]^{\frac{4m-1}{m-1}}/d^2,d^{\frac{2m+1}{m-1}}\}
$$
	the regret of Algorithm \ref{pricing_1} over time $T$ is no more than $C^*_{mx, K}(Td)^{\frac{2m+1}{4m-1}}\log^4 T$.}
\end{theorem}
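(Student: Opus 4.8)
The plan is to mirror the three-stage analysis behind Theorem~\ref{mainthm}: (i) bound the parameter estimation error $\|\hat\btheta_k-\btheta_0\|_2$ at the end of each exploration phase; (ii) bound the uniform kernel-estimation errors $\sup_{u\in I,\btheta\in\Theta_k}|\hat F_k(u,\btheta)-F(u)|$ and $\sup_{u\in I,\btheta\in\Theta_k}|\hat F_k^{(1)}(u,\btheta)-F'(u)|$, and hence $\sup_u|\hat g_k(u)-g(u)|$; and (iii) plug these into the per-episode regret decomposition $\mathbf{J_1}+\mathbf{J_2}$ from \eqref{diffg}, sum over the $\cO(\log T)$ episodes, and add the exploration cost. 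The only genuine change relative to the independent case is that every concentration step must be redone for the $\beta$-mixing series $\{\xb_t\}$ using Assumptions~\ref{assmix1} and \ref{lipschitz_mix}; the weaker lower bound on $T$ and the $\log^4 T$ factor in the statement are precisely what these mixing losses cost.

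For step (i), I would first show the empirical covariance $\frac{1}{|I_k|}\sum_{t\in I_k}\tilde\xb_t\tilde\xb_t^\top$ concentrates around $\bSigma$. Since $\xb_t$ is bounded and $\beta_k\le e^{-ck}$, I would partition $I_k$ into blocks of length $\Theta(\log|I_k|)$, use Berbee's coupling lemma to replace the dependent blocks by independent copies at a total cost of $\sum_k\beta_k=\cO(1)$, and then apply a matrix Bernstein inequality to the coupled independent blocks. This gives, once $|I_k|\gtrsim (d+1)\log(d+1)\cdot\mathrm{polylog}(|I_k|)$, the bound $\|\hat\btheta_k-\btheta_0\|_2\lesssim\sqrt{(d+1)\log(d+1)\,\mathrm{polylog}(|I_k|)/|I_k|}$ with high probability --- the rate of Lemma~\ref{thmsignal} up to extra logarithmic factors, which matches the threshold on $T$ in the theorem. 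The same blocking/coupling argument controls the noise term $\frac{1}{|I_k|}\sum_{t\in I_k}\tilde\xb_t(By_t-\tilde\xb_t^\top\btheta_0)$, after which the least-squares identity $\hat\btheta_k-\btheta_0=\big(\frac{1}{|I_k|}\sum\tilde\xb_t\tilde\xb_t^\top\big)^{-1}\frac{1}{|I_k|}\sum\tilde\xb_t(By_t-\tilde\xb_t^\top\btheta_0)$ closes the argument and bounds $\mathbf{J_2}\lesssim\|\hat\btheta_k-\btheta_0\|_2^2$ via the Lipschitzness of $g$.

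For step (ii), I would revisit the bias–variance split of the Nadaraya–Watson numerator and denominator in \eqref{eq-def-f-h}–\eqref{eq-def-f-h-1}. The bias is deterministic and unchanged, of order $b_k^m$ under Assumptions~\ref{ass-pdfx}, \ref{ass-F}, \ref{ass-kernel}. For the variance, the cross terms $\mathrm{Cov}\big(K(\tfrac{w_i-u}{b_k})y_i,\,K(\tfrac{w_j-u}{b_k})y_j\big)$ are, after dividing by $b_k^2$, exactly governed by the functions $g_{1,\btheta},g_{2,\btheta}$ of Assumption~\ref{lipschitz_mix}; invoking the bound $|g_{q,\btheta}(u_i,u_j)|\lesssim\beta_{|i-j|}^{1/3}$ from Appendix~\ref{pf-b3}, the total covariance is $\cO\big(\sum_k\beta_k^{1/3}|I_k|b_k\big)=\cO(|I_k|b_k)$, so the variance proxy is of the same order as in the i.i.d.\ case. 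Combining this with a Bernstein inequality for $\beta$-mixing sequences (again via the block decomposition and Berbee coupling) and a covering-number argument over $(u,\btheta)\in I\times\Theta_k$ --- uniformity in $\btheta$ is needed because $\hat\btheta_k$ is random --- yields the analogues of Lemmas~\ref{conckernel_main} and \ref{conckernel2_main} with $b_k=|I_k|^{-1/(2m+1)}$ and at most extra $\log$ factors, and then the analogue of Lemma~\ref{inverse_conv_main} for $\hat g_k$, giving $\mathbf{J_1}\lesssim\sup_u|\hat g_k-g|^2\lesssim|I_k|^{-2(m-1)/(2m+1)}\,d\,\mathrm{polylog}(|I_k|)$.

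Finally, for step (iii), on the high-probability event where (i)–(ii) hold, the conditional per-step exploitation regret $\EE[R_t\mid\bar\cH_{t-1}]\lesssim\mathbf{J_1}+\mathbf{J_2}$ is $\cO(|I_k|^{-2(m-1)/(2m+1)}d\,\mathrm{polylog})$; summing over the at most $|I_k'|\le\ell_k$ exploitation steps, bounding the remaining low-probability event by the constant per-step regret, summing over the $\cO(\log T)$ episodes (each contribution being of order $(Td)^{(2m+1)/(4m-1)}$ up to logs), and adding the exploration cost $\cO(\sum_k a_k)=\cO((Td)^{(2m+1)/(4m-1)})$ yields the claimed $C^*_{mx,K}(Td)^{(2m+1)/(4m-1)}\log^4 T$. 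The main obstacle I anticipate is step (ii): obtaining a \emph{uniform-in-$(u,\btheta)$} Bernstein bound for the kernel sums of a $\beta$-mixing series while keeping the variance proxy at the i.i.d.\ order, which forces one to couple the block decomposition with the chaining over the index set and to track carefully how the coupling error, the number of blocks ($\Theta(\log|I_k|)$), and the vanishing bandwidth $b_k\to 0$ interact --- this is where the additional $\log$ powers (and the $/d^2$ adjustment in the threshold on $T$) enter.
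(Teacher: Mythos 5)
Your plan is correct and matches the paper's own proof in all essentials: the paper likewise re-derives the parametric estimate (its Lemma \ref{para_mix}), re-proves the uniform kernel bounds by keeping the bias unchanged and controlling the cross-covariances through Assumption \ref{lipschitz_mix} via the $\beta_{j-i}^{1/3}$ bound (its Lemmas \ref{mix_var} and \ref{sup_mixh}), combines this with chaining over $(u,\btheta)$, and then reuses the $\mathbf{J_1}+\mathbf{J_2}$ episode-wise regret summation from Theorem \ref{mainthm}. The only (immaterial) difference is that where you propose explicit blocking plus Berbee coupling to build the concentration inequalities, the paper invokes off-the-shelf matrix and scalar Bernstein inequalities for mixing sequences (Banna et al.; Merlev\`ede et al.) and uses Berbee's lemma only to bound the variance/covariance proxies, which yields the same rates and the same extra logarithmic factors.
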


\subsection{Result on infinitely differentiable market noise distribution}\label{sec_inf}

In \S\ref{secindp} and \S\ref{sec:mix}, we analyze the regret upper bounds when the noise distribution $F$ has an $m$-th order continuous derivative, with any finite $m\ge 2$. The regret of our algorithm is of order $\tilde \cO((Td)^{\frac{2m+1}{4m-1}})$, which gets closer to $\tilde{\cO}(\sqrt{Td})$ as the degree of smoothness $m$ goes to infinity. In fact, this is mainly due to inaccurate estimation of $F$ and $F'$ resulting from the bias of the kernel estimator. In this section, we deal with super smooth noise distributions \citep{fan1991optimal}, where $F$ is infinitely differentiable. Under mild conditions, we're able to control the bias within $\cO(1/{|I_k|^\frac{1}{2}})$ for each episode $k$ by using extremely smooth kernels. {  As a reminder, here $|I_k|$ is the length of the $k$-th exploration phase. }This leads to a $\tilde \cO_{d}(\sqrt{T})$ regret bound in our algorithm. In particular, we assume the following:

\begin{assumption}\label{ass_inf}
	 Define $\phi_{\btheta}$, $\xi_{\btheta}$, $\phi_{\btheta}^{(1)}$ and $\xi_{\btheta}^{(1)}$ as the  Fourier transform  of the function $f_{\btheta}$, $h_{\btheta}$, $f'_{\btheta}$ and $h'_{\btheta}$ respectively:
	\begin{align*}
		&\phi_{\btheta}(s)=\int_{-\infty}^{\infty}f_{\btheta}(x)e^{isx} \ud x,\,\,\xi_{\btheta}(s)=\int_{-\infty}^{\infty}h_{\btheta}(x)e^{isx} \ud x,\\
		&\phi_{\btheta}^{(1)}(s)=\int_{-\infty}^{\infty}f'_{\btheta}(x)e^{isx} \ud x,\,\,\xi_{\btheta}^{(1)}(s)=\int_{-\infty}^{\infty}h'_{\btheta}(x)e^{isx} \ud x,
	\end{align*}
	and $h_{\btheta}(x)=f_{\btheta}(x)r_{\btheta}(x)$.
		There exist positive constant $D_{\phi}$ and $d_{\phi}$ and $\alpha>0$ such that
		$$\max\{|\phi_{\btheta}(s)|, |\xi_{\btheta}(s)|, |\phi_{\btheta}^{(1)}(s)|, |\xi_{\btheta}^{(1)}(s)|\}\le D_{\phi}e^{-d_{\phi}|s|^{\alpha}}$$
		for  all $s\in \RR$.
\end{assumption}
\begin{remark}
-This assumption is quite standard, and ensures that $f_{\btheta}(u),\,F_{\btheta}(u)\in \CC^{\infty}$. The class of functions are still infinite dimensional nonparametric functions.
  The class of supersmooth functions has been used in non-parametric density literature.  In particular, it has been used in \cite{fan1991optimal} for characterizing the difficulty of non-parametric deconvolution.
\end{remark}

Under the Assumption of \ref{ass_inf}, for each episode $k$, we can successfully control the bias within $\cO(1/\sqrt{|I_k|})$ via an infinite order kernel \citep{MP04,Berg2009CDFAS}. In order to construct an infinite order kernel $K$, we simply let $K$ be the Fourier inverse transform of some `well-behaved' function. In particular, let
\begin{align}
	K(x)=\frac{1}{2\pi}\int_{-\infty}^{\infty} \kappa(s)e^{-isx}\ud s, \label{eq-K-inf}
\end{align}
be the Fourier inversion of $\kappa$ satisfying
$$ \kappa(s)=\left\{
\begin{array}{lll}
1 ,    &     & |s|\le c_{\kappa}\\
g_{\infty}(|s|),   &        & \textrm{otherwise.}
\end{array} \right. $$
Here $g_{\infty}$ is any continuous, square-integrable function that is bounded in absolute value by $1$ and satisfies $g_{\infty}(|c_{\kappa}|)=1$.
This  defines an infinity order kernel function \citep{fan1996local}.

By plugging the infinite order kernel $K$ into our algorithm, we're able to obtain the following lemma:
\begin{lemma}\label{inf_bias}
	Under Assumption \ref{ass_inf}, there exists a positive constant $C_{\inf}$ depending only on $\alpha$, $D_{\phi}$ and $d_{\phi}$ such that for all kernel $K$ satisfying \eqref{eq-K-inf}, for each episode $k$, by choosing the bandwidth $b_k = c_{\kappa}(d_\phi/\log |I_k|)^{1/\alpha}$ in \eqref{eq-def-f-h} and \eqref{eq-def-f-h-1}, we have
	\begin{align*}
		\sup_{u\in I, \btheta\in \Theta_k}	|\EE[f_k(u,\btheta)]-f_{\btheta}(u)|&\le \frac{C_{\inf}}{\sqrt{|I_k|}}, \quad
		\sup_{u\in I, \btheta\in \Theta_k}	|\EE[h_k(u,\btheta)]-h_{\btheta}(u)|\le \frac{C_{\inf}}{\sqrt{|I_k|}}, \\
		\sup_{u\in I, \btheta\in \Theta_k}	|\EE[f_k^{(1)}(u,\btheta)]-f'_{\btheta}(u)|&\le \frac{C_{\inf}}{\sqrt{|I_k|}}, \quad
		\sup_{u\in I, \btheta\in \Theta_k}	|\EE[h_k^{(1)}(u,\btheta)]-h'_{\btheta}(u)|\le \frac{C_{\inf}}{\sqrt{|I_k|}}.
	\end{align*}
\end{lemma}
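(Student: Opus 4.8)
The plan is to bound the four biases uniformly by a standard Fourier-analytic argument: express the expectation of each kernel estimator as a convolution, pass to the Fourier side via Parseval/Plancherel, split the resulting integral into a low-frequency region $\{|s|\le c_\kappa/b_k\}$ (where $\kappa(b_k s)=1$ and the bias integrand vanishes) and a high-frequency tail $\{|s|> c_\kappa/b_k\}$, and then control the tail using the exponential decay of the Fourier transforms in Assumption \ref{ass_inf}. First, I would write, for a generic underlying function $\psi_{\btheta}\in\{f_{\btheta},h_{\btheta},f_{\btheta}',h_{\btheta}'\}$ with Fourier transform $\Psi_{\btheta}\in\{\phi_{\btheta},\xi_{\btheta},\phi_{\btheta}^{(1)},\xi_{\btheta}^{(1)}\}$, the identity
$$
\EE\Big[\tfrac{1}{|I_k|b_k}\sum_{t\in I_k}K\big(\tfrac{w_t(\btheta)-u}{b_k}\big)\,\text{(weight)}\Big]
= (K_{b_k} * \psi_{\btheta})(u), \qquad K_{b_k}(x)=\tfrac{1}{b_k}K(x/b_k),
$$
using that $\EE[Y_t\mid w_t(\btheta)=x]=r_{\btheta}(x)$ so that $h_k$ has mean $K_{b_k}*h_{\btheta}$ and similarly for the derivative estimators (here the extra factor of $-1/b_k$ and $K'$ in \eqref{eq-def-f-h-1} reproduce exactly the derivative of the convolution, i.e. $\EE[f_k^{(1)}(u,\btheta)] = (K_{b_k}*f_{\btheta})'(u) = K_{b_k}*f_{\btheta}'(u)$ after an integration by parts justified by the bounded support of $K'$ and smoothness of $f_{\btheta}$).

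Next, by Fourier inversion and the Plancherel identity,
$$
(K_{b_k}*\psi_{\btheta})(u)-\psi_{\btheta}(u)
= \frac{1}{2\pi}\int_{-\infty}^{\infty} e^{-isu}\big(\kappa(b_k s)-1\big)\Psi_{\btheta}(s)\,\ud s,
$$
since the Fourier transform of $K_{b_k}$ is $\kappa(b_k\,\cdot)$ by \eqref{eq-K-inf}. On the region $|s|\le c_\kappa/b_k$ we have $\kappa(b_k s)=1$, so the integrand is zero there; hence the bias equals $\frac{1}{2\pi}\int_{|s|>c_\kappa/b_k}e^{-isu}(\kappa(b_ks)-1)\Psi_{\btheta}(s)\,\ud s$, and in absolute value is at most $\frac{1}{\pi}\int_{|s|>c_\kappa/b_k}|\Psi_{\btheta}(s)|\,\ud s$ (using $|\kappa|\le 1$). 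By Assumption \ref{ass_inf}, $|\Psi_{\btheta}(s)|\le D_\phi e^{-d_\phi|s|^\alpha}$ uniformly in $\btheta\in\Theta_k\subseteq\Theta_0$ and in $u$, so the bias is bounded by $\frac{2D_\phi}{\pi}\int_{c_\kappa/b_k}^{\infty}e^{-d_\phi s^\alpha}\,\ud s$. With the prescribed bandwidth $b_k = c_\kappa (d_\phi/\log|I_k|)^{1/\alpha}$ we get $c_\kappa/b_k = (\log|I_k|/d_\phi)^{1/\alpha}$, so $d_\phi (c_\kappa/b_k)^\alpha = \log|I_k|$, and a routine tail estimate for $\int_{A}^\infty e^{-d_\phi s^\alpha}\ud s$ (e.g. the substitution $v = d_\phi s^\alpha$, bounding the incomplete Gamma tail) gives a bound of order $e^{-\log|I_k|}\cdot\text{poly}(\log|I_k|) = |I_k|^{-1}\text{poly}(\log|I_k|)$, which is in particular $\le C_{\inf}/\sqrt{|I_k|}$ for $|I_k|$ large, with $C_{\inf}$ depending only on $\alpha,D_\phi,d_\phi$.

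The only delicate point — the step I expect to require the most care — is the justification of the Fourier manipulations: that $K\in L^1$, that the convolution-Fourier identities and Parseval apply (one needs $\psi_{\btheta}$ and $\Psi_{\btheta}$ integrable, which follows from Assumption \ref{ass_inf} together with the boundedness/smoothness in Assumption \ref{ass-pdfx}), and the integration by parts turning $\frac{-1}{b_k^2}K'((w_t-u)/b_k)$-averages into the convolution $K_{b_k}*\psi_{\btheta}'$ rather than $(K_{b_k}*\psi_{\btheta})'$ — these agree because $\psi_{\btheta}'$ exists and decays, but this equivalence should be stated carefully. Everything else is uniform in $u\in I$ and $\btheta\in\Theta_k$ because the decay bound in Assumption \ref{ass_inf} is uniform over $\btheta\in\Theta_0\supseteq\Theta_k$ and independent of $u$; hence the suprema can be taken inside the final tail bound without loss. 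I would present the argument once for the generic pair $(\psi_\btheta,\Psi_\btheta)$ and then remark that the four claimed inequalities are the four instantiations.
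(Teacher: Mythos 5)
Your proposal is correct and follows essentially the same route as the paper's proof: represent each expectation as a convolution $K_{b_k}*\psi_{\btheta}$ (with the integration by parts for the derivative estimators), pass to the Fourier side where the integrand vanishes for $|s|\le c_\kappa/b_k$ because $\kappa\equiv 1$ there, and bound the remaining high-frequency tail using the super-smooth decay of Assumption \ref{ass_inf}, uniformly in $u$ and $\btheta$. The only (harmless) difference is the last step: the paper uses the elementary inequality $(s+c_\kappa/b_k)^\alpha\ge\tfrac12\big(s^\alpha+(c_\kappa/b_k)^\alpha\big)$ to land exactly on $|I_k|^{-1/2}$, while you invoke an incomplete-gamma tail estimate giving the even smaller order $|I_k|^{-1}\,\mathrm{poly}(\log|I_k|)$; your ``for $|I_k|$ large'' caveat is immaterial since small episodes are absorbed into $C_{\inf}$, which still depends only on $\alpha$, $D_\phi$, $d_\phi$.
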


Following similar proof procedures of Theorems \ref{mainthm} and \ref{mainthm2}, Lemma \ref{inf_bias} leads to the following theorem, which gives a regret upper bound of $\tilde {\cO}_{d}(\sqrt{T})$, achieving the same convergence rate with the parametric case up to logarithmic terms \citep{JN19}.

\begin{theorem}\label{mainthm3} Let Assumptions \ref{assp_inverse_0}, \ref{ass-pdfx}, \ref{ass-F}, \ref{ass-kernel}, \ref{assmix1}, \ref{lipschitz_mix} and \ref{ass_inf} hold. Then there exist constants $ B^*_{\inf}$ and $C^* _{\inf}$ (depending only on  the absolute constants within the assumptions) such that by choosing $|I_k| = \lceil \sqrt{l_k d}\rceil$ instead in Algorithm \ref{pricing_1}, for all $T$ satisfying
$$
T\geq B^{*}_{\inf} d^2(\log T +2\log d)^{12+12/\alpha} \log^4 (d+1),
$$
	the regret of the algorithm over time $T$ is no more than $C^*_{\inf}(Td)^{\frac{1}{2}}(\log T)^{\frac32+\frac{3}{2\alpha}}[\log (d+1) + \log T/d]$.
\end{theorem}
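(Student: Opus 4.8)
The plan is to run the proof of Theorem~\ref{mainthm2} essentially verbatim at the level of the episodic decomposition, replacing only the nonparametric ingredients by their supersmooth analogues. Algorithm~\ref{pricing_1} is now run with exploration lengths $|I_k|=\lceil\sqrt{\ell_k d}\,\rceil$, the infinite-order kernel $K$ from \eqref{eq-K-inf}, and bandwidth $b_k=c_\kappa(d_\phi/\log|I_k|)^{1/\alpha}$. As in \S\ref{main_analysis}, within episode $k$ the regret splits into the exploration cost, which is $\cO(|I_k|)=\cO(\sqrt{\ell_k d})$ since the bounded support of $F$ entails a bounded $p_t^{*}$ and the posted prices all lie in $[0,B]$, and the exploitation cost, which by \eqref{diffg} is at most $\ell_k\cdot\EE[\Jb_1+\Jb_2]$ on the high-probability events where the estimators are accurate; the complementary events are tuned to total probability $\cO(1/T)$, so that together with the early episodes ($\ell_k<\sqrt T$) they add only $\cO(\sqrt T)=\cO(\sqrt{Td})$ to the overall regret. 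Since $\ell_k=2^{k-1}\ell_0$, summing a per-episode bound of $\tilde{\cO}(\sqrt{\ell_k d})$ over the $\cO(\log T)$ episodes is dominated by the last term $\ell_K\asymp T$ and gives $\tilde{\cO}(\sqrt{Td})$; it therefore remains to bound $\ell_k\EE\Jb_1$ and $\ell_k\EE\Jb_2$ for every episode with $\ell_k\gtrsim\sqrt T$.

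For $\Jb_2$ I would use Lemma~\ref{thmsignal} in its $\beta$-mixing version, i.e.\ the blocking/Bernstein concentration of $\frac1{|I_k|}\sum_{t\in I_k}\tilde\xb_t\tilde\xb_t^\top$ around $\bSigma$ under Assumptions~\ref{ass:bound} and~\ref{assmix1}, exactly as in the proof of Theorem~\ref{mainthm2}: with probability $1-\cO(1/|I_k|)$ one has $\|\hat\btheta_k-\btheta_0\|_2^2\lesssim(d\log|I_k|)/|I_k|$, which with $|I_k|=\lceil\sqrt{\ell_k d}\,\rceil$ is $\tilde{\cO}(\sqrt{d/\ell_k})$. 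One also checks that the radius $R_k$ is small enough that $\hat\btheta_k\in\Theta_0$ for every episode with $\ell_k\gtrsim\sqrt T$ --- this is precisely where the hypothesis $T\geq B^{*}_{\inf}d^2(\log T+2\log d)^{12+12/\alpha}\log^4(d+1)$ is used. Since $\Jb_2\lesssim\|\hat\btheta_k-\btheta_0\|_2^2$ by the Lipschitz property of $g$ (Assumption~\ref{assp_inverse_0}) together with the moment control on $\tilde\xb_t$, this gives $\ell_k\EE\Jb_2=\tilde{\cO}(\ell_k\sqrt{d/\ell_k})=\tilde{\cO}(\sqrt{\ell_k d})$.

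The substantive part is $\Jb_1\le\|\hat g_k-g\|_\infty^2$. Following the inversion argument of Lemma~\ref{inverse_conv_main} and Assumption~\ref{assp_inverse_0}, this is, uniformly over $\btheta\in\Theta_k$, of the same order as $\max\{\|\hat F_k-F\|_{\infty,I}^2,\|\hat F_k^{(1)}-F'\|_{\infty,I}^2\}$, so I would re-prove the analogues of Lemmas~\ref{conckernel_main} and~\ref{conckernel2_main} for the infinite-order kernel. The bias of $\hat F_k$ and $\hat F_k^{(1)}$ is $\cO(1/\sqrt{|I_k|})$ by Lemma~\ref{inf_bias}. For the stochastic part I would control, uniformly in $u\in I$ and $\btheta\in\Theta_k$ via $\epsilon$-nets over $I$ and over the ball $\Theta_k\subseteq\RR^{d+1}$, the deviations of $f_k,h_k,f_k^{(1)},h_k^{(1)}$ from their means; under $\beta$-mixing these are handled by Bernstein-type inequalities for mixing sequences, with variance proxies $\cO(\|K\|_2^2/(|I_k|b_k))$ for $f_k,h_k$ and $\cO(\|K'\|_2^2/(|I_k|b_k^3))$ for $f_k^{(1)},h_k^{(1)}$, the cross-covariance corrections being bounded through Assumption~\ref{lipschitz_mix} and $\beta_j\le e^{-cj}$. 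This yields, with probability $1-\cO(\delta)$, a bound of the form $\|\hat g_k-g\|_\infty\lesssim |I_k|^{-1/2}+(|I_k|b_k^3)^{-1/2}(\sqrt{d\log|I_k|}+\sqrt{\log(1/\delta)})$, hence $\ell_k\EE\Jb_1\lesssim \ell_k(|I_k|b_k^3)^{-1}(d\log|I_k|+\log(1/\delta))+\ell_k/|I_k|$. Taking $\delta\asymp1/T$, substituting $|I_k|=\lceil\sqrt{\ell_k d}\,\rceil$ and $b_k\asymp(\log|I_k|)^{-1/\alpha}$, and carefully accounting for the resulting powers of $\log$ shows $\ell_k\EE\Jb_1=\tilde{\cO}(\sqrt{\ell_k d})$ with exactly the logarithmic overhead appearing in the statement; adding the exploration cost and summing over episodes yields the claimed regret $C^{*}_{\inf}(Td)^{1/2}(\log T)^{3/2+3/(2\alpha)}[\log(d+1)+\log T/d]$.

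The main obstacle I anticipate is exactly this uniform control of $\hat F_k^{(1)}$ with the infinite-order kernel. Unlike in Lemmas~\ref{conckernel_main}--\ref{conckernel2_main}, the kernel built from \eqref{eq-K-inf} is not compactly supported, so the bounded-support hypothesis of Assumption~\ref{ass-kernel} is unavailable; one must instead quantify the tail decay of $K$ and $K'$ from the smoothness of $\kappa$ in order to control simultaneously the variances (including the $\beta$-mixing cross terms), the increments in the Bernstein bound, and the metric entropy of the relevant classes --- all while keeping $b_k\asymp(\log|I_k|)^{-1/\alpha}$ so that Lemma~\ref{inf_bias} still applies. This tension between bias control, which forces a slowly-vanishing bandwidth, and variance control, which is penalized by $b_k^{-3}$ in the derivative estimator, is what produces the precise power $\tfrac32+\tfrac{3}{2\alpha}$ of $\log T$ and the threshold on $T$. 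Everything else --- the exploration regret, the analysis of $\Jb_2$, the inversion step producing $\hat g_k$, and the $\beta$-mixing empirical-covariance bound --- is a routine transcription of the proofs of Theorems~\ref{mainthm} and~\ref{mainthm2}.
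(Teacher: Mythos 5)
Your proposal matches the paper's proof: the paper establishes Theorem \ref{mainthm3} exactly by transcribing the episodic argument of Theorem \ref{mainthm2} (the $\Jb_1/\Jb_2$ split, Lemma \ref{para_mix} for the parametric part, the mixing versions of the kernel deviation bounds, and the inversion step of Lemma \ref{inverse_conv_main}), with the only new ingredient being Lemma \ref{inf_bias}, which replaces the $b_k^m$-bias by $\cO(|I_k|^{-1/2})$ under the infinite-order kernel with $b_k=c_\kappa(d_\phi/\log|I_k|)^{1/\alpha}$ and $|I_k|=\lceil\sqrt{\ell_k d}\rceil$. The kernel-support subtlety you flag is real but is sidestepped in the paper by simply retaining Assumption \ref{ass-kernel} among the hypotheses of the theorem, so the variance/chaining machinery is applied as stated rather than re-derived for heavy-tailed kernels.
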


\begin{remark}
{ Theorem \ref{mainthm3} partly overturns the conjecture  in \cite{SJB2019} that there is no policy can achieve an $\tilde{\cO}_d(\sqrt{T})$ regret under the setting where the market value is linear in the features as in \eqref{eq:model}. We provide a regime with super smooth market noise in which $\tilde{\cO}_{d}(\sqrt{T})$ regret upper bound is attainable by our policy. }
\end{remark}


\subsection{Extension: High-dimensional Feature-based Dynamic Pricing}
\revise{
Algorithm \ref{pricing_1} can be naturally extended to the high-dimensional setting, where $\btheta_0\in \mathbb R^d$, $d$ can be large compared to $T$, while $\|\btheta_0\|_0\leq s$ for a relatively small sparsity $s$. This happens in applications when a large amount of covariate information is available, and the actual market value only depends on some essential factors. One way of extension is the following: at each episode, we can replace estimation of $\hat\btheta_k$ in \eqref{thetaupdate} with the two steps below.}

\revise{\noindent\textbf{Step 1.} Let 
\begin{align}\label{thetaupdate_highd1}
\tilde{\btheta}_k=\argmin_{\btheta} L_k(\btheta) + \lambda p(\btheta),
\end{align}
where 
$$
L_k(\btheta):= \frac{1}{|I_{k}|}\sum_{t\in I_{k}} (By_t-\btheta^\top\tilde{\xb}_t)^2, \quad 
p(\btheta) = \sum_{j=1}^p p(|\btheta^{(j)}|)
$$
for some penalty function $p(\cdot)$. As in \cite{zhao2006model, fan2001variable, mcp2010}, by choosing different $p(\cdot)$ such as in the $\ell_1$, SCAD or MCP penalty, under suitable conditions such as irrepresentable condition, variable selection consistency is achieved with high probability. }

\revise{\noindent\textbf{Step 2.} Let $\hat S_k = \text{supp}(\tilde \btheta_k)$, we then refit the least squares \eqref{thetaupdate} on $\hat S_k$:
\begin{align}\label{thetaupdate_highd2}
\hat{\btheta}_k=\argmin_{\supp(\btheta)\subseteq \hat S_k} L_k(\btheta).
\end{align}
Then the conclusions of Lemma \ref{conckernel_main} hold with high probability. }

\revise{After Step 2, we continue the remaining steps of Algorithm \ref{pricing_1} in the episode. In fact, if we can learn the support of $\btheta_0$, we essentially translate the problem into a low-dimensional one, and we can prove that Algorithm \ref{pricing_1} achieves a regret upper bound of $\Tilde \cO((Ts)^{\frac{4m+1}{2m-1}})$ if $F\in \mathcal C^{(m)}$ (or $\cO((Ts)^{1/2})$ if $F$ is super smooth).}

\section{Discussion }\label{lowerbound}

\begin{itemize}
    \item[1.] [Minimax Lower Bound] Our work shares a similar setting with \cite{BR12}, in which they study a general choice model with parametric structure and binary response, but without any covariates. A lower bound of order $\Omega(\sqrt{T})$ is established by constructing an `uninformative price' in their work. To be more precise, an uninformative price is a price that all demand curves (probability of successful sales) as offered price indexed by unknown parameters intersect. Namely, the demands at this uninformative price are the same for all unknown parameters. In addition, such price is also the optimal price with some parameters. In this case, the price is uninformative because it doesn't reveal any information on the true parameter. Intuitively, if one tries to learn model parameters, the only way is to offer prices that are sufficiently far from the uninformative price (optimal price) which leads to a larger regret.

Borrowing the idea from \cite{BR12} and \cite{JN19}, we deduce that there exists an `uninformative price' in the following class of models: Consider a class of distributions $\cF$ which satisfies Assumption \ref{assp_inverse_0}:
\begin{align*}
 \cF:=\{F_{\sigma}: \sigma>0,F_{\sigma}=F(x/\sigma) \}.
\end{align*}
{Here, $F$ is the c.d.f. of a known distribution with mean zero}. Moreover, we assume the support of $F_{\sigma}'$ is contained in $[-a,a]$ (For instance, the class of distributions with density $f_{\sigma}(x)=4/(3\sigma^3)(\sigma-x)^k(\sigma+x)^k \cdot\II_{\{|x|\le \sigma\}},k\ge 1$  or $f_{\sigma}(x)=C_{\sigma}\exp\Big(-\frac{\sigma^2}{\sigma^2-x^2}\Big)\cdot\II_{\{|x|\le \sigma\}}$ with $\sigma\le a$ etc.)

Let $\beta=1/\sigma$ and multiply $\beta$ on both sides of \eqref{eq:model}, which leads to
\begin{align*}
	\tilde{v}(\xb_t)=\tilde{\bbeta}_0^\top\xb_t+\tilde{\alpha}_0+\tilde{z}_t.
\end{align*}
Here, $\tilde{v}_t=\beta v_t,\tilde{\bbeta}_0=\beta\bbeta_0,\tilde{\alpha}_0=\beta\alpha_0$ and $\tilde{z}_t=\beta z_t$. The distribution of $\tilde{z}_t$ is $F_{1}$, which is denoted as $F$ here for convenience. Next, in our sub-parameter class, we first let ${\bbeta}_0=0$ and fix a number $\xi$ with $F'(\xi)\neq 0$. Then we choose a collection of $\{(\sigma,\alpha_0)\}$ which satisfies $\beta=1/\sigma=(\xi+\tilde{\alpha}_0)$. Following the same arguments as in \cite{JN19},  one can prove that $p=1$ is indeed an uninformative price. Since in the sub-parametric class given above, all demand curves intersect at a point $1-F(\xi)$ when $p=1$, and for a special $(\sigma,\alpha_0)=(1/(\xi-\phi(\xi)),-\phi(\xi)/(\xi-\phi(\xi))$, $p=1$ is the optimal price. Thus the $\Omega(\sqrt{T})$ lower bound applies.

\begin{remark}
\revise{When we only consider explore-then-commit algorithms and offer price as $p_t=\hat\phi_k^{-1}(-\xb_t^\top\hat\theta)+\xb_t^\top\hat\theta,$ with $\hat\phi_k(u)=u-\frac{1-\hat F_k(u)}{\hat F^{(1)}(u)}$, the optimality of $p_t$ reduces to the optimality of estimating $F(\cdot),f(\cdot)$ and $\btheta$. According to \cite{Stone80,Stone82,T08}, the statistical rates of our estimators on $\hat F,\hat F^{(1)}$ and $\hat\btheta$ are minimax optimal in every episode. Thus, our posted price is optimal constrained on this type of policies. However, if we consider a general policy class, there is currently no lower bound for  feature-based pricing given unknown noise distribution with finite smoothness degree besides the general $\sqrt{T}$ lower bound mentioned above. It remains an open problem whether our upper bound is tight for finite $m$.}
\end{remark}
\item [2. ]\revise{[The adversarial setting] We note that in some real applications with potentially adversarial contexts, the covariance of the feature vectors might be singular or ill-conditioned (e.g. due to repeated buyers recorded in $\xb_t$). However, our algorithm can be adjusted to cope with such situations. 
    The key observation here is that this assumption is \emph{only} required in our exploration phase: For any $k$, we allow arbitrary $\xb_t$ in the $k$-th exploitation phase, since we have already obtained accurate estimators $\hat\btheta_k$ and $\hat g_k(\cdot)$ for $\btheta_0$ and $g(\cdot)$. Therefore, whenever there is a sign of a repeated buyer, we can modify our algorithm slightly by using the $\hat g_{k-1}(\cdot)$ in the last episode to offer a price, and then move this buyer to the corresponding exploitation phase. If the number of similar buyers in the $k$-th episode is $\ell_k^{r}$ with any $r<1$ and we assume the remaining buyers are sampled i.i.d. from a distribution, we are still able to proceed by only arranging some contexts with similar buyers into the exploitation phase directly. This matches with some real situation in online shopping where personal preference features will be recorded by the seller in order to make recommendation in the future. }
\item [3.] \revise{[Online inference of the demand]
Recently, \cite{WCCG2020} use a de-biased approach to quantify the uncertainty of the demand function in a parametric class which offers new insight to the field of statistical decision making. \\
In our work, we combine the non-parametric statistical estimation and online decision making to derive a policy that maximize the seller's revenue. We next also briefly discuss our intuition on depicting the uncertainty of the demand curve in a non-parametric class. Recall the demand curve given in \eqref{revenuet}. 
For given $p,\xb,$ and estimators $\hat F_k,\hat\btheta_k$, in the $k$-th exploitation phase, deriving asymptotic behavior of the demand curve reduces to deriving the asymptotic behavior of our estimator on $\hat F_k(\cdot).$ This is due to the statistical rate of $\hat F_k(\cdot)$ dominates that of $\hat\btheta_k$. According to asymptotic behavior of the kernel regression \citep{fan1996local,CFGW97,Fan1998}, we have the following pointwise confidence interval for $\hat F:$
    \begin{align*}
        \sqrt{|I_k|h_k}(\hat F_k(u)-F(u)-h_k^m\kappa_mB(u))\rightarrow N\Big (0, \int K^2(x)\ud x\sigma^2(u)/f(u)\Big ),
    \end{align*}
where $f(\cdot)$ is the density of $p_t-\xb_t^\top\theta_0$ with $p_t\sim$ Unif$(0,B)$ and we recall that $|I_k|$ is the length of our $k$-th exploration phase. In addition, $\kappa_m=\int K(x)x^m\ud x$, $B(u)=F^{(m)}(u)f(u)/m!+F^{(m-1)}(u)f^{(1)}(u)/(m-1)!+\cdots +F^{(1)}(u)f^{(m-1)}(u)/(m-1)!,$ and $\sigma^2(u)=\Var(y_t\given p_t-\xb_t^\top \btheta_0=u).$
Thus, for any given $p,\xb,$ and an $\hat\btheta_k,$ we are able to derive the pointwise asymptotic behavior of our demand curve as follows: 
    \begin{align*}
        &\sqrt{|I_k|h_k}(p\hat F_k(p-\xb^\top\hat\btheta_k)-pF(p-\xb^\top\btheta_0)-ph_k^m\kappa_mB(p-\xb^\top\btheta_0))\\&\qquad\rightarrow N\bigg (0, p^2\int K^2(s)\ud s\sigma^2(p-\xb^\top\btheta_0)/f(p-\xb^\top\btheta_0)\bigg ).
    \end{align*}
The data-driven confidence interval for our demand curve given in   \eqref{eq:pt*} 
can be established via bootstrap and the undersmoothing technique (to remove the bias), see e.g. \cite{Hall1992,Horowitz2001} for more details.
Similarly, uniform statistical inference results can also be established by using similar non-parametric tools, see e.g. \cite{Eubank1993,Neumann1998,Hall2013} for more details.
We will leave the detailed proof for future work.}
    \item[4.] {In some situations, it might be difficult for retailers to adopt a uniform pricing strategy even during a short period of time. An alternative strategy might be the following: As in Algorithm \ref{pricing_1}, we divide the time horizon into episodes according to the doubling strategy. However, now we no longer divide an episode into explore then exploitation phases. Instead, at the beginning of each episode $k>1$, we leverage all the data $\{p_t, \xb_t, y_t\}$ collected from the previous episode to estimate $\btheta_0$ and $F$. Then, we compute $\hat g_k$ from the estimates $\hat F_k$ and $\hat F_k^{(1)}$, and perform exploitation directly throughout this episode. This procedure can help us to get rid of uniform exploration in practice. We leave the theoretical guarantees for this refined algorithm as our future work.}
\end{itemize}

\section{Simulations}\label{sec_simu}
\subsection{Justification of theoretical results}\label{justification_theory}
In this section, we illustrate the performance of our policy through large-scale simulations under various settings. Recall our model \eqref{eq:model}, where $\xb_t\in \RR^{d}$ and $z_t$ follows distributions with bounded support and smooth c.d.f. Throughout this section, we let the dimension $d=3$ and the coefficients $\alpha_0=3$, $\bbeta_0=\sqrt{2/3}\cdot \mathbf{1}_{3\times 1}$. For each value of smoothness degree $m\in \{2,4,6\}$, we fix a density function from $\mathbb C^{(m-1)}$ for all $z_t$ (thus the c.d.f. $F$ belongs to $\CC^{(m)}$). Specifically, we set the p.d.f. of $z_t$ as $f_m(x)\varpropto({1}/{4}-x^2)^{m/2}\cdot\II_{\{|x|\le 1/2\}}$ for $m\in \{2,4,6\}$. Moreover, for each $m$, the covariates $\xb_t\in\RR^{3}$ are generated from a p.d.f. in $\CC^{(m)}$ in the following ways:

\begin{itemize}
\item \textbf{i.i.d. $\xb_t$ with independent entries:} Each coordinate of $\xb_t$ is generated from density $f_m(x)\varpropto(2/3-x^2)^{m+1}\cdot\II_{\{|x|\le \sqrt{2/3}\}}$.
\item \textbf{i.i.d. $\xb_t$ with dependent entries:} $\xb_t$ is generated from the density function $f_m(\xb)\varpropto(1-\xb^\top\bSigma^{-1}\xb)^{m+1}.$ Here $\bSigma$ is a positive definite matrix with $(i,j)$-th entry being equal to $0.2^{|i-j|},1\le i,j\le 3.$
\item \textbf{Strong mixing $\xb_t$ with dependent entries:} We generate $\xb_t$ from the VAR (vector autoregression) model, where $\xb_t=\Ab \xb_{t-1}+\Bb\xb_{t-2}+\bxi_t$. Here $\Ab,\Bb\in \RR^{3\times 3}$ with $\Ab_{i,j}=0.4^{|i-j|+1},\, \Bb_{i,j}=0.1^{|i-j|+1},\,i,j\in\{1,2,3\}$. In addition, $\{\bxi_t\}_{t\ge 1}$ are i.i.d. with density $f_m(\bxi)\varpropto(1-\bxi^\top\bSigma^{-1}\bxi)^{m+1}$ where the $\bSigma$ is the same as the one given in \textbf{(ii)}.
\end{itemize}

When implementing our algorithm, we divide the time horizon into consecutive episodes by setting the length of the $k$-th episode as $\ell_k=2^{k-1}\ell_0$ with $k\in\NN^{+}$ and $\ell_0=200$. We further separate every episode into an exploration phase with length $|I_k|=\min\{(d\ell_k)^{(2 m+1)/(4 m-1)},\ell_k\}$ depending on the values of $m$ and $d$. The exploitation phase contains the rest of the time in that episode. In the exploration phase, we sample $p_t$ from $\textrm{Unif}(0,B=6)$, since $B=6$ is a valid upper bound of $v_t$. In the exploitation phase, we set the kernels as follows: For any given $m\in \{2,4,6\}$ prefixed at the beginning of the algorithm, we choose the kernel function with $m$-th order. Here we choose the second, fourth, sixth-order kernel functions as $K_2(u)=35/12(1-u^2)^3\cdot \II_{\{|u|\le 1\}}$, $K_4(u)=27/16(1-11/3u^2)\cdot K_2(u)$ and $K_6(u)=297/128(1-26/3 u^2+13u^4)\cdot K_2(u)$ respectively.   In episode $k$, we set the bandwidth $b_k$ as $3\cdot |I_k|^{-\frac{1}{2m+1}}$ in \eqref{exp:f_k} and \eqref{exp:f_11} according to the settings in the theoretical analysis. In reality, one can also tune the bandwidth by using cross validation at the end of every exploration phase. Moreover, when calculating $p_t=\hat g(\tilde{\xb}_t^\top\hat\btheta_k)=\tilde{\xb}_t^\top\hat\btheta_k+\hat\phi_k^{-1}(-\tilde{\xb}_t^\top\hat\btheta_k)$, we find $\hat\phi_k^{-1}(-\tilde{\xb}_t^\top\hat\btheta_k)$ as follows: First, we look for $x\in[-1,1]$ such that $\hat\phi_k(x)=-\tilde{\xb}_t^\top\hat \btheta_k$ (The interval $[-1,1]$ contains the true support of $\phi(x)$ [-0.5, 0.5], since in reality, we might only know a range of the true support). Then, we do a transformation of variable $x$ to $x(y)=-2\cdot \exp(y)/(1+\exp(y))+1$ and solve $y$ as the root of $\hat\phi_k(x(y))+\tilde{\xb}_t^\top\hat \btheta_k=0$ by using Newton's method starting at $y=0$. Finally, we set $x=-2\cdot \exp(y)/(1+\exp(y))+1$ as $\hat\phi_k^{-1}(-\tilde{\xb}_t^\top\hat\btheta_k)$ and offer $p_t$ according to the algorithm.

For any given $m\in\{2,4,6\}$, under the three covariate settings discussed above, we input $m$ into the algorithm, select the corresponding kernel and repeat Algorithm \ref{pricing_1} for $30$ times until $T=6300$. For each $T\in[1500,2000,3100,4000,5000,6300]$, we record the cumulative regret reg$(T)$. For the first two covariate settings, recall from Theorem \ref{mainthm} that the regret reg$(T)\lesssim T^{\frac{2m+1}{4 m-1}}\log^2 T$. Thus, we plot $\tilde{\textrm{reg}}(T)$ against $\log(T)-\log(1500)$ in Figure \ref{indp}-\ref{mix_dp}, where $\tilde{\textrm{reg}}(T):=\log(\textrm{reg}(T))-2\log\log T-(\log(\textrm{reg}(1500))-2\log\log 1500)$; 


\begin{figure}[H]
	\centering
	\begin{tabular}{ccc}
		\hskip-30pt\includegraphics[width=0.35\textwidth]{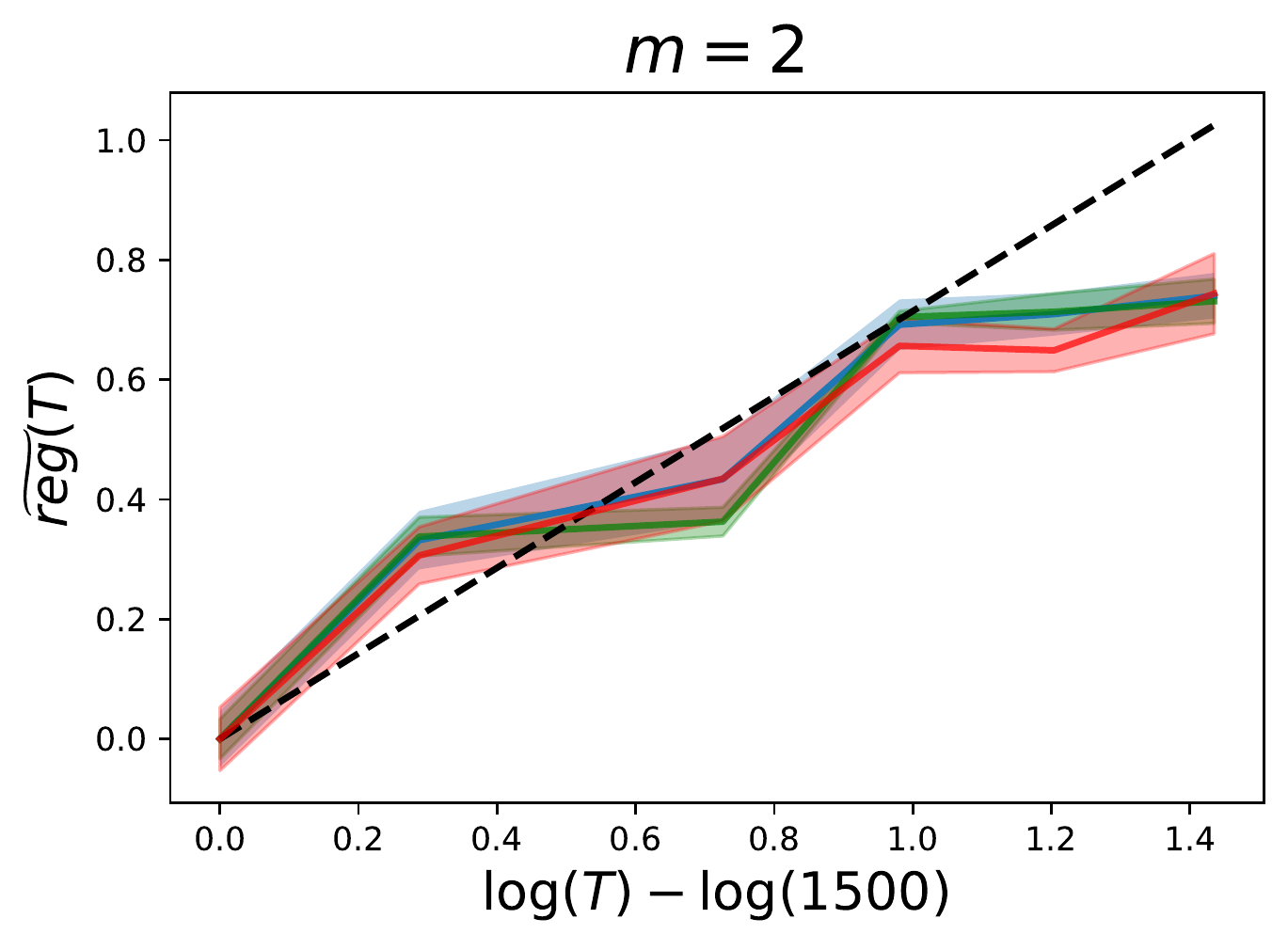}
		&
		\hskip-6pt\includegraphics[width=0.35\textwidth]{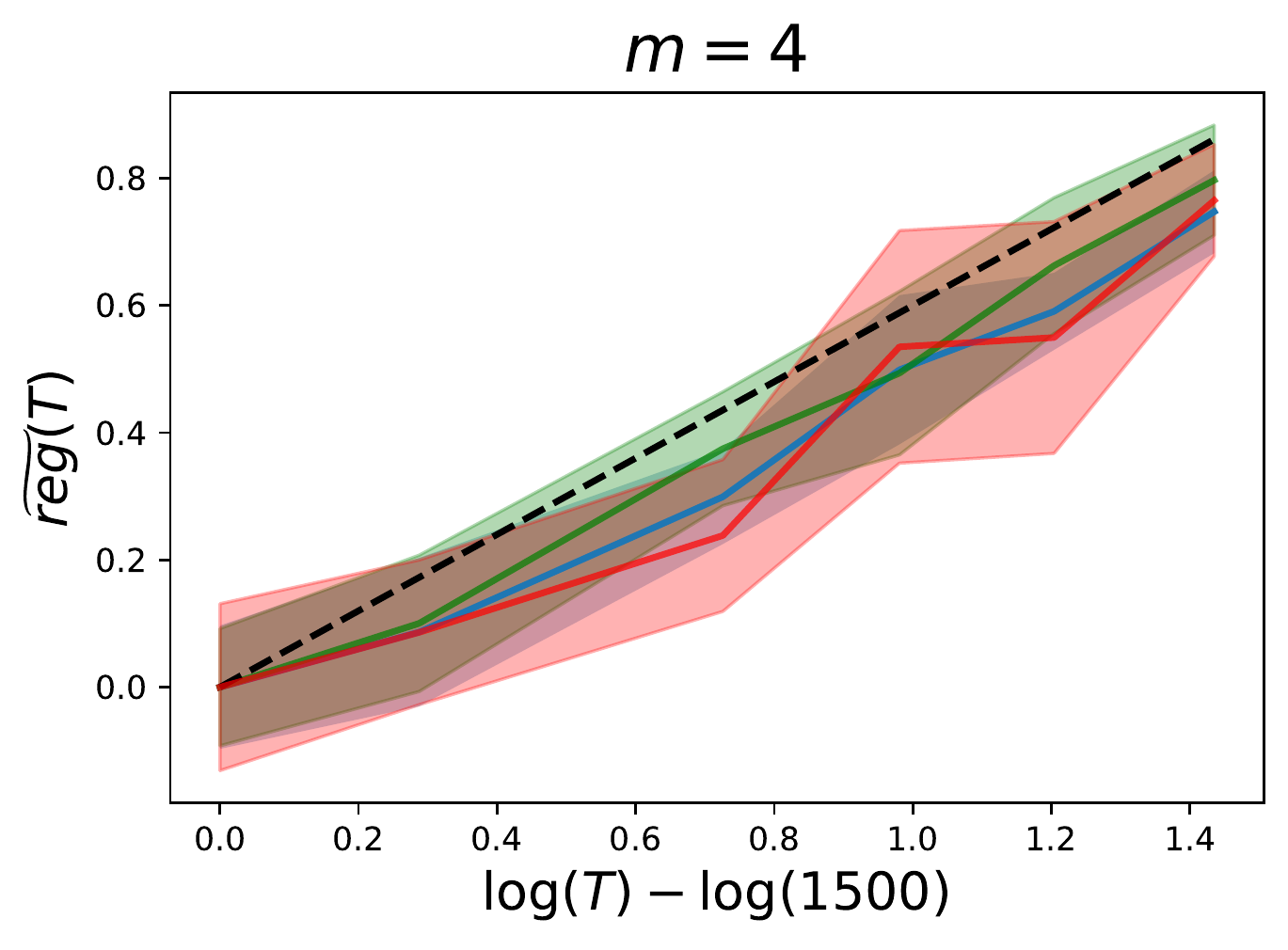}
		&
		\hskip-5pt\includegraphics[width=0.35\textwidth]{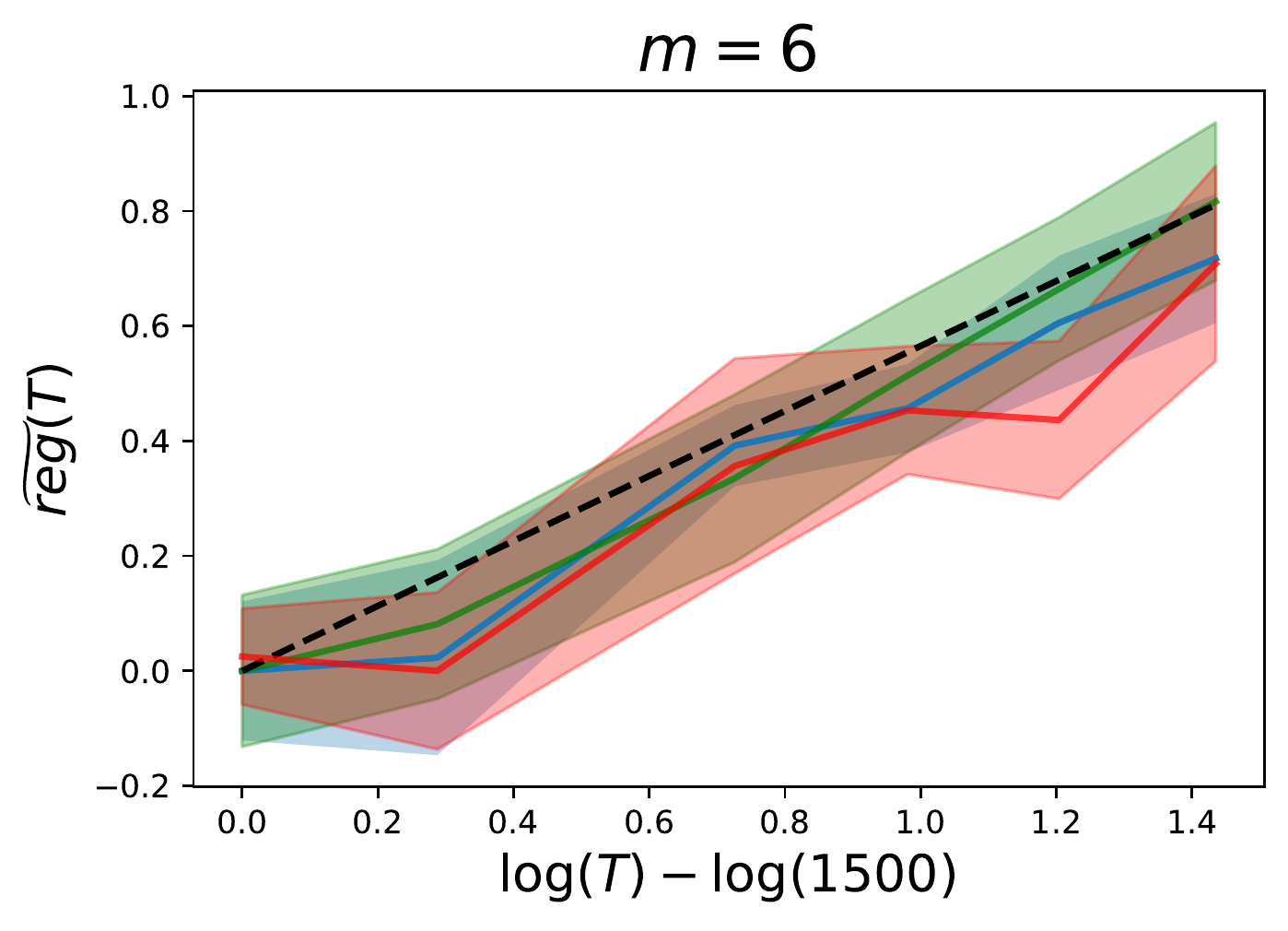} \\
		(a)  & (b) &(c)
	\end{tabular}\\
	\caption{Regret log-log plot in the setting with i.i.d. covariates with independent entries. The three subplots show the case $m\in [2, 4, 6]$ respectively. The x-axis is $\log(T)-\log(1500)$ for $T\in[1500,2000,3100,4000,5000,6300]$, while the y-axis is $\tilde{\textrm{reg}}(T):=\log(\textrm{reg}(T))-2\log\log T-(\log(\textrm{reg}(1500))-2\log\log 1500)$. The solid blue, green and red lines represent the mean $\tilde{\textrm{reg}}(T)$ of the Algorithm \ref{pricing_1} with unknown $g(\cdot)$ and $\btheta_0$, unknown $g(\cdot)$ but known $\btheta_0$, and known $g(\cdot)$ but unknown $\btheta_0$ respectively over $30$ independent runs. The light color areas around those solid lines depict the standard error of our estimation of $\log(\textrm{reg}(T))-2\log\log T$. The dashed black lines in $(a)-(c)$ represents the benchmark whose slopes are equal to $\frac{2 m+1}{4 m-1}$ with $m\in \{2,4,6\}$.}
	\label{indp}
\end{figure}

\begin{figure}[H]
	\centering
	\begin{tabular}{ccc}
		\hskip-30pt\includegraphics[width=0.35\textwidth]{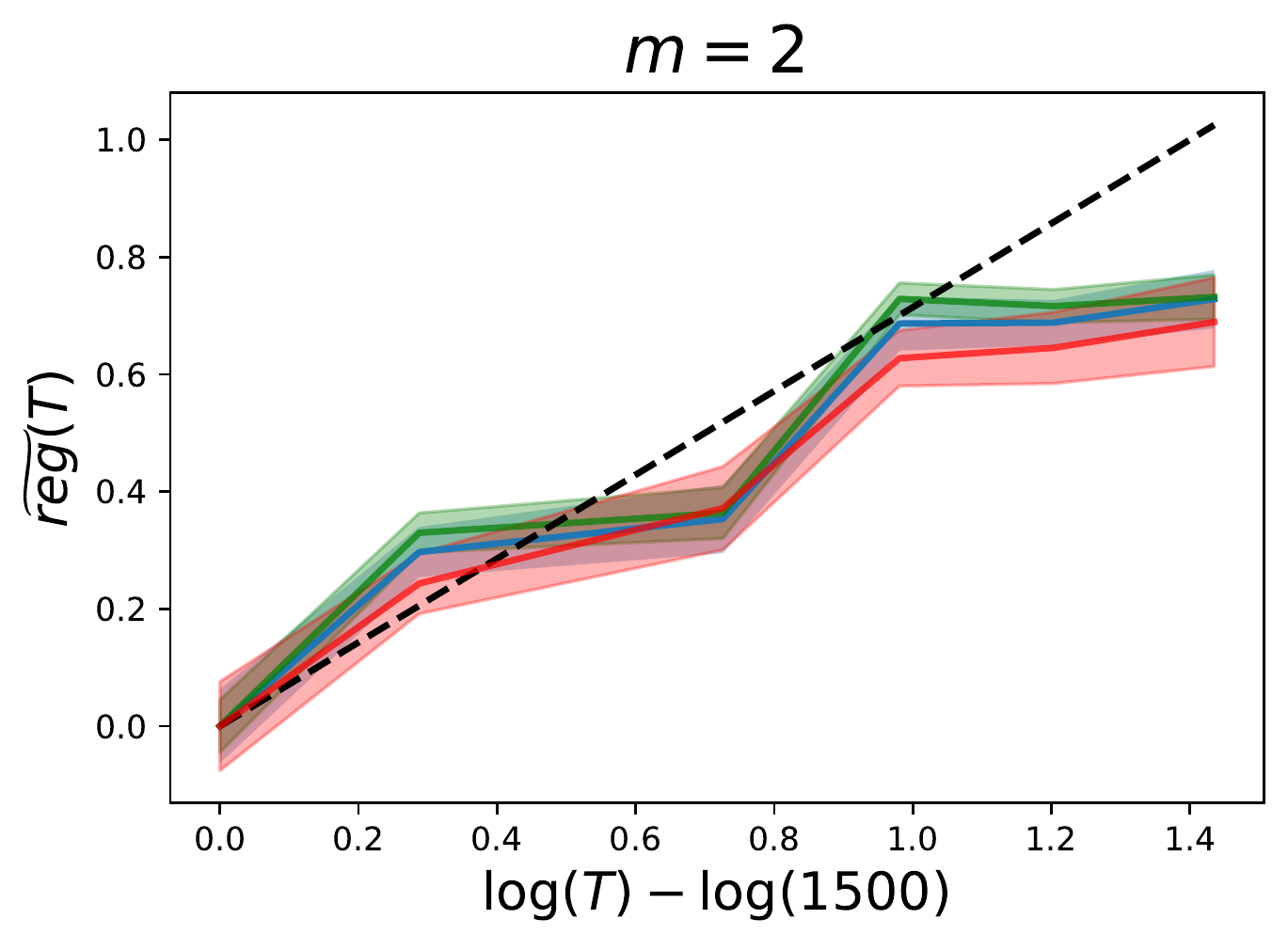}
		&
		\hskip-6pt\includegraphics[width=0.35\textwidth]{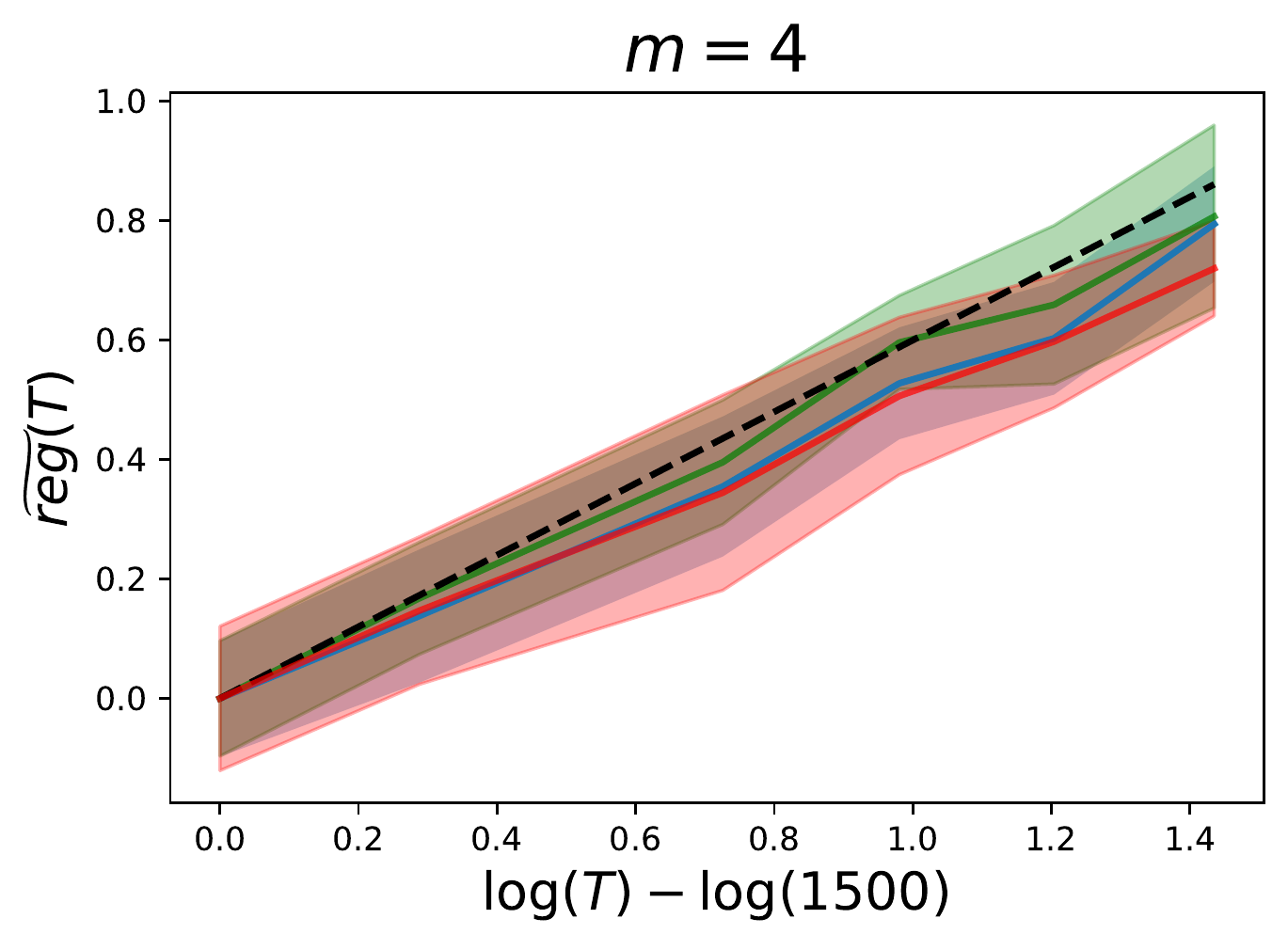}
		&
		\hskip-5pt\includegraphics[width=0.35\textwidth]{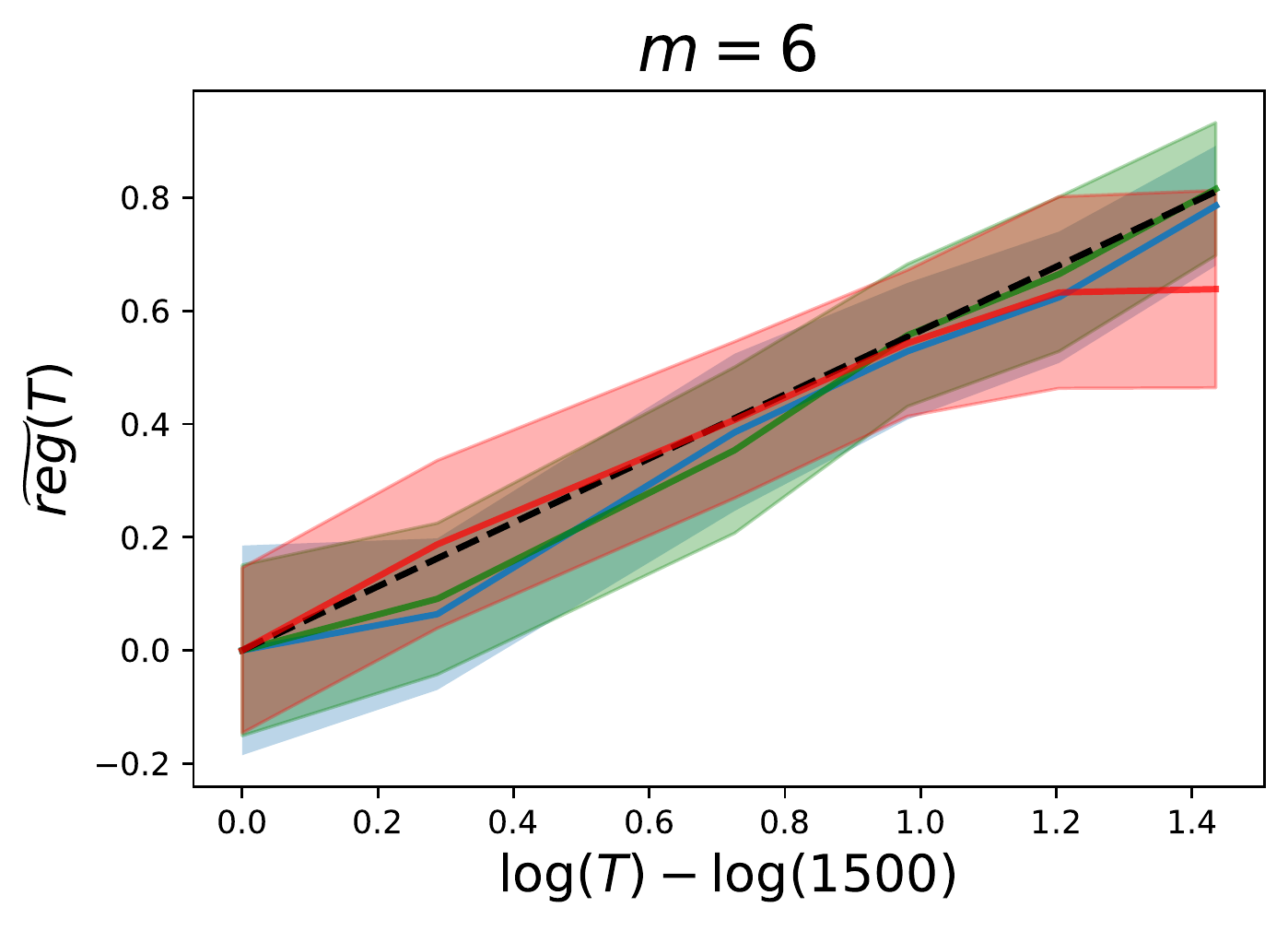} \\
		(a)  & (b) &(c)
	\end{tabular}\\
	\caption{Regret log-log plot in the setting with i.i.d. covariates with dependent entries.
		The remaining caption is the same as Figure~\ref{indp}.}
	\label{dp}
\end{figure}

\begin{figure}[H]
	\centering
	\begin{tabular}{ccc}
		\hskip-30pt\includegraphics[width=0.35\textwidth]{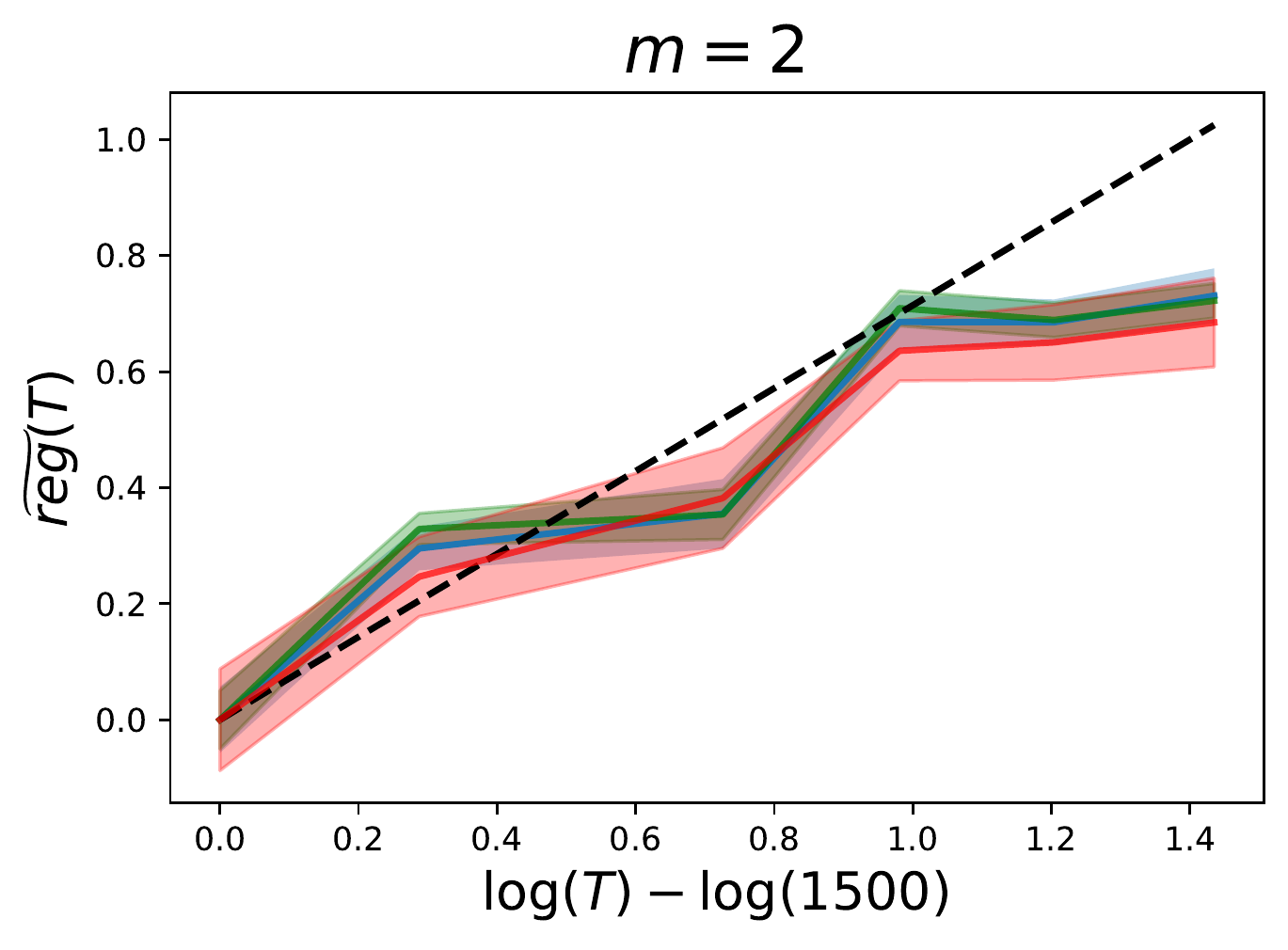}
		&
		\hskip-6pt\includegraphics[width=0.35\textwidth]{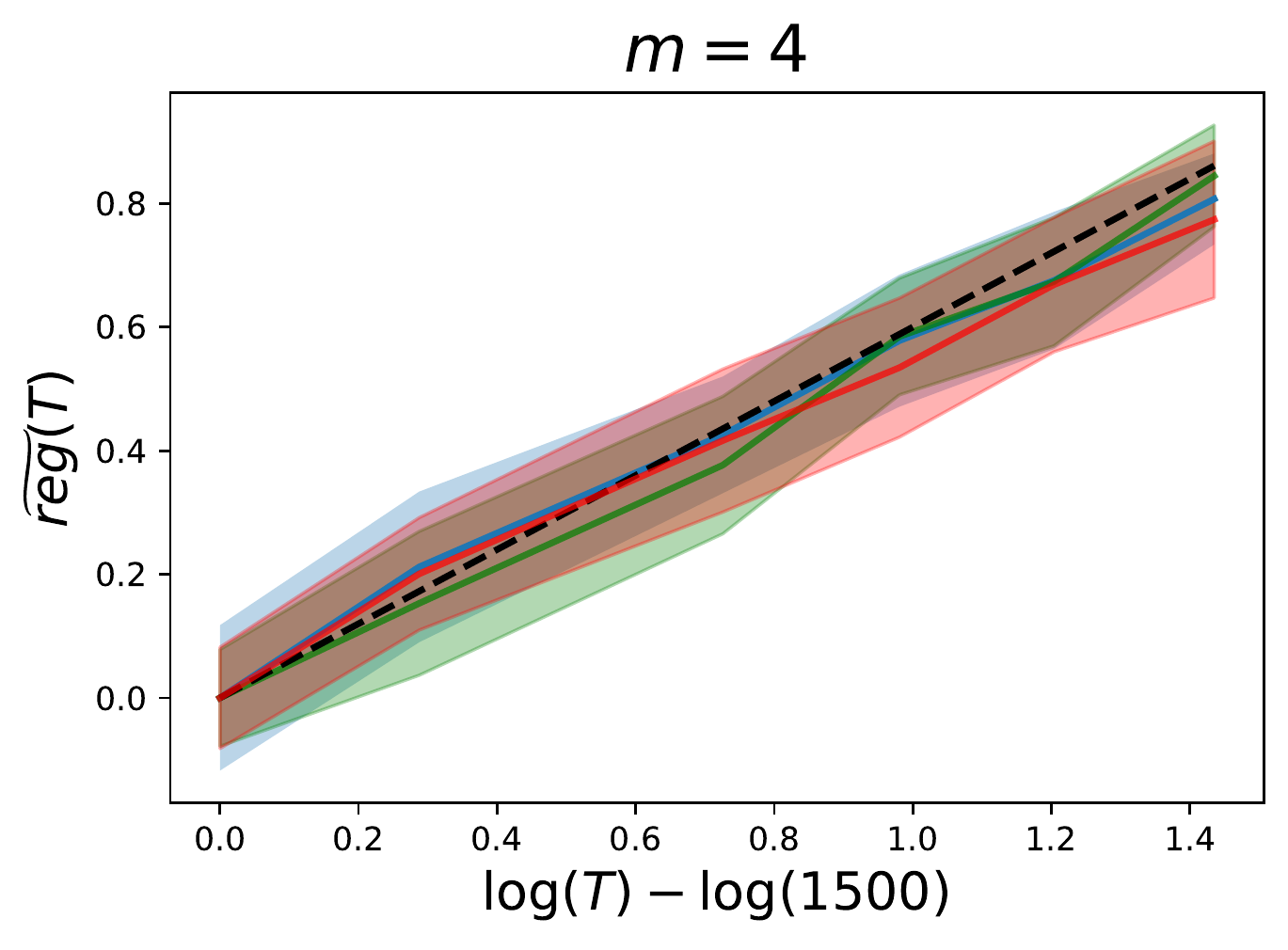}
		&
		\hskip-5pt\includegraphics[width=0.35\textwidth]{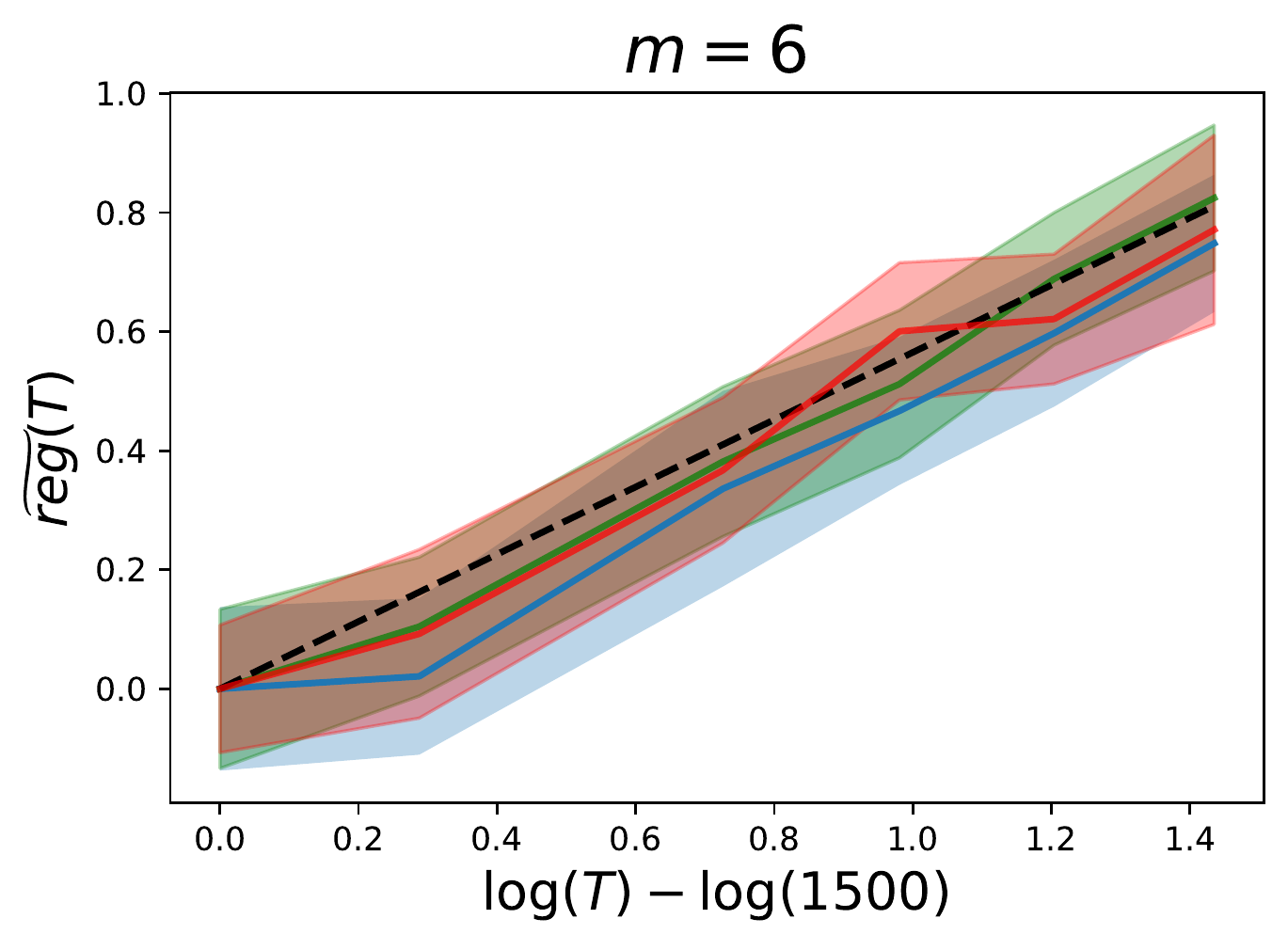} \\
		(a)  & (b) &(c)
	\end{tabular}\\
	\caption{Regret log-log plot in the setting with strong mixing covariates.
		The remaining caption is the same as Figure~\ref{indp}.}
	\label{mix_dp}
\end{figure}

{From Figures \ref{indp}-\ref{mix_dp}, we conclude that under all settings, the rates of the empirical regrets' increments produced by Algorithm \ref{pricing_1} (as shown by the solid blue lines) do not exceed their theoretical counterparts given in Theorems \ref{mainthm} and \ref{mainthm2} (as shown by the dashed black lines). In many cases, the growth rates of the empirical regrets are very close to those of the theoretical lines. This demonstrates the tightness of our theoretical results. Moreover, as all the solid lines have similar growth rates, we show that Algorithm \ref{pricing_1} is robust to the estimation of $\btheta_0$ and $g(\cdot)$. This is further proved in Appendix \ref{add_plots}, where we directly plot reg$(T)$ for all the settings discussed here. See Appendix \ref{add_plots} for more plots and discussions.}



\subsection{Comparison with other methods}\label{sec:simulation-compare}
\revise{In this subsection, we provide numerical studies which illustrate differences between our methods and two highly related prior arts (`RMLP-2' and `Bandit') using both synthetic and real data. Here, `RMLP-2' is the policy proposed in \cite{JN19} that solves the same problem as ours except that the noise distribution falls in a \emph{parametric} function class. In addition, we denote the policy proposed in \cite{KL03} as 'Bandit', which leverages a variant of UCB algorithm under non-parametric noise distribution that achieves $\cO(\sqrt{T})$ regret \emph{without} modeling covariate information. 
}

\revise{We first use synthetic data to illustrate the efficiency of our method over `RMLP-2' and `Bandit'. For each smoothness degree $m=\{2,4,6\},$ we generate our data following the same way given in \S\ref{justification_theory}, except that we only generate the distribution of $\xb_t$ according to the first option discussed in \S\ref{justification_theory}. We illustrate the performance of our method against those two prior arts in the following figures. Here we follow Algorithm \ref{pricing_with_m} which uses a data-driven way to determine $m$ before every episode. For RMLP-2, since there is no way the algorithm knows the true noise distribution, we instead assume the noise falls into a Gaussian distribution when executing the algorithm.}
\begin{figure}[H]
	\centering
	\begin{tabular}{ccc}
		\hskip-30pt\includegraphics[width=0.37\textwidth]{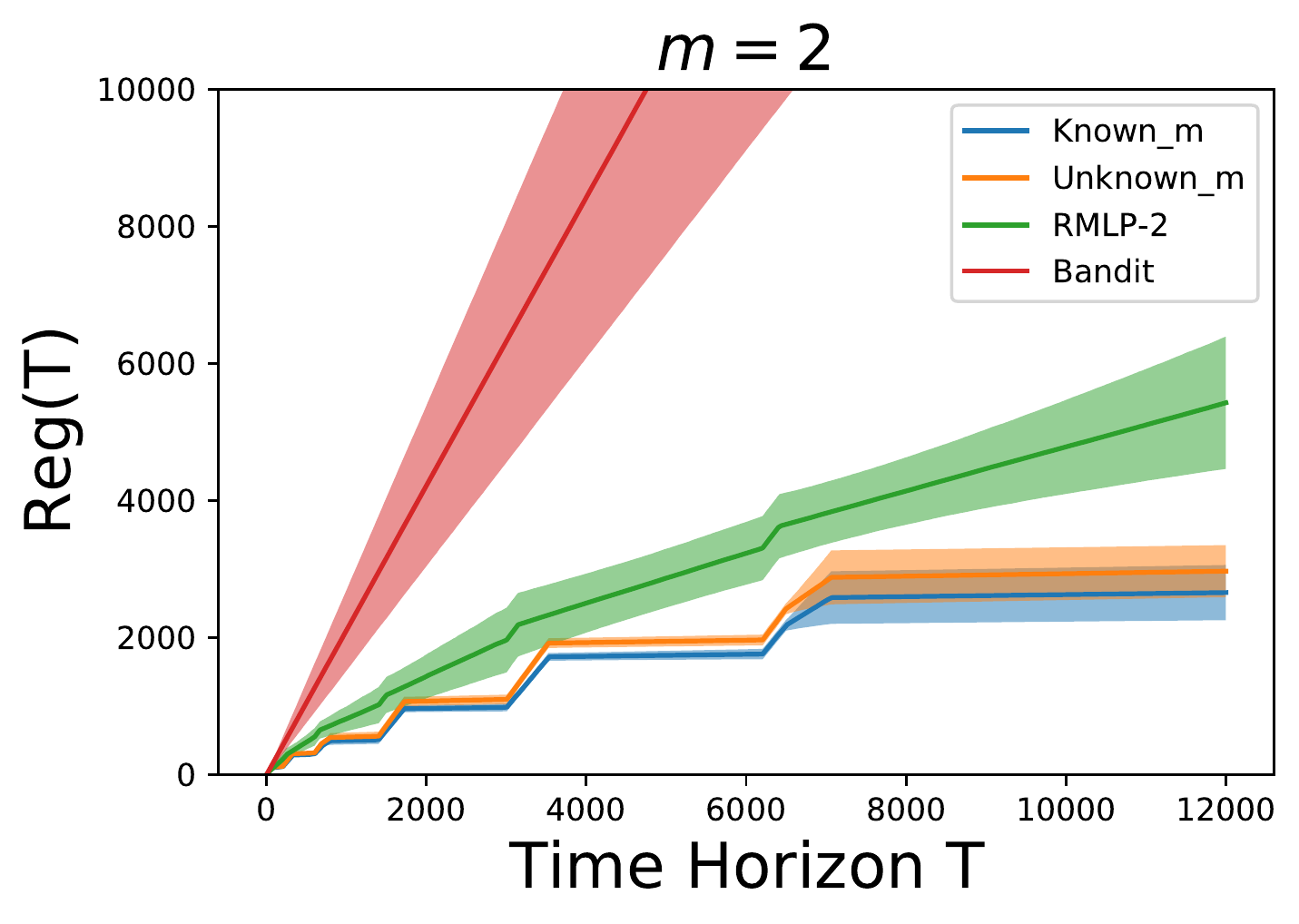}
		&
		\hskip-6pt\includegraphics[width=0.37\textwidth]{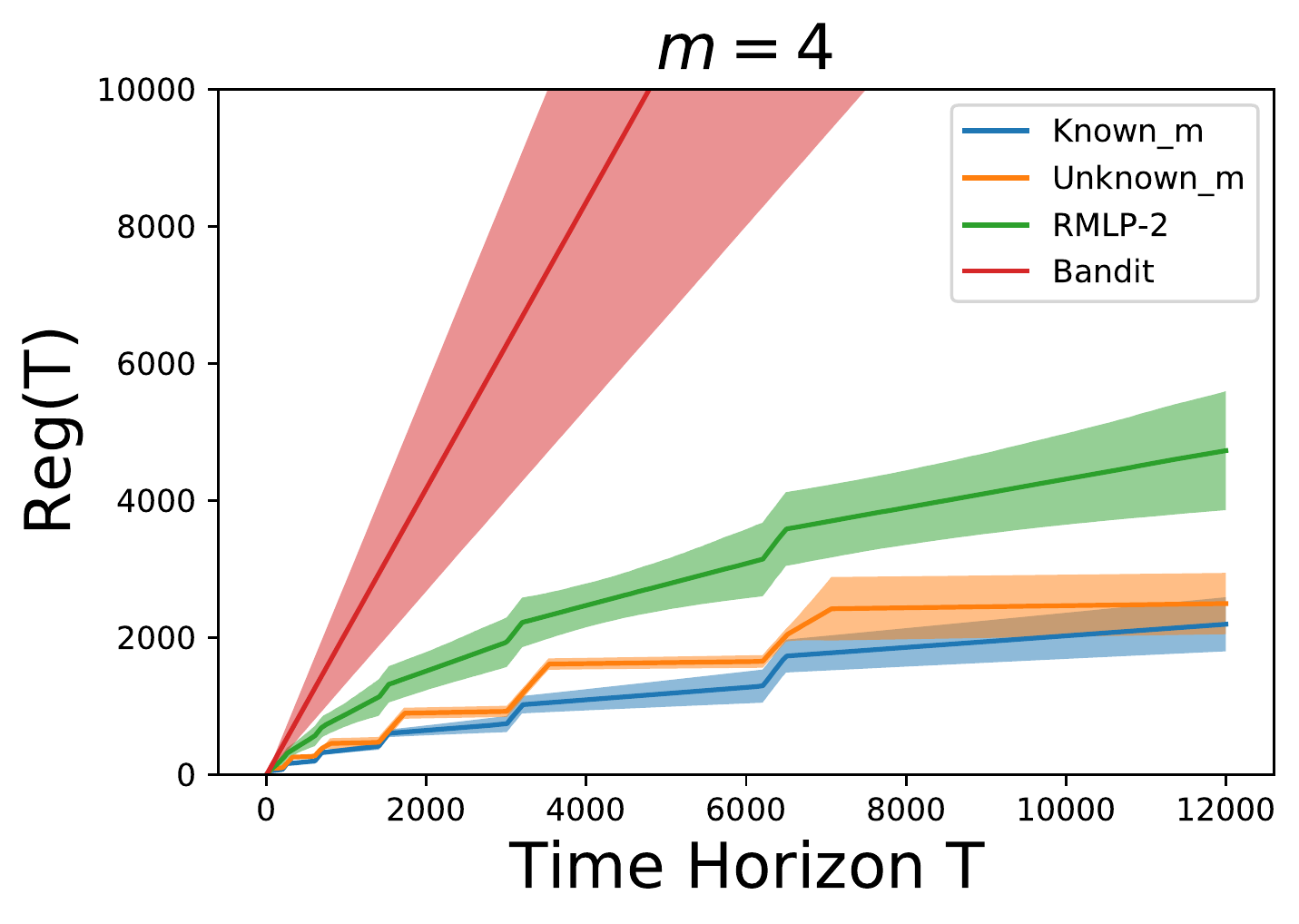}
		&
		\hskip-5pt\includegraphics[width=0.37\textwidth]{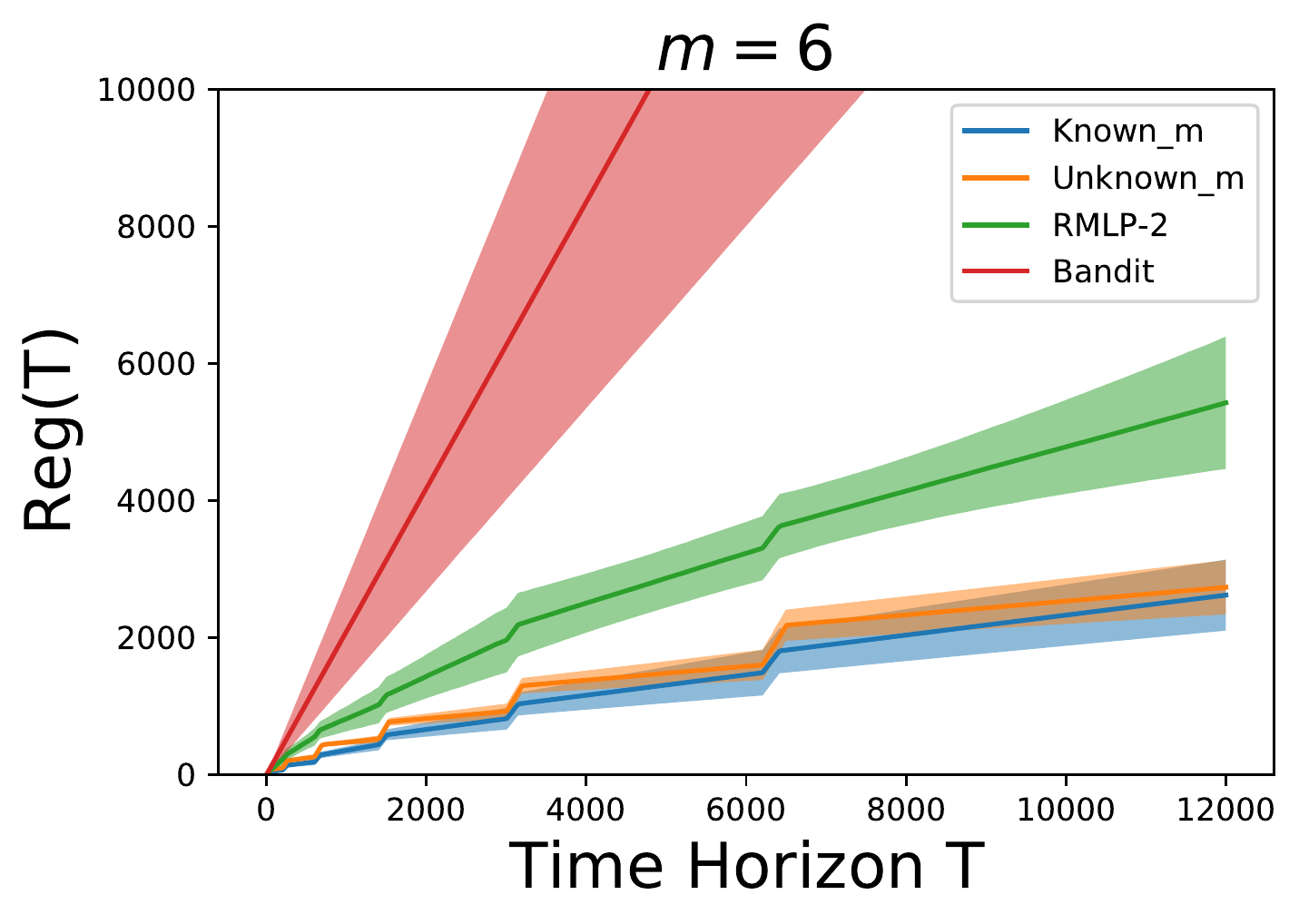} \\
		(a)  & (b) &(c)
	\end{tabular}\\
	\caption{Regret Comparison between our methods and two benchmarks (RMLP-2 and Bandit). From the left to the right, the true underlying degree of smoothness is $m=\{2,4,6\}$ respectively. The x-axis denotes the time stamp $T$  ranges from $1\sim 12000,$ and the y-axis denotes the regret at the time $T$ defined in \eqref{eq:Regret_def}. We repeat the experiment 30 times and record the averaged regrets (solid lines) and standard errors (light areas) of every policy. The blue line denotes the regret of our policy (in Algorithm \ref{pricing_1}) with knowing degree of smoothness $m$ and the orange line represents the regret of our policy (given in Algorithm \ref{pricing_with_m}) without knowing degree of smoothness $m$. The green and red lines are the regrets of implementing 'RMLP-2' and 'Bandit' policy respectvely. }
		\end{figure}

\revise{We see from the simulation results that the regret we achieved is much smaller than those two benchmarks. As for the comparison with RMLP-2, our method is robust to the mis-specification of the parametric function class since our algorithm can adapt to all functions in the non-parametric class. For the comparison with 'Bandit', we see that only using the non-parametric bandit algorithm without considering the contextual information (heterogeneity of product) will lose much efficiency in gaining revenue.
} 
\subsubsection{Real Application}
 \revise{Next, we leverage a simulation based on the real data to further illustrate the merits of our Algorithm over 'RMLP-2' and 'Bandit'.}
 
\revise{ We use the real-life auto loan dataset  provided by the Center for Pricing and Revenue Management at Columbia University. This dataset is used by several related works \citep{Phillips2015,Ban2020PersonalizedDP,LSL21,WWSC20} and many others. The dataset contains $208,085$ auto loan applications received from July 2002 to November 2004. Some features such as the amount of loan, the borrower's information is
contained in that dataset. We adopt the feature selection in the same way with \cite{Ban2020PersonalizedDP,LSL21,WWSC20} and consider the following four features: the loan amount approved, FICO score, prime rate and competitor's rate. As for the price variable, we also computed it in the same way with the aforementioned literature, where $p_t=\textrm{Monthly Payment}\cdot \sum_{t=1}^{\textrm{Term}}(1+\textrm{Rate})^{-t}-\textrm{Loan Amount}$. The rate is set as $0.12\% $, which is an approximate average of the monthly London interbank rate for the studied time period.
Moreover, this dataset also records purchasing decision of the borrowers given the price set by the lender. For more details on this dataset, please refer to \cite{Phillips2015,Ban2020PersonalizedDP}. }

\revise{Note that one is not able to obtain online responses to any algorithms, thus, we follow the calibration idea proposed in \cite{Ban2020PersonalizedDP,LSL21,WCCG2020} to first estimate the binary choice model and leverage it as the ground truth to conduct online numerical experiments. To be more specific, we first scale all variables into the scale of $[0,1] $ (since the prediction results of single index model won't be affected by scale of the covariates). 
We randomly sample 5000 data points,  estimate $\btheta_0$ and $F$ using semi-parametric estimation tools from these data.
We next treat them as the underlying true parameters for our binary choice model stated in \eqref{eq:prob-model}. Given these key components, the remaining experiments remain almost the same as discussed in \S\ref{justification_theory} and \S\ref{sec:simulation-compare}, except that here we set $\btheta_0$, distribution $F(\cdot)$ as the estimated one given above and  sample $\xb_t$ from those four features above. We set $B_0=4,\ell_0=200$ and conduct Algorithm \ref{pricing_with_m} (in this algorithm, we use cross-validation to select $m$ at the beginning of every episode, details are given in Algorithm \ref{setting_m}).  }

\revise{We next compare Algorithm \ref{pricing_with_m} with `RMLP-2' and `Bandit' policies. The details are given in Figure \ref{fig-5}. }
\begin{figure}[H]
	\centering
	\begin{tabular}{c}
		\hskip-30pt\includegraphics[width=0.45\textwidth]{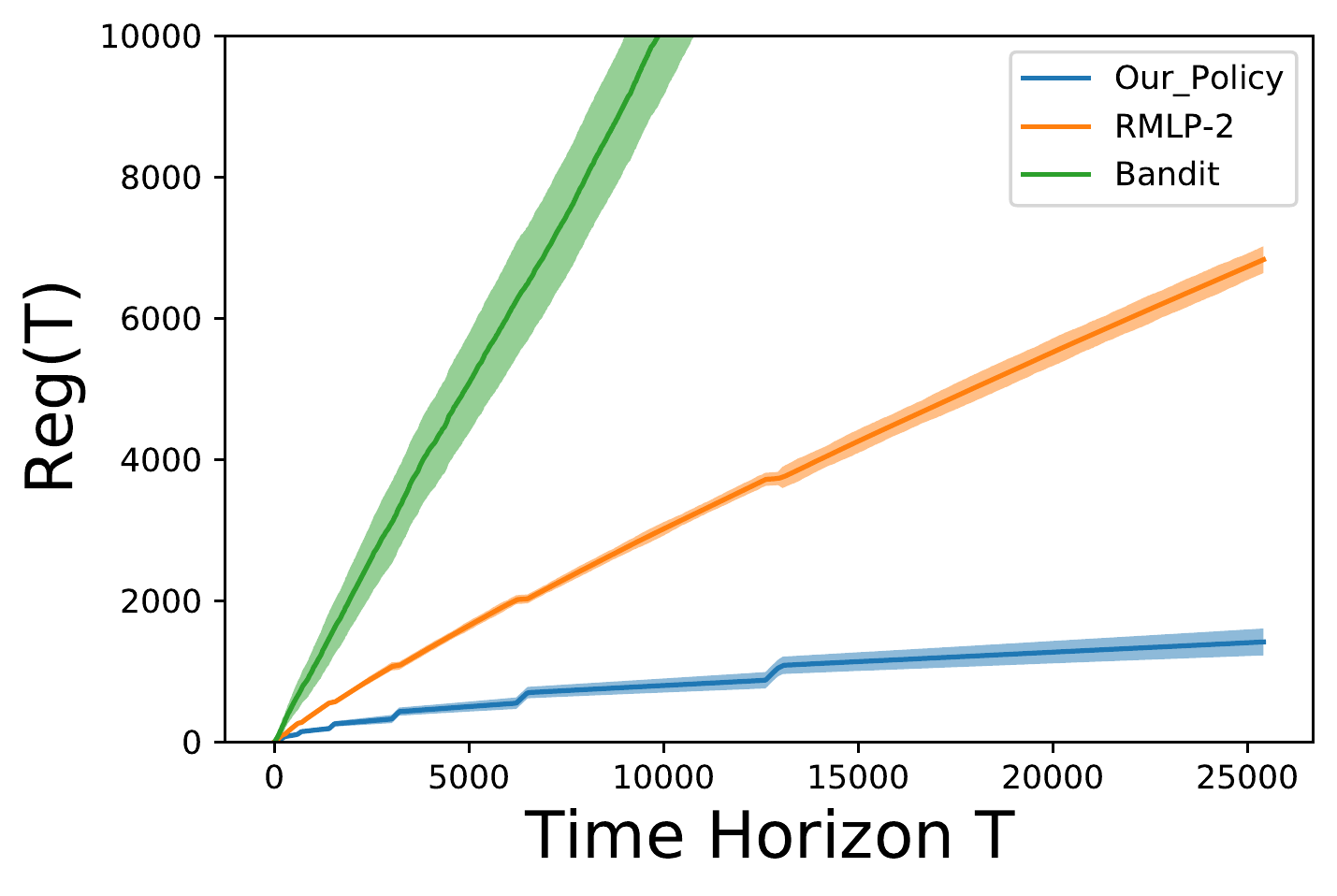}
	\end{tabular}\\
	\caption{Comparison between our policy and `RMLP-2' and `Bandit' based on real data application.}\label{fig-5}
		\end{figure}
\revise{To summarize, our policy outperforms the RMLP-2 \citep{JN19} and non-parametric bandit policy \citep{kleinberg2003} in terms of both the regret performance and the ability to adapt to different noise distributions.}

\section{Conclusion}

In this paper, we study the contextual dynamic pricing problem where the market value is linear in features, and the market noise has unknown distribution. We propose a policy that combines semi-parametric statistical estimation and online decision making. Our policy achieves near optimal regret, and is close to the regret lower bound where the market noise distribution belongs to a parametric class. We further generalize these results to the case when the product features satisfy the strong mixing condition. The practical performance of the algorithm is proved by extensive simulations.

There are several directions worth exploring in the future. First, we conjecture that the estimation accuracy of the market noise distribution $F$ is crucial in the regret. Thus, within the function class $F\in \CC^{(m)}$, we conjecture that a tighter regret lower bound $\Omega_d(T^\frac{2m+1}{4m-1})$ can be achieved instead of $\Omega_d(\sqrt{T})$,  namely, our procedure is optimal.  Second, in this work, we consider a linear model for the market value. In case a more complex model is appropriate, it's possible to extend our methodology to where the market value is nonlinear in product features, e.g. $v_t = \phi(\btheta_0^\top \xb_t) + z_t$ or other structured statistical machine learning model such as the additive model $v_t = f_1(x_{t1}) + \cdots + f_d(x_{td}) + z_t$. Finally, it's worth studying similar pricing problems with adversarial or strategic buyers, which is potentially more suitable in some specific applications.





\newpage
\appendix

\section{Proof under the time-independent feature setting}
\subsection{Proof of Lemma \ref{thmsignal}}\label{proofparathm}
First, recall that $R_\cX := \sup_{\xb\in\cX}\|\xb\|_2$, we deduce that $\xb_t$ is also subgaussian with norm upper bounded by $\psi_x = R_\cX$. This fact is useful in later proofs as well. Now according to \eqref{thetaupdate}, for the $k$-th episode, our loss function $L_k(\btheta)$ is defined as 
\begin{align}\label{lossepisode}
L_k(\btheta)=\frac{1}{|I_{k}|}  \sum_{t\in I_{k}}(By_t-\btheta^\top\tilde{\xb}_t)^2.
\end{align}
For notational convenience, denote $n = |I_{k}|$. Then the gradient and Hessian of $L_k(\btheta)$ is given by
\begin{align}\label{gradient}
\nabla_{\btheta} L_k(\btheta)&=\frac{1}{n}\sum_{t\in I_{k}} 2(\btheta^\top\tilde{\xb}_t-By_t)\tilde{\xb}_t,\\
\nabla^2_{\btheta}L_k(\btheta)&=\frac{1}{n}\sum_{t\in I_{k}} 2\tilde{\xb}_t\tilde{\xb}_t^\top.\label{hessien}
\end{align}
Let $\hat\btheta_k$ be the global minimizer of $L_k(\btheta)$. We do a Taylor expansion of $L_k(\hat\btheta_k)$ at $\btheta_0$:
 \begin{align}\label{taylorexp}
L_k(\hat\btheta_k)-L_k(\btheta_0)=\langle \nabla L_k(\btheta_0),\hat\btheta_k-\btheta_0 \rangle +\frac{1}{2}\langle\hat\btheta_k-\btheta_0,\nabla^2_{\btheta}L_k(\tilde{\btheta})(\hat\btheta_k-\btheta_0) \rangle.
 \end{align}
Here $\tilde{\btheta}$ is a point lying between $\hat\btheta_k$ and $\btheta_0$. As $\hat\btheta_k$ is the global minimizer of loss \eqref{lossepisode}, we have
 \begin{align*}
\langle \nabla L_k(\btheta_0),\hat\btheta_k-\btheta_0 \rangle +\frac{1}{2}\langle\hat\btheta_k-\btheta_0,\nabla^2_{\btheta}L_k(\tilde{\btheta})(\hat\btheta_k-\btheta_0) \rangle\le 0
\end{align*}
which implies
 \begin{align}\label{ineq}
\langle\hat\btheta_k-\btheta_0,\frac{1}{n}\sum_{t\in I_{k}} \tilde{\xb}_t\tilde{\xb}_t^\top(\hat\btheta_k-\btheta_0) \rangle\le \langle \nabla L_k(\btheta_0),\btheta_0-\hat\btheta_k \rangle\le\sqrt{d}\|\nabla L_k(\btheta_0) \|_{\infty}\cdot\|\btheta_0-\hat\btheta_k\|_2.
\end{align}
In order to achieve $\ell_2$-convergence rate of $\hat\btheta_k$, we separate our following analysis into two steps.

\textbf{Step I:} In this step, we lower bound the minimum eigenvalue of 
\begin{align}\label{sigmak}
\bSigma_k:=\frac{1}{n}\sum_{t\in I_{k}} \tilde{\xb}_t\tilde{\xb}_t^\top.
\end{align}
using concentration inequalities.

Since $\bSigma_k$ is an average of $n$ i.i.d. random matrices with mean $\Sigma=\EE[\tilde{\xb}_t\tilde{\xb}_t^\top]$ and that $\{\tilde{\xb}_t\}$ are sub-Gaussian random vectors, according to Remark 5.40 in \cite{vershynin_2012}, there exist $c_1$ and $C>c_{\min}$ such that with probability at least $1-2e^{-c_1t^2}$,
\begin{align}\label{concent}
\|\bSigma_k-\bSigma\|\le \max\{\delta,\delta^2\},\textrm{\,\, where\,\,} \delta:=C\sqrt{\frac{d+1}{n}}+\frac{t}{\sqrt{n}}.
\end{align}
Here $c_1, C$ are both constants that are only related to sub-Gaussian norm of $\tilde{\xb}_t$. Now we plug in $t=c_{\min}\sqrt{n}/4$ and $c_0=16C^2/c_{\min}^2$, then as long as $n\geq c_0(d+1)$, with probability at least $1-2e^{-c_1c_{\min}^2n/16}$, 
\begin{equation}\label{eventak}
(c_{\min}/2)\cdot\II\preccurlyeq \bSigma_k.
\end{equation} 

\textbf{Step II:} In this step, we provide an upper bound of $\|\nabla_{\btheta} L_k(\btheta_0) \|_{\infty}$.

First, we prove $\EE[\nabla_{\theta} L_k(\btheta_0)]=0$.
By definition we have 
\begin{align*}
\nabla_{\btheta} L_k(\btheta_0)&=\frac{1}{n}\sum_{t\in I_{k}} 2(\btheta_0^\top\tilde{\xb}_t-By_t)\tilde{\xb}_t
\end{align*}
We take the conditional expectation of $\nabla_{\btheta} L_k(\btheta_0)$ and obtain
\begin{align*}
\EE[\nabla_{\btheta} L_k(\btheta_0)\given \tilde{\xb}_t]=\frac{1}{n}\sum_{t\in I_{k}} 2\EE[(\btheta_0^\top\tilde{\xb}_t-By_t)\given \tilde{\xb_t}]\tilde{\xb}_t.
\end{align*}
By our definition on $y_t$,
\begin{align*}
\EE[\btheta_0^\top\tilde{\xb}_t-By_t\given \tilde{\xb}_t]&=\btheta_0^\top\tilde{\xb}_t-\EE[B\II_{\{p_t\le v_t\}}\given  \tilde{\xb}_t ]\\&=\btheta_0^\top\tilde{\xb}_t-\EE[\EE[B\II_{\{p_t\le v_t\}}\given v_t]\given  \tilde{\xb}_t ]\\&=\btheta_0^\top\tilde{\xb}_t-B\cdot\EE[v_t/B\given  \tilde{\xb}_t ]=0,
\end{align*}
where the third equality follows from $p_t\sim \textrm{Uniform}(0,B)$. After finally taking expectation with respective to $\tilde{\xb}_t$ we deduce that $\EE[\nabla_{\btheta} L_k(\btheta_0)]=0$.

Next, we get an upper bound of $\|\nabla_{\btheta} L_k(\btheta)\|_{\infty}$.
By \eqref{gradient}, we have every entry of $\nabla_{\btheta} L_k(\btheta_0)$ is mean zero. In addition, according to our Assumption \ref{ass:bound}, we have $\xb_t$ are i.i.d. sub-Gaussian random vectors with sub-Gaussian norm $\psi_x$. Thus, we have $\max_{i\in[d]}\|\xb_{t,i}\|_{\psi_2}\le \psi_x$. On the other hand, $\tilde{\xb}_t^\top\btheta_0-By_t$ is bounded by the constant $R_{\cX}R_{\Theta}+B$. Therefore,

\begin{align*}
\PP\big(|2(\btheta_0^\top\tilde{\xb}_t-By_t)\tilde{\xb}_{t,i}|\ge u\big)\le\PP\big(2(R_{\cX}R_{\Theta}+B)|\tilde{\xb}_{t,i}|\ge u \big)\le 2\exp\Big(\frac{-u^2}{8\psi_x^2(R_{\cX}R_{\Theta}+B)^2}\Big)
\end{align*}
for $i\in[2:(d+1)]$, which implies that $2(\btheta_0^\top\tilde{\xb}_t-By_t)\tilde{\xb}_{t,i},i\in[2:(d+1)]$ are sub-Gaussian random variables with variance proxy $2\psi_x(R_{\cX}R_{\Theta}+B)$. Moreover, We can also obtain $\|2(\btheta_0^\top\tilde{\xb}_t-By_t)\tilde{\xb}_{t,1}\|_{\psi_2}\le 2(R_{\cX}R_{\Theta}+B)$ by Hoeffding's inequality. 

We now take the union bound of all entries of $\nabla_{\btheta} L_k(\btheta_0)$:\begin{align}\label{tailineq1}
\PP\big(\|\nabla_{\btheta} L_k(\btheta_0)\|_{\infty}\ge t\big)&\le 2(d+1)\exp\Big(\frac{-t^2}{8\max\{\psi_x^2,1\}(R_{\cX}R_{\Theta}+B)^2}\Big)\\&=2\exp\Big(\frac{-nt^2}{8\max\{\psi_x^2,1\}(R_{\cX}R_{\Theta}+B)^2}+\log (d+1)\Big)\label{tailineq2}.
\end{align}
As we assume $n\ge d+1$, by taking $t=4\max\{\psi_x,1\}(R_{\cX}R_{\Theta}+B)\sqrt{\log n/n}$ in \eqref{tailineq2}, then with probability $1-2/n$, we have
\begin{align}\label{gradup}
\|\nabla_{\btheta} L_k(\btheta_0)\|_{\infty}\le 4\max\{\psi_x,1\}(R_{\cX}R_{\Theta}+B)\sqrt{\frac{\log n}{n}}.
\end{align}


Finally, combining \eqref{ineq}, \eqref{eventak} and \eqref{gradup}, we obtain that with probability at least $1-2e^{-c_1c_{\min}^2|I_{k}|/16}-2/|I_{k}|$,
\begin{align*}
\|\hat\btheta_k-\btheta_0\|_2\le  \frac{8\max\{\psi_x,1\}(R_{\cX}R_{\Theta} +B)}{c_{\min}}\sqrt{\frac{(d+1)\log|I_{k}|}{|I_{k}|}}.
\end{align*}

\subsection{Proof of Lemma \ref{conckernel_main}}\label{pf:conckernel_main}

For the following analysis, we fix any episode index $k$ satisfying the conditions of Lemma \ref{conckernel_main}. It's easy to verify that for any $k\geq (\log(\sqrt{T} - \log \ell_0))/\log 2$, $\Theta_k\subset\Theta_0$. Therefore, all the assumptions hold for $\btheta\in \Theta_k$. Our goal is to prove \eqref{eq:conckernel_main} holds with high probability on the $k$-th episode. 

Now we have the i.i.d. samples $\{w_t(\btheta):= p_t - \tilde \xb _t^\top \btheta , y_t\}_{t\in I_k}$ from some distribution $P_{w(\btheta), y}$. According to the previous notations, the marginal distribution $P_{w(\btheta)}$ has density $f_{\btheta}(u)$. Moreover, $r_{\btheta}(u):=\EE [y_t\given w_t(\btheta)=u]$. We're interested in bounding the quantity $\sup_{u\in I,\btheta\in \Theta_k} |\hat r_k(u,\btheta) - r_{\btheta_0}(u)|$, which leads to the desired conclusion of the lemma. 

For notational simplicity, let $n = |I_k|$ be the length of the exploration phase. Recall that $\hat r_k(u, \btheta) = h_k(u, \btheta) / f_k(u, \btheta)$, where
$$
h_k(u,\btheta) = \frac{1}{nb_k}\sum_{t\in I_k} K(\frac{w_t(\btheta)-u}{b_k})Y_t,\quad f_k(u,\btheta) = \frac{1}{nb_k}\sum_{t\in I_k} K(\frac{w_t(\btheta)-u}{b_k}).
$$
Here, $b_k>0$ is the bandwidth (to be chosen), and $K(\cdot)$ is some kernel function. 

Note that $r_{\btheta}(u) = \frac{h_{\btheta}(u)}{f_{\btheta}(u)}$, we can write the difference between $\hat r_k$ and $r$ as
	\begin{equation}\label{eq-diffr}
	\hat r_k(u,\btheta) - r_{\btheta}(u) =\frac{h_k(u,\btheta)}{f_k(u,\btheta)} - \frac{h_{\btheta}(u)}{f_{\btheta}(u)} = \frac{h_k(u,\btheta) - h_{\btheta}(u)}{f_k(u,\btheta)} + h_{\btheta}(u)\cdot [\frac{1}{f_k(u,\btheta)} - \frac{1}{f_{\btheta}(u)}].
	\end{equation}
The following lemmas are used as tools to control the right hand side of the above equation. The proof of the lemmas can be found in \S\ref{sec-pf-a1} and \ref{sec-pf-a2}.

	\begin{lemma}\label{lem-bias}
		Under Assumptions \ref{ass-pdfx} -- \ref{ass-kernel}, 
		 for any $b_k\leq 1$,
		\begin{align}
		\sup_{u\in I,\btheta\in\Theta_k} |\EE h_k(u,\btheta) - h_{\btheta}(u)|\leq C_{x, K}^{(1)}b_k^m,\label{eq-biash}\\
		\sup_{u\in I,\btheta\in\Theta_k} |\EE f_k(u,\btheta) - f_{\btheta}(u)|\leq C_{x, K}^{(1)}b_k^m.\label{eq-biasf}
		\end{align}
Here, $C_{x, K}^{(1)} = l_f\frac{\int |s^mK(s)|\db s}{(m-1)!}$.
	\end{lemma}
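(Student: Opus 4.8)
The proof of Lemma~\ref{lem-bias} is a bias calculation for a Nadaraya--Watson--type kernel smoother, and I would organize it in three steps; the two displays \eqref{eq-biash} and \eqref{eq-biasf} come from the identical argument, so I would carry it out for $h_k$ and point out the one-line modification for $f_k$ at the end.

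\emph{Step 1: identify $\EE h_k(u,\btheta)$ with a kernel smoothing of $h_\btheta$.} Since $\{(w_t(\btheta),Y_t)\}_{t\in I_k}$ are i.i.d., $\EE[Y_t\mid w_t(\btheta)]=r_\btheta(w_t(\btheta))$, and $h_\btheta=f_\btheta r_\btheta$, the tower property applied to the definition \eqref{eq-def-f-h} gives
$$
\EE h_k(u,\btheta)=\frac{1}{b_k}\,\EE\!\left[K\!\left(\frac{w_t(\btheta)-u}{b_k}\right)r_\btheta\!\left(w_t(\btheta)\right)\right]=\int\frac{1}{b_k}K\!\left(\frac{w-u}{b_k}\right)h_\btheta(w)\,\ud w=\int K(s)\,h_\btheta(u+sb_k)\,\ud s ,
$$
the last equality being the substitution $s=(w-u)/b_k$. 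Because $\int K(s)\,\ud s=1$ (Assumption~\ref{ass-kernel}), subtracting $h_\btheta(u)$ yields $\EE h_k(u,\btheta)-h_\btheta(u)=\int K(s)\,[h_\btheta(u+sb_k)-h_\btheta(u)]\,\ud s$.

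\emph{Step 2: Taylor expand, then kill the low-order terms with the vanishing moments.} Because $K$ has bounded support and $b_k\le 1$, the argument $u+sb_k$ stays in a fixed bounded neighborhood of $I$, where Assumption~\ref{ass-F} supplies the required regularity of $h_\btheta$. I would expand $h_\btheta(u+sb_k)-h_\btheta(u)$ into a Taylor polynomial in $sb_k$ about $u$ of degree $m-2$ plus a Lagrange remainder of order $m-1$, namely $\frac{(sb_k)^{m-1}}{(m-1)!}\,h_\btheta^{(m-1)}(\xi_s)$ with $\xi_s$ between $u$ and $u+sb_k$, and then integrate against $K$: every monomial of degree $j\in\{1,\dots,m-2\}$ integrates to $0$ because $\int s^jK(s)\,\ud s=0$, and the same vanishing-moment property at order $m-1$ (Assumption~\ref{ass-kernel}) lets me subtract $h_\btheta^{(m-1)}(u)$ from $h_\btheta^{(m-1)}(\xi_s)$ inside the surviving remainder integral at no cost. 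Since $|\xi_s-u|\le|s|b_k$ and $h_\btheta^{(m-1)}$ is $l_f$-Lipschitz on $I$ (which follows from Assumption~\ref{ass-F}; see also the equivalent formulation in Remark~\ref{remark-m}), the remainder is bounded by $\frac{l_f b_k^{m}}{(m-1)!}\int|s|^{m}|K(s)|\,\ud s=C^{(1)}_{x,K}b_k^{m}$, which is precisely \eqref{eq-biash}.

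\emph{Step 3: uniformity, and the case of $f_k$.} The bound is uniform over $u\in I$ and $\btheta\in\Theta_k$ because $\Theta_k\subseteq\Theta_0$ and every constant entering the estimate — the smoothness/Lipschitz constant $l_f$ of Assumptions~\ref{ass-pdfx}--\ref{ass-F} and the kernel integral $\int|s^mK(s)|\,\ud s<\infty$ of Assumption~\ref{ass-kernel} — is independent of $u$ and $\btheta$. Inequality \eqref{eq-biasf} is obtained verbatim by replacing $Y_t$ with $1$ in Step~1 (so that $h_\btheta$ is replaced by $f_\btheta$) and invoking the smoothness of $f_\btheta$ from Assumption~\ref{ass-pdfx} in place of that of $h_\btheta$. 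This lemma is essentially routine; the only place that needs genuine care is matching the exact constant $C^{(1)}_{x,K}$, which is why I would expand to degree exactly $m-2$ and exploit the $(m-1)$-st vanishing moment rather than the $m$-th one — that choice lands the whole error on a term governed purely by the Lipschitz modulus of $h_\btheta^{(m-1)}$ (resp.\ $f_\btheta^{(m-1)}$), rather than on a sup-norm of a higher-order derivative.
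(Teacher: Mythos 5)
Your proposal is correct and follows essentially the same route as the paper's proof: identify $\EE h_k(u,\btheta)$ as $\int K(s)h_\btheta(u+b_ks)\,\ud s$ via conditioning and a change of variables, Taylor expand to degree $m-2$ with a Lagrange remainder of order $m-1$, annihilate the low-order terms and the $h_\btheta^{(m-1)}(u)$ part of the remainder with the vanishing moments up to order $m-1$, and bound what remains by the Lipschitz modulus of $h_\btheta^{(m-1)}$, giving exactly $C_{x,K}^{(1)}b_k^m$; the $f_k$ case is handled identically with $Y_t$ replaced by $1$. No gaps beyond those already present in the paper's own argument.
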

	\begin{lemma}\label{lem-dev}
		Under Assumptions \ref{ass-pdfx} -- \ref{ass-kernel}, $\forall b_k\leq 1$, $\delta\in[4e^{-nb_k/3}, \frac{1}{2})$, as long as 
		$nb_k\geq \max\{132d(\log \frac{1}{b_k} + 1), 3\log n\}$,
		either of the following inequalities holds with probability at least $1-\delta$:
		\begin{align}
		\sup_{u\in I,\btheta\in\Theta_k} | h_k(u,\btheta) - \EE h_k(u,\btheta)|\leq C_{x, K}^{(2)}\sqrt{\frac{{\log n}}{{nb_k}}}\left(\sqrt{d} + \sqrt{\log {1}/{\delta}}\right),\label{eq-devh}\\
		\sup_{u\in I,\btheta\in\Theta_k} | f_k(u,\btheta) - \EE f_k(u,\btheta)|\leq C_{x, K}^{(2)}\sqrt{\frac{{\log n}}{{nb_k}}}\left(\sqrt{d} + \sqrt{\log {1}/{\delta}}\right).\label{eq-devf}
		\end{align}
		Here 
		$C_{x, K}^{(2)} = l_K\bigg(8\sqrt{22}\max\{2\bar f \int K^2\db s, 2\bar f \int K'^2\db s, \frac{2}{3}\bar K, 1\} +$
		
		$ \frac{60(6\sqrt{\log 2}+\sqrt{c_0})}{c_0}\sqrt{1+R_{\cX}^2}\max\{\delta_z, \frac{\max\{1,\psi_x\}(B+{R_{\cX}R_{\Theta}})}{c_{\min}}\}\bigg)$ (Numerical constants are not optimized).
	\end{lemma}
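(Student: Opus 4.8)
The two displays are proved by one and the same argument, and the bound for $f_k$ is the $Y_t\equiv 1$ specialization of the bound for $h_k$, so I focus on \eqref{eq-devh}. Write $n=|I_k|$ and, for $t\in I_k$, set $\zeta_t(u,\btheta):=b_k^{-1}K\!\bigl(\tfrac{w_t(\btheta)-u}{b_k}\bigr)Y_t$, so that $h_k(u,\btheta)=\tfrac1n\sum_{t\in I_k}\zeta_t(u,\btheta)$ is an empirical mean of i.i.d.\ terms indexed by $(u,\btheta)\in I\times\Theta_k$. The plan is the standard recipe for uniform kernel deviation bounds: (i) a pointwise Bernstein inequality; (ii) a Lipschitz bound for $(u,\btheta)\mapsto h_k(u,\btheta)-\EE h_k(u,\btheta)$; (iii) discretization of $I\times\Theta_k$ by a fine net, a union bound of (i) over the net, and absorption of the discretization error; and (iv) tracking the constants.

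\textbf{Pointwise bound and Lipschitz control.} Fix $(u,\btheta)$. Since $K$ has bounded support and is $l_K$-Lipschitz by Assumption \ref{ass-kernel}, $\bar K:=\sup_s|K(s)|<\infty$, so $|\zeta_t(u,\btheta)|\le \bar K/b_k$; and with $Y_t^2\le 1$, the change of variables $s=(w_t(\btheta)-u)/b_k$ together with the boundedness $f_{\btheta}\le\bar f$ from Assumption \ref{ass-pdfx} give $\Var(\zeta_t(u,\btheta))\le \bar f\,\bigl(\textstyle\int K^2\bigr)/b_k$. Bernstein's inequality then bounds $\PP(|h_k(u,\btheta)-\EE h_k(u,\btheta)|>x)$ by $2\exp\bigl(-\tfrac{nx^2/2}{\bar f(\int K^2)/b_k+\bar K x/(3b_k)}\bigr)$. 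For the Lipschitz bound, differentiate once in $u$ and once in $\btheta$: because $w_t(\btheta)=p_t-\tilde\xb_t^\top\btheta$ is affine with $\|\tilde\xb_t\|_2\le\sqrt{1+R_{\cX}^2}$, $|Y_t|\le 1$, and $K'$ is bounded (again by Assumption \ref{ass-kernel}), both partial derivatives are $O(b_k^{-2})$ in size, and the same bound passes to $\EE h_k$; hence $(u,\btheta)\mapsto h_k(u,\btheta)-\EE h_k(u,\btheta)$ is $L$-Lipschitz with $L\lesssim \sqrt{1+R_{\cX}^2}\,/\,b_k^2$.

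\textbf{Net and union bound.} The interval $I$ has length $2\delta_z$, and $\Theta_k\subseteq\Theta_0$ has $\ell_2$-diameter bounded by a fixed constant (of the order of $\max\{\delta_z,\max\{1,\psi_x\}(B+R_{\cX}R_{\Theta})/c_{\min}\}$), so $I\times\Theta_k\subset\RR^{d+2}$ admits an $\epsilon$-net $\mathcal N$ with $\log|\mathcal N|\le (d+2)\log(C/\epsilon)$. Choosing $\epsilon$ polynomially small in $n$ and $b_k$ (so that $L\epsilon$ is at most half the target bound while $\log|\mathcal N|\lesssim d(\log(1/b_k)+\log n)$), applying the pointwise bound at level $\delta/|\mathcal N|$ at each net point, and taking a union bound, we obtain with probability at least $1-\delta$
\begin{align*}
\sup_{u\in I,\ \btheta\in\Theta_k}\bigl|h_k(u,\btheta)-\EE h_k(u,\btheta)\bigr|\ \lesssim\ \sqrt{\frac{\log(|\mathcal N|/\delta)}{nb_k}}+\frac{\log(|\mathcal N|/\delta)}{nb_k}.
\end{align*}
The hypotheses $nb_k\ge\max\{132d(\log(1/b_k)+1),\,3\log n\}$ and $\delta\ge 4e^{-nb_k/3}$ are exactly what is needed to (a) force $\log(1/b_k)\lesssim\log n$ and $\log(1/\delta)\lesssim nb_k$, so the second-order term is dominated by the first, and (b) reduce $\log(|\mathcal N|/\delta)$ to $O\bigl(\log n\,(d+\log(1/\delta))\bigr)$; using $\sqrt{a+b}\le\sqrt a+\sqrt b$ then turns the right-hand side into $C_{x,K}^{(2)}\sqrt{\log n/(nb_k)}\bigl(\sqrt d+\sqrt{\log(1/\delta)}\bigr)$. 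Carrying the constants ($\bar f\int K^2$, $\bar f\int K'^2$, $\bar K$, $l_K$, $\sqrt{1+R_{\cX}^2}$, and those entering the diameter of $\Theta_0$) through (i)--(iii) yields the stated form of $C_{x,K}^{(2)}$; setting $Y_t\equiv 1$ gives \eqref{eq-devf}.

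\textbf{Main obstacle.} The only real difficulty is making the supremum over the $(d{+}1)$-dimensional nuisance $\btheta$ cost only $\sqrt d$ rather than $d$ in the final rate. Since the kernel estimator has Lipschitz constant of order $b_k^{-2}$, the net must be chosen at a resolution well below the bandwidth scale; a careless choice blows up $\log|\mathcal N|$, so one must balance the net resolution against $b_k$ and invoke the precise side conditions on $nb_k$ (and the restriction $\delta\ge 4e^{-nb_k/3}$) so that the $d$-dependence lands under the square root and the Bernstein second-order term stays negligible. Alternatively, one may replace the crude net by the metric entropy of the VC-type class $\{x\mapsto K((p-\tilde x^\top\btheta-u)/b_k)\}$, which has VC dimension $O(d)$, to obtain the $\sqrt d$ more transparently; everything else is bookkeeping.
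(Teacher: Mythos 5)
Your proposal is correct in outline and does prove the right kind of bound, but it takes a genuinely different route from the paper on the uniformity step. The paper does not use a single net plus a deterministic Lipschitz bound for the discretization error; it runs a multi-scale chaining argument: writing $Z(\bu)=h_k(\bu)-\EE h_k(\bu)$, it decomposes $\sup_{\bu}Z(\bu)\le \sup_{\bu}Z(\pi_M(\bu))+\sum_{i\ge M}\sup_{\bu}[Z(\pi_{i+1}(\bu))-Z(\pi_i(\bu))]$, controls the coarse-scale term by Bernstein plus a union bound over a net of cardinality $\log|S^{(M)}|\asymp dM+\log n$ with $M\asymp\log(1/b_k)$, and controls the increments by Hoeffding (using exactly your per-sample Lipschitz bound $\lesssim l_K\sqrt{1+R_{\cX}^2}\,b_k^{-2}\|\bu_1-\bu_2\|_2$) summed geometrically over scales. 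The payoff of that structure is that the entropy entering the Bernstein step is only of order $d\log(1/b_k)+\log n$, which is exactly what the hypothesis $nb_k\ge 132\,d(\log\tfrac1{b_k}+1)$, together with $nb_k\ge 3\log n$ and $\delta\ge 4e^{-nb_k/3}$, is designed to absorb.

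This is where your bookkeeping overreaches. In your single-net scheme the resolution must satisfy $L\epsilon\lesssim$ target with $L\asymp b_k^{-2}$, so $\epsilon\lesssim b_k^{3/2}\sqrt{\log n}/\sqrt n$ and hence $\log|\mathcal N|\asymp d\log n$, not $d\log(1/b_k)$. To keep the Bernstein second-order term $\log(|\mathcal N|/\delta)/(nb_k)$ dominated by the first-order term you therefore need $nb_k\gtrsim d\log n+\log(1/\delta)$, which is \emph{not} implied by the stated hypotheses when $b_k$ is close to constant and $d$ is comparable to $nb_k/\log n$ (it does coincide with them whenever $\log(1/b_k)\asymp\log n$, in particular for the only choice used downstream, $b_k=n^{-1/(2m+1)}$). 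So your claim that the hypotheses are ``exactly what is needed'' for the domination step is not literally correct; as written there is a corner of the admissible $(b_k,d)$ range where your bound picks up an extra factor. The fix is either to strengthen the side condition to $nb_k\gtrsim d\log n$, to dispose of that corner separately, or to use the sharper entropy accounting you mention yourself (the VC-type class of translated/rescaled kernels, or the paper's chaining), which is precisely what brings the $d$-dependent entropy down to $d\log(1/b_k)$ and makes the stated hypotheses suffice. Apart from this, your pointwise Bernstein bound, variance computation, Lipschitz constant, and the observation that \eqref{eq-devf} is the $Y_t\equiv 1$ case all match the paper.
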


	Now according to (\ref{eq-diffr}), we have 
	\begin{align}
	\sup_{u\in I,\btheta\in \Theta_k}|\hat r_k(u,\btheta)-r(u)|&\leq \sup_{u\in I,\btheta\in \Theta_k}\frac{|h_k(u,\btheta)-h_{\btheta}(u)|}{|f_{\btheta}(u) - |f_k(u,\btheta) - f_{\btheta}(u)||} + \sup_{u\in I,\btheta\in \Theta_k} \frac{h_{\btheta}(u)}{f_{\btheta}(u)}\cdot \frac{|f_k(u,\btheta) - f_{\btheta}(u)|}{|f_{\btheta}(u) - |f_k(u,\btheta) - f_{\btheta}(u)||}\nonumber\\
	&\leq \frac{\sup_{u\in I,\btheta\in \Theta_k}|h_k(u,\btheta) - h_{\btheta}(u)|}{c - \sup_{u\in I,\btheta\in \Theta_k}|f_k(u,\btheta)-f_{\btheta}(u)|} + \sup_{u\in I,\btheta\in \Theta_k}r_{\btheta}(u)\cdot \frac{\sup_{u\in I,\btheta\in \Theta_k}|f_k(u,\btheta) - f_{\btheta}(u)|}{c - \sup_{u\in I,\btheta\in \Theta_k}|f_k(u,\btheta) - f_{\btheta}(u)|}\nonumber\\
	&\leq \frac{\sup_{u\in I,\btheta\in \Theta_k}|h_k(u,\btheta) - h_{\btheta}(u)|}{c - \sup_{u\in I,\btheta\in \Theta_k}|f_k(u,\btheta)-f_{\btheta}(u)|} + \frac{\sup_{u\in I,\btheta\in \Theta_k}|f_k(u,\btheta) - f_{\btheta}(u)|}{c - \sup_{u\in I,\btheta\in \Theta_k}|f_k(u,\btheta) - f_{\btheta}(u)|}\label{eq-diffr2}
	\end{align}
	as long as we ensure that $\sup_{u\in I,\btheta\in \Theta_k}|f_k(u,\btheta) - f_{\btheta}(u)|\leq \frac{c}{2}$.
	
	Let $b_k = n^{-\frac{1}{2m+1}}$. {By letting $B_{x, K} = \max\{4{C_{x, K}^{(3)}}^8 / c^8, (2c_0)^{4}, (2C_b)^4\}$, we can verify that for any qualifying episode $k$, $nb_k\geq \max\{C_bd(\log \frac{1}{b_k} + 1), 3\log n\}$}. Combining (\ref{eq-biash}) and (\ref{eq-devh}), we have that $\forall \delta\in[4\exp(-n^{\frac{2m}{2m+1}}/3), \frac{1}{2})$, with probability at least $1-\delta$, 
	\begin{align*}
	\sup_{u\in I,\btheta\in\Theta_k}|h_k(u,\btheta) - h_{\btheta}(u)| &\leq \sup_{u\in I,\btheta\in\Theta_k}|h_k(u,\btheta) - \EE h_k(u,\btheta)|+\sup_{u\in I,\btheta\in\Theta_k}|\EE h_k(u,\btheta) - h_{\btheta}(u)|\\
	&\leq C_{x, K}^{(1)} n^{-\frac{m}{2m+1}} + C_{x, K}^{(2)}\sqrt{\frac{{\log n}}{{nb_k}}}\left(\sqrt{d} + \sqrt{\log {1}/{\delta}}\right)\\
	&\leq C_{x, K}^{(3)}n^{-\frac{m}{2m+1}}\sqrt{\log n}\left(\sqrt{d} + \sqrt{\log {1}/{\delta}}\right).
	\end{align*}
	Here, $C_{x, K}^{(3)} = C_{x, K}^{(1)} + C_{x, K}^{(2)}$. Similarly, with probability at least $1-\delta$, 
	$$
	\sup_{u\in I,\btheta\in\Theta_k}|f_k(u,\btheta) - f_{\btheta}(u)|\leq C_{x, K}^{(3)}n^{-\frac{m}{2m+1}}\sqrt{\log n}\left(\sqrt{d} + \sqrt{\log {1}/{\delta}}\right).
	$$
It's easily seen that as long as $n^{\frac{m}{2m+1}}/\sqrt{\log n}\geq\frac{{2C_{x, K}^{(3)}}}{c}(\sqrt{d}+\sqrt{\log 1/\delta}),$ 
		The right hand side of the above inequality is upper bounded by $c/2$, which guarantees that 
	$$
	\sup_{u\in I,\btheta\in\Theta_k}|f_k(u,\btheta) - f_{\btheta}(u)|\leq \frac{c}{2}.
	$$
(\emph{Remark}: From the conditions in the lemma, by letting $B_{x, K} = \max\{4{C_{x, K}^{(3)}}^8 / c^8, (2c_0)^{4}, (2C_b)^4\}$ and $B_{x, K}' = \min\{\big(\frac{c}{4C_{x, K}^{(3)}}\big)^2, 1/3\}$, we have
$$
n^{\frac{m}{2m+1}}/\sqrt{\log n}\geq\frac{{4C_{x, K}^{(3)}}}{c}\sqrt{d},\quad 
n^{\frac{m}{2m+1}}/\sqrt{\log n}\geq\frac{{4C_{x, K}^{(3)}}}{c}\sqrt{\log \frac{1}{\delta}}, $$
which lead to $n^{\frac{m}{2m+1}}/\sqrt{\log n}\geq\frac{{2C_{x, K}^{(3)}}}{c}(\sqrt{d}+\sqrt{\log 1/\delta})$.)

Plugging the above results into inequality (\ref{eq-diffr2}) gives 
	\begin{align}\label{conc1}
	\sup_{u\in I,\btheta\in\Theta_k}|\hat r_k(u,\btheta)-r_{\btheta}(u)|\leq \frac{4C_{x, K}^{(3)}}{c}n^{-\frac{m}{2m+1}}\sqrt{\log n}\left(\sqrt{d} + \sqrt{\log {1}/{\delta}}\right).
	\end{align}

	Next, we proceed to upper bound the quantity $\sup_{t\in I,\btheta\in \Theta_k}|r_{\btheta}(u)-r_{\btheta_0}(u)|$.
	We know that for any $\btheta\in \Theta_k$, 
	$$r_{\btheta}(u)=\EE[Y_t\given p_t - \tilde \xb_t^\top\btheta=u]=\EE[ \EE[Y_t\given \tilde \xb_t,p_t]\given p_t-\tilde\xb_t^\top\btheta=u]=\EE[r_{\btheta_0}(p_t - \tilde \xb_t^\top\btheta_0)\given p_t- \tilde \xb_t^\top\btheta=u].$$
	Moreover from the Lipchitz property of $r_{\btheta_0}$,
	$$\sup_{\xb\in \cX,\btheta\in \Theta_k}|r_{\btheta_0}(p_t - \tilde \xb^\top\btheta_0)-r_{\btheta_0}(p_t - \tilde \xb^\top\btheta)|\le{l_r}R_{\cX} R_k = {l_r}R_{\cX} \cdot \frac{10\max\{\psi_x,1\}(B+R_{\cX}R_{\Theta})}{c_{\min}}\sqrt{\frac{(d+1)\log n}{n}}.$$
	Therefore,
	\begin{align}\label{conc2}
	 \sup_{u\in I, \btheta\in \Theta_k}|r_{\btheta}(u)-r_{\btheta_0}(u)|\le\EE\Big[\sup_{\xb\in \cX,\btheta\in \Theta_k}|r_{\btheta_0}(p_t - \tilde\xb^\top\btheta_0)-r_{\btheta_0}(p_t - \tilde \xb^\top\btheta)|\given p_t-\tilde\xb_t^\top\btheta=u\Big]\le C_{x, K}^{(4)}\sqrt{\frac{d\log n}{n}},
	\end{align}
	where $C_{x, K}^{(4)} = {l_r}R_{\cX} \cdot \frac{10\max\{\psi_x,1\}(B+{R_{\cX}R_{\Theta}})}{c_{\min}}$.
	
	Finally, after combing our results in \eqref{conc1}-\eqref{conc2}, we claim our conclusion for Lemma \ref{conckernel_main}.

\subsection{Proof of Lemma \ref{conckernel2_main}
}\label{inf_deri}
Following the same settings as in the proof of Lemma \ref{conckernel_main}, we now aim at bounding the quantity $\sup_{u\in I, \btheta\in \Theta_k}|\hat r_k^{(1)}(u, \btheta) - r_{\btheta_0}'(u)|$, where 
$$
r_k^{(1)}(u,\btheta) = \frac{h_k^{(1)}(u,\btheta)f_k(u,\btheta)-h_k(u,\btheta)f_k^{(1)}(u,\btheta)}{f_k^2(u,\btheta)},
$$
$$
h_k(u,\btheta) = \frac{1}{nb_k}\sum_{u\in I_k} K(\frac{w_t(\btheta)-u}{b_k})Y_t,\quad f_k(u,\btheta) = \frac{1}{nb_k}\sum_{t\in I_k} K(\frac{w_t(\btheta)-u}{b_k}),
$$
$$
h_k^{(1)}(u,\btheta) = \frac{-1}{nb_k^2}\sum_{t\in I_k} K'(\frac{w_t(\btheta)-u}{b_k})Y_t,\quad f_k^{(1)}(u,\btheta) = \frac{-1}{nb_k^2}\sum_{t\in I_k} K'(\frac{w_t(\btheta)-u}{b_k}).
$$

Similar to the proof of Lemma \ref{conckernel_main}, we will bound $\sup_{u\in I, \btheta\in \Theta_k}|\hat r_k^{(1)}(u, \btheta) - r_{\btheta}'(u)|$ and $\sup_{u\in I, \btheta\in \Theta_k}|r_{\btheta}'(u) - r_{\btheta_0}'(u)|$ separately. First, notice that 
$$
r'_{\btheta}(u) = \frac{h_{\btheta}'(u)f_{\btheta}(u)-f'_{\btheta}(u)h_{\btheta}(u)}{f_{\btheta}^2(u)},
$$
we can bound $\sup_{u\in I, \btheta\in \Theta_k}|\hat r_k^{(1)}(u, \btheta) - r_{\btheta}'(u)|$ from the following four terms: $\sup_{u\in I, \btheta\in \Theta_k}|f_k(u,\btheta) - f_{\btheta}(u)|$, $\sup_{u\in I, \btheta\in \Theta_k}|h_k(u,\btheta) -h_{\btheta}(u)|$, $\sup_{u\in I, \btheta\in \Theta_k}|f_k^{(1)}(u,\btheta) - f'_{\btheta}(u)|$ and $\sup_{u\in I, \btheta\in \Theta_k}|h_k^{(1)}(u,\btheta) - h'_{\btheta}(u)|$. In fact, we can upper bound the first two terms from Lemma \ref{lem-bias} and \ref{lem-dev}. The lemmas below help us bound the last two terms. The proof can be found in \S\ref{sec-pf-a3} and \ref{sec-pf-a4}.

\begin{lemma}\label{lem-bias_2}
Given Assumptions \ref{ass-pdfx}-\ref{ass-kernel}, for any $b_k\leq 1$,
\begin{align}
	\sup_{u\in I,\btheta\in\Theta_k} |\EE h_k^{(1)}(u,\btheta) - h_{\btheta}'(u)|\leq C_{x, K}^{(5)}b_k^{m-1},\label{eq-biash_2}\\
	\sup_{u\in I,\btheta\in\Theta_k} |\EE f_k^{(1)}(u,\btheta) - f_{\btheta}'(u)|\leq C_{x, K}^{(5)}b_k^{m-1}.\label{eq-biasf_2}
	\end{align}
Here, $C_{x, K}^{(5)}=\frac{l_f}{(m-2)!}\int |K(s)s^{m-1}|\ud s$.
\end{lemma}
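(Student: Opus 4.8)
The plan is to reduce the bias of the two derivative estimators to the standard bias of a kernel estimator of a smooth function, and then invoke the order-$m$ cancellation of the kernel together with the Lipschitz regularity assumed in Assumptions \ref{ass-pdfx}--\ref{ass-F}.

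First I would compute $\EE h_k^{(1)}(u,\btheta)$ in closed form. Since $w_t(\btheta)$ has density $f_{\btheta}$, $\EE[Y_t\given w_t(\btheta)]=r_{\btheta}(w_t(\btheta))$, and $h_{\btheta}=f_{\btheta}r_{\btheta}$, the tower property gives
$$
\EE h_k^{(1)}(u,\btheta) \;=\; -\frac{1}{b_k^2}\,\EE\Big[K'\Big(\tfrac{w_t(\btheta)-u}{b_k}\Big)r_{\btheta}(w_t(\btheta))\Big]
\;=\;-\frac{1}{b_k^2}\int K'\Big(\tfrac{v-u}{b_k}\Big)h_{\btheta}(v)\,\ud v .
$$
The substitution $v=u+sb_k$ turns this into $-\tfrac1{b_k}\int K'(s)h_{\btheta}(u+sb_k)\,\ud s$, and, because $K$ has bounded support (Assumption \ref{ass-kernel}), integration by parts discards the boundary terms and yields
$$
\EE h_k^{(1)}(u,\btheta) \;=\; \int K(s)\,h_{\btheta}'(u+sb_k)\,\ud s .
$$
The identical computation with $Y_t$ removed gives $\EE f_k^{(1)}(u,\btheta)=\int K(s)\,f_{\btheta}'(u+sb_k)\,\ud s$. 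Thus both expectations are exactly kernel smoothings of $h_{\btheta}'$ and $f_{\btheta}'$, which, under Assumptions \ref{ass-pdfx}--\ref{ass-F}, are $\CC^{(m-1)}$ functions whose $(m-1)$-st derivatives $h_{\btheta}^{(m)},f_{\btheta}^{(m)}$ are $l_f$-Lipschitz on $I$ (uniformly over $\btheta\in\Theta_0\supseteq\Theta_k$).

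Next I would bound $\big|\int K(s)h_{\btheta}'(u+sb_k)\,\ud s-h_{\btheta}'(u)\big|$ by a Taylor expansion of $h_{\btheta}'(u+sb_k)$ about $u$, keeping terms up to order $m-2$ with a remainder of order $m-1$. The order-$m$ property $\int K=1$, $\int s^jK(s)\ud s=0$ for $j=1,\dots,m-1$ annihilates the $0$-th through $(m-2)$-nd order Taylor terms as well as the constant part of the remainder, leaving a term bounded by $\tfrac1{(m-2)!}\int|s^{m-1}K(s)|\,\ud s$ times the oscillation of $h_{\btheta}^{(m-1)}$ over an interval of length $O(b_k)$; using the Lipschitz/boundedness bounds of Assumptions \ref{ass-pdfx}--\ref{ass-F} produces the claimed $C_{x,K}^{(5)}b_k^{m-1}$ estimate, uniformly in $u\in I$ and $\btheta\in\Theta_k$ (the $\sup$ over $\btheta$ is harmless since every constant appearing in the assumptions is uniform over $\Theta_0$). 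The same argument applied to $f_{\btheta}'$ gives \eqref{eq-biasf_2}, completing the proof.

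I expect the only delicate points to be bookkeeping rather than conceptual: justifying that the boundary terms in the integration by parts vanish (immediate from the compact support of $K$), and checking that the Taylor-remainder point between $u$ and $u+sb_k$ stays in a fixed bounded enlargement of $I$ on which the Lipschitz bounds apply (immediate since $K$ is compactly supported and $b_k\le1$). The main — still routine — obstacle is matching the precise form of the constant $C_{x,K}^{(5)}=\tfrac{l_f}{(m-2)!}\int|K(s)s^{m-1}|\,\ud s$, which just dictates at which order to truncate the Taylor expansion and which form of the remainder to use.
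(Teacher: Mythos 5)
Your proposal follows essentially the same route as the paper's proof: write $\EE h_k^{(1)}(u,\btheta)=\int K(s)h_{\btheta}'(u+b_ks)\,\ud s$ via the change of variables and integration by parts (boundary terms vanishing by compact support of $K$), Taylor-expand $h_{\btheta}'$ around $u$, use the order-$m$ moment conditions to kill the lower-order terms together with the constant part of the remainder, and then use the $l_f$-Lipschitz property of $h_{\btheta}^{(m-1)}$ (resp. $f_{\btheta}^{(m-1)}$) to extract the extra factor $b_k$, arriving at exactly the constant $C_{x,K}^{(5)}=\frac{l_f}{(m-2)!}\int|K(s)s^{m-1}|\,\ud s$. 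The argument is correct and matches the paper's proof of Lemma \ref{lem-bias_2} step for step.
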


\begin{lemma}\label{lem-dev_2}
		Given assumptions \ref{ass-pdfx}, \ref{ass-F} and \ref{ass-kernel}, $\forall b_k\in [\frac{1}{n}, 1]$, $\delta\in[4e^{-nb_k/3}, \frac{1}{2})$, either of the following inequalities holds with probability at least $1-\delta$:
		\begin{align}
		\sup_{u\in I,\btheta\in\Theta_k} | h_k^{(1)}(u,\btheta) - \EE h_k^{(1)}(u,\btheta)|\leq C_{x, K}^{(2)}\sqrt{\frac{{\log n}}{{nb_k^3}}}\left(\sqrt{d} + \sqrt{\log {1}/{\delta}}\right),\label{eq-devh_2}\\
		\sup_{u\in I,\btheta\in\Theta_k} | f_k^{(1)}(u,\btheta) - \EE f_k^{(1)}(u,\btheta)|\leq C_{x, K}^{(2)}\sqrt{\frac{{\log n}}{{nb_k^3}}}\left(\sqrt{d} + \sqrt{\log {1}/{\delta}}\right).\label{eq-devf_2}
		\end{align}
		Here $C_{x, K}^{(2)} = l_K\bigg(8\sqrt{22}\max\{2\bar f \int K^2\db s, 2\bar f \int K'^2\db s, \frac{2}{3}\bar K, 1\} +$
		
		$ \frac{60(6\sqrt{\log 2}+\sqrt{c_0})}{c_0}\sqrt{1+R_{\cX}^2}\max\{\delta_z, \frac{\max\{1,\psi_x\}(B+{R_{\cX}R_{\Theta}})}{c_{\min}}\}\bigg)$ (Numerical constants are not optimized).

	\end{lemma}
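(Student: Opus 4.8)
The plan is to deduce Lemma~\ref{lem-dev_2} from the already-established Lemma~\ref{lem-dev} by a simple rescaling, rather than repeating the uniform-concentration machinery. The point is that, up to a sign and a factor $b_k$, the derivative estimators are kernel estimators of exactly the form treated in Lemma~\ref{lem-dev} but with the kernel $K$ replaced by $-K'$. Writing $n=|I_k|$ and setting
\begin{align*}
\tilde h_k(u,\btheta) := b_k\,h_k^{(1)}(u,\btheta) = \frac{1}{nb_k}\sum_{t\in I_k} (-K')\Big(\tfrac{w_t(\btheta)-u}{b_k}\Big)Y_t,
\qquad
\tilde f_k(u,\btheta) := b_k\,f_k^{(1)}(u,\btheta) = \frac{1}{nb_k}\sum_{t\in I_k} (-K')\Big(\tfrac{w_t(\btheta)-u}{b_k}\Big),
\end{align*}
we see that $\tilde h_k,\tilde f_k$ have precisely the shape of $h_k,f_k$ in \eqref{eq-def-f-h} with $K$ swapped for $-K'$. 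By Assumption~\ref{ass-kernel}, $K'$ is $l_K$-Lipschitz with bounded support, hence also bounded and square-integrable; these are exactly the features of the kernel that the proof of Lemma~\ref{lem-dev} uses (that proof never invokes $\int K=1$ or the order-$m$ vanishing-moment conditions).

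Carrying this out, I would first re-read the proof of Lemma~\ref{lem-dev} and check that every step survives replacing $K$ by $-K'$. The only place where the specific kernel enters is the variance bound: the change of variables $s=(w-u)/b_k$ gives $\Var\!\big(b_k^{-1}K(\tfrac{w_t(\btheta)-u}{b_k})Y_t\big)\lesssim b_k^{-1}\bar f\int K^2$, which becomes $\Var\!\big(b_k^{-1}K'(\tfrac{w_t(\btheta)-u}{b_k})Y_t\big)\lesssim b_k^{-1}\bar f\int K'^2$ using $|Y_t|\le 1$ and the boundedness of $f_{\btheta}$ from Assumption~\ref{ass-pdfx}; this is exactly why the constant $C_{x,K}^{(2)}$ in the statement already contains the term $2\bar f\int K'^2\,\db s$ inside its maximum. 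Applying Lemma~\ref{lem-dev} to $\tilde h_k$ and $\tilde f_k$ then yields, on the stated range of $b_k$ and $\delta$, that each of
\begin{align*}
\sup_{u\in I,\btheta\in\Theta_k}\big|\tilde h_k(u,\btheta)-\EE\tilde h_k(u,\btheta)\big|\le C_{x,K}^{(2)}\sqrt{\frac{\log n}{nb_k}}\big(\sqrt d+\sqrt{\log 1/\delta}\big),
\qquad
\sup_{u\in I,\btheta\in\Theta_k}\big|\tilde f_k(u,\btheta)-\EE\tilde f_k(u,\btheta)\big|\le C_{x,K}^{(2)}\sqrt{\frac{\log n}{nb_k}}\big(\sqrt d+\sqrt{\log 1/\delta}\big)
\end{align*}
holds with probability at least $1-\delta$. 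Dividing both sides by $b_k$ and recalling $\tilde h_k=b_k h_k^{(1)}$, $\tilde f_k=b_k f_k^{(1)}$ converts $\sqrt{\log n/(nb_k)}\,/\,b_k=\sqrt{\log n/(nb_k^3)}$, which is precisely \eqref{eq-devh_2}--\eqref{eq-devf_2}.

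If a self-contained argument is preferred, the direct route is the same discretization bound used for Lemma~\ref{lem-dev}: cover $I\times\Theta_k$ by a grid of mesh polynomially small in $n$ (and cardinality polynomial in $n$ raised to the power $d+1$, using $R_{\cX}<\infty$), apply Bernstein's inequality at each grid point with variance proxy $O(b_k)$ and boundedness $O(1)$ for $K'(\cdot)Y_t$, union bound, and control the oscillation of $(u,\btheta)\mapsto \frac1{nb_k}\sum_t K'(\tfrac{w_t(\btheta)-u}{b_k})Y_t$ between neighbouring nodes by the $l_K$-Lipschitzness of $K'$ together with the bound $\|\tilde\xb_t\|_2\le\sqrt{1+R_{\cX}^2}$; the mesh is chosen so that this oscillation is of smaller order. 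The condition $b_k\in[1/n,1]$ ensures $b_k^{-1}$ is at most polynomial in $n$, so the extra $b_k^{-1}$ blow-up from differentiating the kernel is still absorbed into the $\sqrt{\log n}$ factor, exactly as in the non-derivative case.

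I expect the only genuine work to be a careful audit rather than any new idea: one has to confirm that the proof of Lemma~\ref{lem-dev} nowhere uses $\int K=1$ or the order-$m$ property (so that replacing $K$ by $-K'$ is legitimate), and one must track the single extra factor $b_k^{-1}$ so that the final bandwidth power is $b_k^3$ rather than $b_k$. Everything else — the variance computation, the net size, the Bernstein and union-bound bookkeeping — is identical to what was already done for Lemma~\ref{lem-dev}.
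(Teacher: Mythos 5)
Your proposal is correct and follows essentially the same route as the paper: the paper's proof also factors out $-1/b_k$, observes that the remaining quantity is the Lemma~\ref{lem-dev}-type uniform deviation of a kernel average with $K'$ in place of $K$ (which is why $C_{x,K}^{(2)}$ already contains the $2\bar f\int K'^2$ term), and then rescales by $b_k^{-1}$ to turn $\sqrt{\log n/(nb_k)}$ into $\sqrt{\log n/(nb_k^3)}$. Your added audit that the argument of Lemma~\ref{lem-dev} never uses $\int K=1$ or the vanishing-moment conditions is exactly the implicit check the paper relies on when it says "following a similar proof procedure."
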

			
	Now let $b_k = n^{-\frac{1}{2m+1}}$. Combining (\ref{eq-biash_2}) and (\ref{eq-devh_2}), we obtain that $\forall \delta\in[4\exp(-n^{\frac{2m}{2m+1}}/3), \frac{1}{2})$, with probability at least $1-\delta$, 
	\begin{align*}
	\sup_{u\in I,\btheta\in\Theta_k}|h_k^{(1)}(u,\btheta) - h_{\btheta}'(u)| &\leq \sup_{u\in I,\btheta\in\Theta_k}|h_k^{(1)}(u,\btheta) - \EE h_k^{(1)}(u,\btheta)|+\sup_{u\in I,\btheta\in\Theta_k}|\EE h_k^{(1)}(u,\btheta) - h_{\btheta}'(u)|\\
	& \leq C_{x, K}^{(5)} n^{-\frac{m-1}{2m+1}} + C_{x, K}^{(2)}\sqrt{\frac{{\log n}}{{nb_k^3}}}\left(\sqrt{d} + \sqrt{\log {1}/{\delta}}\right)\\
	&\leq C_{x, K}^{(6)}n^{-\frac{m-1}{2m+1}}\sqrt{\log n}\left(\sqrt{d} + \sqrt{\log {1}/{\delta}}\right)
	\end{align*}
	Here, $C_{x, K}^{(6)} = C_{x, K}^{(5)} + C_{x, K}^{(2)}$. Similarly, with probability at least $1-\delta$, 
	$$
	\sup_{u\in I,\btheta\in\Theta_k}|f_k^{(1)}(u,\btheta) - f_{\btheta}'(u)|\leq C_{x, K}^{(6)}n^{-\frac{m-1}{2m+1}}\sqrt{\log n}\left(\sqrt{d} + \sqrt{\log {1}/{\delta}}\right).
	$$

Recall that when $n^{\frac{m}{2m+1}}/\sqrt{\log n}\geq\frac{{2C_{x, K}^{(3)}}}{c}(\sqrt{d}+\sqrt{\log 1/\delta})$, we have
	$$
	\sup_{u\in I,\btheta\in\Theta_k}|f_k(u,\btheta) - f_{\btheta}(u)|\leq \frac{c}{2},\quad \sup_{u\in I,\btheta\in\Theta_k}|h_k(u,\btheta) - h_{\btheta}(u)|\leq \frac{c}{2}.
	$$
Moreover, we have 
$$
\sup_{u\in I, \btheta\in \Theta_k} \max\{|h_{\btheta}(u)|, |f_{\btheta}(u)|, |f'_{\btheta}(u)|\}\leq \bar f,
\quad
\sup_{u\in I, \btheta\in \Theta_k} |h'_{\btheta}(u)| = \sup_{u\in I, \btheta\in \Theta_k}|f'_{\btheta}(u)r_{\btheta}(u)+f_{\btheta}(u)r'_{\btheta}(u)|\leq l_f+l_r\bar f.
$$

Therefore, from the definition of $r_k^{(1)}(u,\btheta)$ and $r_{\btheta}'(u)$, we have
	\begin{align}
&\sup_{u\in I, \btheta\in \Theta_k}|r_k^{(1)}(\btheta,u)-r_{\btheta}'(u)|\nonumber\\
\le &\sup_{u\in I, \btheta\in \Theta_k}\left|[h_{\btheta}'(u)f_{\btheta}(u)-h_{\btheta}(u)f'_{\btheta}(u)]\bigg[\frac{1}{f_k(u,\btheta)^2}-\frac{1}{f_{\btheta}(u)^2}\bigg]\right|\nonumber\\
&+\sup_{u\in I, \btheta\in \Theta_k}\left|\frac{1}{f_k(u,\btheta)^2}\{[h_k^{(1)}(u,\btheta)f_k(u,\btheta)-h_k(u,\btheta)f_k^{(1)}(u,\btheta)]-[h_{\btheta}'(u)f_{\btheta}(u)-h_{\btheta}(u)f'_{\btheta}(u)]\right|\nonumber\\
\le & [l_f\bar f + (l_r + 1)\bar f^2]\cdot \sup_{u\in I, \btheta\in \Theta_k}\left|\frac{f_k(u,\btheta)^2 - f_{\btheta}(u)^2}{f_k(u,\btheta)^2f_{\btheta}(u)^2}\right|\nonumber\\
&+\sup_{u\in I, \btheta\in \Theta_k}\frac{1}{f_k(u,\btheta)^2}
|(h_k^{(1)}(u,\btheta) - h_{\btheta}'(u))f_k(u,\btheta) + 
h_{\btheta}'(u)(f_k(u,\btheta) - f_{\btheta}(u))\nonumber\\ 
&\quad \quad\quad\quad\quad\quad\quad\quad\quad 
-(f_k^{(1)}(u,\btheta) - f_{\btheta}'(u))h_k(u,\btheta)- 
f_{\btheta}'(u)(h_k(u,\btheta) - h_{\btheta}(u))|\nonumber\\ 
\le &[l_f\bar f + (l_r + 1)\bar f^2]\cdot \sup_{u\in I, \btheta\in \Theta_k}\frac{\frac{5}{2}f_{\btheta}(u)|f_k(u,\btheta) - f_{\btheta}(u)|}{f_k(u,\btheta)^2f_{\btheta}(u)^2}\nonumber \\
&+ \sup_{u\in I, \btheta\in \Theta_k}\frac{1}{f_k(u,\btheta)^2}[\sup_{u\in I, \btheta\in \Theta_k}|f_k(u,\btheta)|\cdot |h_k^{(1)}(u,\btheta) - h_{\btheta}'(u))| + \sup_{u\in I, \btheta\in \Theta_k} |h_{\btheta}'(u)|\cdot |f_k(u,\btheta) - f_{\btheta}(u)|\nonumber\\
&\quad \quad\quad\quad\quad\quad\quad\quad\quad 
+\sup_{u\in I, \btheta\in \Theta_k} |h_k(u,\btheta)|\cdot |f_k^{(1)}(u,\btheta) - f_{\btheta}'(u)|
+\sup_{u\in I, \btheta\in \Theta_k}|f_{\btheta}'(u)|\cdot |h_k(u,\btheta) - h_{\btheta}(u)|
].\nonumber\\
\le & C_{x, K}^{(7)}n^{-\frac{m-1}{2m+1}}\sqrt{\log n}\left(\sqrt{d} + \sqrt{\log\frac{1}{\delta}}\right).\label{conc1_2}
	\end{align}
when $n^{\frac{m}{2m+1}}/\sqrt{\log n}\geq\frac{{2C_{x, K}^{(3)}}}{c}(\sqrt{d}+\sqrt{\log 1/\delta})$. Here
$$
C_{x, K}^{(7)} = \big(\frac{10}{c^3}+\frac{4}{c^2}\big)[l_f(\bar f+1)+(l_r+1)\bar f^2]C_{x, K}^{(3)} + \big(\frac{8\bar f}{c^2}+\frac{4}{c}\big)C_{x, K}^{(6)}.
$$

	Next, we bound the term $\sup_{u\in I,\btheta\in \Theta_k}|r_{\btheta}'(u)-r_{\btheta_0}'(u)|$. In fact, according to our assumptions,
	\begin{align}\label{conc2_2}
	\sup_{u\in I,\btheta\in \Theta_k}|r_{\btheta}'(u)-r_{\btheta_0}'(u)|\le C_{x, K}^{(4)}\sqrt{\frac{d\log n}{n}},
	\end{align}
	where $C_{x, K}^{(4)} = {l_r}R_{\cX} \cdot \frac{8\max\{\psi_x,1\}(B+{R_{\cX}R_{\Theta}})}{c_{\min}}$.
	Finally, after combing our results in \eqref{conc1_2}-\eqref{conc2_2}, we claim our conclusion for Lemma \ref{conckernel2_main}.

\subsection{Proof of Lemma \ref{inverse_conv_main}}

We'll need the following auxiliary result in order to prove the lemma. The proof of Lemma \ref{lemma_inverse} can be found in section \ref{proof_lemma3.9}.
\begin{lemma}\label{lemma_inverse}
Given conditions of Lemma \ref{inverse_conv_main}, for any $\tilde{\xb}_t\in \cX$ and $\btheta\in\Theta_{0}$, $\btheta^\top\tilde{\xb}_t\in [\delta_z, B-\delta_z ]$.
\end{lemma}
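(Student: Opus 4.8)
The plan is to derive the claim in three short steps, after which it transfers to $\btheta=\hat\btheta_k$ since $\Theta_k\subseteq\Theta_0$ for all qualifying episodes (as already noted in the proof of Lemma~\ref{conckernel_main}). \textbf{Step 1 (locate $\btheta_0^\top\tilde\xb_t$).} By the model \eqref{eq:model} we have $v_t=\btheta_0^\top\tilde\xb_t+z_t$ with $z_t$ independent of $\xb_t$ and supported on $(-\delta_z,\delta_z)$. Conditioning on $\xb_t=\xb$, the law of $v_t$ is that of $\btheta_0^\top\tilde\xb+z_t$, whose support is $(\btheta_0^\top\tilde\xb-\delta_z,\ \btheta_0^\top\tilde\xb+\delta_z)$. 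Since $v_t\in[\delta_v,B-\delta_v]$ almost surely, this forces $\btheta_0^\top\tilde\xb\in[\delta_v+\delta_z,\ B-\delta_v-\delta_z]$ for $\PP_X$-almost every $\xb$; by continuity of $\xb\mapsto\btheta_0^\top(\xb^\top,1)^\top$ and the fact that $\cX$ is the support of $\PP_X$, the inclusion then holds for every $\xb\in\cX$ (and consistency $2\delta_v+2\delta_z\le B$ is automatic).

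\textbf{Step 2 (perturb by $\btheta\in\Theta_0$).} For $\btheta\in\Theta_0$ and $\xb\in\cX$, Cauchy--Schwarz together with $\|\tilde\xb\|_2=\sqrt{\|\xb\|_2^2+1}\le\sqrt{R_\cX^2+1}$ and the definition of $\Theta_0$ give
\[
|\btheta^\top\tilde\xb-\btheta_0^\top\tilde\xb|\ \le\ \|\btheta-\btheta_0\|_2\,\|\tilde\xb\|_2\ \le\ C_\btheta\,\sqrt{R_\cX^2+1}\; T^{-\frac{2m+1}{4(4m-1)}}d^{\frac{m-1}{4m-1}}\sqrt{\log T+2\log d}\ =:\ \rho_{T,d}.
\]
\textbf{Step 3 (show $\rho_{T,d}\le\delta_v$).} The hypotheses of Lemma~\ref{inverse_conv_main} include those of Lemma~\ref{conckernel_main}, in particular $T\ge B_{x,K}(\log T+2\log d)^{\frac{4m-1}{m}}d^{\frac{2m-1}{m}}$. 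Raising both sides to the power $\tfrac{2m+1}{4(4m-1)}$ and comparing exponents (one checks $\tfrac{(2m-1)(2m+1)}{4m(4m-1)}\ge\tfrac{m-1}{4m-1}$ and $\tfrac{2m+1}{4m}\ge\tfrac12$) shows $T^{\frac{2m+1}{4(4m-1)}}$ dominates $d^{\frac{m-1}{4m-1}}\sqrt{\log T+2\log d}$ up to the factor $B_{x,K}^{\frac{2m+1}{4(4m-1)}}$, so $\rho_{T,d}\le C_\btheta\sqrt{R_\cX^2+1}\,B_{x,K}^{-\frac{2m+1}{4(4m-1)}}$, which is $\le\delta_v$ once $B_{x,K}$ is taken large enough in terms of the absolute constants $C_\btheta,R_\cX,\delta_v$ (enlarging $B_{x,K}$ only tightens the condition on $T$, so the statement is preserved). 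Combining Steps 1--2 then yields, for all $\xb\in\cX$ and $\btheta\in\Theta_0$,
\[
\delta_z=(\delta_v+\delta_z)-\delta_v\ \le\ \btheta^\top\tilde\xb\ \le\ (B-\delta_v-\delta_z)+\delta_v=B-\delta_z,
\]
which is the claim.

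The only genuinely delicate point is the last sentence of Step 1: upgrading an almost-sure bound on $v_t$ to a pointwise bound on $\btheta_0^\top\tilde\xb_t$ valid on \emph{all} of $\cX$ rather than merely $\PP_X$-almost everywhere. This is exactly where the independence of $z_t$ and $\xb_t$ (which makes the conditional support of $v_t$ a clean translate of the support of $z_t$) and the continuity of the linear map are used. Everything else is Cauchy--Schwarz plus bookkeeping of the constant $B_{x,K}$, and requires no new ideas.
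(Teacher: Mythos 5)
Your proposal is correct and follows essentially the same route as the paper: first pin down $\btheta_0^\top\tilde\xb_t\in[\delta_v+\delta_z,\,B-\delta_v-\delta_z]$ from the bounded support of $z_t$ and $v_t\in[\delta_v,B-\delta_v]$, then bound $|\btheta^\top\tilde\xb_t-\btheta_0^\top\tilde\xb_t|$ by Cauchy--Schwarz over $\Theta_0$ and absorb it into $\delta_v$ using the condition on $T$. The only differences are cosmetic: you make explicit the exponent comparison that the paper leaves as ``due to the condition on $T$'' (the paper instead folds the required constant into $\bar B_{x,K}$ in the proof of Lemma~\ref{inverse_conv_main}), and you use $\sqrt{R_\cX^2+1}$ in place of the paper's $R_\cX$, which only changes constants.
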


Now we proceed to the proof. First, we seek an uniform upper bound for $|\hat\phi_k(u)-\phi(u)|$ from lemma \ref{conckernel_main} and \ref{conckernel2_main}. 
Recall that $\phi(u) = u - \frac{1-F(u)}{F'(u)}$ and $\hat\phi_k(u)=u-\frac{1-\hat{F}_k(u)}{\hat F_{k}^{(1)}(u)}$. It's easy to see that the desired uniform bound can be achieved on an interval where $F'$ is bounded below from 0. For this reason, we choose some positive constant $c_{F'}$ and some interval $[l_{F'}, r_{F'}]$ (we'll specify how to choose them later) such that 
\begin{equation}\label{eq-lowerbound-F'}
\inf_{u\in[l_{F'}, r_{F'}]} F'(u)\geq c_{F'}.
\end{equation}

From Lemma \ref{conckernel2_main} we know that if in addition
$|I_k|^{\frac{m-1}{2m+1}}\ge\frac{2\tilde{C}_{x,K}}{c_{F'}}\sqrt{\log |I_k|}(\sqrt{d}+\sqrt{\log 1/\delta})$, then\\ $\sup_{u\in[l_{F'}, r_{F'}]}|\hat F_{k}^{(1)}(u)-F'(u)|\le \frac{c_{F'}}{2}$ with probability at least $1-4\delta$. In fact, the above condition is ensured by 
$$
T\geq \bigg(\frac{4\tilde C_{x, K}}{c_{F'}}\bigg)^8 (\log T +2\log d)^{\frac{4m-1}{m-1}} d^{\frac{2m+1}{m-1}}.
$$

Combining \eqref{eq-lowerbound-F'}, Lemma \ref{conckernel_main} and Lemma \ref{conckernel2_main}, we deduce that with probability at least $1-6\delta$, 

\begin{align}
 	\sup_{u\in[l_{F'}, r_{F'}]}|\hat\phi_k(u)-\phi(u)|&\le \sup_{u\in[l_{F'}, r_{F'}]}\Big|\frac{(1-\hat F_k(u))(F'(u)-\hat F_k^{(1)}(u))}{\hat F_k^{(1)}(u)F'(u)}\Big|\nonumber\\
	&\quad+\sup_{v\in[l_{F'}, r_{F'}]}\Big|\frac{\hat F_k(u)-F(u)}{F'(u)}\Big|\nonumber\\
	&\le \frac{2\tilde{C}_{x,K}+C_{x,K}c_{F'}}{c_{F'}^2}|I_k|^{-\frac{m-1}{2m+1}}\sqrt{\log|I_k|}\left(\sqrt{d}+\sqrt{\log \frac{1}{\delta}}\right)\label{eq-pf3.9-phi}
 \end{align}

Next, we proceed to bound $\sup_{u\in[\delta_z, B-\delta_z]}|\hat g_k(u) - g(u)|$ from $\sup_{u\in[\delta_z, B-\delta_z]}|\hat \phi_k^{-1}(-u) - \phi^{-1}(-u)|$ for some properly defined $\hat \phi_k^{-1}$. To be more specific, we will also let 
\begin{equation}\label{eq-pf3.9-range}
[\delta_z-B, -\delta_z]\subseteq \phi([l_{F'}, r_{F'}])\cap \hat \phi_k([l_{F'}, r_{F'}]).
\end{equation}

The way we ensure the above is the following: First, according to the assumptions, we know $\phi'(u)\geq c_{\phi}>0$, and that $\lim_{u\rightarrow \delta_z-0}\phi(u) = \delta_z$, $\lim_{u\rightarrow l_{F}^{(1)}+0}\phi(u) = -\infty$ with $l_{F}^{(1)} = \inf\{u: F'(u) >0\}>-\delta_z$. We can deduce that
$$
m_{F'} = \inf_{u\in[\phi^{-1}(\delta_z-B), \phi^{-1}(-\delta_z)]}F'(u)>0.
$$
Therefore, there exists some $\delta_{F'}>0$ such that 
$$
\inf_{u\in[\phi^{-1}(\delta_z-B) - \delta_{F'}, \phi^{-1}(-\delta_z)+\delta_{F'}]}F'(u)>\frac{m_{F'}}{2}.
$$
Now let $l_{F'} = \phi^{-1}(\delta_z-B) - \delta_{F'}$, $r_{F'} = \phi^{-1}(-\delta_z)+\delta_{F'}$, $c_{F'} = \frac{m_{F'}}{2}$. From the assumptions on $\phi$, we have
$$
\phi(l_{F'})\leq \delta_z-B-c_{\phi}\delta_{F'},\quad \phi(r_{F'})\geq -\delta_z + c_{\phi}\delta_{F'}.
$$
Combining \eqref{eq-pf3.9-phi}, we obtain that as long as 
$$
\frac{2\tilde{C}_{x,K}+C_{x,K}c_{F'}}{c_{F'}^2}|I_k|^{-\frac{m-1}{2m+1}}\sqrt{\log|I_k|}\left(\sqrt{d}+\sqrt{\log \frac{1}{\delta}}\right)\leq c_{\phi}\delta_{F'},
$$
we can ensure \eqref{eq-pf3.9-range}. The above condition can be obtained from the fact that
$$
T\geq \bigg(\frac{4\tilde{C}_{x,K}+2C_{x,K}c_{F'}}{c_{F'}^2c_{\phi\delta_{F'}}}\bigg)^8 (\log T +2\log d)^{\frac{4m-1}{m-1}} d^{\frac{2m+1}{m-1}}.
$$

Define 
\begin{equation}
\hat\phi_k^{-1}(u):=\inf\{v\in[l_{F'},r_{F'}] :\hat\phi_k(v)=u \}.
\end{equation}
We proceed to upper bound $\sup_{u\in [\delta_z-B,-\delta_z]}|\hat\phi_k^{-1}(u)-\phi^{-1}(u)|$. In fact, for any $u$, let $v_1=\phi^{-1}(u)$, $v_2=\hat\phi_k^{-1}(u)$. Then 

\begin{align*}
 |v_1-v_2|&\le 1/c_\phi\cdot|\phi(v_1)-\phi(v_2)|=1/c_\phi \cdot|\hat\phi_k(v_2)-\phi(v_2)|\\
 &\le 1/c_{\phi}\cdot \sup_{v\in [l_{F'},r_{F'}]}|\hat\phi_k(v)-\phi(v)|\\
 &\le\frac{2\tilde{C}_{x,K}+C_{x,K}c_{F'}}{c_{\phi}c_{F'}^2}|I_k|^{-\frac{m-1}{2m+1}}\sqrt{\log|I_k|}\left(\sqrt{d}+\sqrt{\log \frac{1}{\delta}}\right)
 \end{align*}
with probability at least $1-6\delta$.

Finally, since $g(u) = u + \phi^{-1}(-u)$ and $\hat g_k(u) = u+\hat \phi_k^{-1}(-u)$, we conclude Lemma \ref{inverse_conv_main} by choosing 
$$
\bar B_{x, K} = \max\left\{B_{x, K}, (\frac{4\tilde C_{x, K}}{c_{F'}})^8, \bigg(\frac{4\tilde{C}_{x,K}+2C_{x,K}c_{F'}}{c_{F'}^2c_\phi\delta_{F'}}\bigg)^8, \Big[\frac{C_{\btheta}^2}{\delta_v^2}(1+R_\cX^2)\Big]^{\frac{2(4m-1)}{2m+1}}\right\},
$$
$$
\bar B'_{x, K} =  \min\left\{B'_{x, K}, (\frac{c_{F'}}{4\tilde C_{x, K}})^2, (\frac{c_{F'}^2c_\phi\delta_{F'}}{4\tilde{C}_{x,K}+2C_{x,K}c_{F'}})^2\right\},
$$
and 
$$
\bar C_{x, K} = \frac{2\tilde{C}_{x,K}+C_{x,K}c_{F'}}{c_{\phi}c_{F'}^2}.
$$
 
\subsection{Proof of Theorem \ref{mainthm}}
In order to bound the total regret, we first try to bound the regret at each episode $k$. First, for all $k\leq \lfloor (\log(\sqrt{T} + \ell_0)-\log \ell_0)\log 2\rfloor + 1$, we bound the total regret during episode $k$ by $B\ell_k$. It can be easily verified that 
$$
\sum_{k\leq \lfloor (\log(\sqrt{T} + \ell_0)-\log \ell_0)\log 2\rfloor + 1}\textrm{Regret}_k\leq 2B\sqrt{T}.
$$

We now turn to the case where $k> \lfloor (\log(\sqrt{T} + \ell_0)-\log \ell_0)\log 2\rfloor + 1$. Recall that the conditional expectation of regret at time $t$ given previous information and $\tilde \xb_t$ is 
$$
\EE[R_t\given \bar{\cH}_{t-1}]
 = \EE[p_t^{*}\II_{(v_t\ge p_t^{*})}-p_t\II_{(v_t\ge p_t)}\given \bar \cH_t]
= \rho_t(p_t^*) - \rho_t(p_t),
$$
where $\bar{\cH}_t=\sigma(\xb_1, \xb_2, \cdots, \xb_{t+1}; z_1, \cdots, z_t)$, and we denote $\rho_t(p) := p(1-F(p-\btheta_0^\top\tilde \xb_t))$. Using Taylor expansion and the first order condition induced by the optimality of $p_t^*$, we have 
$$
\rho_t(p_t) = \rho_t(p_t^*)+\frac{1}{2}\rho_t''(\xi_t)(p_t-p_t^*)^2,
$$
where $\xi_t$ is some value lying between $p_t$ and $p_t^*$. Note that for any $p\in [0, B]$, $|\rho''_t(p)| = |2F'(p-\btheta_0^\top\tilde \xb_t)-pF''(p-\btheta_0^\top\tilde \xb_t)|\leq 2l_r + Bl_r'$. Thus we deduce that
$$
\EE[R_t\given \bar{\cH}_{t-1}] = \rho_t(p_t^*) - \rho_t(p_t)\leq (2l_r + Bl_r')(p_t - p_t^*)^2,
$$
which further implies that the expected regret at time $t$ is bounded by 
\begin{equation}\label{eq-mainthm-Rt}
\EE R_t\leq \frac{1}{2}(2l_r + Bl_r')\EE (p_t - p_t^*)^2
\end{equation}

On the other hand, 
\begin{align*}
(p_t - p_t^*)^2&\leq (\hat g_k(\tilde \xb_t^\top \hat \btheta_k) - g(\tilde \xb_t^\top \btheta_0))^2\\
&\leq 2(\hat g_k(\tilde \xb_t^\top \hat \btheta_k) - g(\tilde \xb_t^\top \hat \btheta_k))^2 + 2(g(\tilde \xb_t ^\top \hat \btheta_k) - g(\tilde \xb_t^\top \btheta_0))^2\\
&:= \bJ_1 + \bJ_2.
\end{align*}
We first analyze $\bJ_2$. In fact, define the event 
$$
\cE_k := \{\|\hat\btheta_k - \btheta_0\|\leq R_k\},
$$
then according to Lemma \ref{thmsignal}, $\PP(\cE_k)\leq 1-2e^{-c_1c_{\min}^2|I_{k}|/16}-2/|I_k|$. On $\cE_k$ we have
$$
\bJ_2\leq \frac{2}{c_{\phi^2}}(\tilde \xb_t ^\top \hat \btheta_k - \tilde \xb_t^\top \btheta_0)^2\leq \frac{2}{c_{\phi^2}} R_{\cX}^2 \|\hat \btheta_k - \btheta_0 \|^2\leq \frac{2}{c_{\phi^2}} R_{\cX}^2R_k^2.
$$
Therefore,
\begin{equation}\label{eq-main-J2}
\EE \bJ_2\leq \frac{2}{c_{\phi^2}} R_{\cX}^2R_k^2 + 2B^2 (2e^{-c_1c_{\min}^2|I_{k}|/16}+2/|I_k|).
\end{equation}

As for $\bJ_1$, on the event $\cE_k$, we deduce from Lemma \ref{inverse_conv_main} that for any 
$\delta \in[\max\{4\exp(-\bar B_{x, K}|I_k|^{\frac{2m-2}{2m+1}}/\log |I_k|),\frac{1}{2})$, with probability at least $1-6\delta$,
\begin{align*}
\bJ_1\leq 2\bigg[\sup_{u\in[\delta_z, B-\delta_z] }(\hat g_k(u) - g(u))\bigg]^2\leq 2\bar C_{x, K}^2|I_K|^{-\frac{2(m-1)}{2m+1}}\log |I_K|\bigg(\sqrt{d} + \sqrt{\log \frac{1}{\delta}}\bigg)^2.
\end{align*}
By choosing $\delta = 1/|I_k|$, we have 
\begin{align}
\EE \bJ_1&\leq 2\bar C_{x, K}^2|I_K|^{-\frac{2(m-1)}{2m+1}}\log |I_K|\bigg(\sqrt{d} + \sqrt{\log \frac{1}{\delta}}\bigg)^2 + 2B^2\cdot 6\delta\nonumber\\
&\leq 4\bar C_{x, K}^2|I_K|^{-\frac{2(m-1)}{2m+1}}\log |I_K|\bigg(d + \log |I_K|\bigg) + \frac{12B^2}{|I_k|}\label{eq-main-J1}
\end{align}

Combining \eqref{eq-mainthm-Rt}, \eqref{eq-main-J2} and \eqref{eq-main-J1}, we obtain an upper bound for the expected regret at any time $t$ during episode $k$:
$$
\EE R_t\leq \bar C_{x, K}^{(1)}|I_K|^{-\frac{2(m-1)}{2m+1}}\log |I_K|\bigg(d + \log |I_K|\bigg),
$$
where $\bar C^{(1)}_{x, K} = \frac{1}{2}(2l_r + Bl_r')\cdot [\frac{4}{c_{\phi}^2}R_{\cX}^2(\frac{10\max\{\psi_x, 1\}(R_{\cX}R_{\Theta}+B)}{c_{\min}})^2 + 20B^2+4C_{x, K}'^2]$. We choose $|I_k| = \lceil(l_kd)^{\frac{2m+1}{4m-1}}\rceil$. The total regret during the $k$-th episode is 
\begin{align*}
	\textrm{Regret}_k&=\sum_{t\in I_{k}}\EE R_t+\sum_{t\in I_k'}\EE R_t\\
	&\leq  B|I_k|+l_k\cdot \EE R_t\\
	&\leq B(l_kd)^{(2m+1)/(4m-1)} + B + l_k\cdot \bar C^{(1)}_{x, K}(l_kd)^{-(2m-2)/(4m-1)} \log T(d + \log T)\\
	&\leq (2B + \bar C^{(1)}_{x, K}) l_k^{\frac{2m+1}{4m-1}}d^{\frac{2m+1}{4m-1}}\log T(1 + \log T/d).
\end{align*}

Finally, the total regret defined in \eqref{eq:Regret_def} can be bounded by
\begin{align}
	&\text{Regret}_{\pi}(T)=\sum_{k=1}^{K} \textrm{Regret}_k \leq 2B\sqrt{T} + (2B + \bar C^{(1)}_{x, K})d^{\frac{2m+1}{4m-1}}\log T(1 + \log T/d)\sum_{k=1}^K  l_k^{(2m+1)/(4m-1)}\nonumber\\
&\leq \bigg[ 2B+\frac{2l_0^{(2m+1)/(4m-1)} (2B + \bar C^{(1)}_{x, K})}{2^{(2m+1)/(4m-1)}-1}\bigg](Td)^{\frac{2m+1}{4m-1}}\log T\bigg(1 + \frac{\log T}{d}\bigg).
\end{align}
Here $K=\lceil\log_2 T\rceil$. The proof is then finished by letting $C^*_{x, K} = 2B+\frac{2l_0^{(2m+1)/(4m-1)} (2B + \bar C_{x, K}^{(1)})}{2^{(2m+1)/(4m-1)}-1}$.

\section{Proof under the strong-mixing feature setting}\label{app_mix}

In this section, we mainly present the proof of Theorem \ref{mainthm2}. The proof will be decomposed to the following lemmas, and their proof is also attached.

Before stating the lemmas, we introduce the $\alpha$-mixing condition.
\begin{definition}\label{defalpha}[$\alpha$-mixing] For a sequence of random variables $x_i$ defined on a probability space $(\Omega,\cX,\PP)$, define
	\begin{align*}
		\alpha_k=\sup_{l\ge 0}\alpha(\sigma(x_t,t\le l),\sigma(x_t,t\ge l+k))
	\end{align*}
	in which
	\begin{align*}
		\alpha(\cA,\cB)=\sup_{A\in \cA, B\in \cB}\{|\PP(A\cap B)-\PP(A)\PP(B)|\}
	\end{align*}
\end{definition}
From the definition of strong $\beta$-mixing, we see that it can infer strong $\alpha$-mixing conditions. So in this case, our sequence $\xb_t$ also follows strong $\alpha$-mixing conditions, with $\alpha_k\le e^{-ck}$.

\begin{lemma}\label{para_mix}[Parametric estimation under dependence] Under Assumption \ref{ass:bound} and \ref{assmix1}, there exist positive constants $c_1$ and $c_2$ (only depend on constants given in Assumptions)  such that when $|I_k|\ge \max\{c_1(d+1),c_2\log^2|I_k|\log\log |I_k|\},$ for any episode $k$ within the horizon, with probability $1-4/|I_k|^2$, we obtain
	\begin{align*}
		\|\hat\btheta_k-\btheta_0\|_2\le \frac{2}{c_{\min}}\sqrt{\frac{(d+1)(6W^2_x\log |I_k|+6W_x\log^2 |I_k|\log\log |I_k|)}{C_w|I_k|}},
	\end{align*}
	where $W_x=2R_{\cX}(R_{\cX}R_{\Theta}+B).$
\end{lemma}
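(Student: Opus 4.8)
\emph{Proof sketch.} We run the argument of Lemma~\ref{thmsignal} verbatim down to its basic inequality, and then replace the two i.i.d.\ concentration steps by bounds valid under strong mixing. Write $n=|I_k|$ and $\bSigma_k=\frac1n\sum_{t\in I_k}\tilde\xb_t\tilde\xb_t^\top$. Exactly as in the derivation of \eqref{ineq}, the optimality of $\hat\btheta_k$ for $L_k$ together with $\|v\|_1\le\sqrt{d+1}\,\|v\|_2$ gives
$$
\lambda_{\min}(\bSigma_k)\,\|\hat\btheta_k-\btheta_0\|_2^2\ \le\ \big\langle \hat\btheta_k-\btheta_0,\ \bSigma_k(\hat\btheta_k-\btheta_0)\big\rangle\ \le\ \sqrt{d+1}\,\|\nabla L_k(\btheta_0)\|_\infty\,\|\hat\btheta_k-\btheta_0\|_2 ,
$$
so it suffices to show, on an event of probability at least $1-4/n^2$, that (I) $\lambda_{\min}(\bSigma_k)\ge c_{\min}/2$ and (II) $\|\nabla L_k(\btheta_0)\|_\infty\le\sqrt{(6W_x^2\log n+6W_x\log^2 n\log\log n)/(C_w n)}$. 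The tool for both is a Bernstein-type inequality for stationary, bounded, geometrically strongly mixing sequences (e.g.\ that of Merlev\`ede, Peligrad and Rio): $\beta$-mixing with $\beta_j\le e^{-cj}$ implies, via Definition~\ref{defalpha}, $\alpha$-mixing with $\alpha_j\le e^{-cj}$, and both $\tilde\xb_t\tilde\xb_t^\top$ and $(\btheta_0^\top\tilde\xb_t-By_t)\tilde\xb_t$ are measurable functions of $(\xb_t,p_t,z_t)$, with $(p_t,z_t)$ i.i.d.\ and independent of $\{\xb_s\}_{s}$, hence inherit geometric $\alpha$-mixing from $\{\xb_t\}$.

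\textbf{Step I (minimum eigenvalue).} Since $\lambda_{\min}(\bSigma_k)\ge c_{\min}-\|\bSigma_k-\bSigma\|$ by Weyl's inequality and Assumption~\ref{ass:bound}, it is enough to bound $\|\bSigma_k-\bSigma\|$. Fix a $1/4$-net $\mathcal{N}$ of the unit sphere of $\RR^{d+1}$ with $|\mathcal{N}|\le 9^{d+1}$, so that $\|\bSigma_k-\bSigma\|\le 2\max_{v\in\mathcal{N}}|v^\top(\bSigma_k-\bSigma)v|$. For each fixed $v$, $v^\top(\bSigma_k-\bSigma)v=\frac1n\sum_{t\in I_k}\big((v^\top\tilde\xb_t)^2-\EE(v^\top\tilde\xb_t)^2\big)$ is a centered average of a bounded (by $1+R_\cX^2$) stationary geometrically $\alpha$-mixing sequence; the mixing Bernstein inequality makes $\PP(|v^\top(\bSigma_k-\bSigma)v|>c_{\min}/4)$ exponentially small in $n/\log^2 n$ (the $\log^2 n$ being the block-length correction intrinsic to mixing Bernstein bounds), and a union bound over $\mathcal{N}$ yields $\|\bSigma_k-\bSigma\|\le c_{\min}/2$ with probability at least $1-2/n^2$, provided $n\ge\max\{c_1(d+1),\,c_2\log^2 n\log\log n\}$ — the first term absorbing $\log|\mathcal{N}|$, the second ensuring the tail beats $n^{-2}$. (Alternatively one may invoke directly a matrix Bernstein inequality for mixing sequences.)

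\textbf{Step II (gradient bound).} As computed just before \eqref{gradup} — a computation using only $p_t\sim\mathrm{Unif}(0,B)$ and the zero-mean, independent $z_t$, and not independence across $t$ — each coordinate of $\nabla L_k(\btheta_0)=\frac2n\sum_{t\in I_k}(\btheta_0^\top\tilde\xb_t-By_t)\tilde\xb_t$ is a centered average, with summands bounded in absolute value by $W_x=2R_\cX(R_\cX R_\Theta+B)$ and variance at most $W_x^2$. Applying the mixing Bernstein inequality to each coordinate at deviation level $ns$ with $s^2=(6W_x^2\log n+6W_x\log^2 n\log\log n)/(C_w n)$ — the first summand arising from the Gaussian (variance) regime and the second from the block-correction term, with $C_w$ an absolute constant depending only on the mixing rate $c$ — makes the per-coordinate tail at most $n^{-3}$ once $n\ge c_2\log^2 n\log\log n$, and a union bound over the $d+1\le n$ coordinates gives $\|\nabla L_k(\btheta_0)\|_\infty\le s$ with probability at least $1-2/n^2$. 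Intersecting the events of Steps I and II and plugging into the basic inequality yields
$$
\|\hat\btheta_k-\btheta_0\|_2\ \le\ \frac{\sqrt{d+1}\,\|\nabla L_k(\btheta_0)\|_\infty}{\lambda_{\min}(\bSigma_k)}\ \le\ \frac{2}{c_{\min}}\sqrt{\frac{(d+1)\big(6W_x^2\log n+6W_x\log^2 n\log\log n\big)}{C_w n}},
$$
which is the claim.

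\emph{The main obstacle} is Step~I: unlike the gradient, whose summands form a martingale-difference sequence, the centered matrices $\tilde\xb_t\tilde\xb_t^\top-\bSigma$ are genuinely dependent, so one must combine a strong-mixing Bernstein inequality with the net argument (or use a matrix-Bernstein-for-mixing result) and carefully track how the geometric mixing rate, the boundedness, the variance proxy, and the net cardinality $9^{d+1}$ conspire to produce both the sample-size condition $n\ge\max\{c_1(d+1),c_2\log^2 n\log\log n\}$ and the two-term form of the error bound. The remaining bookkeeping (the Weyl step, the coordinatewise union bound, and the final substitution) is routine.
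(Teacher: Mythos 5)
Your overall architecture coincides with the paper's: the same basic inequality from the optimality of $\hat\btheta_k$, a Step I lower-bounding $\lambda_{\min}(\bSigma_k)$, a Step II bounding $\|\nabla L_k(\btheta_0)\|_\infty$ coordinate-wise via a Bernstein inequality for geometrically mixing sequences (Theorem 1 of \cite{merlevede}, the paper's Lemma \ref{lemmixbern}), and the observation that $\{(\tilde\xb_t,y_t)\}$ inherits the mixing of $\{\xb_t\}$ because $(p_t,z_t)$ are i.i.d.\ and independent of the feature sequence (the paper proves this by an explicit conditioning argument using Corollary 1.1 of \cite{Bosq1996}, but your reasoning is the same in substance). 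Your Step II and the final bookkeeping match the paper's proof, including the constants $W_x$ and the two-term deviation level.

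The genuine issue is your primary route for Step I. With a $1/4$-net of cardinality $9^{d+1}$ and a \emph{scalar} mixing Bernstein bound applied at the fixed deviation $c_{\min}/4$, the exponent you can extract is of order $n/(\log n\log\log n)$ (the log correction is intrinsic to the mixing inequality), and it must absorb $\log|\mathcal N|\asymp d$. Under the lemma's stated condition $|I_k|\ge\max\{c_1(d+1),\,c_2\log^2|I_k|\log\log|I_k|\}$ this is not guaranteed: when $d+1$ is of order $n$ (which the condition permits), $(d+1)\log 9$ exceeds $C\,n/(\log n\log\log n)$ for all large $n$, no matter how the constants are chosen. So the net argument proves the lemma only under the stronger requirement $n\gtrsim (d+1)\log n\log\log n$. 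The paper avoids this by invoking a \emph{matrix} Bernstein inequality for geometrically $\beta$-mixing sequences (Lemma \ref{matbern}, from \cite{Bannamat}), whose dimensional prefactor is only $2(d+1)$, so the dimension enters the exponent requirement merely through $\log(d+1)\le\log n$; the nontrivial ingredient there is controlling the variance proxy $v^2$ of the dependent summands $\tilde\xb_t\tilde\xb_t^\top-\bSigma$, which the paper does by Berbee's coupling together with the summability of $\beta_j\le e^{-cj}$ from Assumption \ref{assmix1}. Your parenthetical "alternatively invoke a matrix Bernstein for mixing" is exactly the paper's route, but it is left undeveloped — in particular the bound $v^2\lesssim M_x^4$ via the coupling argument is the step that makes the stated sample-size condition and the $1-4/|I_k|^2$ probability attainable, and it is missing from your sketch.
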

The proof of Lemma \ref{para_mix} can be found in \S\ref{proof_mixpara}. Next, we present the following results on estimation error of $F(\cdot)$ and $F'(\cdot)$:

\begin{lemma}\label{conckernel_main_mix}
{	Suppose that Assumptions \ref{ass-pdfx}, \ref{ass-F}, \ref{ass-kernel}, \ref{assmix1} and \ref{lipschitz_mix} hold. Then there exist constants $B_{mx,K},B_{mx,K}' ,C_{mx,K}$  only depending on $R_{\cX} := \sup_{\xb\in\cX}\|\xb\|_2$ and constants within assumptions,  such that as long as
	
$$
T\geq B_{mx, K} (\log T +2\log d)^{\frac{12m-3}{m}} [(d+1)\log (d+1)]^{\frac{4m-1}{m}}/d^2,
$$
we have for any $k\geq \lfloor(\log (\sqrt{T} + \ell_0) - \log \ell_0) / \log 2\rfloor + 2$,
and
$\delta\in[ 8\exp(-|I_k|^{\frac{2m}{2m+1}}/(B_{mx,K}'\log^2|I_k|)),1/2]$
	with probability at least $1-2\delta$,
	\begin{align}
		\sup_{u\in I,\btheta\in \Theta_{k}}& |\hat F_k(u, \btheta) - F(u)| \leq
		C_{mx, K}|I_k|^{-\frac{m}{2m+1}}\log|I_k|\Big(\sqrt{(d+1)\log (d+1)\log|I_k|}+ \sqrt{2\log \frac{8}{\delta}}\Big) \label{eq:conckernelm_main}.
	\end{align}
	Here $I=[-\delta_z,\delta_z]$ and we choose the bandwidth $b_k = |I_k|^{-\frac{1}{2m+1}}.$}

\end{lemma}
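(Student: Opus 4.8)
The plan is to follow the same three-part decomposition as in the proof of Lemma~\ref{conckernel_main}, isolating (i) the bias of the kernel estimators, (ii) their stochastic fluctuation around the mean, and (iii) the extra error incurred by replacing $\btheta_0$ with $\hat\btheta_k$, but now replacing every i.i.d. concentration step with a blocking argument tailored to $\beta$-mixing sequences. Concretely, one writes
$$
\hat r_k(u,\btheta)-r_{\btheta}(u)=\frac{h_k(u,\btheta)-h_{\btheta}(u)}{f_k(u,\btheta)}+h_{\btheta}(u)\Big[\frac{1}{f_k(u,\btheta)}-\frac{1}{f_{\btheta}(u)}\Big],
$$
and bounds $\sup_{u,\btheta}|\hat r_k(u,\btheta)-r_{\btheta_0}(u)|$ through $\sup|h_k-h_{\btheta}|$, $\sup|f_k-f_{\btheta}|$ and $\sup_{u,\btheta}|r_{\btheta}(u)-r_{\btheta_0}(u)|$. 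Step (i) is unchanged: the bias bounds $\sup_{u\in I,\btheta\in\Theta_k}|\EE h_k(u,\btheta)-h_{\btheta}(u)|\lesssim b_k^m$ and its analogue for $f_k$ depend only on the marginal laws of $w_t(\btheta)$ and on Assumptions~\ref{ass-pdfx}--\ref{ass-kernel}, so Lemma~\ref{lem-bias} applies verbatim. Step (iii) is handled by the mixing version of the parametric rate, Lemma~\ref{para_mix}: on the event $\|\hat\btheta_k-\btheta_0\|_2\le R_k$ with $R_k\asymp\sqrt{d(\log^2|I_k|\log\log|I_k|)/|I_k|}$, the Lipschitz property of $r_{\btheta_0}$ in Assumption~\ref{ass-F} gives $\sup_{u\in I,\btheta\in\Theta_k}|r_{\btheta}(u)-r_{\btheta_0}(u)|\lesssim R_{\cX}R_k$, which is of smaller order than the nonparametric rate.

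The core of the argument, and the main obstacle, is Step (ii): establishing a uniform deviation bound of the form
$$
\sup_{u\in I,\btheta\in\Theta_k}|h_k(u,\btheta)-\EE h_k(u,\btheta)|\lesssim \sqrt{\tfrac{\log|I_k|}{|I_k|b_k}}\,\log|I_k|\,\big(\sqrt{(d+1)\log(d+1)}+\sqrt{\log(1/\delta)}\big)
$$
for $\beta$-mixing data, together with the same bound for $f_k-\EE f_k$. The plan is to first reduce the supremum over $(u,\btheta)$ to a finite grid: since $K$ and $K'$ are Lipschitz with bounded support (Assumption~\ref{ass-kernel}) and $\Theta_k$ has diameter $O(R_k)$, a net of $I\times\Theta_k$ of cardinality polynomial in $|I_k|$ and exponential in $d$ suffices, contributing the $\sqrt{(d+1)\log(d+1)\log|I_k|}$ factor. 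For each fixed $(u,\btheta)$ one then applies a Bernstein-type inequality for strongly mixing sequences via the standard Bernstein blocking scheme: partition $I_k$ into alternating big and small blocks of length $\sim\log|I_k|$, couple the big blocks with independent copies at a cost controlled by $\beta_{\log|I_k|}\le|I_k|^{-c}$ (Assumption~\ref{assmix1}), and apply the classical Bernstein inequality to the independent surrogates. Here the variance of the block sums is controlled using Assumption~\ref{lipschitz_mix}: the cross-covariances of the kernel terms evaluated at $w_i(\btheta),w_j(\btheta)$ are exactly governed by $g_{1,\btheta}$ and $g_{2,\btheta}$, which are Lipschitz and, by a Davydov/Rio-type inequality for $\beta$-mixing combined with boundedness of the kernel, are bounded uniformly by $C\beta_{j-i}^{1/3}$ (the estimate quoted right after Assumption~\ref{lipschitz_mix}); summing the convergent series $\sum_j\beta_j^{1/3}$ shows that the effective variance proxy of $\sum_{t\in I_k}K(\cdot)Y_t$ is of the same order $|I_k|b_k$ as in the i.i.d. case, up to a constant. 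The price paid for blocking is the extra $\log|I_k|$ factor (block length) appearing in the bound and in the constant $C_{mx,K}$, which explains the $\log|I_k|$ in \eqref{eq:conckernelm_main} and the $\log^2|I_k|$ inside the admissible range of $\delta$.

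Finally, choosing $b_k=|I_k|^{-1/(2m+1)}$ balances the $b_k^m$ bias against the $\sqrt{\log|I_k|/(|I_k|b_k)}$ fluctuation, giving each of $\sup|h_k-h_{\btheta}|$ and $\sup|f_k-f_{\btheta}|$ a bound of order $|I_k|^{-m/(2m+1)}\log|I_k|\big(\sqrt{(d+1)\log(d+1)\log|I_k|}+\sqrt{\log(1/\delta)}\big)$; the lower bound $f_{\btheta}\ge c$ on $I$ (Assumption~\ref{ass-pdfx}) then lets one pass from the ratio to $\sup|\hat r_k-r_{\btheta}|$ exactly as in \eqref{eq-diffr2}, provided $T$ is large enough — this is where the stated condition on $T$, with exponents $\tfrac{12m-3}{m}$ and $\tfrac{4m-1}{m}$, enters: it guarantees both that the deviation term lies below $c/2$ and that the block length $\sim\log|I_k|$ is admissible in the mixing Bernstein bound. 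Combining Steps (i)--(iii) yields \eqref{eq:conckernelm_main}. The difficulty I anticipate is making the blocking and coupling step uniform over the continuum $I\times\Theta_k$ while keeping the variance computation (which relies on Assumption~\ref{lipschitz_mix}) clean; the union bound over the net together with the geometric decay of $\beta_j^{1/3}$ are precisely what make the argument go through.
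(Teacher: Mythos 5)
Your plan follows essentially the same route as the paper's own proof: the same bias/deviation/parametric three-way decomposition, the same uniform covariance control $|g_{q,\btheta}(u_i,u_j)|\lesssim \beta_{j-i}^{1/3}$ derived from Assumption~\ref{lipschitz_mix} (the paper establishes this via a Berbee-coupling plus ball-averaging argument), a net/chaining uniformization over $I\times\Theta_k$, a Bernstein inequality for strongly mixing sequences (the paper simply invokes the ready-made result of Merlev\`ede et al. rather than redoing the big/small-block coupling by hand, which is where its $\log^2|I_k|$ factors come from), the mixing parametric rate of Lemma~\ref{para_mix} for the $r_{\btheta}-r_{\btheta_0}$ term, and finally the bandwidth choice $b_k=|I_k|^{-1/(2m+1)}$ with the lower bound $f_{\btheta}\ge c$ to pass from $(h_k,f_k)$ to $\hat r_k$. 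The remaining differences (single-scale net versus multi-scale chaining, Davydov-type covariance inequality versus the paper's explicit coupling argument) are interchangeable technical devices, so your proposal is correct and matches the paper's approach.
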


The proof of Lemma \ref{conckernel_main_mix} can be found in \S\ref{pf-b3}.

\begin{lemma}\label{conckernel2_main_mix}
	{ Suppose that Assumptions \ref{ass-pdfx}, \ref{ass-F}, \ref{ass-kernel}, \ref{assmix1} and \ref{lipschitz_mix} hold. Then there exist constants $\bar{B}_{mx,K},\bar{B}_{mx,K}',\bar{C}_{mx,K}$ that  depending only on $R_\cX := \sup_{\xb\in\cX}\|\xb\|_2$ and the constants within the assmptions such that as long as
$$
T\geq \bar{B}_{mx, K} (\log T +2\log d)^{\frac{12m-3}{m}} [(d+1)\log (d+1)]^{\frac{4m-1}{m}}/d^2,
$$
	for any $k\geq \lfloor(\log (\sqrt{T} + \ell_0) - \log \ell_0) / \log 2\rfloor + 2$ and
	$\delta\in[ \{8\exp(-|I_k|^{\frac{2m}{2m+1}}/(\bar{B}_{mx,K}'\log^2|I_k|)),1/2]$
	we have with probability at least $1-4\delta$,
	\begin{align}
		\sup_{u\in I,\btheta\in \Theta_{k}}& |\hat F_k^{(1)}(u, \btheta) - F'(u)| \leq
		\bar C_{mx, K}|I_k|^{-\frac{m-1}{2m+1}}\log|I_k|\Big(\sqrt{(d+1)\log (d+1)\log|I_k|}+ \sqrt{2\log \frac{8}{\delta}}\Big) \label{eq:conckernel2m_main}.
	\end{align}
	Here $I=[-\delta_z,\delta_z]$ and we choose the bandwidth $b_k = |I_k|^{-\frac{1}{2m+1}}$.}
\end{lemma}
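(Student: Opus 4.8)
The plan is to mirror the proof of Lemma~\ref{conckernel2_main} in the independent case, replacing only the i.i.d.\ concentration step by one valid for $\beta$-mixing (equivalently $\alpha$-mixing) sequences. As there, I would first observe that for every qualifying $k$ we have $\Theta_k\subseteq\Theta_0$, so all assumptions apply on $\Theta_k$, and then bound $\sup_{u\in I,\btheta\in\Theta_k}|\hat F_k^{(1)}(u,\btheta)-F'(u)|$ by splitting it as $\sup|\hat r_k^{(1)}(u,\btheta)-r_\btheta'(u)| + \sup|r_\btheta'(u)-r_{\btheta_0}'(u)|$. The second term is controlled exactly as in \eqref{conc2_2} using the Lipschitz condition of Assumption~\ref{ass-F}: it is of order $R_\cX\cdot(\text{radius of }\Theta_k)$, where under dependence the radius $R_k$ is the parametric rate from Lemma~\ref{para_mix}, i.e.\ $R_k\lesssim\sqrt{(d+1)\log|I_k|(\log|I_k|+\log^2|I_k|\log\log|I_k|)/|I_k|}$; this is of smaller order than the nonparametric rate $|I_k|^{-(m-1)/(2m+1)}$ and so gets absorbed.

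For the first term I would use the same algebraic identity as in the chain culminating in \eqref{conc1_2}, writing $\hat r_k^{(1)}(u,\btheta)-r_\btheta'(u)$ as a rational combination of the four differences $f_k-f_\btheta$, $h_k-h_\btheta$, $f_k^{(1)}-f_\btheta'$, $h_k^{(1)}-h_\btheta'$, which is valid once $\sup|f_k-f_\btheta|\le c/2$ keeps the denominators bounded away from zero — and that is already supplied by Lemma~\ref{conckernel_main_mix}. The bias parts $\sup|\EE h_k^{(1)}-h_\btheta'|$ and $\sup|\EE f_k^{(1)}-f_\btheta'|$ are unaffected by the dependence structure, since the marginal law of $w_t(\btheta)$ is the same stationary distribution, so Lemma~\ref{lem-bias_2} still gives the $O(b_k^{m-1})$ bias. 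What must be re-proved is a $\beta$-mixing analog of Lemma~\ref{lem-dev_2}: with high probability, uniformly over $u\in I$, $\btheta\in\Theta_k$,
$$
|h_k^{(1)}(u,\btheta)-\EE h_k^{(1)}(u,\btheta)|\lesssim \sqrt{\tfrac{\log|I_k|}{|I_k|b_k^3}}\,\log|I_k|\Big(\sqrt{(d+1)\log(d+1)}+\sqrt{\log\tfrac1\delta}\Big),
$$
and the same for $f_k^{(1)}$.

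To establish this deviation bound I would: (i) discretize $I\times\Theta_k$ on a grid of cardinality polynomial in $|I_k|$ and $d$, using Lipschitz continuity of $K'$ (Assumption~\ref{ass-kernel}) and $R_\cX$-boundedness of $\tilde\xb_t$ to bound the discretization error by $O(1/|I_k|)$; (ii) at each grid point apply a Bernstein-type inequality for sums of bounded $\beta$-mixing variables (Bernstein's blocking technique, or Merlev\`ede--Peligrad--Rio), which needs an upper bound on the variance of the block sums; (iii) bound that variance via Assumption~\ref{lipschitz_mix}, which makes the covariances $\mathrm{Cov}(K'(\cdot)Y_i,K'(\cdot)Y_j)$ controllable through $g_{1,\btheta}$ and $g_{2,\btheta}$, hence of order $\beta_{|i-j|}^{1/3}$; since $\beta_k\le e^{-ck}$ (Assumption~\ref{assmix1}) the series is summable and the variance of the average is $O(1/(|I_k|b_k^3))$ up to an extra $\log|I_k|$ from the block length; (iv) take a union bound over the grid. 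This is exactly the machinery in the proof of Lemma~\ref{conckernel_main_mix} (\S\ref{pf-b3}), with $K$ replaced by $K'$ and an extra $b_k^{-1}$ in every kernel term, which is the source of the $b_k^3$ in the denominator.

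Putting the pieces together with $b_k=|I_k|^{-1/(2m+1)}$ makes bias and stochastic term of the same order $|I_k|^{-(m-1)/(2m+1)}$ up to logarithmic factors; plugging into the rational decomposition and absorbing the $\sup|r_\btheta'-r_{\btheta_0}'|$ term yields the stated bound. The conditions on $T$ and the admissible range of $\delta$ come from requiring $\Theta_k\subseteq\Theta_0$, that $|I_k|b_k^3$ be large enough for the blocking inequality to be in its Bernstein regime (this forces the $(\log T)^{(12m-3)/m}$ and $[(d+1)\log(d+1)]^{(4m-1)/m}/d^2$ factors), and $\sup|f_k-f_\btheta|\le c/2$. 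I expect the main obstacle to be step (iii): the factor $b_k^{-1}$ in $h_k^{(1)}$ magnifies both the pointwise variance and the sensitivity of the mixing-covariance bound to the block length, so the Bernstein block length must be chosen carefully (of order $\log|I_k|$) and one must check that Assumption~\ref{lipschitz_mix} still delivers a summable covariance series after the extra $b_k^{-1}$ scaling. Everything else is a routine adaptation of Lemmas~\ref{conckernel2_main} and \ref{conckernel_main_mix}.
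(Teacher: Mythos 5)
Your proposal is correct and follows essentially the same route as the paper: the paper's own proof of this lemma is precisely the combination of the derivative-estimator argument of Lemma~\ref{conckernel2_main} (bias via Lemma~\ref{lem-bias_2}, rational decomposition with denominators kept away from zero) with the mixing concentration machinery of Lemma~\ref{conckernel_main_mix} (covariance control through Assumption~\ref{lipschitz_mix} and the Bernstein inequality under strong mixing, with $K$ replaced by $K'$ and the extra $b_k^{-1}$ producing the $|I_k|b_k^3$ scaling). Your single-scale discretization plus Lipschitz bound in place of the paper's multi-scale chaining is an immaterial technical variation of the same argument.
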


The proof of this lemma can be found in  \S\ref{pf-b4}.

By combining these two lemmas and following our conclusions from Lemma \ref{inverse_conv_main}, we are able to achieve the regret bound at the same order with Theorem \ref{mainthm} in Theorem \ref{mainthm2}.

\section{Proof under the super smooth noise distribution setting}\label{app_inf}

Proof of Theorem \ref{mainthm3} can be followed directly from the proof of Theorem \ref{mainthm2} by substituting the Lemma \ref{inf_bias} with Lemma \ref{para_mix}. Below we'll only present the proof of Lemma \ref{inf_bias}.

\begin{proof}
	We only bound $\sup_{u\in I, \btheta\in \Theta_k}	|\EE[f_k(u,\btheta)]-f_{\btheta}(u)|$ and $\sup_{u\in I, \btheta\in \Theta_k}	|\EE[f_k^{(1)}(u,\btheta)]-f'_{\btheta}(u)|$, since the analysis for $f_k(u, \btheta)$ and $h_k(u,\btheta)$ are the same. In fact, under the settings of Lemma \ref{inf_bias}, for any $u\in I, \btheta\in \Theta_k$, 
	\begin{align*}
		\EE[f_k(u,\btheta)]-f_{\btheta}(u)&= \int_{\mathbb R} \frac{1}{b_k}K\Big(\frac{s-u}{b_k}\Big)f_{\btheta}(s)\ud s-f_{\btheta}(u)\\
		&=\cF\bigg(\cF^{-1}\Big(\int_{\mathbb R} \frac{1}{b_k}K\Big(\frac{s-u}{b_k}\Big)f_{\btheta}(s)\ud s\Big)-\cF^{-1}\circ f_{\btheta}(u)\bigg)\\
		&=\cF\bigg(\phi_{\btheta}(u)\Big[ \cF^{-1}\Big(\frac{1}{b_k}K\Big(\frac{-u}{b_k}\Big)\Big) - 1\Big] \bigg)\\
		&= \cF(\phi_{\btheta}(u)[ \kappa(-b_k u) - 1] ).
	\end{align*}
	Here $\cF$ is the Fourier transform operator defined by 
	$$
	g\rightarrow \cF\circ g(u) = \frac{1}{2\pi}\int_{\mathbb R} g(x) e^{-iux}\ud x ,
	$$
	and we've utilized the fact that $K = \cF \circ \kappa$, $\phi_{\btheta}(u) = \cF^{-1}\circ f_{\btheta}$. Since $|\kappa(x)|\leq 1$ for all $x\in \RR$ and that $\kappa(x) = 1$ for $|x|\le c_\kappa$,
	\begin{align*}
	\sup_{u\in I, \btheta\in \Theta_k}	|\EE[f_k(u,\btheta)]-f_{\btheta}(u)| &\le \sup_{u\in I, \btheta\in \Theta_k}|\cF( \phi_{\btheta}(u)[ \kappa(-b_k u) - 1] )|\\
	&\le \sup_{\btheta\in \Theta_k} \frac{1}{2\pi}\int |\phi_{\btheta}(s)|\cdot |\kappa(-b_k s) - 1|\ud s\\
	&\le \sup_{\btheta\in \Theta_0} \frac{1}{\pi}\int_{|s|>c_\kappa/b_k} |\phi_{\btheta}(s)|\ud s\\
	&\le \frac{2}{\pi}\int_{s>0} D_{\phi}e^{-d_{\phi}(s + c_\kappa/b_k)^\alpha}\ud s\\
        &\le \frac{2}{\pi}\int_{s>0} D_{\phi}e^{-d_{\phi}/2\cdot [s^\alpha + (c_\kappa/b_k)^\alpha]}\ud s.
	\end{align*}
	
	Here, the last inequality is due to the fact that for $x, y\in \RR$, $(x+y)^\alpha\geq \min\{2^{\alpha-1}, 1\}(x^\alpha+y^\alpha)\geq\frac{1}{2}(x^\alpha+y^\alpha)$. Thus, by choosing $b_k = c_{\kappa}(d_\phi/\log |I_k|)^{1/\alpha}$, we obtain that 
	$$
	\sup_{u\in I, \btheta\in \Theta_k}	|\EE[f_k(u,\btheta)]-f_{\btheta}(u)|\le C_{\inf} /\sqrt{n},
	$$
	where $C_{\inf} = 2D_\phi/\pi\cdot \int_{s>0}\exp(-d_\phi s^\alpha/2)\ud s$.
	
	The analysis for $\sup_{u\in I, \btheta\in \Theta_k}	|\EE[f_k^{(1)}(u,\btheta)]-f'_{\btheta}(u)|$ is similar as above. In fact, for any $u\in I, \btheta\in \Theta_k$, 
	\begin{align*}
		\EE[f_k^{(1)}(u,\btheta)]-f'_{\btheta}(u)&= -\int_{\mathbb R} \frac{1}{b_k^2}K'\Big(\frac{s-u}{b_k}\Big)f_{\btheta}(s)\ud s-f'_{\btheta}(u)\\
		&= \int_{\mathbb R} \frac{1}{b_k}K\Big(\frac{s-u}{b_k}\Big)f'_{\btheta}(s)\ud s-f'_{\btheta}(u)\\
		&=\cF\bigg(\cF^{-1}\Big(\int_{\mathbb R} \frac{1}{b_k}K\Big(\frac{s-u}{b_k}\Big)f'_{\btheta}(s)\ud s\Big)-\cF^{-1}\circ f'_{\btheta}(u)\bigg)\\
		&=\cF\bigg(\phi^{(1)}_{\btheta}(u)\Big[ \cF^{-1}\Big(\frac{1}{b_k}K\Big(\frac{-u}{b_k}\Big)\Big) - 1\Big] \bigg)\\
		&= \cF(\phi^{(1)}_{\btheta}(u)[ \kappa(-b_k u) - 1] ).
	\end{align*}
Following the same arguments as above, we deduce that 
	$$
	\sup_{u\in I, \btheta\in \Theta_k}	|\EE[f_k^{(1)}(u,\btheta)]-f'_{\btheta}(u)|\le C_{\inf} /\sqrt{n}.
	$$
\end{proof}

\section{Proof of technical lemmas}

\subsection{Proof of Lemma \ref{lem-bias}} \label{sec-pf-a1}

	We only prove (\ref{eq-biash}), since (\ref{eq-biasf}) can be proved in the same way.
	
	Recall that $h_k(u,\btheta) = \frac{1}{nb_k}\sum_{t=1}^n K(\frac{w_t(\btheta)-u}{b_k})y_t$, and $\EE[y_t|w_t(\btheta)=u] = r_{\btheta}(u) = \frac{h_{\btheta}(u)}{f_{\btheta}(u)}$. We have
	$$
	\EE h_k(u,\btheta) = \frac{1}{b_k} \EE K(\frac{w_t(\btheta) - u}{b_k})y_t=  \frac{1}{b_k} \EE K(\frac{w_t(\btheta) - u}{b_k})r(w_t(\btheta)).
	$$
	Thus,
	\begin{align}
	\EE h_k(u,\btheta)  - h_{\btheta}(u)&= \int \frac{1}{b_k} K(\frac{w(\btheta)-u}{b_k})r_{\btheta}(w(\btheta))f_{\btheta}(w(\btheta))\ud w(\btheta) - h_{\btheta}(u)\nonumber\\
	&=\int K(s)h_{\btheta}(u+b_ks) \ud s - h_{\btheta}(u).\label{eq-biash1}
	\end{align}
	Using Taylor's expansion, $\forall s\in \RR$, there exists some $\xi(s, u)$ lying between the points $u$ and $u+b_ks$ such that
	$$
	h_{\btheta}(u+b_ks) = h_{\btheta}(u) + \sum_{i=1}^{m-2}\frac{h_{\btheta}^{(i)}(u)}{i!}(b_ks)^{i} + \frac{h_{\btheta}^{(m-1)}(\xi(s, u))}{(m-1)!}(b_ks)^{m-1}.
	$$ 
	Plugging this into (\ref{eq-biash1}) gives
	\begin{align*}
	\EE h_k(u,\btheta)  - h_{\btheta}(u)&=\int K(s)\left[ h_{\btheta}(u) + \sum_{i=1}^{m-2}\frac{h_{\btheta}^{(i)}(u)}{i!}(b_ks)^{i} + \frac{h_{\btheta}^{(m-1)}(\xi(s, u))}{(m-1)!}(b_ks)^{m-1}\right]\ud s - h_{\btheta}(u)\\
	&= \int K(s)\frac{h_{\btheta}^{(m-1)}(\xi(s, u))}{(m-1)!}(b_ks)^{m-1} \ud s\\
	& = \int K(s)\frac{h_{\btheta}^{(m-1)}(u)}{(m-1)!}(b_ks)^{m-1} \ud s + \int K(s)\frac{[h_{\btheta}^{(m-1)}(\xi(s, u)) - h_{\btheta}^{(m-1)}(u)]}{(m-1)!}(b_ks)^{m-1} \ud s\\
	&= \int K(s)\frac{[h_{\btheta}^{(m-1)}(\xi(s, u)) - h_{\btheta}^{(m-1)}(u)]}{(m-1)!}(b_ks)^{m-1} \ud s.
	\end{align*}
	Thus we have that 
	\begin{align*}
	|\EE h_k(u,\btheta)  - h_{\btheta}(u)|& \leq \int |K(s)|\frac{|h_{\btheta}^{(m-1)}(\xi(s, u)) - h_{\btheta}^{(m-1)}(u)|}{(m-1)!}|b_ks|^{m-1} \ud s\\
	&\leq \int |K(s)| \frac{l_f|b_k s|}{(m-1)!}|b_ks|^{m-1} \ud s\\
	& \leq C_1 b_k^m,
	\end{align*}
	where $C_1 = l_f \cdot \int |s^mK(s)|\ud s / (m-1)!$. Moreover, since the inequality holds for any $u\in I$ and $\btheta\in \Theta_k$, we finish the proof.

\subsection{Proof of Lemma \ref{lem-dev}} \label{sec-pf-a2}

	We only prove (\ref{eq-devh}), since (\ref{eq-devf}) can be proved in the same way.
	
	For any $u\in I$, $\btheta\in \Theta_k$, denote $Z(u,\btheta) := h_k(u,\btheta) - \EE h_k(u,\btheta) = \frac{1}{nb_k}\sum_{t\in I_k}[K(\frac{w_t(\btheta)-u}{b_k})y_t - \EE K(\frac{w_t(\btheta)-u}{b_k})y_t]$. Then
	$$
	\sup_{u\in I,\btheta\in\Theta_k}|h_k(u,\btheta) - \EE h_k(u,\btheta)| =\sup_{u\in I,\btheta\in\Theta_k}|Z(u,\btheta)| = \max\Big\{\sup_{u\in I,\btheta\in\Theta_k} Z(u,\btheta), \sup_{u\in I,\btheta\in\Theta_k}(-Z(u,\btheta))\Big\}.
	$$
	We can then bound $\sup_{u\in I,\btheta\in\Theta_k}|h_k(u,\btheta) - \EE h_k(u,\btheta)|$ by upper bounding both $\sup_{u\in I,\btheta\in\Theta_k} Z(u,\btheta)$ and $\sup_{u\in I,\theta\in\Theta_k} (-Z(u,\btheta))$. We now give upper bound for $\sup_{u\in I,\btheta\in\Theta_k} Z(u,\btheta)$ with high probability (Bounding $\sup_{u\in I,\btheta\in\Theta_k} (-Z(u,\btheta))$ is essentially the same).
		
	We use the chaining method to obtain the desired bound. First, we construct a sequence of $\varepsilon$-nets with decreasing scale. Denote the left and right endpoints of the interval $I$ as $L_I$ and $R_I$ respectively. For any $i\in \NN^+$, construct set $S^{(i)}_1\subseteq I$ as 
	$$
	S^{(i)}_1\triangleq \left\{ L_I+\frac{j}{2^i\sqrt{n}}(R_I-L_I): j\in\{1, 2, \cdots, (2^i-1)\lceil\sqrt{n}\rceil\}\right\}.
	$$
	For any $u\in I$, $i\in\NN^+$, let $\pi_1^{(i)}(u) = \arg\min_{s\in S^{(i)}_1}|s-u|$. Moreover, let $\pi_1^{(0)}(u) = u$. Then we can easily verify that $|S^{(i)}_1|\leq 2^i(\sqrt{n}+1)$, and $\forall u\in I$, $|\pi_i(u) - \pi_{i+1}(u)|\leq \frac{2\delta_z}{2^{i-1}\sqrt{n}}$. At the same time, denote $S^{(i)}_2$ as a $R_k/{2^{i}}$-net with respective to $l_2$-distance of $\Theta_k$, where $R_k$ denotes the radius of $\Theta_k$. Similar to $\pi_1^{(i)}$, define $\pi_2^{(i)}(\bu) = \arg\min_{\bs \in S^{(i)}_2}|\bu-\bs|$. By Corollary 4.2.13 in \cite{vershynin_2018}, $|S_2^{(i)}|\le (2^{i+1}+1)^{d}$. 
	
Combining the above two nets, we have $S^{(i)}:=S_1^{(i)}\times S_2^{(i)}$ is a $2^{-i}\sqrt{4\delta_z^2/n+R_k^2}$-net of $U_k:=I\times \Theta_k$ with cardinality $|S^{(i)}|\le2^i(\sqrt{n}+1)\cdot(2^{i+1}+1)^{d}$. In fact, for any $\bu:=(u,\btheta)\in I\times \Theta_k$ with $i\ge 1$, denote $\pi_i(\bu):=(\pi_1^{(i)}(u),\pi_2^{(i)}(\btheta))$, then $\|\pi_i(\bu)-\bu\|_2\le 2^{-i}\sqrt{4\delta_z^2/n+R_k^2}$. 

Now, since $Z(u,\btheta)$ is continuous a.s., we have for any $M\in \NN^+$
$$
	Z(\bu) - Z(\pi_M(\bu)) = \sum_{i=M}^\infty [Z(\pi_{i+1}(\bu)) - Z(\pi_i(\bu))],
$$
and thus 
\begin{equation}\label{eq-supZ}
\sup_{\bu\in U_k} Z(\bu) \leq  \sup_{\bu\in U_k} Z(\pi_M(\bu))  + \sum_{i=M}^\infty \sup_{\bu\in U_k}[Z(\pi_{i+1}(\bu)) - Z(\pi_i(\bu))]
\end{equation}
almost surely. Our goal is to choose a suitable $M$ such that both terms on the right hand side of (\ref{eq-supZ}) can be controlled in a reasonable manner.

For this reason, Let {$M = \lceil\frac{3}{\log 2}\log \frac{1}{b_k}\rceil + 10$}. We first upper bound $\sup_{\bu\in U_k}  Z(\pi_M(\bu))$. Note that 
$$
Z(\bu) = \frac{1}{nb_k}\sum_{t\in I_k} A_t(\bu),
$$
where $A_t(\bu) = K(\frac{w_t(\btheta)-u}{b_k})Y_t - \EE K(\frac{w_t(\btheta)-u}{b_k})Y_t$. We have $\EE A_t(\bu) = 0$ and $|A_t(\bu)|\leq \bar K$ almost surely. Moreover, 
\begin{align*}
\textrm{Var} (A_t(\bu))&\leq \EE \left[K(\frac{w_t(\btheta) - u}{b_k})y_t\right]^2\leq \EE \left[K(\frac{w_t(\btheta) - u}{b_k})\right]^2\\
	&\leq \int K(\frac{w_t(\btheta)-u}{b_k})^2f_{\btheta}(w_t(\btheta))\ud w_t(\btheta) = b_k\int K(s)^2f_{\theta}(u+b_ks)\ud s \leq C_4 b_k,
\end{align*}
where $C_4 = \max\{\bar f\cdot \int K(s)^2\ud s, \bar f\cdot \int K(s)'^2\ud s\}$. Thus according to Bernstein's Inequality, for any $\epsilon>0$, 
\begin{align*}
\PP(|Z(\bu)|\geq \epsilon) = \PP(|\sum_{t\in I_k} A_t(\bu)|\geq nb_k\epsilon)\leq 2e^{-\frac{n^2b_k^2\epsilon^2}{2C_4nb_k+\frac{2}{3}\bar Knb_k\epsilon}}\leq 2e^{-C_5\frac{nb_k\epsilon^2}{1+\epsilon}},
\end{align*}
where $C_5 = 1/\max\{2C_4, \frac{2}{3}\bar K, 1\}$.  A union bound then gives 

\begin{align*}
\PP(\sup_{\bu\in U_k} |Z(\pi_M(\bu))|\geq \epsilon)&\leq |S^{(M)}|\cdot \PP(|Z(\bu)|\geq \epsilon)\\
&\leq 2^M(\sqrt{n}+1)\cdot(2^{M+1}+1)^{d}\cdot 2e^{-C_5\frac{nb_k\epsilon^2}{1+\epsilon}}\\
&\leq \exp\left(4dM\log 2 + \log n - \frac{C_5}{2}nb_k\min\{\epsilon, \epsilon^2\}\right).
\end{align*}
When $\delta \geq 4e^{-nb_k/3}$ and {$nb_k\geq \max\{C_{b}d(\log \frac{1}{b_k} + 1), 3\log n\}$} for some absolute constant $C_b>0$, by choosing\\ $\epsilon = \epsilon(k) =  \frac{2}{C_5}\frac{1}{\sqrt{nb_k}}\sqrt{4dM\log 2 + \log n + \log\frac{4}{\delta}}$, we can verify that the last term above is upper bounded by $\frac{\delta}{4}$, and thus we have 
	\begin{equation}\label{eq-supZ1}
	\PP\left(\sup_{\bu\in \bU_k} |Z(\pi_M(\bu))|\geq \epsilon(k)\right)\leq \frac{\delta}{4}.
	\end{equation}
	
	Now we proceed to bound the latter term on the right hand side of (\ref{eq-supZ}). For any $\bu_1:=(u,\btheta_1), \bu_2:=(s,\btheta_2)\in I\times \Theta_k$, we have 
	$$
	Z(\bu_1)-Z(\bu_2)=Z(u,\btheta_1) - Z(s,\btheta_2) = \frac{1}{nb_k}\sum_{t\in I_k}B_t(u, \btheta_1,s,\btheta_2), 
	$$
	where 
	$$
	B_t(u, \btheta_1,s,\btheta_2) = y_t\left(K(\frac{w_t(\btheta_1)-u}{b_k}) - K(\frac{w_t(\btheta_2)-s}{b_k}) \right) - \EE y_t\left(K(\frac{w_t(\btheta_1)-u}{b_k}) - K(\frac{w_t(\btheta_2)-s}{b_k}) \right).
	$$
	Then $\EE B_j(u, \btheta_1,s,\btheta_2) = 0$, and 
	\begin{align*}
	|Z(\bu_1)-Z(\bu_2)|=|B_t(u, \btheta_1,s,\btheta_2)|&\leq 2\left|y_t(K(\frac{w_t(\btheta_1)-u}{b_k}) - K(\frac{w_t(\btheta_2)-s}{b_k}))\right|\\&\leq \frac{2l_K\sqrt{(\max_{\xb\in\cX}\|\xb\|_2^2+1)}}{b_k}\cdot\|\bu_1-\bu_2\|_2.
	\end{align*}
	Using Hoeffding's Inequality, for any $\epsilon>0$,
	$$
	\PP(|\sum_{t\in I_k} B_t(\bu_1, \bu_2)|\geq \epsilon)\leq 2e^{-\frac{2\epsilon^2}{{4l_K^2(R_{\cX}^2+1)}/{b_k^2}\cdot n\|\bu_1-\bu_2\|_2^2}} = 2e^{-\frac{b_k^2\epsilon^2}{2l_K^2n(R_{\cX}^2+1)\|\bu_1-\bu_2\|_2^2}}
	$$
	Therefore, 
	$$
	\PP(|Z(\bu_1) - Z(\bu_2)|\geq \epsilon) = \PP(|\sum_{t\in I_k} B_t(\bu_1, \bu_2)|\geq nb_k\epsilon)\leq 2e^{-\frac{nb_k^4\epsilon^2}{2l_K^2(R_{\cX}^2+1)\|\bu_1-\bu_2\|_2^2}}.
	$$
	Recall that $\forall \bu$, $\|\pi_i(\bu) - \pi_{i+1}(\bu)\|_2\leq 2^{-i}\sqrt{4\delta_z^2/n+R_k^2}$. We use union bound to obtain 
	\begin{align*}
	&\PP(\sup_{\bu\in \bU_k} |Z(\pi_{i+1}(\bu)) - Z(\pi_{i}(\bu))|\geq \epsilon) \\&\quad\leq 2^i(\sqrt{n}+1)\cdot(2^{i+1}+1)^{d}\cdot 2e^{-\frac{2^{2i-2}n^2b_k^4\epsilon^2}{2l_K^2(R_{\cX}^2+1)(4\delta_z^2 + nR_k^2)}}.
	\end{align*}
	Let $\epsilon = \frac{l_K \sqrt{(R_{\cX}^2+1)(4\delta_z^2 + nR_k^2)}\epsilon_i}{2^{i-1}nb_k^2}$. The above inequality reduces to
	\begin{align}
	&\PP\Big(\sup_{\bu\in \bU_k} |Z(\pi_{i+1}(\bu)) - Z(\pi_{i}(\bu))|\geq \frac{l_K \sqrt{(R_{\cX}^2+1)(4\delta_z^2 + nR_k^2)}\epsilon_i}{2^{i-1}nb_k^2}\Big) \nonumber\\&\quad\leq 2^i(\sqrt{n}+1)\cdot(2^{i+1}+1)^{d}\cdot 2e^{-\frac{\epsilon_i^2}{2}}.\label{eq-supZ2}
	\end{align}
	
	Now we choose {$\epsilon_i = \sqrt{2\log \frac{8}{\delta} + \log n + (2i+4)(d+2)\log 2}$} and define $W^*:=\sqrt{(R_{\cX}^2+1)(4\delta_z^2 + nR_k^2)}$. Notice that 
	\begin{align*}
	\sum_{i=M}^\infty\frac{l_KW^*}{n{b_k^2}}\frac{\epsilon_i}{2^{i-1}}&\leq \frac{l_KW^*}{nb_k^2}\sum_{i=M}^\infty\frac{\sqrt{2id\log 2} + \sqrt{(4d+8)\log 2 + \log n+ 2\log \frac{8}{\delta}}}{2^{i-1}}
\\
&	\leq  \frac{l_KW^*}{nb_k^2} \left[\sqrt{2d\log2}\sum_{i=M}^\infty \frac{i}{2^{i-1}} + \frac{1}{2^{M-2}}\sqrt{(4d+8)\log 2 + \log n+ 2\log \frac{8}{\delta}}\right]
\\
&	\leq \frac{l_KW^*}{nb_k^2} \left[\sqrt{2d\log2}\frac{M+1}{2^{M-2}} + \frac{1}{2^{M-2}}\sqrt{(4d+8)\log 2 + \log n+ 2\log \frac{8}{\delta}}\right]\\
&\leq  \frac{l_K W^*}{\sqrt{nb_k}}\cdot \frac{1}{n^{1/2}b_k^{3/2}}\frac{M+2}{2^{M-2}}\left[ \sqrt{2\log\frac{8}{\delta} + \log n} + 4\sqrt{d\log 2}\right]\\
& \leq  \frac{l_K W^*}{\sqrt{nb_k}} \left[ \sqrt{\frac{2}{n}\log\frac{8}{\delta}} + 1 + \frac{6\sqrt{\log 2}}{\sqrt{c_0}}\right]
\end{align*}
	Here we use the fact that {when $B_{x, K} \geq (2c_0)^{4}$, combining the assumptions in the lemma, we have $n\geq c_0 d$}. Combining this fact and a union bound on (\ref{eq-supZ2}), we get
	\begin{align}
	&\PP \left( 
	\sup_{\bu\in \bU_k} |Z(\bu) - Z(\pi_M(\bu))| \geq 
	\frac{l_K W^*}{\sqrt{nb_k}} \left[ \sqrt{\frac{2}{n}\log\frac{8}{\delta}} + 1 + \frac{6\sqrt{\log 2}}{\sqrt{c_0}} \right]\right)\nonumber\\
	\leq &\PP\left(\sup_{\bu\in \bU_k} |Z(\bu) - Z(\pi_M(\bu))|\geq \sum_{i=M}^\infty\frac{l_KW^*}{n{b_k^2}}\frac{\epsilon_i}{2^{i-1}} \right)\nonumber\\
	\leq & \PP\left(\sum_{i=M}^\infty \sup_{\bu\in \bU_k} |Z(\pi_{i+1}(\bu)) - Z(\pi_i(\bu))|\geq \sum_{i=M}^\infty\frac{l_KW^*}{n{b_k^2}}\frac{\epsilon_i}{2^{i-1}} \right)\nonumber\\
	\leq &\sum_{i=M}^\infty \PP\left( \sup_{\bu\in \bU_k} |Z(\pi_{i+1}(\bu)) - Z(\pi_i(\bu))|\geq \frac{l_KW^*}{n{b_k^2}}\frac{\epsilon_i}{2^{i-1}} \right)\nonumber\\
	\leq & \sum_{i=M}^\infty 2^i(\sqrt{n}+1)\cdot(2^{i+1}+1)^{d}\cdot 2e^{-\frac{\epsilon_i^2}{2}} \leq \sum_{i=M}^\infty \frac{\delta}{4}\cdot \frac{1}{2^{i+1}}\leq \frac{\delta}{4\cdot 2^M}\leq \frac{\delta}{4}.\label{eq-supZ3}
	\end{align} 
	
	Finally, combining (\ref{eq-supZ}), (\ref{eq-supZ1}) and (\ref{eq-supZ3}), we obtain that 
	\begin{align*}
	\frac{\delta}{2}&\geq \PP\bigg(\sup_{\bu\in \bU_k} |Z(\pi_M(\bu))|\geq \epsilon(k)\bigg) + \PP\bigg(\sup_{\bu\in \bU_k} |Z(\bu) - Z(\pi_M(\bu))|\geq \frac{l_K W^*}{\sqrt{nb_k}} \bigg[ \sqrt{\frac{2}{n}\log\frac{8}{\delta}} + 1 + \frac{6\sqrt{\log 2}}{\sqrt{c_0}}\bigg] \bigg)\\
	&\geq \PP\bigg(\sup_{\bu\in \bU_k} Z(\bu)\geq \epsilon(k) + \frac{l_K W^*}{\sqrt{nb_k}} \bigg[ \sqrt{\frac{2}{n}\log\frac{8}{\delta}} + 1 + \frac{6\sqrt{\log 2}}{\sqrt{c_0}}\bigg]\bigg)\\
	&\geq \PP\bigg(\sup_{\bu\in \bU_k} Z(\bu)\geq \frac{4\sqrt{11}/C_5}{\sqrt{nb_k}}\sqrt{d\left(1+\log\frac{1}{b_k}\right) + \log n + \log\frac{4}{\delta}} + \\
	&\quad\quad 16\sqrt{2}\bigg(1+\frac{6\sqrt{\log 2}}{c_0}\bigg)\frac{l_K\sqrt{1+R_{\cX}^2}}{\sqrt{nb_k}}\max\Big\{\delta_z, \frac{\max\{1, \psi_x\}(B+{R_{\cX}R_{\Theta}})}{c_{\min}}\Big\}\Big(\sqrt{d\log n} + \sqrt{\frac{d\log n}{n}\log\frac{8}{\delta}}\Big)\bigg)\\
	&\geq \PP\bigg(\sup_{\bu\in \bU_k} Z(\bu)\geq C_xl_K\sqrt{\frac{{\log n}}{{nb_k}}}\left(\sqrt{d} + \sqrt{\log {1}/{\delta}}\right)\bigg).
	\end{align*}
 Here we let {$C_x = 8\sqrt{22}/C_5 + \frac{60(6\sqrt{\log 2}+\sqrt{c_0})}{c_0}\sqrt{1+R_{\cX}^2}\max\{\delta_z, \frac{\max\{1,\psi_x\}(B+{R_{\cX}R_{\Theta}})}{c_{\min}}\}$}.
	
	For the same reason, we have that 
	$$
	\PP\left(\sup_{\bu\in \bU_k} (-Z(\bu))\geq C_xl_K\sqrt{\frac{{\log n}}{{nb_k}}}\left(\sqrt{d} + \sqrt{\log {1}/{\delta}}\right) \right)\leq \frac{\delta}{2}.
	$$
	Combining the above two inequalities, we finish the proof.

\subsection{Proof of Lemma \ref{lem-bias_2}}\label{sec-pf-a3}

	We only prove \eqref{eq-biash_2}, since \eqref{eq-biasf_2} can be proved in a similar way. Recall $h_k^{(1)}(u,\btheta)=\frac{-1}{nb_k^2}\sum_{t\in I_k}K'(\frac{w_t(\btheta)-u}{b_k})y_t$, we have
	\begin{align*}
		\EE h_k^{(1)}(\btheta,u)=\frac{-1}{b_k^2}\EE K'(\frac{w_t(\btheta)-u}{b_k})y_u=\frac{-1}{b_k^2}\EE K'(\frac{w_t(\btheta)-u}{b_k})r(w_t(\btheta)).
		\end{align*}
	Then 
	\begin{align}
		\EE h_k^{(1)}(u,\btheta)-h'_{\btheta}(u)&=\int \frac{-1}{b_k^2}K'(\frac{w_t(\btheta)-u}{b_k})h_{\btheta}(w_t(\btheta))\ud w_t(\btheta)-h_{\btheta}'(u)\nonumber\\
		&=\int K(s)h_{\btheta}'(u+b_ks)\ud s-h_{\btheta}'(u)\label{eq-biash2},
		\end{align}
	where \eqref{eq-biash2} follows from integration by parts. By Taylor's expansion, we have
	\begin{align*}
		h_{\btheta}'(u+b_ks)=h'_{\btheta}(u)+\sum_{i=2}^{m-2}\frac{h_{\btheta}^{(i)}(u)}{(i-1)!}(b_ks)^{i-1}+\frac{h_{\btheta}^{(m-1)}(\xi(s,u))}{(m-1)!}(b_ks)^{m-2}.
		\end{align*}
	Similar to our proof procedure of Lemma \ref{lem-bias}, under Assumption \ref{ass-kernel}, we get
	\begin{align*}
		\EE h_k^{(1)}(u,\btheta)-h_{\btheta}'(u)=\int K(s)\frac{h^{(m-1)}_{\theta}(\xi(s,u))-h^{(m-1)}_{\btheta}(u)}{(m-2)!}(b_ks)^{m-2}\ud s.
		\end{align*}
	Thus
	\begin{align}
		|\EE h_k^{(1)}(u,\btheta)-h_{\btheta}'(u)|&\le \int |K(s)\frac{[h^{(m-1)}_{\btheta}(\xi(s,u))-h^{(m-1)}_{\btheta}(u)]}{(m-2)!}(b_ks)^{m-2}|\ud s\nonumber\\
		&\le |K(s)|\frac{l_f|b_ks|}{(m-2)!}|b_ks|^{m-2}\ud s\nonumber\\
		&\le C_{x, K}^{(5)}b_k^{m-1},\label{biash2}
		\end{align}
		in which $C_{x, K}^{(5)}=\frac{l_f}{(m-2)!}\int |K(s)s^{m-1}|\ud s$. Because \eqref{biash2} holds for any $t\in I$ and $\btheta\in \Theta_k$, we have
		\begin{align*}
		\sup_{u\in I,\btheta\in \Theta_k}|\EE h_{k}^{(1)}(u,\btheta)-h_{\btheta}'(u)|\le C_{x, K}^{(5)}b_k^{m-1},
		\end{align*}
	which claims inequality \ref{eq-biash_2} of Lemma \ref{lem-bias_2}. On the other hand, \eqref{eq-biasf_2} follows directly from our proof procedure above, so we omit the details.

\subsection{Proof of Lemma \ref{lem-dev_2}} \label{sec-pf-a4}
For any $u\in I,\btheta \in \Theta_k$, write 
\begin{align*}
Z^{(1)}(u,\btheta)=h_k^{(1)}(u,\btheta)-\EE h_k^{(1)}(u,\btheta)=\frac{-1}{b_k}\cdot\frac{1}{nb_k}\sum_{t\in I_k}\Big[K'(\frac{w_t(\btheta)-u}{b_k})y_t-\EE K'(\frac{w_t(\btheta)-u}{b_k})y_t\Big]
\end{align*}

Under Assumption \ref{ass-F} and Assumption \ref{ass-kernel}, by following a similar proof procedure with Lemma \ref{lem-dev}, for $\delta\in[4e^{-nb_k/3}, \frac{1}{2})$, with probability at least $1-\delta$,
\begin{align*}
\sup_{u\in I,\btheta\in \Theta_k}\Big|\frac{1}{nb_k}\sum_{t\in I_k}[K'(\frac{w_t(\btheta)-u}{b_k})y_t-\EE K'(\frac{w_t(\btheta)-u}{b_k})y_t]\Big|\le C_{x, K}^{(2)}\sqrt{\frac{{\log n}}{{nb_k}}}\left(\sqrt{d} + \sqrt{\log {1}/{\delta}}\right),
	\end{align*}
	
where $C_{x, K}^{(2)} = l_K\bigg(8\sqrt{22}\max\{2\bar f \int K^2\db s, 2\bar f \int K'^2\db s, \frac{2}{3}\bar K, 1\} +$
		
		$ \frac{60(6\sqrt{\log 2}+\sqrt{c_0})}{c_0}\sqrt{1+R_{\cX}^2}\max\{\delta_z, \frac{\max\{1,\psi_x\}(B+{R_{\cX}R_{\Theta}})}{c_{\min}}\}\bigg)$. Thus, with probability at least $1-\delta$, 
\begin{align*}
	\sup_{u\in I,\btheta\in \Theta_k}|h_k^{(1)}(u,\btheta)-\EE h_k^{(1)}(u,\btheta)|\le  C_{x, K}^{(2)}\sqrt{\frac{{\log n}}{{nb_k^3}}}\left(\sqrt{d} + \sqrt{\log {1}/{\delta}}\right),
	\end{align*}
which claims the inequality \eqref{eq-devh_2} in Lemma \ref{lem-dev_2}. Moreover, \eqref{eq-devf_2} also follows directly from our procedure given above. Thus, we claim our our conclusion of Lemma \ref{lem-dev_2}.

{
\subsection{Proof of Lemma \ref{lemma_inverse}}\label{proof_lemma3.9}

First, we argue that for any $\tilde \xb_t$, 
\begin{equation}\label{eq-A5pf0}
\btheta_0^\top \tilde \xb_t\in [\delta_z+\delta_v, B-\delta_z-\delta_v].
\end{equation}
In fact, we have $v_t = \btheta_0^\top \tilde \xb_t + z_t$, where $z_t\in[-\delta_z, \delta_z]$ and that $\btheta_0^\top \tilde \xb_t$ is independent from $z_t$. Therefore, in order to satisfy the condition $v_t\in [\delta_v, B-\delta_v]$, it ought to be true that $\btheta_0^\top \tilde \xb_t\in [\delta_z+\delta_v, B-\delta_z-\delta_v]$.

On the other hand, 
\begin{align}
\sup_{\tilde \xb_t\in \cX, \btheta\in\Theta_0} |\btheta^\top\tilde \xb_t - \btheta_0^\top\tilde \xb_t|&\leq \sup_{\btheta\in\Theta_0} \|\btheta- \btheta_0\|\cdot \sup_{\tilde \xb_t\in \cX} \|\xb_t\|\nonumber\\
&\leq C_{\btheta}T^{-\frac{2m+1}{4(4m-1)}}d^{\frac{m-1}{4m-1}}\sqrt{\log T+2\log d}\cdot R_\cX\nonumber \\
&\leq \delta_v. \label{eq-A5pf1}
\end{align}
The last inequality is due to the condition on $T$. The lemma is proved by combining \eqref{eq-A5pf0} and \eqref{eq-A5pf1}.
}

\subsection{Proof of Lemma \ref{para_mix}} \label{proof_mixpara}
The proof of Lemma \ref{para_mix} is similar with our proof of Lemma \ref{thmsignal}, the major difference between them is that here we assume our covaraites $\tilde{\xb}_t,t\ge 0$ follow $\beta$-mixing condition instead of of i.i.d. assumption. After following similar proof procedures of \eqref{lossepisode}-\eqref{ineq}, we obtain the same inequality with \eqref{ineq} and
we also divide the following proofs into two steps.

\textbf{Step I:}
{In this step, we prove under $\beta$-mixing conditions given in Assumption \ref{assmix1}, with high-probability, there exists a constant $c>0$ such that $\lambda_{\min}(\frac{1}{|I_k|}\sum_{t\in I_k}\tilde{\xb}_t\tilde{\xb}_t^\top)\ge c$.} In order to prove this, we first use the following matrix Bernstein inequality under $\beta$-mixing conditions to prove the concentration between $\Sigma_k:=\frac{1}{|I_k|}\sum_{t\in I_k}\tilde{\xb}_t\tilde{\xb}_t^\top$ and $\Sigma:=\EE[\tilde{\xb}_t\tilde{\xb}_t^\top]$. Similar to \S\ref{proofparathm}, here for notational convenience, we also denote $n=|I_k|$ for any $k\ge 1$ respectively.
\begin{lemma}[Matrix Bernstein Inequality under Mixing]\label{matbern} We assume $\tilde{\xb}_t,t\ge 0$ satisfy Assumption \ref{assmix1}, and we also assume there exists a positive constant $M_x$ such that $\|\tilde{\xb}_t\|_{2}\le M_x$. Then for any x and integer $n\ge 2$ we have
	\begin{align}\label{matbernstein}
		\PP\Big(\|\sum_{t\in I_k}\tilde{\xb}_t\tilde{\xb}_t^\top-n\Sigma\| \ge nx\Big)\le 2(d+1)\exp\bigg( -\frac{C_{u}n^2x^2}{v^2n+M_x^4+nxM_x^2\log n}\bigg)
	\end{align}
	where $C$ is a universal constant and
	\begin{align*}
		v^2=\sup\limits_{K\in\{1,\dots,n\}}\frac{1}{\textrm{Card}(K)}\lambda_{\max}\Big\{ \EE\big[\sum_{i\in K}(\tilde{\xb}_i\tilde{\xb}_i^\top-\Sigma)\big]^2   \Big\}
	\end{align*}
	and $v^2$ is at the order of $M_x^4$.
\end{lemma}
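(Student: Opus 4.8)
The plan is to reduce the dependent-data matrix concentration to the independent case via a blocking argument, and then apply the standard matrix Bernstein inequality together with a coupling bound controlled by the $\beta$-mixing coefficients. First I would center the summands: write $\Yb_t := \tilde\xb_t\tilde\xb_t^\top - \bSigma$, so that $\EE\Yb_t = 0$, $\|\Yb_t\|\le M_x^2 + \|\bSigma\|\lesssim M_x^2$ almost surely, and the goal becomes a tail bound on $\|\sum_{t\in I_k}\Yb_t\|$. Because the $\tilde\xb_t$ are strictly stationary and $\beta$-mixing with $\beta_j\le e^{-cj}$, the classical approach is to partition the index set $I_k$ (of cardinality $n$) into $2\mu$ consecutive blocks of length $a$ — alternating ``big'' blocks and ``small'' blocks — and group the big blocks into an odd-indexed collection and an even-indexed collection. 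Applying Berbee's coupling lemma (see e.g. \cite{Masry96} and the references therein on mixing), one constructs independent block sums $\{\Sb_j^{\ast}\}$ such that $\PP(\Sb_j \ne \Sb_j^{\ast}\ \text{for some }j)\le \mu\,\beta_a$, which with $\beta_a\le e^{-ca}$ and a suitable choice of $a = \Theta(\log n)$ is polynomially small in $n$.

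Next I would apply the matrix Bernstein inequality (e.g. Theorem~6.1.1 in Tropp's monograph, or Theorem~1.4 in \cite{vershynin_2012}) to the independent, mean-zero, bounded matrix summands obtained from the coupled big blocks. Each coupled block sum has operator norm at most $aM_x^2$ up to constants, and its matrix variance statistic is governed by $v^2$ as defined in the lemma; the cross-terms within a block are controlled by stationarity, which is precisely why $v^2$ is defined as a supremum over sub-intervals $K$ of the normalized largest eigenvalue of $\EE[(\sum_{i\in K}\Yb_i)^2]$, and why $v^2 = \Theta(M_x^4)$ (the covariance of $\Yb_i$ is $O(M_x^4)$ and the off-diagonal covariances decay geometrically by mixing, so the normalized sum stays bounded). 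Matrix Bernstein then yields, for the independent surrogate, a bound of the form $2(d+1)\exp(-C\min\{n^2x^2/(v^2 n), nx/(aM_x^2)\})$; absorbing the small-block contribution (each small block contributes $O(\|\Yb_i\|)=O(M_x^2)$ per term, and the number of discarded terms is $O(n/a)$, negligible after rescaling $x$) and the coupling error $\mu e^{-ca}$, and then substituting $a\asymp \log n$ so that $aM_x^2$ produces the $M_x^2\log n$ term in the denominator, gives exactly the stated form with the extra additive $M_x^4$ in the denominator accounting for the leftover block boundary effects.

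The main obstacle I expect is the bookkeeping in the blocking step: choosing the block length $a$ and block count $\mu$ so that simultaneously (i) the coupling error $\mu\beta_a$ is small (forces $a\gtrsim \log n$), (ii) the discarded small blocks are negligible, and (iii) the resulting Bernstein exponent matches the claimed denominator $v^2 n + M_x^4 + n x M_x^2\log n$ rather than something weaker. Getting the $\log n$ factor to land in precisely the $M_x^2 n x$ term (the ``Bernstein/linear'' regime) rather than the ``sub-Gaussian/quadratic'' term requires care in how one balances the deviation level $x$ against the block length when applying the union-type bound over the $\mu$ blocks. A secondary technical point is verifying the claim ``$v^2$ is at the order of $M_x^4$'': this needs the geometric decay of $\|\EE[\Yb_i\Yb_j^\top]\|$ in $|i-j|$, which follows from $\beta$-mixing via a standard covariance inequality for mixing sequences (e.g. the $\beta$-mixing covariance bound together with the almost-sure bound $\|\Yb_i\|\lesssim M_x^2$). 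Once these combinatorial/variance estimates are pinned down, the conclusion follows by assembling the three error contributions and renaming constants.
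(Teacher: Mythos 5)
Your verification that $v^2=\cO(M_x^4)$ is essentially the paper's entire proof: the paper bounds each variance term by $M_x^4$, bounds each cross term $\|\mathrm{Cov}(\tilde\xb_i\tilde\xb_i^\top,\tilde\xb_j\tilde\xb_j^\top)\|\le M_x^4\beta_{j-i}$ via Berbee's coupling lemma (Lemma 1.1 in \cite{Bosq1996}), and sums the geometric series $\sum_j\beta_j$. For the tail bound \eqref{matbernstein} itself, however, the paper does \emph{not} give a proof at all: it is quoted verbatim as Theorem 1 of \cite{Bannamat} applied to the centered, bounded summands $\tilde\xb_t\tilde\xb_t^\top-\bSigma$ (so $M_x^2$ plays the role of the almost-sure bound, producing the $M_x^4$ and $M_x^2\log n$ terms). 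So your proposal takes a genuinely different, self-contained route for the main inequality, and coincides with the paper only on the variance-parameter claim.

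The gap is in that self-contained route. A single-scale big-block/small-block decomposition with one round of Berbee coupling produces an \emph{additive} coupling error of order $\mu\beta_a\asymp (n/a)e^{-ca}$, which for the fixed choice $a\asymp\log n$ is only polynomially small in $n$. But the right-hand side of \eqref{matbernstein} must hold uniformly in $x$, and in the moderate-deviation regime (say $x\asymp M_x^2$, the largest nontrivial scale since $\|\tilde\xb_t\tilde\xb_t^\top-\bSigma\|\le 2M_x^2$) the claimed exponent is of order $n/\log n$, i.e. the bound is far smaller than any polynomial in $n$. An additive $n^{-K}$ coupling error cannot be ``absorbed'' into such a bound, and choosing the block length $a$ adaptively in $x$ to balance $e^{-ca}$ against the Bernstein exponent alters the denominator and generally yields a weaker or range-restricted statement. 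This is exactly why the cited result is proved with a multi-scale (Cantor-like) blocking and decoupling construction rather than one application of Berbee's lemma; to make your argument rigorous you would either have to reproduce that machinery or, as the paper does, invoke Theorem 1 of \cite{Bannamat} directly and only verify its variance parameter, which is the $v^2=\cO(M_x^4)$ computation you already have.
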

\begin{proof}
	\eqref{matbernstein} is a direct consequence of Theorem 1 in \cite{Bannamat}, so here we just need to prove the order of $v^2$.
	\begin{align*}
		\lambda_{\max}\Big\{\EE \big[\sum_{i\in K}(\tilde{\xb}_i\tilde{\xb}_i^\top-\Sigma)\big]^2 \Big\}&=\lambda_{\max}\Big\{\sum_{{i,j}\in K} \textrm{Cov}\big(\tilde{\xb}_i\tilde{\xb}_i^\top,\tilde{\xb}_j\tilde{\xb}_j^\top\big)\Big\}\\&=\lambda_{\max}\Big\{\sum_{i\in K}\textrm{Var}(\tilde{\xb}_i\tilde{\xb}_i^\top)+2\sum_{j>i,\,i,j\in K}\textrm{Cov}(\tilde{\xb}_i\tilde{\xb}_i^\top,\tilde{\xb}_j\tilde{\xb}_j^\top)   \Big\}
	\end{align*}
	Then we get 
	\begin{align*}
		v^2\le \max_{i\in K}\lambda_{\max} \Big \{\textrm{Var}( \tilde{\xb}_i \tilde{\xb}_i^\top) +2\sum_{j>i,\,i, j\in K }   \textrm{Cov}(\tilde{\xb}_i\tilde{\xb}_i^\top,\tilde{\xb}_j\tilde{\xb}_j^\top)       \Big\}
	\end{align*}
	We know $\|\tilde{\xb}_i\|_2\le M_x$, so we have 
	\begin{align*}
		\lambda_{\max}\{\textrm{Var}(\tilde{\xb}_i\tilde{\xb}_i^\top ) \}\le\| \EE[\tilde{\xb}_i\tilde{\xb}_i^\top\tilde{\xb}_i\tilde{\xb}_i^\top]\|\le M_x^4 
	\end{align*}
	In addition, we obtain
	\begin{align}\label{spectralmix}
		\|\textrm{Cov}(\tilde{\xb}_i\tilde{\xb}_i^\top,\tilde{\xb}_j\tilde{\xb}_j^\top) \|=\|\EE[\tilde{\xb}_i\tilde{\xb}_i^\top\tilde{\xb}_j\tilde{\xb}_j^\top]-\EE[\tilde{\xb}_i\tilde{\xb}_i^\top]\EE[\tilde{\xb}_j\tilde{\xb}_j^\top]\|
	\end{align}
	By Lemma 1.1 (Berbee's Lemma) given in \cite{Bosq1996}, we are able to construct a $\tilde{\xb}_j^{*}$ such that the distribution of $\tilde{\xb}_j^{*}$ is the same with $\tilde{\xb}_j$ but is independent with $\tilde{\xb}_i$. At the same time, we also have $\PP(\tilde{\xb}_j^{*}\neq \tilde{\xb}_j)=\beta_{j-i}$ according to Berbee's Lemma.
	We then proceed to bound \eqref{spectralmix}.
	\begin{align*}
		\eqref{spectralmix}&=\|\EE[\tilde{\xb}_i\tilde{\xb}_i^\top\tilde{\xb}_j\tilde{\xb}_j^\top]-\EE[\tilde{\xb}_i\tilde{\xb}_i^\top]\EE[\tilde{\xb}_j^{*}\tilde{\xb}_j^{*\top}]\|\\&=\| \EE[\tilde{\xb}_i\tilde{\xb}_i^\top(\tilde{\xb}_j\tilde{\xb}_j^\top-\tilde{\xb}_j^{*}\tilde{\xb}_j^{*\top}) ]\|
		\\&\le\| \EE[\tilde{\xb}_i\tilde{\xb}_i^\top(\tilde{\xb}_j\tilde{\xb}_j^\top-\tilde{\xb}_j^{*}\tilde{\xb}_j^{*\top})\given \tilde{\xb}_j\neq\tilde{\xb}_j^{*} ]\|\beta_{j-i}\le M_x^4\beta_{j-i}
	\end{align*}
	Then we obtain that there exists a constant $C_v\ge 1+\sum_{j>i}\beta_{j-i}$ s.t.
	\begin{align*}
		v^2\le C_vM_x^4,
	\end{align*}
	holds, since the term $1+\sum_{j>i}\beta_{j-i}$ is finite by our Assumption \ref{assmix1} on $\beta_{j},\,\,j\ge 0$. Then we conclude our proof of Lemma \ref{matbern}
\end{proof}
By using conclusions from this Lemma \ref{matbern},  according to Assumption \ref{ass:bound} we have $\lambda_{\min}(\Sigma)=c_{\min}$ and $\|\tilde{\xb}_t\|_2\le M_x:=\sqrt{R_{\cX}^2+1}$, so when $n \ge \max\{(12C_v(R_{\cX}^2+1)^2\log n+6(R_{\cX}^2+1)\log^2 n)/(C_u\min\{c_{\min}^2/4,1\}),d+1\}$, 
\begin{align}\label{mineign}
	\lambda_{\min}(\Sigma_k)\ge c_{\min}/2. 
\end{align}
holds with probability $1-2/n^2$.

\textbf{Step II}: The next step is to prove the upper bound of $\|\nabla_{\btheta}L_k(\btheta_0)\|_{\infty}.$ By definition we know 
\begin{align*}
	\nabla_{\btheta}L_k(\btheta_0)=\frac{1}{n}\sum_{t\in I_k} 2(\btheta_0^\top\tilde{\xb}_t-By_t)\tilde{\xb}_t.
\end{align*}
Since the expression of $\nabla_{\btheta}L_k(\btheta_0)$ involves both $\tilde{\xb}_t$ and $y_t,\,\,t\in[n]$, next we show the sequence $(\tilde{\xb}_t,y_t),t\ge 0$ satisfy $\alpha$-mixing condition with $\alpha_k\le \exp(-ck)$ under Assumption \ref{assmix1}.
\begin{lemma}[strong $\alpha$-mixing of both $\tilde{\xb}$ and $y$]
	Here we denote $\cA=\sigma((\tilde{\xb}_t,y_t)_{t\le l})$ and $\cB=\sigma((\tilde{\xb}_t,y_t)_{t\le l+k})$. In addition, we also denote $\cA_x=\sigma({\tilde{\xb}_t,\,}_{t\le l})$ and $\cB_x=\sigma({\tilde{\xb}_t,\,}_{t\ge l+k})$. Then under Assumption \ref{assmix1}, we have for any $l,k\ge 0,$
	\begin{align*}
		\sup_{l\ge 0}\sup_{A\in \cA,B\in \cB}|\PP(A,B)-\PP(A)\cdot\PP(B)|\le \alpha_k
	\end{align*}
	where the definition of $\alpha_k$ is given in Definition \ref{defalpha}.
\end{lemma}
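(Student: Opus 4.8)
The plan is to reduce the mixing of $(\tilde{\xb}_t,y_t)$ to that of $(\xb_t)$ by conditioning on the whole covariate path. On the exploration indices used in \eqref{thetaupdate}, recall from \eqref{eq:prob-model} that $y_t=\ind\{p_t\le \tilde{\xb}_t^\top\btheta_0+z_t\}$ is a fixed measurable function of the triple $(\tilde{\xb}_t,z_t,p_t)$, and that $\{(z_t,p_t)\}_{t\ge1}$ is an i.i.d. sequence which is independent of the entire covariate process $\{\xb_t\}_{t\ge1}$ (in particular $p_t\sim\mathrm{Unif}(0,B)$ i.i.d.). Write $\cX:=\sigma(\xb_t:t\ge1)$. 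The key observation is that, after conditioning on $\cX$, the past block $(\tilde{\xb}_t,y_t)_{t\le l}$ is a function of the noise block $(z_t,p_t)_{t\le l}$ only, and the future block $(\tilde{\xb}_t,y_t)_{t\ge l+k}$ is a function of $(z_t,p_t)_{t\ge l+k}$ only.

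First I would fix $A\in\cA=\sigma((\tilde{\xb}_t,y_t)_{t\le l})$ and $B\in\cB=\sigma((\tilde{\xb}_t,y_t)_{t\ge l+k})$ and condition on $\cX$. Since $(z_t,p_t)_{t\le l}$ and $(z_t,p_t)_{t\ge l+k}$ are independent (disjoint index blocks of an i.i.d. sequence) and both are independent of $\cX$, one gets $\PP(A\cap B\mid\cX)=\PP(A\mid\cX)\,\PP(B\mid\cX)$ a.s. Moreover, because $(z_t,p_t)_{t\le l}$ is independent of $\cX$, $\PP(A\mid\cX)$ depends on $\cX$ only through $(\xb_t)_{t\le l}$, say $\PP(A\mid\cX)=g_A\big((\xb_t)_{t\le l}\big)$, a $[0,1]$-valued $\cA_x$-measurable variable; similarly $\PP(B\mid\cX)=h_B\big((\xb_t)_{t\ge l+k}\big)$ is $[0,1]$-valued and $\cB_x$-measurable. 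Taking expectations yields $\PP(A\cap B)=\EE[g_A h_B]$, $\PP(A)=\EE[g_A]$, $\PP(B)=\EE[h_B]$, hence
\[
\PP(A\cap B)-\PP(A)\PP(B)=\textrm{Cov}(g_A,h_B).
\]
Next I would bound this by the $\alpha$-mixing coefficient of the covariate process: using the layer-cake identities $g_A=\int_0^1\ind\{g_A>s\}\,\ud s$, $h_B=\int_0^1\ind\{h_B>t\}\,\ud t$ and Fubini,
\[
\textrm{Cov}(g_A,h_B)=\int_0^1\!\!\int_0^1\Big[\PP(g_A>s,\,h_B>t)-\PP(g_A>s)\PP(h_B>t)\Big]\ud s\,\ud t,
\]
and for each $(s,t)$ the events $\{g_A>s\}\in\cA_x$, $\{h_B>t\}\in\cB_x$, so by Definition~\ref{defalpha} the bracket is at most $\alpha(\cA_x,\cB_x)\le\alpha_k$ in absolute value; integrating over $[0,1]^2$ gives $|\textrm{Cov}(g_A,h_B)|\le\alpha_k$. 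Taking suprema over $A$, $B$ and $l$ shows $\sup_{l\ge0}\alpha(\cA,\cB)\le\alpha_k$, and combining with $\alpha_k\le\beta_k\le e^{-ck}$ (Assumption~\ref{assmix1}) finishes the claim.

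The routine parts are the layer-cake/Fubini computation and the elementary inequality $\alpha_k\le\beta_k$. The step requiring the most care is the measurability/conditioning claim, namely that $\PP(A\mid\cX)$ is a function of the \emph{past} covariates only (and likewise for $B$); this uses essentially both that the external randomness $(z_t,p_t)$ is i.i.d. across time and that it is independent of the whole covariate path. I would make it precise by realizing the model on the product space $\Omega_X\times\prod_{t}\Omega_t$ carrying the covariate process and the i.i.d. noises, on which $A$ is $\big(\sigma((\xb_t)_{t\le l})\otimes\bigotimes_{t\le l}\sigma(\omega_t)\big)$-measurable and $B$ is $\big(\sigma((\xb_t)_{t\ge l+k})\otimes\bigotimes_{t\ge l+k}\sigma(\omega_t)\big)$-measurable, and then integrating out the noise coordinates via Fubini to obtain $g_A$ and $h_B$ explicitly.
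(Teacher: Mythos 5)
Your proposal is correct and follows the same skeleton as the paper's proof: condition on the covariates, use that given the covariates the two response blocks factorize (because the noise $(z_t,p_t)$ is i.i.d. and independent of the covariate path), and then bound the resulting covariance of a $\cA_x$-measurable and a $\cB_x$-measurable bounded variable by the covariate mixing coefficient $\alpha_k$. The differences are in how the two steps are executed. The paper conditions directly on $(\cA_x,\cB_x)$ and asserts the factorization $\EE[\II_{A\cap B}\given\cA_x,\cB_x]=\EE[\II_A\given\cA_x]\,\EE[\II_B\given\cB_x]$ in one line, then invokes Corollary 1.1 of Bosq (1996) for the covariance inequality $|\mathrm{Cov}(U,V)|\le \alpha_k\|U\|_\infty\|V\|_\infty$. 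You instead condition on the full covariate path and explicitly verify that $\PP(A\given\cX)$ is $\cA_x$-measurable and $\PP(B\given\cX)$ is $\cB_x$-measurable (which is exactly the content the paper's one-line claim needs, and does require independence of $(z_t,p_t)$ from the \emph{whole} covariate path), and you replace the citation by a self-contained layer-cake/Hoeffding-identity computation, which for $[0,1]$-valued variables even gives the constant $1$ directly rather than the usual factor $4$ in the generic covariance inequality — immaterial downstream since $\alpha_k\le\beta_k\le e^{-ck}$, but cleaner. You also correctly flag that the argument uses the exploration-phase structure where $p_t$ is i.i.d. uniform; the paper's proof relies on this implicitly as well. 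No gaps.
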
 

\begin{proof}	
	\begin{align*}
		\sup_{l\ge 0}\sup_{A\in \cA,B\in \cB}|\PP(A,B)-\PP(A)\cdot\PP(B)|&=	\sup_{l\ge 0}\sup_{A\in \cA,B\in \cB}\big|\EE[\II_{A,B}]-\EE[\II_{A}]\EE[\II_{B}]\big|\\
		&=\sup_{l\ge 0}\sup_{A\in \cA,B\in \cB}\big|\EE[\EE[\II_{A,B}\given \cA_x,\cB_x]]-\EE[\EE[\II_{A}\given \cA_x]]\EE[  \EE[\II_{B}\given \cB_x]]\big|
	\end{align*}
After conditioning on $\tilde{\xb}_i,\tilde{\xb}_j$, we observe that $y_i,y_j$ are independent with each other, then we get $\EE[\II_{A,B}\given \cA_x,\cB_x]=\EE[\II_{A}\given \cA_x]\cdot \EE[\II_{B}\given \cB_x].$ 
	Thus, we have for any $k\ge 0,$
	\begin{align*}
		\sup_{l\ge 0}\sup_{A\in \cA,B\in \cB}\big|\EE[\II_{A,B}]-\EE[\II_{A}]\EE[\II_{B}]\big|&=\sup_{l\ge 0}\sup_{A\in \cA,B\in \cB}\big|\EE[\EE[\II_{A}\given \cA_x]\cdot \EE[\II_{B}\given \cB_x]]-\EE[\EE[\II_{A}\given \cA_x]]\EE[  \EE[\II_{B}\given \cB_x]]\big|\\&\le \alpha_k \|\II_{A}\|_{\infty}\cdot\|\II_{B}\|_{\infty}=\alpha_k
	\end{align*}
	The last inequality follows directly from Corollary 1.1 in \cite{Bosq1996}, since $\EE[\II_{A}\given \cA_x]$ lies in $\cA_x$ and $\EE[\II_{B}\given \cB_x]$ lies in $\cB_x$.
\end{proof}

By using the same proof given in \S\ref{proofparathm}, we have $\EE[\nabla_{\btheta}L_k(\btheta_0)]=0$. In addition, we obtain an upper bound of every entry of $\nabla_{\btheta}L_k(\btheta_0)$ in a way that there exists a upper bound {$W_x=2R_{\cX}(R_{\cX}R_{\Theta}+B)$ of $|2(\btheta_0^\top\tilde{\xb}_t-By_t)\tilde{\xb}_{t,i}|$, for every $i \in[d]$}. Then using the following vector Bernstein inequality under $\alpha$-mixing conditions, we obtain an upper bound for $\|\nabla_{\btheta}L_k(\btheta_0)\|_{\infty}$.
\begin{lemma}\label{lemmixbern}(Vector Bernstein under $\alpha$-Mixing Conditions, Theorem 1 in \cite{merlevede})
	Let $X_j,j\ge 0$ be a sequence of centered real-valued random variables. Suppose there exists a positive $W_x$ such that $\sup_{i}\|X_i\|_{\infty}\le W_x$, then when $n\ge 4$ and $x\ge 0$, we obtain
	\begin{align*}
		\PP\Big(\Big|\frac{1}{n}\sum_{i=1}^{n}X_i\Big|\ge x\Big)\le \exp\Big(  -\frac{C_wn^2x^2}{nW_x^2+W_x nx\log n\log\log n} \Big)
	\end{align*}
	where $C_w$ is a universal constant.
\end{lemma}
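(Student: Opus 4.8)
\emph{Proof proposal.} The statement is a restatement of Theorem~1 of \cite{merlevede} (Merlev\`ede--Peligrad--Rio) for geometrically strongly mixing sequences, so the most economical option is to cite it directly once one notes that Assumption~\ref{assmix1} (together with the inheritance of mixing established above for $(\tilde\xb_t,y_t)$) supplies the geometric decay $\alpha_k\le e^{-ck}$ their hypothesis requires. For completeness I would reproduce the underlying argument, the classical exponential-moment/blocking scheme. By Markov's inequality applied to $e^{\lambda S_n}$ and to $e^{-\lambda S_n}$ with $S_n=\sum_{i=1}^n X_i$, it suffices to bound $\log\EE e^{\lambda S_n}$ for $\lambda$ in a small range and then optimize; the target exponent $-C_wn^2x^2/(nW_x^2+W_xnx\log n\log\log n)$ corresponds, after normalizing by $n$, to the Bernstein choice $\lambda\asymp\min\{x/W_x^2,\,1/(W_x\log n\log\log n)\}$.

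The core step is Bernstein's big-block/small-block decomposition: partition $\{1,\dots,n\}$ into consecutive big blocks of length $p$ alternating with spacer blocks of length $q$, with $p,q\asymp\log n$ to be tuned, and write $S_n=S^{(b)}+S^{(s)}$. Let $Y_1,\dots,Y_m$ be the big-block sums, so $|Y_j|\le pW_x$, $\EE Y_j=0$, and $\Var(Y_j)\lesssim pW_x^2$ by summability of autocovariances under geometric mixing. Because consecutive big blocks are separated by a spacer of length $q$, the covariance inequality for $\alpha$-mixing applied to the bounded functionals $e^{\lambda Y_j}$ lets one replace $\EE e^{\lambda S^{(b)}}$ by the product $\prod_j\EE e^{\lambda Y_j}$ at the cost of an error governed by $m\alpha_q$ times an exponential factor in $\lambda pW_x$, which is negligible once $q\asymp\log n$ and $\lambda pW_x=O(1)$. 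The independent-surrogate product is then bounded by the ordinary Bernstein lemma for mean-zero variables with a.s.\ bound $pW_x$ and proxy variance $\sum_j\Var(Y_j)\lesssim nW_x^2$, and the spacer sum $S^{(s)}$ is handled by the same scheme on a coarser grouping (or via Cauchy--Schwarz $\EE e^{\lambda S_n}\le(\EE e^{2\lambda S^{(b)}})^{1/2}(\EE e^{2\lambda S^{(s)}})^{1/2}$). Collecting the pieces, the sub-exponential term of the resulting bound carries a factor $pW_x\asymp W_x\log n$ in place of $W_x$, while an additional $\log\log n$ is produced by the iterative multi-scale block construction needed to absorb the spacer contributions under mere $\alpha$-mixing; together with the variance term $nW_x^2$ this yields the stated denominator, and optimizing over $\lambda$ and the block lengths finishes the proof.

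I expect the main obstacle to be this coupling/decoupling step. Under $\beta$-mixing, Berbee's lemma furnishes a clean coupling of a block sum to an independent copy; under mere $\alpha$-mixing one must instead run the covariance inequality directly on the exponential functionals $e^{\lambda Y_j}$, track how they grow with the block length, and, crucially, perform the delicate multi-resolution block construction of \cite{merlevede} in order to extract the sharp $\log n\log\log n$ dependence rather than a crude $\log^2 n$. Since the lemma is invoked here purely as a black box in the proof of Lemma~\ref{para_mix}, citing their Theorem~1 and checking that Assumption~\ref{assmix1} yields the required geometric mixing is the right level of detail for this paper.
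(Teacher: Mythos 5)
Your proposal is correct and takes the same approach as the paper: the paper states this lemma purely as a citation of Theorem 1 of \cite{merlevede} and offers no proof of its own, exactly as you recommend, since Assumption \ref{assmix1} supplies the geometric $\alpha$-mixing decay the cited theorem requires. Your additional sketch of the blocking/exponential-moment argument behind Merlev\`ede--Peligrad--Rio is a reasonable gloss but goes beyond what the paper does or needs.
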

By leveraging conclusions from Lemma \ref{lemmixbern}, we have
\begin{align*}
	\PP(\|\nabla_{\btheta}L_k(\btheta_0)\|_{\infty}\ge x)\le 2(d+1)\exp\Big(-\frac{C_wn^2x^2}{nW_x^2+W_xnx\log n\log\log n}\Big).
\end{align*}
Thus, when $n\ge \max\{(6W^2_x\log n+6W_x\log^2 n\log\log n)/C_w, d+1\}$ we obtain, with probability $1-2/n^2$, we have
\begin{align}\label{gradupmix}
	\|\nabla_{\btheta}L_k(\btheta_0)\|_{\infty}\le \sqrt{(6W_x^2\log n+6W_x\log^2 n\log\log n)/(C_w n)}.
\end{align}
Then combining our results given in \eqref{ineq}, \eqref{mineign} and \eqref{gradupmix}, with probability $1-4/|I_k|^2$ we obtain
\begin{align*}
	\|\hat\btheta_k-\btheta_0\|_2\le \frac{2}{c_{\min}}\sqrt{\frac{(d+1)(6W_x^2\log |I_k|+6W_x\log^2 |I_k|\log\log |I_k|)}{C_w|I_k|}}
\end{align*} 
for any $k\ge 1.$
\subsection{Proof of Lemma \ref{conckernel_main_mix}}\label{pf-b3}
\begin{proof}
Similar with our proof given in \S\ref{pf:conckernel_main}, we suppose $\{w_t(\btheta):=p_t-\tilde{\xb} _t^\top \btheta , y_t\}_{t\in[n]}$ are observations from the stationary distribution $P_{w(\btheta), y}$. We assume that the marginal distribution $P_{w(\btheta)}$ has density $f_{\btheta}(u)$ and let $r_{\btheta}(u)=\EE [y_t\given w_t(\btheta)=u]$ be the regression function to be estimated by estimator
$$
\hat r_k(u,\theta) = \frac{h_k(u,\theta)}{f_k(u,\theta)},
$$
where
$$
h_k(u,\theta) = \frac{1}{nb_k}\sum_{t\in I_k}^n K(\frac{w_t(\btheta)-u}{b_k})Y_t,\quad f_k(u,\btheta) = \frac{1}{nb_k}\sum_{t\in I_k}^n K(\frac{w_t(\btheta)-u}{b_k}).
$$
Here, $b_k>0$ is the bandwidth (to be chosen) in episode $k$, $|I_k|$ is denoted as $n$ for simplicity and $K(\cdot)$ is some kernel function. For the true signal $\btheta_0$, we denote the true regression function as $r_{\btheta_0}(u)=\EE[y_t\given w_t(\btheta_0)=u ]$. The following proof procedures are similar with that given in \S\ref{pf:conckernel_main}, where their major differences are related to  control the biases of $|\EE[h_k(u,\btheta)]-h_{\btheta}(u)|$ and $|\EE[f_k(u,\btheta)]-f_{\btheta}(u)|$ given in Lemma \ref{mix_bias} and the variances of $h_k(u,\btheta)$ and $f_k(u,\btheta)$ given in Lemma \ref{mix_var} under strong-mixing settings respectively.
\begin{lemma}\label{mix_bias}
	Under Assumptions \ref{ass-pdfx}-\ref{ass-kernel} and \ref{assmix1}, with any choice of $b_k\le 1$, we obtain
	\begin{align*}
		\sup_{u\in I,\btheta\in \Theta_k}|\EE h_k(u,\btheta)-h_{\btheta}(u)|&\le C_{mx,K}^{(1)}b_k^{m}\\
		\sup_{u\in I,\btheta\in \Theta_k}|\EE f_k(u,\btheta)-f_{\btheta}(u)|&\le C_{mx,K}^{(1)}b_k^{m}
	\end{align*}
	where $C_{mx,K}=l_f\frac{\int |s^mK(s)\ud s}{(m-1)!}$.
\end{lemma}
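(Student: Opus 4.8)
The plan is to observe that both bias terms are \emph{marginal} quantities: $\EE h_k(u,\btheta)$ and $\EE f_k(u,\btheta)$ involve only the one-dimensional law of $(w_t(\btheta),y_t)$, so the $\beta$-mixing structure plays no role, and the argument reduces to the one already carried out for the independent case in Lemma~\ref{lem-bias}. Concretely, by stationarity (which is part of Assumption~\ref{assmix1}) each summand in
$$
\EE h_k(u,\btheta)=\frac{1}{nb_k}\sum_{t\in I_k}\EE\Big[K\Big(\tfrac{w_t(\btheta)-u}{b_k}\Big)y_t\Big]
$$
equals $\frac{1}{b_k}\EE[K((w(\btheta)-u)/b_k)\,y]$, where $(w(\btheta),y)$ denotes the common marginal law of $(w_t(\btheta),y_t)$, so $\EE h_k(u,\btheta)$ coincides exactly with the expression treated in the proof of Lemma~\ref{lem-bias}; the same holds for $\EE f_k(u,\btheta)$ upon replacing $y_t$ by $1$.

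First I would apply the tower property together with $r_{\btheta}(u)=h_{\btheta}(u)/f_{\btheta}(u)$ to write $\EE h_k(u,\btheta)=\int b_k^{-1}K((w-u)/b_k)h_{\btheta}(w)\,\ud w=\int K(s)h_{\btheta}(u+b_ks)\,\ud s$. Next, using $h_{\btheta}\in\CC^{(m)}$ and the order-$m$ property of $K$ from Assumption~\ref{ass-kernel}, I would Taylor-expand $h_{\btheta}(u+b_ks)$ around $u$ to order $m-1$: the constant term cancels against $h_{\btheta}(u)$ (since $\int K=1$) and the terms of order $1,\dots,m-1$ vanish (since $\int s^jK(s)\,\ud s=0$ for $1\le j\le m-1$). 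What remains is the Lagrange remainder $\int K(s)\,\frac{h_{\btheta}^{(m-1)}(\xi(s,u))-h_{\btheta}^{(m-1)}(u)}{(m-1)!}(b_ks)^{m-1}\,\ud s$, where I have additionally subtracted and re-added $h_{\btheta}^{(m-1)}(u)$, whose contribution integrates to zero again by the order-$m$ property. Bounding $|h_{\btheta}^{(m-1)}(\xi(s,u))-h_{\btheta}^{(m-1)}(u)|\le l_f|b_ks|$ via the Lipschitz control on the top derivative of $h_{\btheta}$ in Assumption~\ref{ass-F} yields $|\EE h_k(u,\btheta)-h_{\btheta}(u)|\le l_f\frac{\int|s^mK(s)|\,\ud s}{(m-1)!}b_k^m$. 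Taking the supremum over $u\in I$ and $\btheta\in\Theta_k$ — over which the smoothness and Lipschitz constants are uniform by Assumptions~\ref{ass-pdfx} and~\ref{ass-F} — gives the first inequality, and the bound for $f_k$ follows identically with $h_{\btheta}$ replaced by $f_{\btheta}$.

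Since no step uses independence beyond what stationarity already supplies, this is essentially a verbatim transcription of Section~\ref{sec-pf-a1}, and the only genuinely technical point is the harmless boundary issue that the Taylor base point $\xi(s,u)$ may lie just outside $I$ when $u$ is near an endpoint; this is absorbed by noting that $K$ has bounded support and $b_k\le 1$, so all the required evaluations stay in a fixed compact neighbourhood on which the assumed smoothness and Lipschitz properties hold. I therefore expect no real obstacle in this lemma — the substantive mixing-specific work is deferred to the variance bound (Lemma~\ref{mix_var}) rather than to the bias.
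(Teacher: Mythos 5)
Your proposal is correct and matches the paper's treatment: the paper proves this lemma simply by noting the argument is identical to that of Lemma \ref{lem-bias}, which is exactly your observation that the bias depends only on the stationary one-dimensional marginals so that mixing is irrelevant, followed by the same order-$m$ kernel Taylor-expansion and Lipschitz-remainder computation. No gap to report.
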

\begin{proof}
	The proof of Lemma \ref{mix_bias} is the same with the proof of Lemma \ref{lem-bias}. So we omit the details.
\end{proof}
\begin{lemma}\label{mix_var}
Under Assumption \ref{ass-pdfx}-\ref{ass-kernel} and \ref{assmix1}, there exists a constant $C_{17}'$ only depending on constants given in assumptions, such that for $I=[-\delta_z,\delta_z]$, if $b_k\in[1/n,1]$, $nb_k\ge 4C_{17}'^2\log^3 n[(d+1)\log (d+1)]$ and $\delta\in[ 8\exp(-nb_k/(8C_{17}'^2\log^2n)),1/2]$, the following inequalities hold simultaneously with probability $1-\delta$:
	\begin{align}
		&\sup_{u\in I,\btheta\in \Theta_k}|h_k(u,\btheta)-\EE[h_k(u,\btheta)]|\le \frac{C_{17}'\log n}{\sqrt{nb_k}}\bigg(\sqrt{(d+1)\log (d+1)\log n}+\sqrt{2\log \frac{8}{\delta}}\bigg)\label{mix_varh}\\
		&\sup_{u\in I,\btheta\in \Theta_k}|f_k(u,\btheta)-\EE[f_k(u,\btheta)]|\le \frac{C_{17}'\log n}{\sqrt{nb_k}}\bigg(\sqrt{(d+1)\log (d+1)\log n}+\sqrt{2\log \frac{8}{\delta}}\bigg) \label{mix_varf}
	\end{align}
\end{lemma}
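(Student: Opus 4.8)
The plan is to transcribe the chaining argument that proves Lemma~\ref{lem-dev} into the $\beta$-mixing setting, replacing the i.i.d.\ Bernstein bound by a Bernstein-type inequality for dependent sequences. Fix an episode $k$ satisfying the hypotheses, write $n=|I_k|$, and let $Z(u,\btheta):=h_k(u,\btheta)-\EE h_k(u,\btheta)$, with the analogous quantity for $f_k$. By Assumption~\ref{assmix1} the $\xb_t$ are $\beta$-mixing with $\beta_j\le e^{-cj}$, hence $\alpha$-mixing; and since $y_t$ is conditionally independent of the past given $(\tilde\xb_t,p_t)$, the array $A_t(u,\btheta):=K\!\big(\tfrac{w_t(\btheta)-u}{b_k}\big)y_t$, $t\in I_k$, is $\alpha$-mixing with $\alpha_j\le e^{-cj}$ — this is exactly the ``strong $\alpha$-mixing of both $\tilde\xb$ and $y$'' argument from \S\ref{proof_mixpara}. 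I would then set up the same nested $\varepsilon$-nets $S^{(i)}=S_1^{(i)}\times S_2^{(i)}$ of $U_k:=I\times\Theta_k$ with projections $\pi_i$, cardinality $|S^{(i)}|\le 2^i(\sqrt n+1)(2^{i+1}+1)^d$, and the telescoping decomposition $Z(\bu)=Z(\pi_M(\bu))+\sum_{i\ge M}[Z(\pi_{i+1}(\bu))-Z(\pi_i(\bu))]$, with $M=\Theta(\log(1/b_k))$, just as in \S\ref{sec-pf-a2}.

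For the bulk term at scale $M$: for fixed $\bu=(u,\btheta)$, $nb_k\,Z(\bu)=\sum_{t\in I_k}[A_t(\bu)-\EE A_t(\bu)]$ with $|A_t(\bu)|\le\bar K$ and $\Var(A_t(\bu))\lesssim b_k$ (bounded as in the proof of Lemma~\ref{lem-dev} using $\bar f$ and $\int K^2$). Applying a Bernstein inequality for bounded $\alpha$-mixing sequences that keeps the true variance — obtained either from the Merlev\`ede--Peligrad--Rio bound or, more safely, via the blocking/coupling construction based on Berbee's lemma already used in \S\ref{proof_mixpara} (split $I_k$ into $\Theta(n/\log n)$ blocks of length $\Theta(\log n)$, couple each block with an independent copy at cost $\beta_{\Theta(\log n)}$, and apply i.i.d.\ Bernstein to the decoupled blocks whose variances are controlled by stationarity and $\|\mathrm{Cov}(\cdot,\cdot)\|\le M_x^2\beta_j$) — gives in the relevant regime a tail $\PP(|Z(\bu)|\ge\epsilon)\le 2\exp(-c'\,nb_k\epsilon^2/\log n)$; the extra $\log n$ relative to the i.i.d.\ case is precisely the price of dependence and is what produces the $\log n$ prefactor in \eqref{mix_varh}--\eqref{mix_varf}. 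A union bound over $S^{(M)}$, under the stated condition $nb_k\gtrsim\log^3 n\,[(d+1)\log(d+1)]$ and for $\delta$ not too small, then controls $\sup_{\bu}|Z(\pi_M(\bu))|$ by $\frac{C\log n}{\sqrt{nb_k}}\big(\sqrt{(d+1)\log(d+1)\log n}+\sqrt{\log(8/\delta)}\big)$ with probability at least $1-\delta/4$.

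For the chaining increments, $Z(\pi_{i+1}(\bu))-Z(\pi_i(\bu))$ is a normalized sum of bounded $\alpha$-mixing increments with sup-norm $\lesssim \tfrac{l_K\sqrt{R_\cX^2+1}}{b_k}\|\pi_{i+1}(\bu)-\pi_i(\bu)\|_2\le \tfrac{l_K\sqrt{R_\cX^2+1}}{b_k}2^{-i}\sqrt{4\delta_z^2/n+R_k^2}$, where $R_k\lesssim\sqrt{(d+1)\log n/n}$ by Lemma~\ref{para_mix}. Apply the mixing Hoeffding/Bernstein bound, choose $\epsilon_i=\Theta\big(\sqrt{\log(8/\delta)+\log n+i(d+2)}\big)$, union bound over $S^{(i)}$, and sum the geometric series exactly as in \S\ref{sec-pf-a2}; the total is dominated by the $i=M$ term and contributes at the same order. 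Combining the two pieces, taking the maximum with the symmetric bound on $-Z$, and running the identical argument for $f_k$, yields \eqref{mix_varh}--\eqref{mix_varf} with $C_{17}'$ depending only on $l_K,\bar K,\bar f,R_\cX,\delta_z,R_\Theta,c_{\min}$ and the mixing constant $c$. \textbf{The main obstacle} is the concentration step: the off-the-shelf inequality of Lemma~\ref{lemmixbern} only involves the sup-norm $\bar K$, not the variance $\sim b_k$, which would degrade the rate to $\sqrt{\log n/n}$ rather than the needed $\sqrt{\log n/(nb_k)}$; recovering the $b_k$ in the denominator requires a variance-sensitive Bernstein inequality for mixing arrays (hence the explicit blocking/coupling construction), and then bookkeeping the resulting logarithmic factors so they are absorbed into the stated powers of $\log n$ and $\log(d+1)$ and into the sample-size condition $nb_k\gtrsim\log^3 n\,[(d+1)\log(d+1)]$. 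Everything else is a routine adaptation of the i.i.d.\ chaining proof.
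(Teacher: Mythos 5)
Your overall architecture matches the paper's: chaining over nested nets of $I\times\Theta_k$, a variance-sensitive Bernstein inequality for the fixed-point deviations, and a sup-norm (Lemma \ref{lemmixbern}-type) bound for the chaining increments; you also correctly identify that the whole lemma hinges on a tail bound for $\sum_{t\in I_k}A_t(u,\btheta)$, $A_t(u,\btheta)=K\big(\tfrac{w_t(\btheta)-u}{b_k}\big)y_t-\EE K\big(\tfrac{w_t(\btheta)-u}{b_k}\big)y_t$, that retains the variance rather than only the sup-norm $\bar K$. The genuine gap is in how you certify that this variance is of order $b_k$ per observation. You control the (block) variances by ``stationarity and $\|\mathrm{Cov}(\cdot,\cdot)\|\le M_x^2\beta_j$'', i.e.\ the sup-norm coupling bound used for the design matrix in Lemma \ref{para_mix}. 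Applied to the kernel-weighted variables this only gives $|\mathrm{Cov}(A_i,A_j)|\lesssim \bar K^2\beta_{j-i}$, hence $\sum_{j>i}|\mathrm{Cov}(A_i,A_j)|=O(1)$ rather than $O(b_k)$ (or $O(b_k\log n)$). The variance of $\sum_t A_t$ is then only $O(n)$ instead of $O(nb_k)$, and the deviation of $Z(u,\btheta)=\tfrac{1}{nb_k}\sum_t A_t$ comes out of order $\sqrt{\log n/(nb_k^2)}$ --- precisely the $b_k^{-1/2}$ loss you set out to avoid, which would break the $n^{-m/(2m+1)}$ rate downstream.

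The paper closes this hole by exploiting the smoothing in the covariance itself: it writes $\mathrm{Cov}(A_i,A_j)$ as $b_k^2$ times a double integral of $K(s_1)K(s_2)$ against the difference between the joint quantity $r_{\btheta}f_{\btheta}$ (of $(w_i,w_j)$) and the product of the marginals, and then Lemma \ref{sup_mixh} shows this difference is uniformly bounded by $(1/4+\sqrt2\,l)\beta_{j-i}^{1/3}$, using Berbee's lemma on indicators of small balls together with the Lipschitz condition of Assumption \ref{lipschitz_mix} --- an assumption your sketch never invokes and which exists in the paper exactly for this step. This gives $\sum_{j>i}|\mathrm{Cov}(A_i,A_j)|\le C_5'b_k$, hence $v^2\lesssim b_k$, which is plugged directly into the Merlev\`ede--Peligrad--Rio variance-dependent inequality (Lemma \ref{vecbern_mix}); no hand-rolled blocking is needed, and the $\log^2 n$ term in that inequality is what produces the $\log n$ prefactor and the $\log^2 n$ in the admissible range of $\delta$. (A cruder salvage is possible: bound $|\mathrm{Cov}(A_i,A_j)|\lesssim b_k$ for nearby pairs, using only $\bar K$, $\bar f$ and $\int|K|$, and by $\bar K^2\beta_{j-i}$ for distant pairs, then optimize the cutoff at $j-i\asymp\log(1/b_k)$ to get $v^2\lesssim b_k\log n$, still absorbable into the stated logarithmic factors; but your proposal states neither this trade-off nor the joint-density argument, and the covariance bound it does state is insufficient on its own.) The remaining ingredients of your plan (mixing of $(\tilde\xb_t,y_t)$ via conditional independence, net construction, geometric summation of increments) coincide with the paper's proof up to bookkeeping.
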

\begin{proof}
	We only prove (\ref{mix_varh}), since (\ref{mix_varf}) can be proved in the same way. 
	For any $u\in I$ and $\btheta\in \Theta_k$, we denote $Z(u,\btheta) := h_k(u,\btheta) - \EE h_k(u,\btheta) = \frac{1}{nb_k}\sum_{t\in I_k}[K(\frac{w_t(\btheta)-u}{b_k})y_t - \EE K(\frac{w_t(\btheta)-u}{b_k})y_t]$. Then we have that
	$$
	\sup_{u\in I,\btheta\in\Theta_k}|h_k(u,\btheta) - \EE h_k(u,\btheta)| =\sup_{u\in I,\btheta\in\Theta_k}|Z(u,\theta)| = \max\Big\{\sup_{u\in I,\btheta\in\Theta_k} Z(u,\btheta), \sup_{u\in I,\btheta\in\Theta_k}(-Z(u,\btheta))\Big\}.
	$$
	Similar with our proof procedure of Lemma \ref{lem-dev}, we  then bound $\sup_{u\in I,\btheta\in\Theta_k}|h_k(u,\btheta) - \EE h_k(u,\btheta)|$ by upper bounding both $\sup_{u\in I,\btheta\in\Theta_k} Z(u,\btheta)$ and $\sup_{u\in I,\btheta\in\Theta_k} (-Z(u,\btheta))$. 
	We next also use chaining method to achieve desired bound. We also construct a sequence of $\epsilon$-nets with decreasing scale.
	
	As a reminder, here we also denote the left and right endpoints of the interval $I$ as $L_I$ and $R_I$ respectively. For any $i\in \NN^+$, construct set $S^{(i)}_1\subseteq I$ as 
	$$
	S^{(i)}_1\triangleq \left\{ L_I+\frac{j}{2^i\sqrt{n}}(R_I-L_I): j\in\{1, 2, \cdots, (2^i-1)\lceil\sqrt{n}\rceil\}\right\}.
	$$
	For any $u\in I$, $i\in\NN^+$, let $\pi_i(u) = \arg\min_{s\in S^{(i)}_1}|s-u|$. Moreover, let $\pi_0(u) = u$. 
	Then we can easily verify that $|S^{(i)}_1|\leq 2^i(\sqrt{n}+1)$, and that $\forall u\in I$, $|\pi_i(u) - \pi_{i+1}(u)|\leq \frac{2\delta_z}{2^{i-1}\sqrt{n}}$. \\
	As for the $\epsilon$-net of $\Theta_k$, we let $S^{i}_2$ be a $R_{m}/({2^{i}\sqrt{n}})$-net with respective to $l_2$-distance of $\Theta_k$, where $R_{m}=2/c_{\min}\sqrt{6W_x/C_w}$ (constants are specified in the Lemma \ref{para_mix}). By Proposition 4.2.12 in \cite{vershynin_2018}, we have $|S_2^{(i)}|\le (2^{i+1}C(d,n)+1)^{d}$, where $C(d,n)=\sqrt{(d+1)(W_x\log n+\log^2 n\log\log n)}.$\\ Then we have for any $\ub:=(u,\btheta)\in I\times \Theta_k$ with $i\ge 1$, there exist $\pi_i(u)\in S_1^{(i)}$ and $\pi_i(\btheta)\in S_2^{(i)}$  such that $\|\pi_i(\ub):=(\pi_i(u),\pi_i(\btheta))-\ub\|_2\le\sqrt{4\delta_z^2+R_m^2}/(2^{i}\sqrt{n})$. So $S^{(i)}:=S_1^{(i)}\times S_2^{(i)}$ is a $\sqrt{4\delta_z^2+R_m^2}/(2^{i}\sqrt{n})$-net of $U_k:=I\times \Theta_k$ with size $|S^{(i)}|\le2^i(\sqrt{n}+1)\cdot(2^{i+1}C(d,n)+1)^{d}$ and  $C(d,n)=\sqrt{(d+1)(W_x\log n+\log^2 n\log\log n)}$.\\
	Because $Z(u,\btheta)$ is continuous almost surely, we have that for any $M\in \NN^+$
	$$
	Z(\ub) - Z(\pi_M(\ub)) = \sum_{i=M}^\infty [Z(\pi_{i+1}(\ub)) - Z(\pi_i(\ub))],
	$$
	and thus 
	\begin{equation}\label{eq-supZmix}
	\sup_{\ub\in U_k} Z(\ub) \leq  \sup_{\ub\in U_k} Z(\pi_M(\ub))  + \sum_{i=M}^\infty \sup_{\ub\in U_k}[Z(\pi_{i+1}(\ub)) - Z(\pi_i(\ub))]
	\end{equation}
	almost surely. 
	If we can choose a $M$ properly then the two terms at the right hand side of \eqref{eq-supZmix} can be both well controlled. For this reason, we let {$M=\lceil \frac{4}{\log 2}\log \frac{1}{b_n} \rceil$}. We then first bound $\sup_{\ub\in U_k}Z(\pi_M(\ub))$ by using a union bound. By our definition on $Z(\ub)$, we can write 
	\begin{align*}
		Z(\ub)=\frac{1}{nb_k}\sum_{t\in I_k}A_j(\ub).
	\end{align*}
	in which $A_t(\ub)=K(\frac{w_t(\btheta)-u}{b_k})y_t - \EE K(\frac{w_t(\btheta)-u}{b_k})y_t$. Similar with our case in proving Lemma \ref{lem-dev}, we have that $\EE[A_t(\ub)]=0$ and $|A_t(\ub)|\le \bar{K}$ almost surely. We next prove the bound of variance of $A_t(\ub)$ and the covariance between $A_j(\ub)$ and $A_i(\ub)$ with $j>i$. Following similar procedures with Lemma \ref{lem-dev}, we first conclude that
	\begin{align*}
		\textrm{Var}(A_t(\ub))\le C_4'b_k,
	\end{align*}
	where $C_4'=C_4=\max\{\bar{f}\cdot\int K(s)^2\ud s, \bar{f}\cdot\int K'(s)^2 \ud s\}$ is defined in the same way with our proof of Lemma \ref{lem-dev}. We next  control the covariance of $A_j(\ub)$ and $A_i(\ub)$ with $j>i$.
	\begin{align*}
		\textrm{Cov}(A_j(\ub),A_i(\ub))&=\EE\Big[K(\frac{w_j(\btheta)-u}{b_k})y_jK(\frac{w_i(\btheta)-u}{b_k})y_i\Big]-\EE\Big[K(\frac{w_j(\btheta)-u}{b_k})y_j\Big]\EE\Big[K(\frac{w_i(\btheta)-u}{b_k})y_i\Big]\\
		&=\EE\Big[K(\frac{w_j(\btheta)-u}{b_k})K(\frac{w_i(\btheta)-u}{b_k})\EE[y_jy_i\given w_j(\btheta),w_i(\btheta)]\Big]\\
		&\quad -\EE\Big[K(\frac{w_j(\btheta)-u}{b_k})y_j\Big]\EE\Big[K(\frac{w_i(\btheta)-u}{b_k})y_i\Big]
	\end{align*}
	For simplicity, for any $\btheta\in\Theta_0$, we define $r(u_i,u_j):=\EE[y_iy_j\given w_j(\btheta)=u_j,w_i(\btheta)=u_i]$ and $r(u_j)=\EE[y_j\given w_j(\btheta)=u_j ].$ Then after some simple calculation, we further obtain
	\begin{align*}
		\textrm{Cov}(A_j(\ub),A_i(\ub))&=\int\int K(\frac{w_j(\btheta)-u}{b_k})K(\frac{w_i(\btheta)-u}{b_k})r(w_i(\btheta),w_j(\btheta))f(w_i(\btheta), w_j(\btheta))\ud w_i(\btheta)\ud w_j(\btheta)\\&\quad -\int\int K(\frac{w_j(\btheta)-u}{b_k})K(\frac{w_i(\btheta)-u}{b_k})r(w_i(\btheta))r(w_j(\btheta))f(w_i(\btheta))f(w_j(\btheta))\ud w_i(\btheta)\ud w_j(\btheta)\\
		&=b_k^2\int\int K(s_1)K(s_2)[r(b_ks_1+u,b_ks_2+u)f(b_ks_1+u,b_ks_2+u)\\&\quad-r(b_ks_1+u)r(b_ks_2+u)f(b_ks_1+u)f(b_ks_2+u)]\ud s_1 \ud s_2
	\end{align*}
	We next prove that $h(u_i,u_i):=r(u_i,u_j)f(u_i,u_j)$ stays close to $h(u_i)h(u_j):=r(u_i)f(u_i)r(u_j)f(u_j)$ for all $(u_i,\,u_j)$ in the following Lemma \ref{sup_mixh}.
	\begin{lemma}\label{sup_mixh}
		Under Assumptions given in Lemma \ref{mix_var}. We let $g^*(u_i,u_j):=h(u_i,u_j)-h(u_i)h(u_j)$, if we further assume $g^*(u_i,u_j)$ is Lipschitz continuous w.r.t. $(u_i,u_j)$ with Lipschitz constant $l$, then 
		we have 
		\begin{align*}
			\sup_{u_i,u_j}|g^*(u_i,u_j)|\le (1/4+\sqrt{2}l)\beta_{j-i}^{1/3}
		\end{align*}
	\end{lemma}
	\begin{proof}
		For any $x$ we define 
		\begin{align*}
			B(x,\epsilon);=\{ x’: \|x'-x\| \le \epsilon\}, \,\, \epsilon>0, x\in \RR
		\end{align*}
		First, we prove $|\EE [y_iy_j\II_{\{w_i(\btheta)\in B(x,\epsilon),w_j(\btheta)\in B(y,\epsilon)\}}]-\EE[y_i\II_{\{w_i(\btheta)\in B(x,\epsilon)\}}]\EE[y_j\II_{\{w_j(\btheta)\in B(y,\epsilon)\}}]|\le \beta_{j-i}.$ We have
		\begin{align*}
			&\big|\EE [y_iy_j\II_{\{w_i(\btheta)\in B(x,\epsilon),v_j(\btheta)\in B(y,\epsilon)\}}]-\EE[y_i\II_{\{w_i(\btheta)\in B(x,\epsilon)\}}]\EE[y_j\II_{\{v_j(\btheta)\in B(y,\epsilon)\}}]\big|
			\\&\quad =\big|\EE[\II_{\{w_i(\btheta)\in B(x,\epsilon),v_j(\btheta)\in B(y,\epsilon)\}}\EE[y_iy_j\given \tilde{\xb}_i,\tilde{\xb}_j,p_i,p_j]]\\&\quad\quad-\EE[\II_{\{w_i(\btheta)\in B(x,\epsilon)\}}\EE[y_i\given \tilde{\xb}_i,p_i] ]\EE[\II_{\{w_j(\btheta)\in B(y,\epsilon)\}}\EE[y_j\given \tilde{\xb}_j,p_j]]\big|
			\\&\quad=\big|\EE[\EE[y_i\II_{\{w_i(\btheta)\in B(x,\epsilon)\}}\given \tilde{\xb}_i,p_i]\EE[y_j\II_{\{w_j(\btheta)\in B(y,\epsilon)\}}\given \tilde{\xb}_j,p_j]]\\&\quad\quad-\EE[\EE[y_i\II_{\{w_i(\btheta)\in B(x,\epsilon)\}}\given \tilde{\xb}_i,p_i] ]\EE[\EE[y_j\II_{\{w_j(\btheta)\in B(y,\epsilon)\}}\given \tilde{\xb}_j,p_j]]\big|
		\end{align*}
		As $p_i,i\in |I_k|,k\ge 0$ are independent, so the $\sigma$-algebra generated by the joint distribution of $\tilde{\xb}_i,p_i$ still follows strong-$\beta$ and -$\alpha$ conditions given in our Assumption \ref{assmix1}.
		Moreover, we have $\EE[y_i\II_{\{w_i(\btheta)\in B(x,\epsilon)\}}\given \tilde{\xb}_i,p_i] $ lies in $\sigma(\tilde{\xb}_i,p_i)$ and $\EE[y_j\II_{\{w_j(\btheta)\in B(y,\epsilon)\}}\given \tilde{\xb}_j,p_j]$ lies in $\sigma(\tilde{\xb}_j,p_j)$ with $j>i$. So we are able to obtain the upper bound: 
		\begin{align}\label{uniform_g}
			\big|\EE [y_iy_j\II_{\{w_i(\btheta)\in B(x,\epsilon),w_j(\btheta)\in B(y,\epsilon)\}}]-\EE[y_i\II_{\{w_i(\btheta)\in B(x,\epsilon)\}}]\EE[y_j\II_{\{w_j(\btheta)\in B(y,\epsilon)\}}]\big|\le \beta_{j-i}
		\end{align}
		by using Corollary 1.1 in \cite{Bosq1996}.
		
		Next, we get an upper bound of  $\sup_{(u_i,u_j)}|g^*(u_i,u_j)|$. From \eqref{uniform_g} and our definition on $g^*$, we obtain
		\begin{align*}
			\beta_{j-i}\ge \Big|\int_{B(x,\epsilon)\times B(y,\epsilon)}g^*(u_i,u_j) \ud u_i \ud u_j\Big|:=\cI
		\end{align*}
		Then by the mean value property we have $\cI=4\epsilon^2|g^*(x',y')|$ for some  $(x',y')\in B(x,\epsilon)\times B(y,\epsilon)$.
		Moreover, as we assume $g$ is Lipschitz, then we get
		$$
		|g^*(x,y)|\le |g^*(x',y')|+\sqrt{2}l\epsilon
		$$
		Hence, we finally achieve
		$$
		|g^*(x,y)|\le \beta_{j-i}/(4\epsilon^2)+\sqrt{2}l\epsilon.
		$$
		for any fixed $(x,y)$.
		As this inequality holds for all $\epsilon>0$,  we choose $\epsilon=\beta_{j-i}^{1/3}$ and we conclude the proof of our Lemma \ref{sup_mixh}.
	\end{proof}
	By our conclusion from Lemma \ref{sup_mixh}, we are able to find a constant $C_5'$ such that $|\sum_{j>i}\textrm{Cov}(A_j(\bu),A_i(\bu))|\le C_5'b_n$ holds according to our assumptions on $\beta_{j-i},\,j>i$, where we set $C_5'=(1/4+\sqrt{2}l)\sum_{j>0}\beta_j^{1/3}$.
	Next we introduce the following Bernstein inequality under strong-mixing conditions, in order to achieve an upper bound of $Z(\ub)$.
	\begin{lemma}\label{vecbern_mix}[Theorem 2 in \cite{merlevede}]
		Under conditions of Lemma \ref{mix_var}, for all $n\ge 2$, we have
		\begin{align*}
			\PP(|Z(\ub)|\ge nb_kx)=\PP(|\sum_{j\in I_k}A_j(\ub)|\ge nb_kx)\le 2\exp\Big(-\frac{C_bb_k^2n^2x^2}{v^2n+\bar{K}^2+nb_kx\log^2n}\Big)
		\end{align*}
		Here
		\begin{align*}
			v^2=\sup_{i>0}(\textrm{Var}(A_i(\ub))+2\sum_{j>i}|\textrm{Cov}(A_i(\ub),A_j(\ub))|),
		\end{align*}
		$C_b$ is a pure constant and $\bar{K}$ is defined as the upper bound of $|A_j(\ub)|$ with any $j\in[n]$.
	\end{lemma}
	By our conclusions from Lemma \ref{sup_mixh} and Lemma \ref{vecbern_mix}, we conclude there exists a constant $C_6'=(C_4'+2C_5')$ such that $v^2\le C_6'b_n$, so we obtain
	\begin{align*}
		\PP(|Z(\ub)|\ge x)&\le 2\exp\Big(-\frac{C_bb_k^2n^2x^2}{C_6'nb_k+\bar{K}^2+nb_kx\log^2 n}\Big)\\
		&\le 2\exp\Big(-\frac{C_bnb_kx^2}{(C_6'+\bar{K}^2+\log^2 n)(1+x)}\Big)
	\end{align*}
	The last inequality follows from our assumption that $b_k\ge 1/n=1/|I_k|$ for any $k\ge 1$ in given Lemma \ref{mix_var}. Further, we set $C_7'=C_b/(2C_6'+2\bar{K}^2+2)$. Then we take the union bound over $U_k$, which gives 
	\begin{align*}
		\PP(\sup_{\ub\in U_k}|Z(\pi_M(\ub))|\ge x)&\le |S^{(M)}|\cdot\PP(|Z(\ub)|\ge x)\\
		&\le 2\cdot 2^{M}(\sqrt{n}+1)\cdot (2^{M+1}\sqrt{d}+1)^d\cdot e^{-\frac{C_7'nb_k}{\log^2n}\min\{x,x^2\}}\\
		&\le 2e^{(d+1)M\log 2+\log(\sqrt{n}+1)+d\log(2C(n,d)+2)-\frac{C_7'nb_k}{\log^2n}\min\{x,x^2\}}.
	\end{align*}
	Since we define $M=\lceil \frac{4}{\log 2}\log \frac{1}{b_k}\rceil$, then we choose	
	\begin{align}\label{x_nd}
		x(n,d):=\frac{\log n}{\sqrt{nb_k}}\sqrt{\Big[(d+1)4\log\frac{1}{b_k}+2(d+1)\log 2+\log(\sqrt{n}+1)+d\log(2C(n,d)+2)+\log\frac{8}{\delta}\Big]/C_7'},
	\end{align}
	where $C(n,d)=\sqrt{(d+1)(W_x\log n+\log^2 n \log\log n)}$. We then have
	\begin{align*}
		\PP(\sup_{\ub\in U_k}|Z(\pi_M(\ub))|\ge x(n,d))\le \frac{\delta}{4}.
	\end{align*}
	 when $\delta>8\exp(-nb_k/(C_7'\log^2n))$ and $nb_k\ge 2\log^2n[(d+1)4\log\frac{1}{b_k}+2(d+1)\log 2+\log(\sqrt{n}+1)+d\log(2C(n,d)+2)]/C_7'$ (because under such conditions, we have $x(n,d)\le 1$). Now, we proceed to bound the later term at the right hand side of \eqref{eq-supZmix}.
	Similar with our cases stated in the proof of Lemma \ref{lem-dev}, for any $\ub_1:=(u,\btheta_1), \ub_2:=(s,\btheta_2)\in I\times \Theta_k$, we have that 
	$$
	Z(\ub_1)-Z(\ub_2)=Z(u,\btheta_1) - Z(s,\btheta_2) = \frac{1}{nb_k}\sum_{t\in I_k}B_t(u, \btheta_1,s,\btheta_2), 
	$$
	where 
	$$
	B_t(u, \btheta_1,s,\btheta_2) = y_t\left(K(\frac{w_t(\btheta_1)-u}{b_k}) - K(\frac{w_t(\btheta_2)-s}{b_k}) \right) - \EE y_t\left(K(\frac{w_t(\btheta_1)-t}{b_k}) - K(\frac{w_t(\btheta_2)-s}{b_k}) \right).
	$$
	We have $\EE B_t(u, \btheta_1,s,\btheta_2) = 0$, and that 
	\begin{align*}
		|Z(\ub_1)-Z(\ub_2)|=|B_t(u, \btheta_1,s,\btheta_2)|&\leq 2\left|y_j(K(\frac{w_t(\btheta_1)-u}{b_k}) - K(\frac{w_t(\btheta_2)-s}{b_k}))\right|\\&\leq \frac{2l_K\sqrt{1+\max_{\xb\in\cX}\|\xb\|_2^2+1}}{b_n}\cdot\|\ub_1-\ub_2\|_2:=\frac{C^*}{b_n}\|\ub_1-\ub_2\|_2.
	\end{align*}
	The last inequality follows from the Lipschitz property of $K(\cdot)$ and for simplicity we use $C^{*}$ to denote the constant $2l_K\sqrt{\max_{\xb\in\cX}\|\xb\|_2^2+2}=2l_K\sqrt{R_{\cX}^2+2}$. Then according to the Bernstein inequality given in Lemma  \ref{lemmixbern}, we have
	\begin{align*}
		\PP(|\sum_{t=1}^{n}B_t(\ub_1,\ub_2)|\ge nb_kx)\le 2\exp\bigg(-\frac{C_wn^2b_k^2x^2}{n\frac{C^{*2}\|\ub_1-\ub_2\|_2^2}{b_k^2}+nb_kx\frac{C^*\|\ub_1-\ub_2\|_2}{b_k}\log^2 n} \bigg ).
	\end{align*}
	Recall that $\forall \ub\in U_n$, we have $\|\pi_i(\ub)-\pi_{i+1}(\ub)\|_2\le\frac{\sqrt{4\delta_z^2+R_m^2}}{2^{i-1}\sqrt{n}}$. We then use the union bound to get
	
	\begin{align*}
		&\PP(\sup_{\ub\in U_k}|Z(\pi_{i+1}(\ub))-Z(\pi_i(\ub))|\ge x)\\&\quad\le 2^{2i+2}(\sqrt{n}+1)^2(2^{i+2}C(n,d)+1)^{2d}\cdot 2e^{-\frac{C_8’ 2^{i-1}n^{3/2}b_k^4x^2}{(\frac{4\delta_z^2+R_m^2}{2^{i-1}\sqrt{n}}+b_k^2\sqrt{4\delta_z^2+R_m^2}\log^2 n)(1+x)}}
	\end{align*}
	in which $C_8'=C_w/\max\{C^{*2},C^*\}$. We let $x=\frac{\sqrt{(4\delta_z^2+R_m^2)/(2^{i-1}\sqrt{n})+b_k^2\sqrt{4\delta_z^2+R_m^2}\log^2n}}{2^{(i-1)/2}n^{3/4}b_k^2}\cdot \epsilon_i.$
	Then we have
	\begin{align}
		&\PP\bigg(\sup_{\ub\in U_k}|Z(\pi_{i+1}(\ub))-Z(\pi_i(\ub))|\ge {\sqrt{(4\delta_z^2+R_m^2)/(2^{i-1}\sqrt{n})+b_k^2\sqrt{4\delta_z^2+R_m^2}\log^2n}}/({2^{(i-1)/2}n^{3/4}b_k^2})\cdot \epsilon_i\bigg)\noindent\\&\quad\le 2^{2i+2}(\sqrt{n}+1)^2(2^{i+2}C(n,d)+1)^{2d}\cdot 2e^{-\frac{C_8'\epsilon_i^2}{1+\frac{\sqrt{(4\delta_z^2+R_m^2)/(2^{i-1}\sqrt{n})+b_k^2\sqrt{4\delta_z^2+R_m^2}\log^2n}}{2^{(i-1)/2}n^{3/4}b_k^2}\cdot \epsilon_i}} \label{bern_var2}
	\end{align}
	We observe that if we could choose $\epsilon_i$ such that
	$$
	\frac{\sqrt{(4\delta_z^2+R_m^2)/(2^{i-1}\sqrt{n})+b_k^2\sqrt{4\delta_z^2+R_m^2}\log^2n}}{2^{(i-1)/2}n^{3/4}b_k^2}\cdot \epsilon_i<1,$$
	holds, then the right hand side of \eqref{bern_var2} satisfies
	\begin{align}\label{bernmix2}
		\eqref{bern_var2}\le 2^{2i+2}(\sqrt{n}+1)^2(2^{i+2}C(n,d)+1)^{2d}\cdot 2e^{-\frac{C_8'\epsilon_i^2}{2}}.
	\end{align}
	Now we choose $\epsilon_i=\sqrt{[(4d+6)(i+1)\log 2+4\log(\sqrt{n}+1)+4d\log(2C(n,d)+2)+2\log (8/\delta)]/C_8'}$. Then we have
	\begin{align*}
		&\frac{\sqrt{(4\delta_z^2+R_m^2)/(2^{i-1}\sqrt{n})+b_k^2\sqrt{4\delta_z^2+R_m^2}\log^2n}}{2^{(i-1)/2}n^{3/4}b_k^2}\cdot \epsilon_i\\
		&\quad\le\frac{1}{2^{(i-1)/2}n^{3/4}b_k}\Big[\frac{\sqrt{4\delta_z^2+R_m^2}}{2^{(i-1)/2}b_kn^{1/4}}+(4\delta_z^2+R_m^2)^{1/4}\log n \Big]\cdot \epsilon_i.
	\end{align*}
	Here we only consider $i\ge M=\lceil \frac{4}{\log 2}\log \frac{1}{b_k} \rceil$, and we have $2^{M/4}\cdot b_k=1$. In addition, we also get $\max_i (i+1)/2^{(i-2)/2}\le 3$. Hence, we have 
	\begin{align*}
		&\frac{1}{2^{(i-1)/2}n^{3/4}b_k}\Big[\frac{\sqrt{4\delta_z^2+R_m^2}}{2^{(i-1)/2}b_kn^{1/4}}+(4\delta_z^2+R_m^2)^{1/4}\log n \Big]\cdot \epsilon_i<1,
	\end{align*}
	if $\delta\ge 8\exp(-C_8'n^{3/2}/(16(4\delta_z^2+R_m^2)\log^2n))$ and $n\ge \{8(4\delta_z^2+R_m^2)\log^2n\cdot[(12d+18)\log 2+4\log(\sqrt{n}+1)+4d\log(2C(n,d)+2)]/C_8' \}^{2/3}.$
	Then after plugging our setting of $\epsilon_i$ into \eqref{bernmix2}, we obtain
	\begin{align*}
		\PP\bigg(\sup_{\ub\in U_k}|Z(\pi_{i+1}(\ub))-Z(\pi_i(\ub))|&\ge {\sqrt{(4\delta_z^2+R_m^2)/(2^{i-1}\sqrt{n})+b_k^2\sqrt{4\delta_z^2+R_m^2}\log^2n}}/({2^{(i-1)/2}n^{3/4}b_k^2})\cdot \epsilon_i\bigg)\\&\quad
		\le\frac{1}{2^{i+1}}\cdot \frac{\delta}{4}.
	\end{align*}
	And we notice
	\begin{align*}
		&	\sum_{i=M}^{\infty}\frac{\sqrt{(4\delta_z^2+R_m^2)/(2^{i-1}\sqrt{n})+b_k^2\sqrt{4\delta_z^2+R_m^2}\log^2n}}{2^{(i-1)/2}n^{3/4}b_k^2}\cdot \epsilon_i\\
		&\quad\le \sum_{i=M}^{\infty}\frac{\sqrt{4\delta_z^2+R_m^2}}{2^{i-1}nb_k^2}\cdot \epsilon_i+\frac{\sqrt{(4\delta_z^2+R_m^2)\log^2n}}{2^{(i-1)/2}n^{3/4}b_k}\cdot \epsilon_i:=\mathbf{I}+\mathbf{II}.
	\end{align*}
	For term $\mathbf{I}$, we have
	\begin{align*}
		\mathbf{I}&=\sum_{i=M}^{\infty}\frac{\sqrt{4\delta_z^2+R_m^2}}{2^{i-1}nb_k^2}\cdot\sqrt{[(4d+6)(i+1)\log 2+4\log(\sqrt{n}+1)+4d\log(2C(n,d)+2)+2\log(8/\delta)]/C_8'}
		\\&\le\frac{\sqrt{(4\delta_z^2+R_m^2)/C_8'}}{nb_k^2}\Big[\sqrt{(4d+6)\log 2}\sum_{i=M}^{\infty}\frac{i+1}{2^{i-1}}+ \frac{\sqrt{4\log(\sqrt{n}+1)+4d\log(2C(n,d)+2)+2\log(8/\delta)}}{2^{M-2}} \Big]\\&\le \frac{\sqrt{(4\delta_z^2+R_m^2)/C_8'}}{nb_k^2}\frac{2M}{2^{M-2}}\Big[\sqrt{(4d+6)\log 2}+ \sqrt{4\log(\sqrt{n}+1)}+\sqrt{4d\log(2C(n,d)+2)}+\sqrt{2\log(8/\delta)}\Big]\\
		&\le \frac{\sqrt{(4\delta_z^2+R_m^2)/C_8'}}{n}\frac{8M}{2^{M/2}}\frac{1}{2^{M/2}b_k^2}\Big[\sqrt{(4d+6)\log 2}+ \sqrt{4\log(\sqrt{n}+1)}+\sqrt{4d\log(2C(n,d)+2)}+\sqrt{2\log(8/\delta)}\Big]
		\\&\le \frac{C_9'}{n}\Big[\sqrt{(4d+6)\log 2}+ \sqrt{4\log(\sqrt{n}+1)}+\sqrt{4d\log(2C(n,d)+2)}+\sqrt{2\log(8/\delta)}\Big],
	\end{align*}
	in which $C_9'$ is a pure constant such that $C_9'=\sqrt{(4\delta_z^2+R_m^2)/C_8'}\cdot \max_{i}(8i/2^{i/2})=16\sqrt{(4\delta_z^2+R_m^2)/C_8'}$ and $C(n,d)\le\sqrt{(d+1)(W_x\log n+\log^3n)}$.
	Then we obtain
	\begin{align}
		\sqrt{4d\log(2C(n,d)+2)}\nonumber&\le\sqrt{4d\log\Big(4\sqrt{(d+1)(W_x\log n+\log^3 n)}\Big)}\\&\le \sqrt{4d\log \Big(4\sqrt{2}\sqrt{(d+1)\max\{1,W_x\}\log^3 n}\Big)}\nonumber
		\\&\le \sqrt{4d\log(4\sqrt{2})}+\sqrt{2d\log(\max\{W_x,1\}(d+1))}+\sqrt{6d\log n}. \label{ineq_cnd}
	\end{align}
	Next, we are able to find a pure constant $C_{10}'=6\sqrt{6}$ such that $ \sqrt{(4d+6)\log 2}+ \sqrt{4\log(\sqrt{n}+1)}+\sqrt{4d\log(2C(n,d)+2)}\le 6\sqrt{6}\sqrt{(d+1)\log (\max\{W_x,1\}(d+1))\log n}$ as long as $n\ge 3$ according to \eqref{ineq_cnd}. Thus, we finally achieve
	$$ \mathbf{I}\le \frac{C_{11}'}{n}\Big(\sqrt{(d+1)\log (\max\{W_x,1\}(d+1))\log n}+\sqrt{2\log(8/\delta)}\Big),$$
	where $C_{11}'=C_{10}'\cdot C_9'$. For term $\mathbf{II}$, we obtain
	\begin{align*}
		\mathbf{II}&=\sum_{i=M}^{\infty}\frac{\sqrt{(4\delta_z^2+R_m^2)\log^2n/C_8'}}{2^{(i-1)/2}n^{3/4}b_k}[\sqrt{(4d+6)(i+1)\log 2}\\&\quad+\sqrt{4\log(\sqrt{n}+1)}+\sqrt{4d\log(2C(n,d)+2)}+\sqrt{2\log(8/\delta)}]
		\\&\le \frac{\sqrt{(4\delta_z^2+R_m^2)/C_8'}\log n}{n^{3/4}b_k}\Big[\sqrt{(4d+6)\log 2}\sum_{i=M}^{\infty}\frac{i+1}{2^{(i-1)/2}}\\&\quad+\frac{\sqrt{4\log(\sqrt{n}+1)+4d\log(2C(n,d)+2)+2\log(8/\delta)}}{2^{(M-2)/2}}\Big]
		\\&\le \frac{\sqrt{(4\delta_z^2+R_m^2)/C_8'}\log n}{n^{3/4}}\frac{8\sqrt{2}M}{2^{M/4}}\frac{1}{2^{M/4}b_k}\Big[\sqrt{(4d+6)\log 2} + \sqrt{4\log(\sqrt{n}+1)}\\&\quad+\sqrt{4d\log(2C(n,d)+2)}+\sqrt{2\log(8/\delta)}\Big].
	\end{align*}
	We are also able to find a pure constant $C_{12}'$ such that $C_{12}'=\sqrt{(4\delta_z^2+R_m^2)/C_8'}\max_{i}(8\sqrt{2}i/2^{i/4})=24\sqrt{(4\delta_z^2+R_m^2)/C_8'}$ and $C_{13}'=C_{10}'\cdot C_{12}' $. Then we obtain 
	$$ \mathbf{II}\le \frac{C_{13}'\log n}{n^{3/4}}\Big(\sqrt{(d+1)\log (\max\{W_x,1\}(d+1))\log n}+\sqrt{2\log {8}/{\delta}}\Big).$$

	After combining our inequalities of $\mathbf{I}$ and $\mathbf{II}$, we obtain a union bound: 
	\begin{align*}
		\PP\bigg(\sup_{\ub\in U_k}|Z(\ub)-Z(\pi_M(\ub))|\ge x_2(n,d):&=\frac{C_{14}'\log n}{n^{3/4}}\Big(\sqrt{(d+1)\log (\max\{W_x,1\}(d+1))\log n}+\sqrt{2\log (8/\delta)}\Big)\bigg)\\&\le\sum_{i=M}^{\infty}\frac{1}{2^{i+1}}\frac{\delta}{4}\le \frac{\delta}{4},
	\end{align*}
	in which we choose $C_{14}'=2\max\{C_{11}',C_{13}'\}$. Then we  get 
	\begin{align*}
		\PP\Big(\sup_{\ub\in U_k}Z(\ub)\ge x(n,d)+x_2(n,d)\Big)\le \frac{\delta}{4}+\frac{\delta}{4}=\frac{\delta}{2}.
	\end{align*}
	where the expression of $x(n,d)$ is given in \eqref{x_nd}. As a reminder, we have 
	\begin{align*}
		x(n,d):=\frac{\log n}{\sqrt{nb_k}}\sqrt{\Big[(d+1)4\log\frac{1}{b_k}+2(d+1)\log 2+\log(\sqrt{n}+1)+d\log(2C(n,d)+2)+\log\frac{8}{\delta}\Big]/C_7'}.
	\end{align*}We obtain there exist a universal constant $C_{15}'=8/\sqrt{C_7'}$ such that 
	\begin{align*}
		x(n,d)\le \frac{C_{15}'\log n}{\sqrt{nb_k}}\bigg(\sqrt{(d+1)\log (\max\{W_x,1\}(d+1))\log n}+\sqrt{2\log \frac{8}{\delta}}\bigg).
	\end{align*}
	Then we finally achieve
	\begin{align*}
		\PP\bigg(\sup_{\ub\in U_k}Z(\ub)\ge \frac{C_{16}'\log n}{\sqrt{nb_k}}\bigg(\sqrt{(d+1)\log (\max\{W_x,1\}(d+1))\log n}+\sqrt{2\log \frac{8}{\delta}}\bigg)\bigg)=\frac{\delta}{2},
	\end{align*}
	where we let $C_{16}'=2\max\{C_{14}',C_{15}' \}$ and $C_{17}'=C_{16}\log(\max\{ W_x,e\})$. Thus,  $nb_k\ge 4C_{17}'^2\log^3 n[(d+1)\log (d+1)]$ and $\delta\ge 8\exp(-nb_k/(8C_{17}'^2\log^2n))$ becomes a sufficient condition to make $x(n,d)+x_2(n,d)$ be smaller than $1$. Following similar procedure, we are able to prove the same inequality for $f_n$, so  we conclude our proof of Lemma \ref{mix_var}.
\end{proof}
The remaining part of Lemma \ref{conckernel_main_mix} only involves getting  a uniform upper bound for $|r_{\btheta}(u)-r_{\btheta_0}(u)|$  and thus $|\hat{r}_k(u,\btheta)-r_{\btheta_0}(u)|$ for any $\btheta\in \Theta_k$ and $u\in I$. Similar with the corresponding proof of Lemma \ref{conckernel_main}, we have 
\begin{align*}
	\sup_{u\in I,\btheta\in \Theta_k}|r_{\btheta}(u)-r_{\btheta_0}(u)|\le l_r R_{\cX}\cdot\frac{2}{c_{\min}}\sqrt{\frac{(d+1)(6W_x^2\log n+6W_x\log^2 n\log\log n)}{C_w n}}.
	\end{align*}
Finally, by setting $b_k=n^{-1/(2m+1)}$ and combining our results obtained in Lemma \ref{mix_bias} and Lemma \ref{mix_var}, we conclude our results for Lemma \ref{conckernel_main_mix}. In addition, our way of deriving constants $B_{mx,K},B'_{mx,K}$ and $C_{mx,K}$ is similar with that in Lemma \ref{conckernel_main_mix}, so we omit the details here.
\end{proof}

\subsection{Proof of Lemma \ref{conckernel2_main_mix} and Theorem \ref{mainthm2}}\label{pf-b4}
The proof of Lemma \ref{conckernel2_main_mix}  and Theorem \ref{mainthm2} are straight forward by combining the proof of Lemma \ref{conckernel2_main} and Lemma \ref{conckernel_main_mix}, so we omit the details here.

\section{Additional Plots}\label{add_plots}

{In this section, we directly plot reg$(T)$ for all the settings discussed in the main paper. From Figure \ref{indp_app} - Figure \ref{mix_dpapp}, we see that the blue solid lines depicted in every figure are close to the other two lines that depict regrets with either known $\btheta_0$ or $g(\cdot )$ in Algorithm \ref{pricing_1}. This fact reflects the robustness of our estimators on $\btheta_0$ and $g(\cdot)$ in every episode.}

\begin{figure}[H]
	\centering
	\begin{tabular}{ccc}
		\hskip-30pt\includegraphics[width=0.35\textwidth]{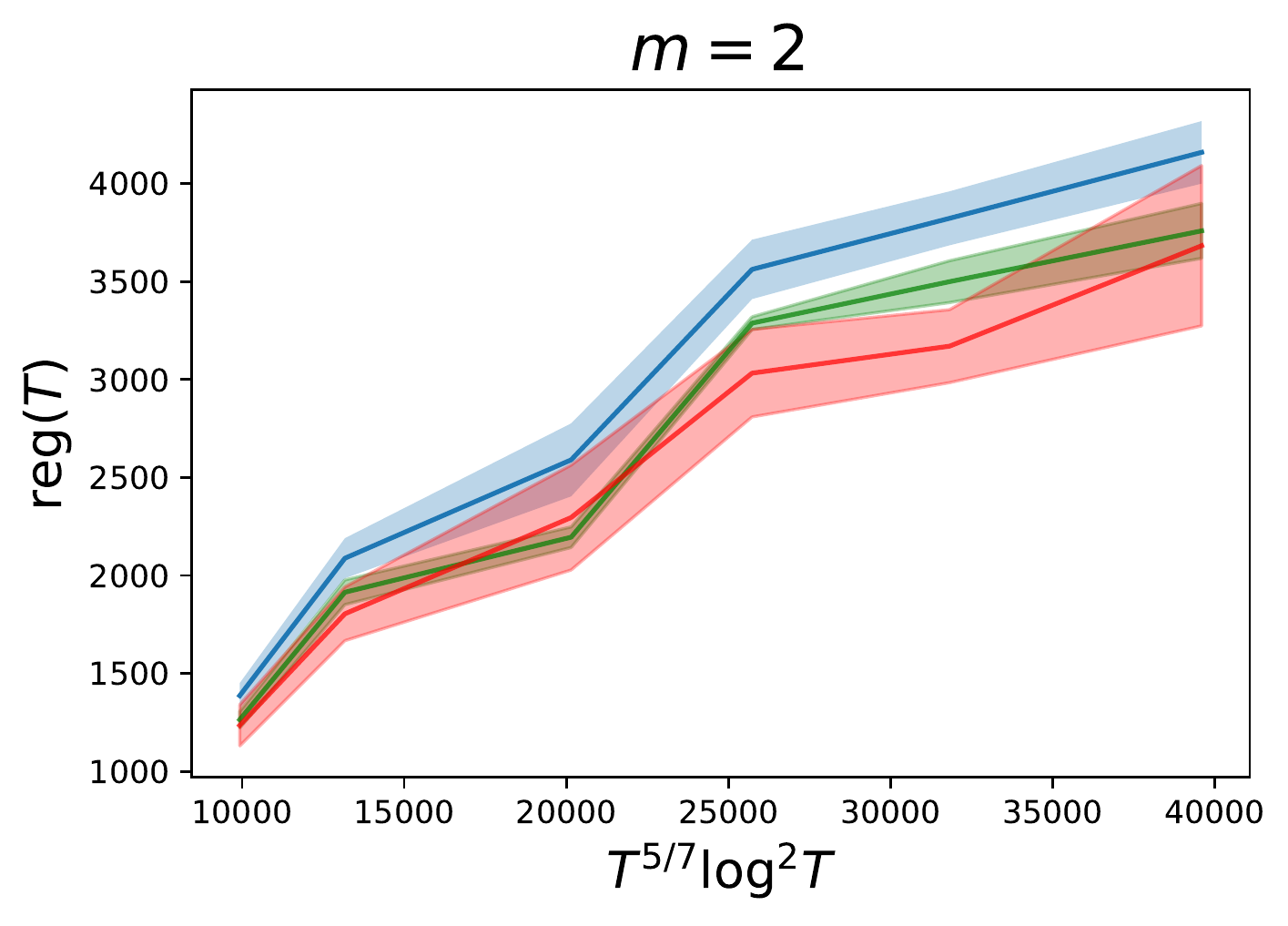}
		&
		\hskip-6pt\includegraphics[width=0.35\textwidth]{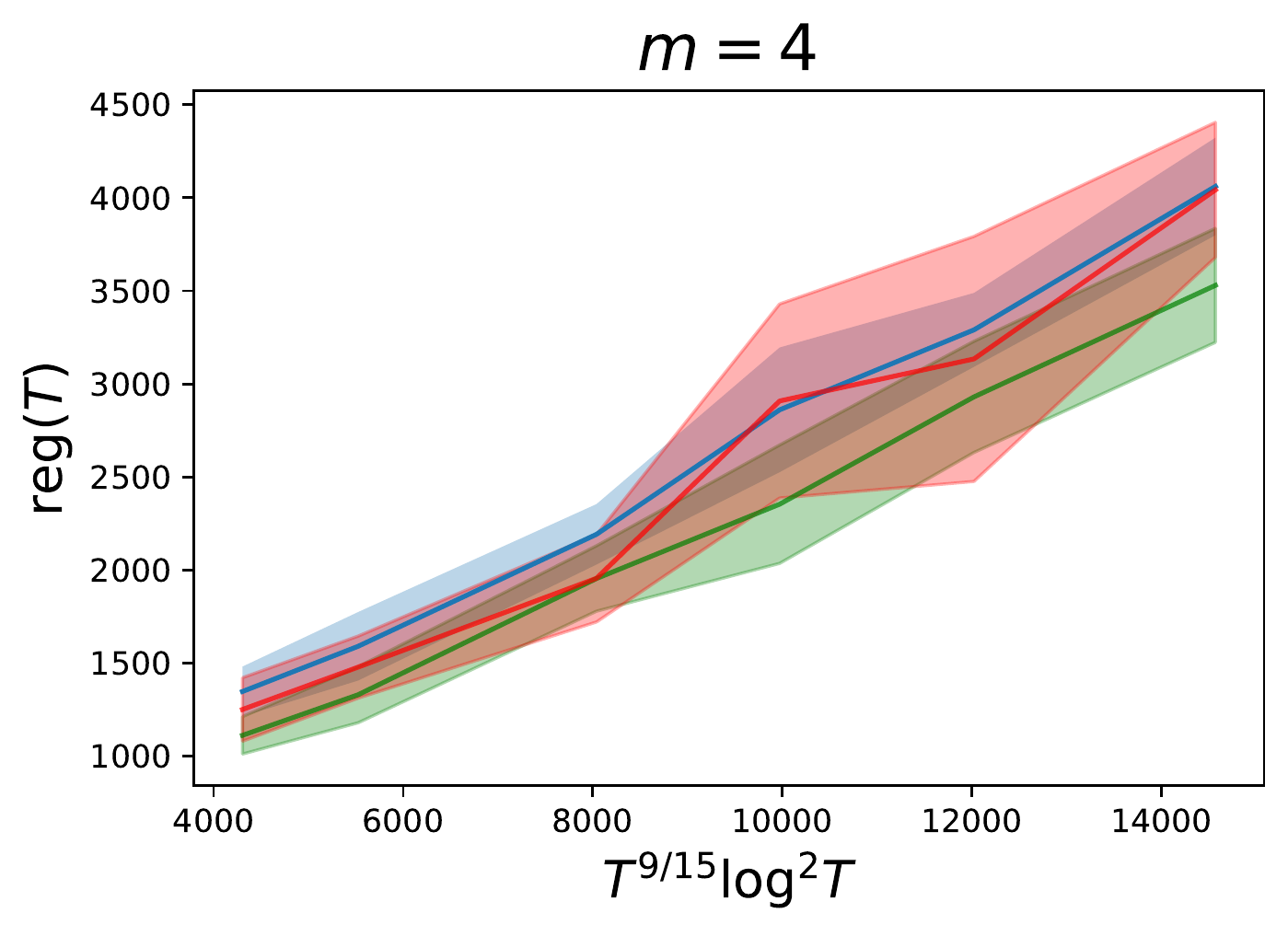}
		&
		\hskip-5pt\includegraphics[width=0.35\textwidth]{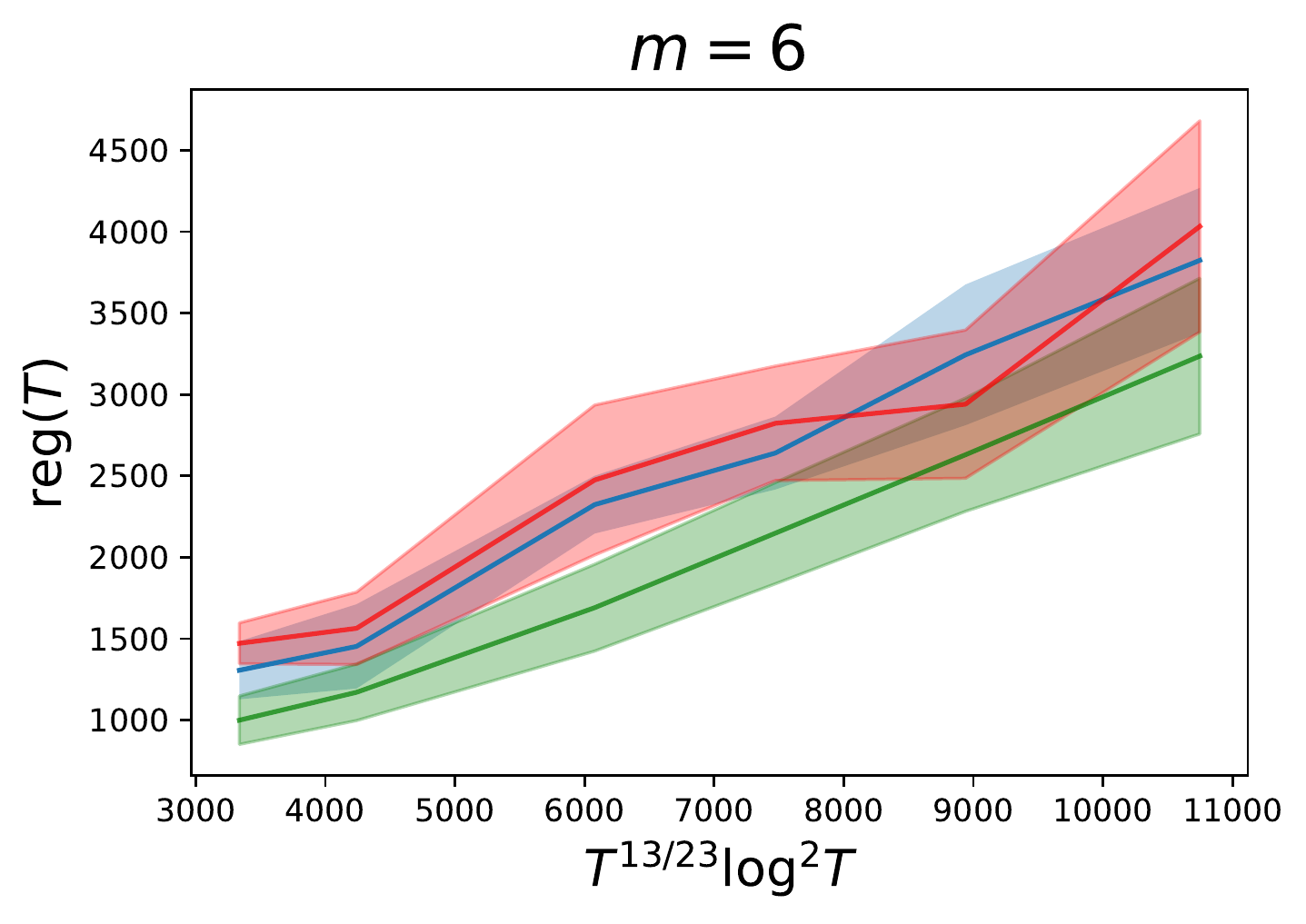} \\
		(a)  & (b) &(c)
	\end{tabular}\\
	\caption{From left to right, we plot empirical regret reg$(T)$ against $T^{(2 m+1)/(4 m-1)}\log^2 T$ with $ m\in [2,4,6]$ in the setting with i.i.d. covariates with independent entries. Solid blue, green, red lines, represent the mean regret collected by implementing the Algorithm \ref{pricing_1} for $30$ times with unknown $g(\cdot)$, $\btheta_0$, unknown $g(\cdot)$ but known $\btheta_0$ and known $g(\cdot)$ but unknown $\btheta_0$ in the exploitation phase respectively. Light color areas around those solid lines depict the standard error of our estimation of reg$(T)$.}
	\label{indp_app}
\end{figure}

\begin{figure}[H]
	\centering
	\begin{tabular}{ccc}
		\hskip-30pt\includegraphics[width=0.35\textwidth]{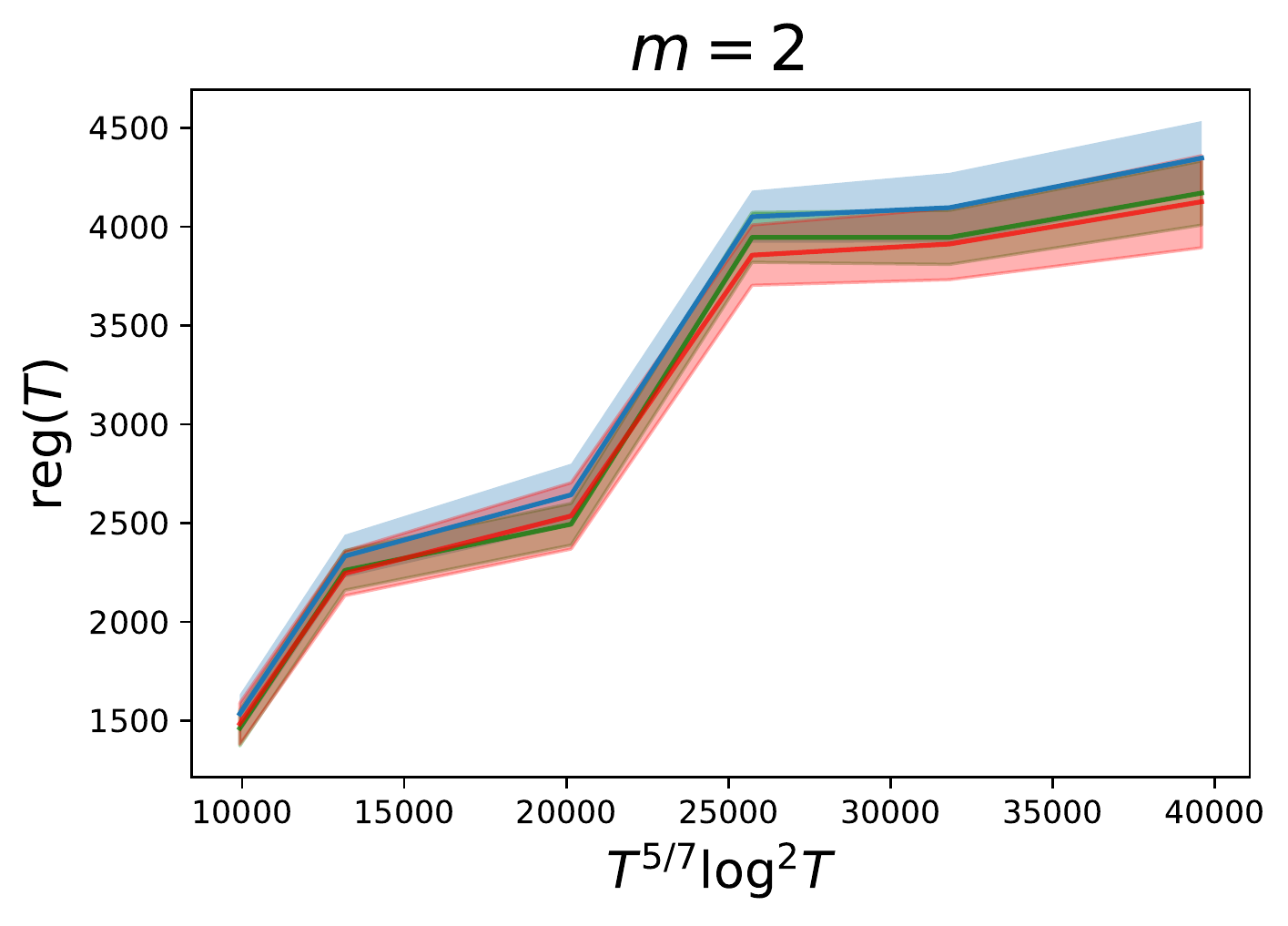}
		&
		\hskip-6pt\includegraphics[width=0.35\textwidth]{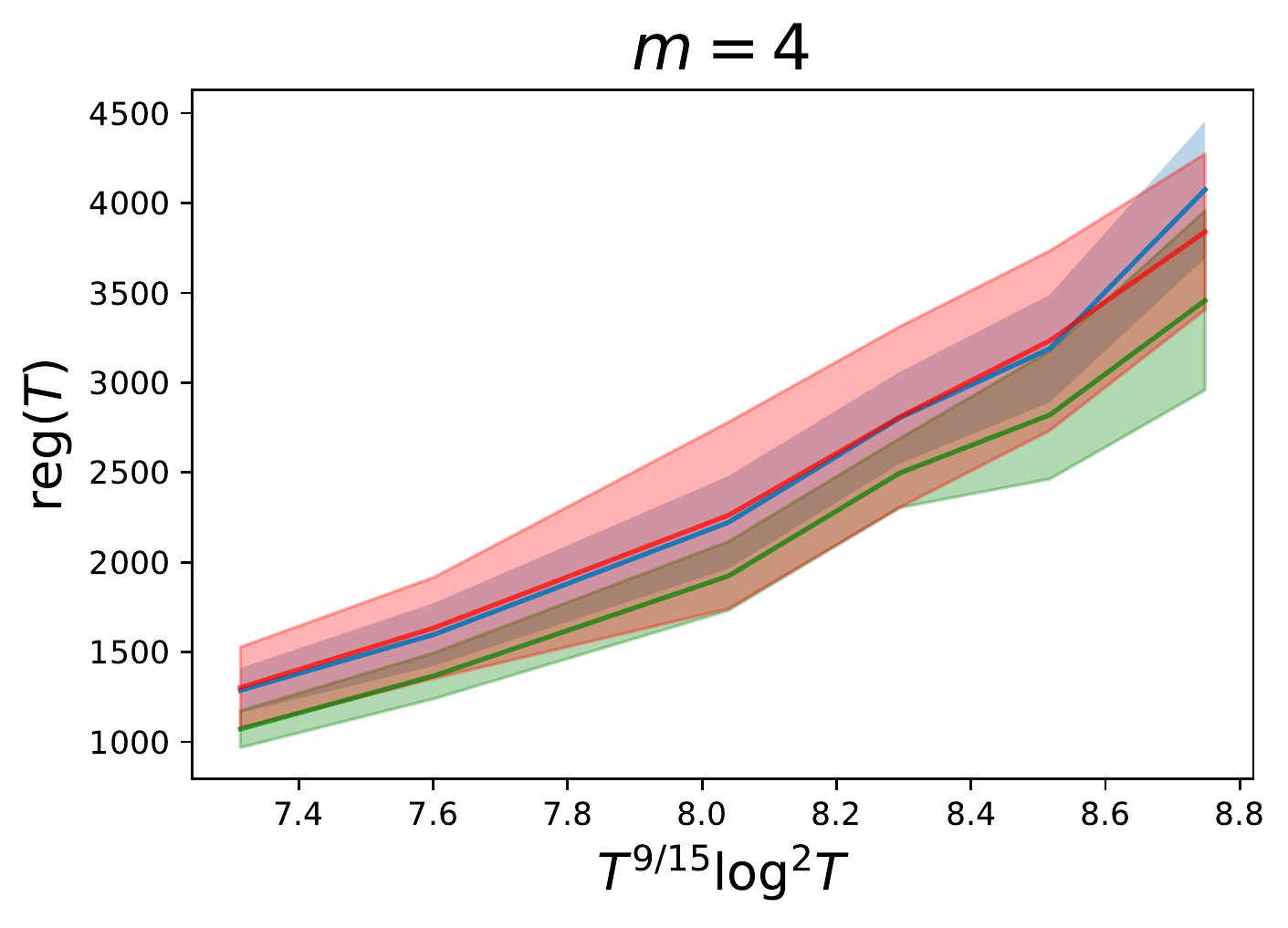}
		&
		\hskip-5pt\includegraphics[width=0.35\textwidth]{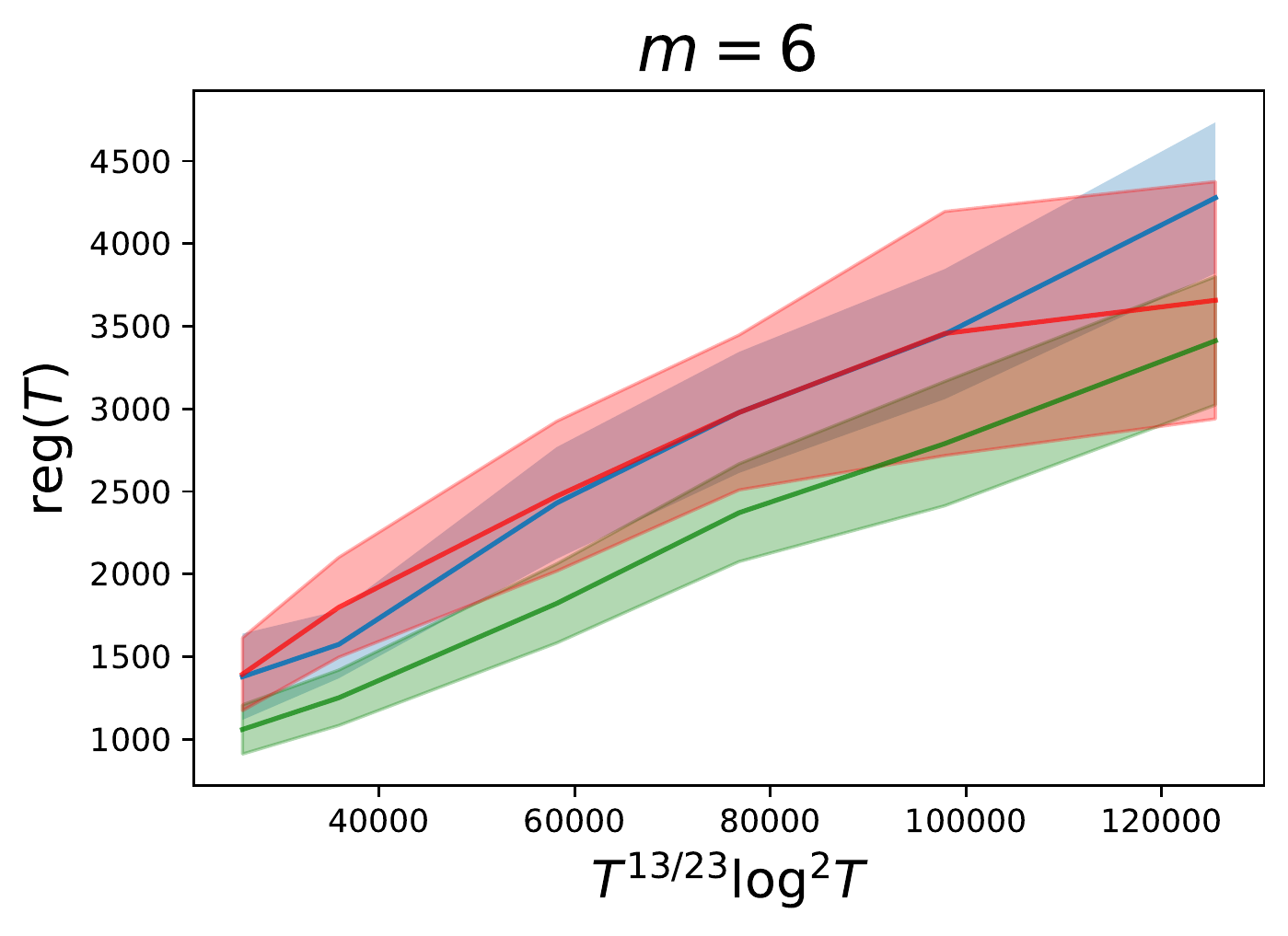} \\
		(a)  & (b) &(c)
	\end{tabular}\\
	\caption{From left to right, we plot empirical regret reg$(T)$ against $T^{(2 m+1)/(4 m-1)}\log^2 T$ with $ m\in [2,4,6]$  in the setting with i.i.d. covariates but dependent entries.  The rest caption is the same as in Figure~
	\ref{indp_app}. }
	\label{dp_app}
\end{figure}

\begin{figure}[H]
	\centering
	\begin{tabular}{ccc}
		\hskip-30pt\includegraphics[width=0.35\textwidth]{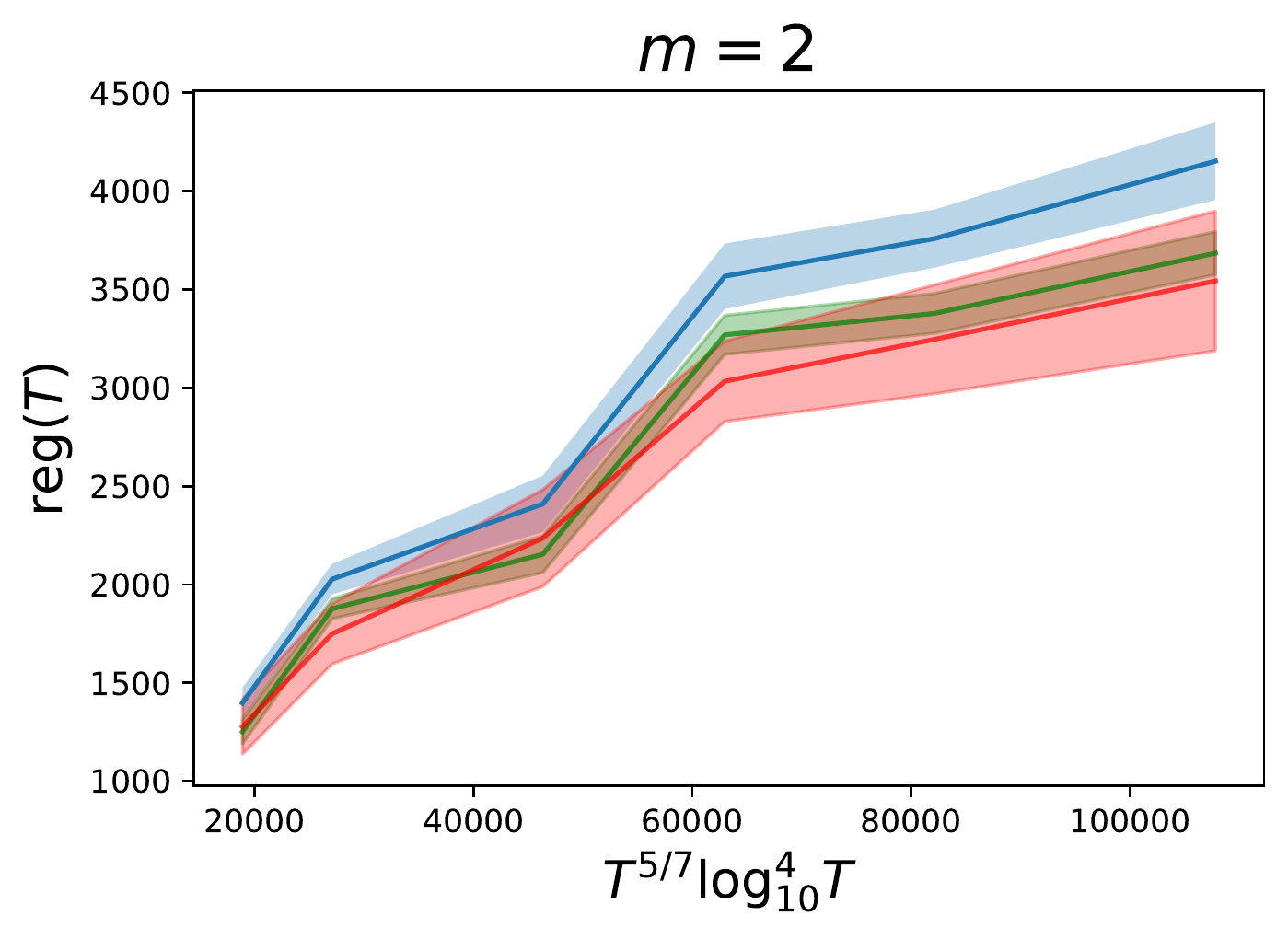}
		&
		\hskip-6pt\includegraphics[width=0.35\textwidth]{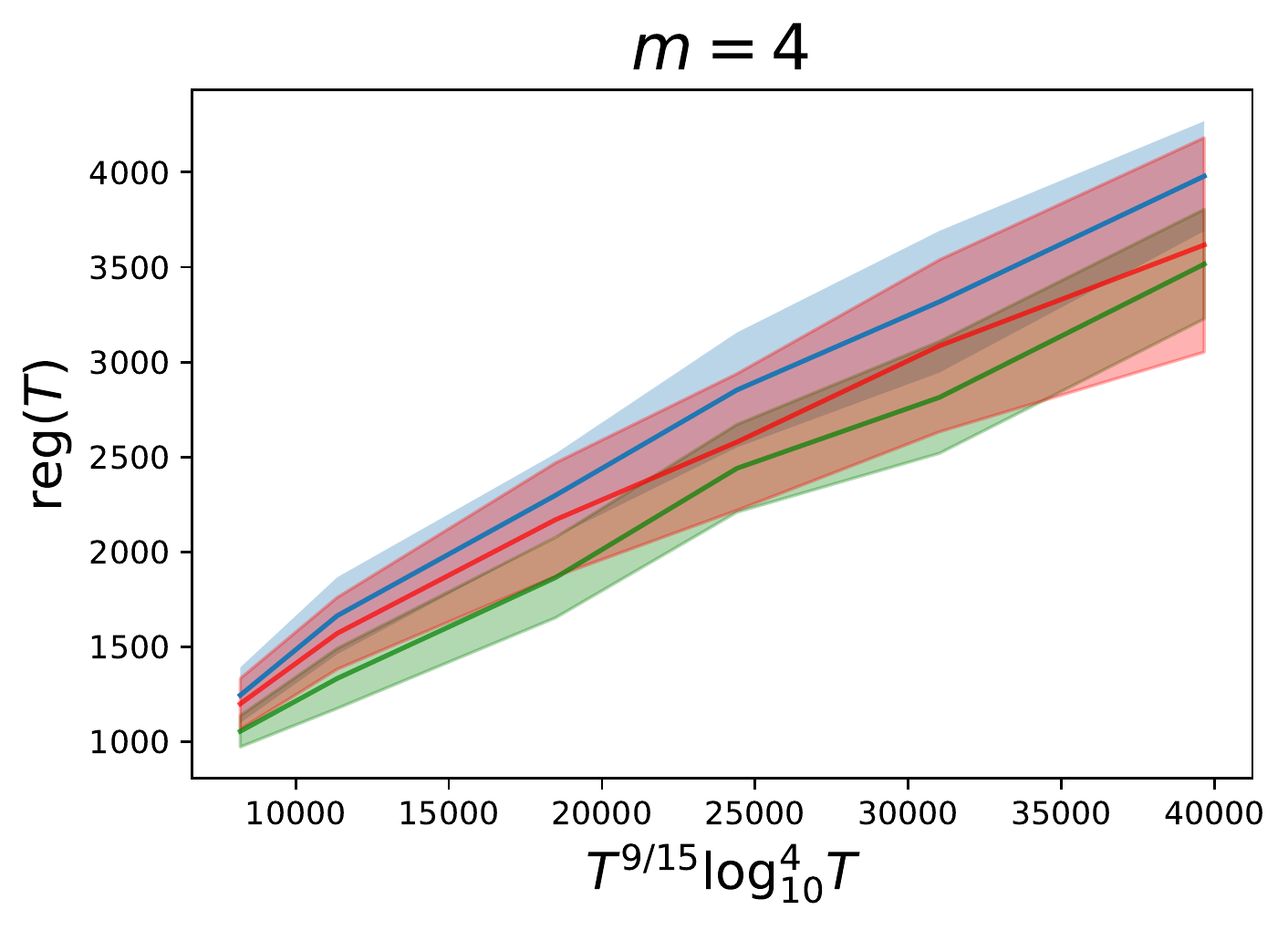}
		&
		\hskip-5pt\includegraphics[width=0.35\textwidth]{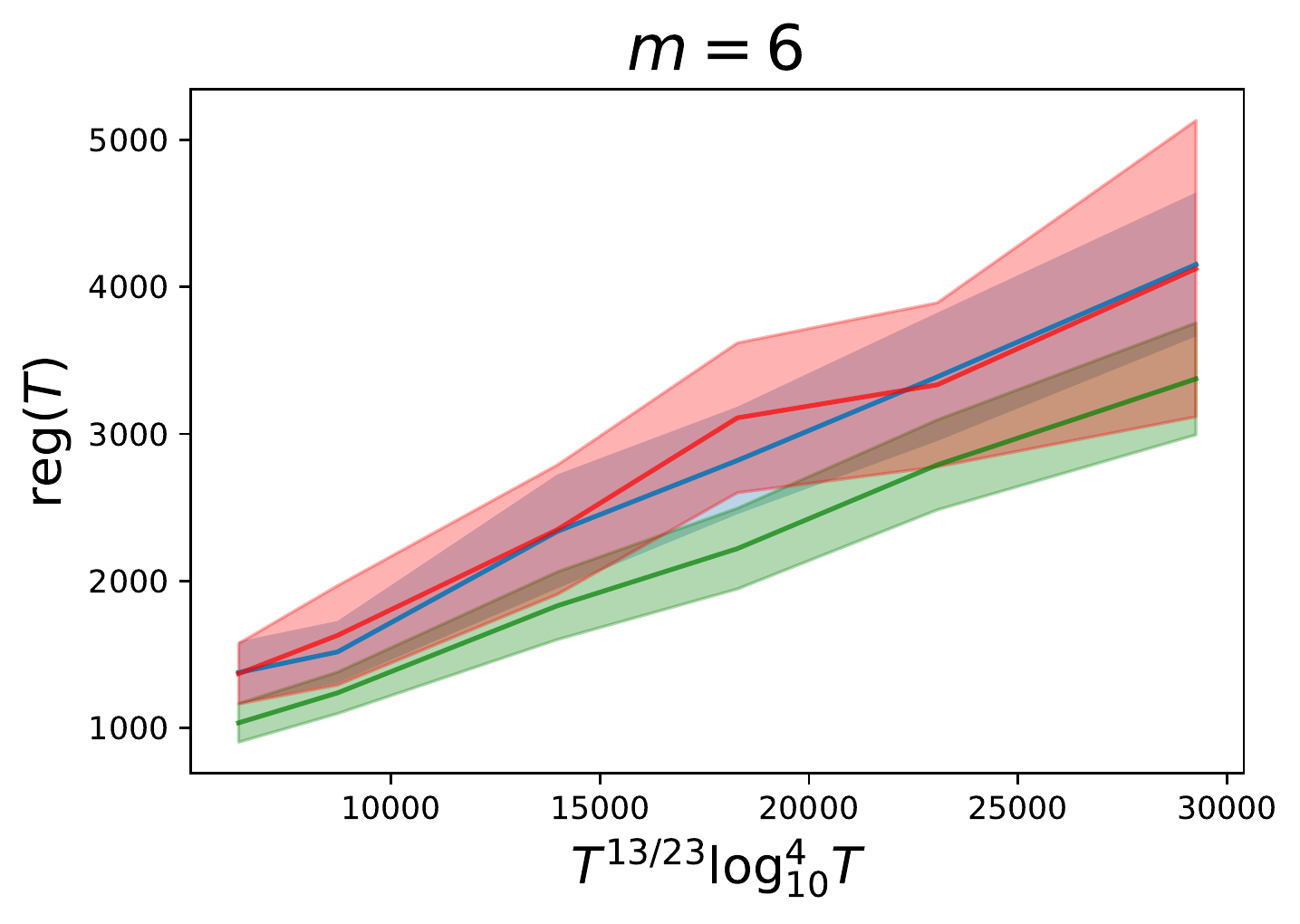} \\
		(a)  & (b) &(c)
	\end{tabular}\\
	\caption{From left to right, we plot empirical regret reg$(T)$ against $T^{(2 m+1)/(4 m-1)}\log^4_{10} T$ with $ m\in [2,4,6]$ in the setting with strong-mixing covariates.  The rest caption is the same as in Figure~
		\ref{indp_app}. }
	\label{mix_dpapp}
\end{figure}

\section{Regret bounds when $F(\cdot)$ is Lipschitz}\label{sec:m=0}
 All our main results require bounded second derivatives of $F$. This allows  the pricing strategy  $p_t=\hat\phi_k^{-1}(-\xb_t^\top\hat\btheta)+\xb_t^\top\hat\btheta$ to achieve low regret if the revenue function has bounded second derivative.   When  $F(\cdot)$ is only Lipschitz continuous, the above method is no longer applicable.  Fortunately, we can directly define the offered price based on the substitution of $\hat\btheta$ and $\hat F$  into \eqref{eq:pt*} We summarize these in the following Algorithm \ref{pricing_2}. 




\begin{algorithm}[H]
	\caption{Feature based dynamic pricing with unknown noise distribution when $F(\cdot)$ is $\ell$-Lipschitz}	
	\label{pricing_2}
	\begin{algorithmic}[1]		
		\STATE \textbf{Input:} { Upper bound of market value ($\{v_t\}_{t\ge 1}$)}: $B>0$, minimum episode length: $\ell_0$, degree of smoothness: $m=0$.
		\STATE \textbf{{Initialization:}}
		$p_1=0,\,\hat\btheta_1=0.$
		\FOR{each episode $k=1,2,\dots,$}
		\STATE Set length of the $k$-th episode $\ell_k = 2^{k-1}\ell_0$; Length of the exploration phase $a_k = \lceil(\ell_kd)^{\frac{3}{4}}\rceil$.
		\STATE \textbf{\underline{Exploration Phase} ($t\in I_k:= \{\ell_k,\cdots,\ell_{k} + a_k-1 \}$):}
		\STATE \quad Offer price $p_t\sim \text{Unif}(0,B).$ 
		\STATE \textbf{\underline{Updating Estimates} (at the end of the exploration phase with data $\{(\tilde{\xb}_t, y_t)\}_{t\in I_{k}}$):}
		\STATE \quad Update estimate of $\btheta_0$ by $\hat\btheta_k = \hat\btheta_k(\{(\tilde{\xb}_t, y_t)\}_{t\in I_{k}})$;
		{\begin{align}\label{thetaupdate_G}
\hat{\btheta}_k=\argmin_{\btheta} L_k(\btheta):= \frac{1}{|I_{k}|}\sum_{t\in I_{k}} (By_t-\btheta^\top\bx_t)^2
	 \end{align}}
		\STATE \quad Update estimates of $F$, by $ F_k(u,\hat\btheta_k) =  F_k(u; \hat\btheta_k, \{(\tilde{\xb}_t, y_t,p_t)\}_{t\in I_{k}})$ given by \eqref{exp:f_k}.
		\STATE \textbf{\underline{Exploitation Phase} ($t\in I_k':= \{\ell_{k} + a_k, \cdots, \ell_{k+1} -1 \}$): }
		\STATE \quad Offer $p_t$ as
	 \begin{align}\label{pt-2}
p_t=\text{argmax}_{p\ge 0} \{p(1-\hat F_k(p-\bx_t^\top\hat\btheta_k))\}
	 \end{align}
		\ENDFOR
	\end{algorithmic}
\end{algorithm}

\begin{theorem}\label{mainthm_m0}
Let Assumptions \ref{assp_inverse_0}, \ref{ass-pdfx}, \ref{ass-F} and \ref{ass-kernel} hold. Then there exist constants $C$ (depending only on  the absolute constants within the assumptions) such that for all $T$ satisfying
$T\geq Cd,$
the regret of Algorithm \ref{pricing_2} over time $T$ is no more than $C^*_{x, K}(Td)^{\frac{3}{4}}\log T(1+\log T/d)$.
\end{theorem}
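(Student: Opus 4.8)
\noindent\emph{Proof idea.}\ The plan is to rerun the episodic exploration--exploitation bookkeeping from the proof of Theorem~\ref{mainthm}, with one structural change dictated by the weaker smoothness: because $F$ is only Lipschitz, the revenue curve has no usable curvature at its maximiser, so the per--round exploitation regret can no longer be controlled by a quadratic $(p_t-p_t^*)^2$; it will instead be controlled \emph{linearly} by the sup--norm error of the plug--in revenue curve, which is precisely why $a_k=\lceil(\ell_kd)^{3/4}\rceil$ is the correct exploration length. First I would dispatch the early episodes --- those with $\ell_k$ below a $d\cdot\mathrm{polylog}(T)$ threshold, below which the statistical lemmas do not yet apply --- by the trivial per--episode bound $B\ell_k$, which sums geometrically to $O(B\sqrt T)+O(Bd\,\mathrm{polylog}(T))=O\bigl(B(Td)^{3/4}\bigr)$ once $T\ge Cd$. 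For a later episode $k$, work on the event of Lemma~\ref{thmsignal}, on which $\|\hat\btheta_k-\btheta_0\|_2\lesssim\sqrt{(d+1)\log a_k/a_k}$, so that $\Theta_k\subset\Theta_0$ and, by the argument of Lemma~\ref{lemma_inverse}, $\btheta_0^\top\tilde\xb_t,\ \hat\btheta_k^\top\tilde\xb_t\in[\delta_z,B-\delta_z]$ for all $\tilde\xb_t\in\cX$; in particular $p_t^*=g(\btheta_0^\top\tilde\xb_t)\in[0,B]$.

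Next I would record the $m=0$ analogue of Lemma~\ref{conckernel_main}. With a bounded--support, Lipschitz kernel, the bias bound of Lemma~\ref{lem-bias} becomes $\sup_{u\in I,\btheta\in\Theta_k}|\EE h_k(u,\btheta)-h_\btheta(u)|\lesssim b_k$ (a zeroth--order Taylor estimate together with the Lipschitz parts of Assumptions~\ref{ass-pdfx}--\ref{ass-F}), while the deviation bound of Lemma~\ref{lem-dev} and the transfer bound $\sup_{u\in I,\btheta\in\Theta_k}|r_\btheta(u)-r_{\btheta_0}(u)|\lesssim\sqrt{d\log a_k/a_k}$ carry over verbatim. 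Choosing $b_k\asymp a_k^{-1/3}$ to balance bias against the stochastic term gives, for $\delta\in[4\exp(-c\,a_k^{2/3}/\log a_k),\frac{1}{2}]$, with probability at least $1-2\delta$,
\[
\sup_{u\in I,\ \btheta\in\Theta_k}\bigl|\hat F_k(u,\btheta)-F(u)\bigr|\ \lesssim\ a_k^{-1/3}\sqrt{\log a_k}\,\bigl(\sqrt d+\sqrt{\log(1/\delta)}\bigr)\ =:\ \varepsilon_k(\delta).
\]
Only $\hat F_k$ --- not $\hat F_k^{(1)}$ --- enters here, which is why no analogue of Lemma~\ref{conckernel2_main} is needed.

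The heart of the argument is the exploitation regret. Put $\rho_t(p)=p\,(1-F(p-\btheta_0^\top\tilde\xb_t))$ and $\hat\rho_t(p)=p\,(1-\hat F_k(p-\hat\btheta_k^\top\tilde\xb_t))$, where $\hat F_k$ is extended by $0$ on $(-\infty,-\delta_z)$ and by $1$ on $(\delta_z,\infty)$, exactly as $F$ itself vanishes and saturates there. Since $|F(v)-\hat F_k(v,\hat\btheta_k)|\le\varepsilon_k(\delta)$ for \emph{every} $v\in\RR$ --- trivially ($=0$) when $v\notin I$, and by the display above when $v\in I$ --- the $l_r$--Lipschitz property of $F=1-r_{\btheta_0}$ yields
\[
\sup_{p\in[0,B]}\bigl|\rho_t(p)-\hat\rho_t(p)\bigr|\ \le\ B\bigl(\varepsilon_k(\delta)+l_r\sqrt{1+R_\cX^2}\,\|\hat\btheta_k-\btheta_0\|_2\bigr)\ =:\ B\tilde\varepsilon_k(\delta).
\]
As $p_t$ maximises $\hat\rho_t$ over $p\ge0$ (equivalently over $[0,B]$, since $\hat\rho_t\equiv0$ for $p>\hat\btheta_k^\top\tilde\xb_t+\delta_z\le B$) and $p_t^*\in[0,B]$, the ``argmax of a uniformly close function is near-optimal'' estimate gives, conditionally on $\bar\cH_{t-1}$ and on the good event,
\[
\EE[R_t\mid\bar\cH_{t-1}]=\rho_t(p_t^*)-\rho_t(p_t)\le\bigl[\rho_t(p_t^*)-\hat\rho_t(p_t^*)\bigr]+\bigl[\hat\rho_t(p_t)-\rho_t(p_t)\bigr]\le 2B\tilde\varepsilon_k(\delta),
\]
using $\hat\rho_t(p_t)\ge\hat\rho_t(p_t^*)$. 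Taking $\delta=1/a_k$ (and adding the negligible $O(B/a_k)$ mass of the complementary event) makes the expected per--round exploitation regret $\lesssim Ba_k^{-1/3}\sqrt{\log a_k}\,(\sqrt d+\sqrt{\log a_k})$, whereas each exploration round costs $\le B$. Hence episode $k$ contributes at most $Ba_k+\ell_k\cdot Ba_k^{-1/3}\sqrt{\log a_k}\,(\sqrt d+\sqrt{\log a_k})\lesssim B(\ell_kd)^{3/4}\log T\,(1+\log T/d)$ after substituting $a_k=\lceil(\ell_kd)^{3/4}\rceil$. Summing over episodes via $\sum_k\ell_k^{3/4}=\Theta(T^{3/4})$, plus the early--episode contribution, yields the claimed bound $C^*_{x,K}(Td)^{3/4}\log T(1+\log T/d)$; the hypothesis $T\ge Cd$ is used only to secure $a_k\ge c_0(d+1)$ and the sample--size preconditions of Lemma~\ref{thmsignal} and of the $m=0$ form of Lemma~\ref{conckernel_main}.

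The step I expect to be the main obstacle is the exploitation estimate itself. Having surrendered the quadratic-in-$(p_t-p_t^*)$ control, one must push the ``uniformly close revenue curves $\Rightarrow$ near-optimal empirical maximiser'' argument through, and the genuinely delicate point is that $\hat F_k$ is neither monotone nor reliable outside $I$: the $0/1$ extension of $\hat F_k$ is precisely what repairs this, as it both makes $\|\rho_t-\hat\rho_t\|_{\infty,[0,B]}$ controllable and forces the empirical maximiser to satisfy $p_t-\hat\btheta_k^\top\tilde\xb_t\in I$ (beyond that window $\hat\rho_t$ is either zero or strictly increasing), so that no separate curvature or monotonicity argument is needed. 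The remaining, essentially routine, task is to check that the bias/deviation toolkit of \S\ref{pf:conckernel_main} degrades gracefully from $\mathbb C^{(m)}$, $m\ge2$, down to mere Lipschitz continuity, turning the bias order $b_k^m$ into $b_k$ and hence the estimation rate from $a_k^{-(m-1)/(2m+1)}$ into $a_k^{-1/3}$.
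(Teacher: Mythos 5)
Your proposal is correct and follows essentially the same route as the paper: the paper's five-term decomposition of $\mathrm{rev}_t(p_t^*,\btheta_0,F)-\mathrm{rev}_t(p_t,\btheta_0,F)$ with the middle term nonpositive by optimality of $p_t$ for the plug-in revenue is exactly your two-sided sandwich $\rho_t(p_t^*)-\hat\rho_t(p_t^*)+\hat\rho_t(p_t)-\rho_t(p_t)$, with terms (I),(V) controlled by $\|\hat\btheta_k-\btheta_0\|_2$ via Lipschitzness of $F$ and (II),(IV) by the uniform $a_k^{-1/3}$ error of $\hat F_k$, then balanced with $a_k=\lceil(\ell_kd)^{3/4}\rceil$. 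Your additional care (the $m=0$ bias bound $b_k\asymp a_k^{-1/3}$, the $0/1$ extension of $\hat F_k$ outside $I$, and the early-episode bookkeeping) fills in details the paper's sketch leaves implicit, and does not change the argument.
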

\begin{proof}
We write
\begin{align}
\EE[R_t|\bar{\cH}_{t-1}]&=p_t^{*}(1-F(p_t^{*}-\bx_t^\top\btheta_0))-p_t(1-F(p_t-\bx_t^\top\btheta_0))\\
&=\text{rev}_t(p_t^{*},\btheta_0,F)-\text{rev}_t(p_t,\btheta_0,F). \label{diffrevenue}
\end{align}
The last inequality follows from our definition of \eqref{revenuet}. When $t\in I_k'$ (the $k$-th exploitation phase) we can then further expand \eqref{diffrevenue} into
\begin{align}
\eqref{diffrevenue}&=\text{rev}_t(p_t^{*},\btheta_0,F)-\text{rev}_t(p_t^{*},\hat\btheta_k,F)\label{termI}\\
&\quad + \text{rev}_t(p_t^{*},\hat\btheta_k,F)-\text{rev}_t(p_t^{*},\hat\btheta_k,\hat F_k)\label{termII}\\&\quad +\text{rev}_t(p_t^{*},\hat\btheta_k,\hat F_k)-\text{rev}_t(p_t,\hat\btheta_k,\hat F_k)\label{termIII}\\&\quad+\text{rev}_t(p_t,\hat\btheta_k,\hat F_k)-\text{rev}_t(p_t,\hat\btheta_k, F)\label{termIV}\\&\quad+\text{rev}_t(p_t,\hat\btheta_k, F)-\text{rev}_t(p_t,\btheta_0, F). \label{termV}
\end{align}
For \eqref{termIII}, by our definition on $p_t$ in \eqref{pt} , we have
\begin{align}
\eqref{termIII}=\text{rev}_t(p_t^{*},\hat\btheta_k,\hat F_k)-\text{rev}_t(p_t,\hat\btheta_k,\hat F_k)\le 0
\end{align}
For terms \eqref{termI} and \eqref{termV}, we can control both of them by difference between $\hat\btheta_k$ and $\btheta_0$ in a sense that
\begin{align}\label{signaldiff}
\eqref{termI},\eqref{termV}\lesssim |\langle\bx_t,\hat\btheta_k-\btheta_0 \rangle| \lesssim \frac{1}{\sqrt{a_k}}
\end{align}
holds with high probability by our Lemma {\red 4.1},
since we assume $F$ is Lipschitz continuous.  Recall $a_k=|I_k|,$ which is the length of the $k$-th exploration phase

For the rest two parts \eqref{termII} and \eqref{termIV}, as we are able to control $\hat F(x)$ to $F(x)$ uniformly with rate $a_k^{-1/3}$ using data in the exploration phase.
we can then bound $\EE[R_t]$  by 
\begin{align}
\EE[R_t]=\EE[\EE[R_t|\bar{\cH}_{t-1}]]\lesssim \frac{1}{a_k^{1/3}} 
\end{align}
Then for the regret in $k$-th episode we can bound it as
\begin{align}
\text{Regret}_k&=\sum_{t\in I_k} (\text{rev}_t^{*}-\text{rev}_t)+\sum_{t\in E_k\backslash I_k}(\text{rev}^{*}_t-\text{rev}_t)\\
&\le Ba_k+a_k/a_k^{1/3}=a_k^{3/4}+a_k/a_k^{1/4}=\cO(a_k^{3/4})
\end{align}
let $K=\lfloor\log_2 T\rfloor+1$,
we have our total regret can be bounded by
\begin{align}
\text{Regret}_{\pi}(T)=\sum_{k=1}^{K} 2^{3(k-1)/4}=\cO(T^{3/4}).
\end{align}
\end{proof}

\section{A Data Driven Way to Determine $m$}\label{secH}
{
As mentioned in Remark \ref{rem-m}, we are able to adopt the cross-validation method \citep{HALL2015} to determine the order of smoothness $m$ using data from the previous exploration phase. In the below, we briefly introduce how the order of smoothness can be determined in local polynomial regression in the context of nonparametric regression.

Given training data $\{x_i,y_i\}_{i=1}^{n}$ and we assume they are generated following model
\begin{align*}
    Y=g^*(X)+\epsilon,
\end{align*}
with $\EE[\epsilon\given X]=0.$ Define
\begin{align*}
    \textrm{CV}(h,m)=\frac{1}{n}\sum_{i=1}^{n}(Y_i-\hat g_{-i}(X_i))^2,
\end{align*}
where $\hat g_{-i}(\cdot)$ is fitted using all samples except the $i$-th pair $(x_i,y_i)$. Here we use bandwidth $h$ and $m$-th order local polynomial to fit the regression function.
According to Theorem 3.2 given in \cite{HALL2015}, optimizing CV$(h,m)$ is equivalent to optimizing over $(h,m)$ with respective to the averaged summed squared errors defined in \eqref{pred_error} up to some small order terms.
\begin{align}\label{pred_error}
    \frac{1}{n}\sum_{i=1}^{n}(\hat g(x_i)-g^*(x_i))^2.
\end{align}
Thus, this method is a valid way to determine the order of smoothness. We summarize the combined procedure in Algorithm \ref{pricing_with_m}.}

\begin{algorithm}[H]
	\caption{Feature based dynamic pricing with unknown $m$}
	\label{pricing_with_m}
	\begin{algorithmic}[1]		
		\STATE \textbf{Input:} { Upper bound of market value ($\{v_t\}_{t\ge 1}$)}: $B>0$, minimum episode length: $\ell_0$.
		\STATE \textbf{{Initialization:}}
		$p_1=0,\,\hat\btheta_1=0.$
		\FOR{each episode $k=1,2,\dots,$}
		{ \STATE If $k\ge 2$, use $\{\xb_t^\top \hat\btheta_{k-1},p_t,y_t\}_{t\in I_{k-1}}$ and Algorithm 4 to determine $m$. If $k=1$, set $\hat m=2.$}
		\STATE Set length of the $k$-th episode $\ell_k = 2^{k-1}\ell_0$; Length of the exploration phase $a_k = \lceil(\ell_kd)^{\frac{2\hat m+1}{4\hat m-1}}\rceil$.
		\STATE \textbf{\underline{Exploration Phase} ($t\in I_k:= \{\ell_k,\cdots,\ell_{k} + a_k-1 \}$):}
		\STATE \quad Offer price $p_t\sim \text{Unif}(0,B).$ 
		\STATE \textbf{\underline{Updating Estimates} (at the end of the exploration phase with data $\{(\tilde{\xb}_t, y_t)\}_{t\in I_{k}}$):}
		\STATE \quad Update estimate of $\btheta_0$ by $\hat\btheta_k = \hat\btheta_k(\{(\tilde{\xb}_t, y_t)\}_{t\in I_{k}})$;
			{\begin{align}\label{thetaupdate_m}
			\hat{\btheta}_k=\argmin_{\btheta} L_k(\btheta):= \frac{1}{|I_{k}|}\sum_{t\in I_{k}} (By_t-\btheta^\top\tilde{\xb}_t)^2.
			\end{align}}
		\STATE \quad If $\hat m\ge 1$, update estimates of $F$, $F'$ by $ F_k(u,\hat\btheta_k) =  F_k(u; \hat\btheta_k, \{(\tilde{\xb}_t, y_t,p_t)\}_{t\in I_{k}},\hat h_k)$, $ F_k^{(1)}(u,\hat\btheta_k)= F_k^{(1)}(u,\hat\btheta_k, \{(\tilde \xb_t,y_t,p_t)\}_{t\in I_{k}},\hat h_k)$. The detailed formulas are given by \eqref{exp:f_k} and \eqref{exp:f_11}. 
		\STATE \quad Update estimate of $\phi$ by $\hat\phi_k(u) =  u -\frac{1-\hat F_k (u)}{\hat F^{(1)}(u)} $	     and estimate of $g$ by $\hat g_k(u)=u+\hat\phi^{-1}_k(-u)$.\\
				\quad If $\hat m=0,$ update estimates of $F$, by $ F_k(u,\hat\btheta_k) =  F_k(u; \hat\btheta_k, \{(\tilde{\xb}_t, y_t,p_t)\}_{t\in I_{k}})$, The detailed formulas are given by \eqref{exp:f_k}.
		\STATE \textbf{\underline{Exploitation Phase} ($t\in I_k':= \{\ell_{k} + a_k, \cdots, \ell_{k+1} -1 \}$): }
		\STATE \quad If $\hat m\ge 1,$ offer $p_t$ as
		\begin{align}\label{pt_m}
	p_t=\min\{ \max\{\hat g_k(\tilde\xb_t^\top\hat\btheta_k),0\},B\}.		\end{align}
	\quad If $\hat m=0$, offer $p_t$ as
	\begin{align*}
p_t=\text{argmax}_{p\ge 0} \{p(1-\hat F_k(p-\bx_t^\top\hat\btheta_k))\}.
	\end{align*}
		\ENDFOR
	\end{algorithmic}
\end{algorithm}

\begin{algorithm}[H]
	\caption{Selection of $m.$}	
	\label{setting_m}
	\begin{algorithmic}[1]		
		\STATE \textbf{Input:} {Data $\{\xb_t^\top \hat\btheta_{t-1},p_t,y_t\}_{t\in I_{k-1}}$}
		\STATE {\textbf{For} $(m,h)\in \cM\times \cH$, compute:}
		\begin{align*}
		    (\hat m,\hat h)=\argmin_{(m,h)}L(m,h)=\frac{1}{|I_{k-1}|}\sum_{i=1}^{|I_{k-1}|}(Y_i-\hat g_{-i}^{(m,h)}(X_i))^2
		\end{align*}
		\STATE \textbf{Output: $\hat m$ }
	\end{algorithmic}
\end{algorithm}

\newpage
\bibliographystyle{ims}
\bibliography{dynamic}
\end{document}